\documentclass[11pt,letterpaper]{article}
\usepackage[margin=1in]{geometry}
\usepackage{natbib}

\usepackage{parskip}
\usepackage[utf8]{inputenc} 
\usepackage[T1]{fontenc}    

\usepackage{microtype}             
\usepackage{amsmath}               
\usepackage{amssymb}               
\usepackage{amsthm}                
\usepackage{mathtools}             
\usepackage{bm}                    
\usepackage{booktabs}
\usepackage{array}
\usepackage{graphicx}              
\usepackage{float}                 
\usepackage{wrapfig}               

\usepackage{algorithm}             
\usepackage{algpseudocode}         
\usepackage{multirow}              
\usepackage{tabularx}              
\usepackage{colortbl}              

\usepackage{hyperref}              
\hypersetup{
    colorlinks=true,
    linkcolor=blue,
    citecolor=blue,
    urlcolor=blue,
    pdfauthor={},
    pdfsubject={},
    pdfkeywords={}
}

\usepackage{enumitem}              

\usepackage{tikz}
\usetikzlibrary{shapes.geometric, arrows}

\tikzstyle{bubble} = [ellipse, draw=black, fill=blue!10, text width=6em, text centered, minimum height=2em]
\tikzstyle{line} = [draw, -stealth]

\usepackage{xcolor}

\newtheorem{theorem}{Theorem}[section]
\newtheorem{proposition}[theorem]{Proposition}
\newtheorem{lemma}[theorem]{Lemma}
\newtheorem{corollary}[theorem]{Corollary}
\theoremstyle{definition}
\newtheorem{definition}[theorem]{Definition}
\newtheorem{assumption}[theorem]{Assumption}

\newtheorem{remark}[theorem]{Remark}

\usepackage{comment}   

\newcommand{\R}{\mathbb R}
\newcommand{\KL}{\mathrm{KL}}
\newcommand{\Hel}{\mathrm{H}}

\newcommand{\kld}{\mathrm{kl}}
\newcommand{\Mix}{\mathrm{Mix}}
\newcommand{\E}[1]{\mathbb E\left[#1\right]}
\newcommand{\cG}{\mathcal G}
\newcommand{\argmin}{\mathop{\arg\min}}

\title{Intrinsic Information Rates of Summaries in\\ Simulation-Based Inference}

\author{Vansh Bansal\thanks{Corresponding author: \texttt{vansh@utexas.edu}} ,\quad Pradip Sharma Poudel, \quad James G. Scott\\ Department of Statistics and Data Science\\ University of Texas at Austin}

\date{}

\begin{document}
\maketitle
\begin{abstract}
Simulation-based inference often encodes observations $Y$ into a fixed-dimensional summary $S$, with the posterior conditioning only on $S$. Any such summary imposes a finite information budget on inference, whereas exact preservation requires infinite information whenever posteriors vary continuously with the data. The budget typically emerges from summary architecture and training rather than an explicit measurement or choice. Better performance may therefore reflect better encoding or simply a larger budget. 
In this paper, we make the budget explicit and ask two questions: how large must it be, and how best is it spent? To answer them, we model the summary as a (possibly stochastic) lossy channel from $Y$ to $S$. For the first question, we show that this budget, measured by the mutual information $I(Y;S)$, grows unboundedly as the posterior concentrates: in regular $d$-dimensional models, bounded posterior distortion requires $\tfrac d2\log n-O(1)$ nats, even when a fixed-dimensional sufficient statistic exists. More generally, the required budget is set by the distribution of the random posterior, and we bound it from above by quantizing posteriors and from below by converses that weight each posterior state by its probability. For the second question, we show that a finite budget is best spent stochastically. Under a common-support condition on the simulator, which covers the continuous observation spaces typical of simulators, a deterministic summary necessarily loses something: at every positive budget too small for exact preservation, some stochastic summary achieves \textit{strictly} lower distortion than every deterministic summary and every time-shared mixture of deterministic summaries within the same budget. Motivated by these results, we insert a binary bottleneck into amortized neural posterior estimators, measure each trained summary's rate, and compare summaries at matched rate, which
removes the budget as a confounder in architecture comparisons. Across four simulator benchmarks and several encoder families, stochastic bottlenecks achieve favorable rate--loss tradeoffs, and architecture rankings change at different information budgets.

\end{abstract}

\section{Introduction}
\label{sec:introduction}
Simulation-based inference (SBI) approximates the posterior for a parameter $\theta$ from simulated parameter--observation pairs when the likelihood is intractable \citep{cranmer2020frontier}. Classical
methods either keep parameter draws whose simulated data lie close to the
observed data, as in approximate Bayesian computation
\citep[ABC;][]{tavare1997inferring, marin2012approximate}, or fit a Gaussian model to
simulated summaries, as in synthetic likelihood \citep{wood2010statistical}.
Neural methods, many of them amortized, instead train a network and often approximate one of three objects: the
posterior \citep{papamakarios2016fast, radev2020bayesflow, chen2024conditional}, the likelihood
\citep{papamakarios19a, lueckmann19a}, or the
likelihood-to-evidence ratio \citep{thomas2022likelihood,
hermans2020ratio}; see
\citet{lueckmann2021benchmarking} and \citet{zammit2025neural} for
comparisons and reviews. In most of these pipelines, a summary $S$, either handcrafted
or learned with neural networks, first maps a data set $Y$ to a
fixed-dimensional representation, and the posterior decoder (an ABC
acceptance rule, a likelihood or ratio estimator, or a neural posterior estimator)
sees the data only through $S$. This makes the summary a central design choice, in that a summary allows for accurate posterior reconstruction only insofar as it can correctly distinguish data sets that correspond to different posteriors \citep{blum2013comparative, fearnhead2012constructing, jiang2017learning, chan2018likelihood, aakesson2021convolutional, hoffmann2026unifying}. 

This paper begins from the premise that every such pipeline already operates under an information budget, usually an implicit one. A summary passes only a limited amount of information about the data to the posterior decoder: finite precision, injected noise, bottleneck layers, and training all limit how much gets through.  Therefore, this budget is not chosen explicitly; instead, it emerges as a byproduct of decisions made for other reasons. The budget is not often measured explicitly, either, and the quantities usually reported, such as output dimension and parameter count, are poor proxies. Even one real-valued coordinate can carry an enormous amount of information, while a wide vector passed through noise can carry almost none. Furthermore, this budget is binding, in that whenever posteriors vary continuously with the data, preserving the posterior exactly requires infinite information, and no finite budget suffices (Proposition~\ref{prop:shape}). The upshot is that the information budget is typically a confounder for experiments on summary choices: when one summary outperforms another, we cannot tell whether it  distinguishes the data better, or merely spends a larger information budget. 

In this work, we make the information budget explicit by modeling the summary as a possibly randomized channel $Y\to S$. The budget a summary spends is its rate, the mutual information $I(Y;S)$, and its distortion is the expected KL divergence from the full-data posterior $\Pi_Y$ to the summary-based posterior $\Pi_S$. The intrinsic rate $R^\star(\varepsilon)$ is then the smallest budget sufficient for distortion at most $\varepsilon$. This quantity depends only on the prior and the simulator, not on the architecture or optimization procedure used to learn a summary or even the inference method.

We address two questions about this budget:
\begin{enumerate}[leftmargin=*,itemsep=1pt]
\item At a given tolerance $\varepsilon$, what properties of a Bayesian experiment govern the intrinsic rate $R^\star(\varepsilon)$?
\item At a prescribed rate $R$, can a stochastic summary achieve \textit{strictly} lower posterior distortion than every deterministic summary, including their time-sharing mixtures (i.e, a random choice among deterministic summaries, revealed to the decoder)?
\end{enumerate}

For the first question, one part of the answer is that the budget must grow with sample size. This holds even when a fixed-dimensional sufficient statistic exists: the statistic does not grow, but the precision at which it must be transmitted does. In regular $d$-dimensional models, keeping the posterior error bounded requires at least $\tfrac d2\log n-O(1)$ nats (Lemma~\ref{lem:cb_benchmark}), which follows by combining the identity between posterior error and lost parameter information \citep{tishby2000information,harremoes2007revisited} with a classical expansion of \citet{clarkeBarron1994}. Beyond parametric models, what sets the required budget is the collection of posteriors the experiment can produce, together with how often each occurs. The summary's job is, in effect, to tell the decoder which posterior to report. If the experiment produces many well-separated posteriors and visits them often, the summary needs many distinguishable messages. But if most posteriors are rarely visited, the summary can get by with less information. We make this intuition precise in both directions: quantizing the posteriors gives an achievable budget (Theorem~\ref{thm:avg_quant}), and converses that weight each posterior by its probability give lower bounds (Theorem~\ref{thm:smallball}).

For the second question, the answer is that a finite budget is best spent with a randomized summary. This is not obvious, and at first sight looks false: if we limited the number of values a summary may take, rather than its information, deterministic summaries would be optimal (Theorem~\ref{thm:avg_quant}). But an information budget behaves differently. The
Gaussian model gives a first hint, since its optimal summary at every positive rate is the sufficient sample mean plus deliberate noise (Proposition~\ref{thm:gaussian_exact}). Our main
theorem shows that the Gaussian hint is general, under the common-support condition that no observation can rule out any parameter value (Assumption~\ref{ass:common_support}), which typical noisy simulators satisfy. At every positive budget too small for exact posterior preservation, some stochastic summary achieves strictly lower posterior error than every deterministic summary and every time-shared mixture of them within the same budget (Theorem~\ref{thm:separation}). The intuition behind the proof is that we can occasionally replace a summary's label, without telling the decoder, by a random draw from the its own output distribution. This hidden swap saves more rate than it costs in posterior error, and spending the savings on a finer quantizer beats the original. The argument needs neither finite observation spaces nor the existence of optimal quantizers. The gap is also not a curvature accident: the two frontiers can share slopes at zero rate and still separate at every positive rate (Proposition~\ref{prop:gaussian_same_slope}). Finally, the common support assumption is sufficient, rather than a characterization of all experiments with a stochastic advantage; see Section \ref{sec:remarks_separation}. \textit{To our knowledge, this is the first formal result in the literature to establish a strict advantage in such generality.}

Finally, we test this empirically in amortized neural posterior estimation (NPE), where the summary and the posterior estimator are trained jointly on draws from the simulator's prior predictive
distribution. Because the information budget cannot be set directly, we measure it: every summary is trained with a binary bottleneck, giving all architectures a common code alphabet and decoder, and each trained model's deployed rate is estimated from its code distribution. Comparisons are then made at matched rate. A conjugate count benchmark enumerates all deterministic count encoders and verifies the second answer exactly, with decoder error far smaller than the gap. Four simulator benchmarks compare stochastic deployment against annealed training with hard deployment, over several encoders and three training seeds. Stochastic bottlenecks trace better rate--loss curves throughout, and architecture rankings change with the budget, so comparisons at unmatched budgets can pick different winners. These comparisons involve trained models, so optimization and decoder error contribute, whereas the count benchmark isolates the
gap exactly.

The paper proceeds from posterior preservation and its parametric consequence (Section~\ref{sec:preliminaries}), through posterior geometry (Section~\ref{sec:posterior_geometry}), to strict stochastic separation (Section~\ref{sec:randomization_essential}). Sections~\ref{sec:npe_protocol} and~\ref{sec:experiments} develop and evaluate the NPE comparison protocol. Appendix~\ref{app:related_works} gives a detailed account of the related work, including connections to literature on summary selection in likelihood-free inference, approximate sufficiency, information theory and compression.

\section{Posterior preservation as an information constraint}
\label{sec:preliminaries}
Let $\theta$ have prior $\pi$, let $Y\mid\theta$ follow a simulator, and let $S$ be generated by a measurable channel $q(s\mid y)$. Thus $\theta\to Y\to S$ is a Markov chain. We work on standard Borel spaces, assume regular conditional distributions and measurable divergences, and write $\Pi_Y=P_{\theta\mid Y}$ and $\Pi_S=P_{\theta\mid S}$. Although $I(\theta;Y)$ may grow with sample size, we assume it is finite for each experiment else every finite-rate summary will yield infinite posterior distortion. 
\begin{definition}[Intrinsic information rate]
\label{def:intrinsic_rate_new}
For a summary $S$, define its posterior distortion by $\Delta_\KL(S):=\mathbb E\,\KL(\Pi_Y\Vert\Pi_S)$ and information rate by. $R(S):=I(Y;S)$
The intrinsic rate required for tolerance $\varepsilon\ge0$, and the minimum distortion under a rate budget $R\ge0$ are
\begin{equation}
\label{eq:rate_def_new}
R^\star(\varepsilon):=\inf_{q(s\mid y):\,\Delta_\KL(S)\le\varepsilon} I(Y;S), \qquad D^\star(R):=\inf_{q(s\mid y):\,I(Y;S)\le R}\Delta_\KL(S).
\end{equation}
\end{definition}

Zero distortion corresponds to exact posterior sufficiency \citep{lindley1956measure}, potentially requiring an unbounded information rate. A deterministic continuous statistic may supply that rate even in one dimension, however, quantization or stochastic encoding makes precision, and hence rate, an explicit modeling choice for approximate sufficiency \citep{joyce2008approximately, hayashi2018minimum}. 


\begin{lemma}[Posterior loss and compensation]
\label{lem:identity}
For every summary channel,
\begin{equation}
\label{eq:identity}
\Delta_\KL(S)=I(\theta;Y\mid S)=I(\theta;Y)-I(\theta;S),
\qquad \Pi_S:=\mathbb E[\Pi_Y\mid S].
\end{equation}
For any measurable reconstruction $Q_S$ with finite expected divergence,
\begin{equation}
\label{eq:compensation}
\mathbb E\,\KL(\Pi_Y\Vert Q_S)
=\Delta_\KL(S)+\mathbb E\,\KL(\Pi_S\Vert Q_S).
\end{equation}
Consequently $R^\star(\varepsilon)\ge[I(\theta;Y)-\varepsilon]_+$ and $D^\star(R)\ge[I(\theta;Y)-R]_+$.
\label{eq:universal_converse_new}
\end{lemma}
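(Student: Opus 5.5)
The plan is to work entirely with regular conditional distributions and the Markov structure $\theta\to Y\to S$, proving the three claims in order.

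\textbf{Step 1: $\Pi_S=\mathbb E[\Pi_Y\mid S]$.} Because $S$ is generated from $Y$ alone, $P(\theta\in A\mid Y,S)=\Pi_Y(A)$ almost surely. Taking conditional expectations given $S$ (the tower property) yields $\Pi_S(A)=\mathbb E[\Pi_Y(A)\mid S]$ for each measurable $A$. Standard Borel spaces let us pass from this setwise identity to an identity of kernels.

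\textbf{Step 2: the identity \eqref{eq:identity}.} The joint law of $(\theta,Y)$ given $S$ has $\theta$-conditional $\Pi_Y$ given $(Y,S)$. The conditional mutual information can therefore be written as
\[
I(\theta;Y\mid S)=\mathbb E\,\KL\bigl(P_{\theta\mid Y,S}\Vert P_{\theta\mid S}\bigr)=\mathbb E\,\KL(\Pi_Y\Vert\Pi_S)=\Delta_\KL(S).
\]
The chain rule gives
\[
I(\theta;Y,S)=I(\theta;S)+I(\theta;Y\mid S)=I(\theta;Y)+I(\theta;S\mid Y).
\]
Markovity gives $I(\theta;S\mid Y)=0$, so $\Delta_\KL(S)=I(\theta;Y)-I(\theta;S)$. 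The subtraction is legitimate because $I(\theta;Y)<\infty$ by assumption, and $I(\theta;S)\le I(\theta;Y)$.

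\textbf{Step 3: the compensation identity \eqref{eq:compensation}.} For $\Pi_Y\ll Q_S$, which holds almost surely since the expected divergence is finite, write
\[
\log\frac{d\Pi_Y}{dQ_S}=\log\frac{d\Pi_Y}{d\Pi_S}+\log\frac{d\Pi_S}{dQ_S}.
\]
This requires $\Pi_Y\ll\Pi_S$, which follows from Step 1: if $\Pi_S(A)=0$, then $\mathbb E[\Pi_Y(A)\mid S]=0$, so $\Pi_Y(A)=0$ a.s. It also requires $\Pi_S\ll Q_S$: if $Q_S(A)=0$, then $\Pi_Y(A)=0$ a.s., hence $\Pi_S(A)=0$.

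Integrating the first term against $\Pi_Y$ and taking expectations gives $\Delta_\KL(S)$. For the second term, $\log\frac{d\Pi_S}{dQ_S}$ is $\sigma(S)$-measurable, so we condition on $S$ and use Step 1 to replace $\Pi_Y$ by $\Pi_S$. This gives $\mathbb E\,\KL(\Pi_S\Vert Q_S)$.

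\textbf{Main obstacle: integrability in Step 3.} The second term may not be integrable a priori, so the integral cannot simply be split. I would handle this by splitting into positive and negative parts. The negative part of $\log(d\Pi_S/dQ_S)$ is controlled by the usual bound $\int(\log f)_-\,d\Pi\le 1/e$-type estimates. Alternatively, I would truncate and pass to the limit with monotone convergence. Since the left side and $\Delta_\KL(S)\le I(\theta;Y)$ are both finite, the identity then follows.

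\textbf{Step 4: consequences.} Data processing gives $I(\theta;S)\le I(Y;S)$, so
\[
\Delta_\KL(S)\ge I(\theta;Y)-I(Y;S).
\]
If $\Delta_\KL(S)\le\varepsilon$, then $I(Y;S)\ge I(\theta;Y)-\varepsilon$. Combining this with $I(Y;S)\ge0$ and taking the infimum yields $R^\star(\varepsilon)\ge[I(\theta;Y)-\varepsilon]_+$.

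Similarly, if $I(Y;S)\le R$, then $\Delta_\KL(S)\ge I(\theta;Y)-R$. Combining this with nonnegativity of KL gives $D^\star(R)\ge[I(\theta;Y)-R]_+$.
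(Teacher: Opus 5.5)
Your Steps 1, 2 and 4 are essentially the paper's argument. Step 3 is where you diverge.

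\textbf{How Step 3 differs.} The paper never forms Radon--Nikodym derivatives. It fixes $s$ and compares $P_{\theta Y\mid S=s}$ with $Q_s\otimes P_{Y\mid S=s}$ on $\Theta\times\mathcal Y$. Applying the chain rule for divergences in both disintegration orders gives $\int\KL(\Pi_y\Vert Q_s)\,P_{Y\mid S=s}(dy)=\KL(\Pi_s\Vert Q_s)+I(\theta;Y\mid S=s)$ as an identity in $[0,\infty]$. Averaging over $s$ and using finiteness of the left side then finishes the proof. This route needs no absolute-continuity checks, no jointly measurable choice of densities in $(\theta,y,s)$, and no positive/negative-part bookkeeping.

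Your pointwise decomposition $\log(d\Pi_Y/dQ_S)=\log(d\Pi_Y/d\Pi_S)+\log(d\Pi_S/dQ_S)$ is the density-level version the paper mentions in passing. It is more concrete and makes the centroid mechanism visible, but you must supply those ingredients yourself, including measurable versions of the densities, which are standard on standard Borel spaces.

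Your integrability argument is sound. Under $\Pi_Y$, the negative parts of the first two logarithms have expectation at most $1/e$. Under $\Pi_S$, and hence under $\Pi_Y$ after conditioning on $S$, so does the negative part of the third. Finiteness of the left side therefore makes all three terms integrable.

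\textbf{A justification that fails as written.} You cannot conclude $\Pi_Y\ll\Pi_S$ almost surely from ``$\Pi_S(A)=0$ implies $\Pi_Y(A)=0$ almost surely.'' The exceptional null set depends on $A$, and there are uncountably many $A$. Your argument never uses $I(\theta;Y)<\infty$, and without it the conclusion is false. Take $\theta=Y\sim\mathrm{Uniform}[0,1]$ and $S$ constant, so $\Pi_Y=\delta_Y$ and $\Pi_S$ is Lebesgue measure. Every fixed Lebesgue-null set is $\delta_Y$-null almost surely, yet $\delta_Y\not\ll\Pi_S$ for every value of $Y$.

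The repair comes from your own Step 2. Since $\mathbb E\,\KL(\Pi_Y\Vert\Pi_S)=I(\theta;Y\mid S)\le I(\theta;Y)<\infty$, you get $\KL(\Pi_Y\Vert\Pi_S)<\infty$ almost surely, and hence $\Pi_Y\ll\Pi_S$ almost surely. Alternatively, apply your setwise argument to the single $S$-dependent set where $d\Pi_S/d\pi$ vanishes, and combine it with $\Pi_Y\ll\pi$, which also comes from $I(\theta;Y)<\infty$.

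The other domination, $\Pi_S\ll Q_S$, is fine once phrased at a fixed good $s$ through $\Pi_s=\int\Pi_y\,P_{Y\mid S=s}(dy)$. A mixture of $Q_s$-dominated measures is $Q_s$-dominated, and there the exceptional set lives in $s$ rather than in $A$.
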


Equation~\eqref{eq:identity} identifies $R^\star$ with the information-bottleneck frontier \citep{tishby2000information,harremoes2007revisited} of the Bayesian experiment. When conditional entropies are well-defined and finite with respect to a common reference measure, the same identity reads $\Delta_\KL(S)=H(\theta\mid S)-H(\theta\mid Y)$, the expected-posterior-entropy selection criterion of \cite{hoffmann2026unifying}\footnote{\cite{hoffmann2026unifying} minimize expected-posterior-entropy within a pre-selected summary class. This class can restrict dimension, architecture, network size, or the number of selected statistics, but it places no direct constraint on the rate $I(Y;S)$. For example, if $Y=(U,V)$ with $U$ sufficient and $V$ carrying no information about $\theta$, then $U$ and $(U,V)$ induce identical posteriors and identical distortion, so no fidelity criterion separates them, while $I(Y;U)<I(Y;(U,V))$. Only a criterion that charges for information rate prefers the smaller summary. Our work is therefore complementary to theirs.}. The KL and mutual-information definitions above remain applicable even when such an entropy difference is unavailable. Equation~\eqref{eq:compensation} further separates information discarded by the summary from error introduced by its decoder. This separation also guides our interpretation of the empirical NPE rate-loss frontiers shown in Section \ref{sec:experiments}.


Lemma \ref{lem:identity} further motivates the reason to make information rate explicit. The Bayes-information expansion of \cite{clarkeBarron1994} for regular parametric models combined with Equation \eqref{eq:identity} makes it immediate that as the data accumulates, a summary must resolve increasingly narrow and concentrated posterior distributions. Therefore, even when a fixed-dimensional sufficient statistic exists, it must be transmitted with an unbounded increasing precision, hence the information rate, to achieve bounded distortion. The following lemma states the result formally.
\begin{lemma}[Summary-rate consequence in parametric models]
\label{lem:cb_benchmark}
For a regular $d$-dimensional i.i.d.\ model satisfying the prior-averaged Clarke--Barron conditions, every summary with $\Delta_\KL(S_n)\le\varepsilon$ obeys
\begin{equation}
\label{eq:cb_rate_bound}
I(Y^n;S_n)\ge\frac d2\log\frac{n}{2\pi e}
+h(\pi)+\frac12\mathbb E_\pi\log\det\mathcal I(\theta)-\varepsilon+o(1),
\end{equation}
where $\mathcal I$ is the per-observation Fisher information. Thus, for fixed $d$ and $\varepsilon$, $R_n^\star(\varepsilon)\ge\tfrac d2\log n-O(1)$.
\end{lemma}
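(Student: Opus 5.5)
The proof chains three inequalities. Lemma~\ref{lem:identity} turns the distortion bound into a lower bound on parameter information. The data-processing inequality lifts that to a bound on the rate. The Clarke--Barron expansion evaluates the remaining term.

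\textbf{Step 1 (identity).} Fix a summary channel $q(s\mid y^n)$ with $\Delta_\KL(S_n)\le\varepsilon$. By \eqref{eq:identity},
\[
I(\theta;S_n)=I(\theta;Y^n)-\Delta_\KL(S_n)\ge I(\theta;Y^n)-\varepsilon .
\]

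\textbf{Step 2 (data processing).} Since $\theta\to Y^n\to S_n$ is a Markov chain, $I(Y^n;S_n)\ge I(\theta;S_n)$. Combining with Step 1 gives
\[
I(Y^n;S_n)\ge I(\theta;Y^n)-\varepsilon .
\]
This is the bound $R^\star(\varepsilon)\ge[I(\theta;Y)-\varepsilon]_+$ already recorded in Lemma~\ref{lem:identity}, now applied to the $n$-sample experiment.

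\textbf{Step 3 (Clarke--Barron expansion).} The remaining task is to evaluate $I(\theta;Y^n)$. Under the prior-averaged Clarke--Barron conditions, \citet{clarkeBarron1994} give
\[
I(\theta;Y^n)=\frac d2\log\frac{n}{2\pi e}+h(\pi)+\frac12\mathbb E_\pi\log\det\mathcal I(\theta)+o(1).
\]
This holds because the Bayes risk of the mixture code equals the mutual information, and it uses the Laplace-type asymptotics of $\KL(P_\theta^n\Vert M_n)$ integrated against $\pi$. Substituting this expansion into Step 2 yields \eqref{eq:cb_rate_bound}.

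\textbf{Step 4 (order of growth).} Since the bound holds for every feasible channel, it also holds for $R_n^\star(\varepsilon)$. For fixed $d$ and $\varepsilon$, the constants $h(\pi)$, $\tfrac12\mathbb E_\pi\log\det\mathcal I(\theta)$ and $\tfrac d2\log(2\pi e)$ do not depend on $n$. Absorbing them and the $o(1)$ term into $O(1)$ gives $R_n^\star(\varepsilon)\ge\tfrac d2\log n-O(1)$.

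\textbf{Main obstacle.} The only delicate point is Step 3. One must check that the hypotheses packaged as the ``prior-averaged Clarke--Barron conditions'' deliver the expansion for the mutual information itself, not merely pointwise in $\theta$. Concretely, these hypotheses are a compact parameter set, a continuous positive prior density with finite entropy, smooth positive-definite Fisher information, and the uniform integrability needed to average the pointwise redundancy expansion over $\pi$. I would invoke their averaged (Bayes-risk) statement directly, which is exactly $I(\theta;Y^n)$. I would also note that finiteness of $I(\theta;Y^n)$, assumed in Section~\ref{sec:preliminaries}, makes the subtraction in Step 1 legitimate.
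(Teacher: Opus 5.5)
Your proposal is correct and follows the paper's own argument. The paper likewise combines $I(\theta;S_n)=I(\theta;Y^n)-\Delta_\KL(S_n)$ with data processing to obtain $I(Y^n;S_n)\ge I(\theta;Y^n)-\varepsilon$, substitutes the prior-averaged Clarke--Barron expansion (their Theorem~1, Eq.~(2.3)), and records the same caveat you raise: pointwise asymptotic normality alone does not justify averaging the remainder over the prior.
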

Appendix~\ref{app:parametric} gives an alternative lower bound from Bayes mean-square error. We further show via a matching upper bound that the leading rate of $O(\tfrac d2 \log n)$ is indeed achievable under a quadratic posterior chart condition, which is also verified for an exponential-family class beyond Gaussians.

\section{Posterior geometry and the information rate}
\label{sec:posterior_geometry}
In this section, we develop upper and lower bounds on the intrinsic rate. We start by identifying that $R^\star$ depends on the \emph{law} of $\Pi_Y$, not just its support.
We state the posterior-source reduction \citep{harremoes2007revisited,armstrong2026source} in the following lemma and then make the connection to the relevant geometry precise; proof and endpoint properties are provided in Appendix~\ref{app:proofs-prelim}.

\begin{proposition}[Posterior-source reduction]
\label{prop:reduction}
Let $Z:=\Pi_Y$. Then $P(\theta\in\cdot\mid Z)=Z$ almost surely, and
$R^\star(\varepsilon)=\inf_{P_{\widehat Z\mid Z}:\,\mathbb E\KL(Z\Vert\widehat Z)\le\varepsilon} I(Z;\widehat Z),$
where $\widehat Z$ ranges over probability-measures.
\end{proposition}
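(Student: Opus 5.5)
The plan has two parts, matching the two claims of Proposition~\ref{prop:reduction}.

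\textbf{First claim: $P(\theta\in\cdot\mid Z)=Z$.} This is the usual ``posterior of the posterior'' property. Since $Z=\Pi_Y$ is a measurable function of $Y$, the tower property gives, for every measurable $A$,
\[
P(\theta\in A\mid Z)=\mathbb E[P(\theta\in A\mid Y)\mid Z]=\mathbb E[Z(A)\mid Z]=Z(A)\quad\text{a.s.}
\]
On a standard Borel space it suffices to check this on a countable generating $\pi$-system. The exceptional null sets can then be merged, and the regular versions agree almost surely.

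\textbf{Second claim: the rate identity.} I will prove two inequalities.

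For ``$\ge$'', fix any channel $q(s\mid y)$ with $\Delta_\KL(S)\le\varepsilon$ and set $\widehat Z:=\Pi_S=\mathbb E[\Pi_Y\mid S]$, the form given in Lemma~\ref{lem:identity}. Then $\mathbb E\KL(Z\Vert\widehat Z)=\Delta_\KL(S)\le\varepsilon$. The chain $Z\to Y\to S\to\widehat Z$ is Markov, because $Z$ is a function of $Y$ and $\widehat Z$ is a function of $S$. Data processing therefore gives $I(Z;\widehat Z)\le I(Y;S)$. Taking the infimum over channels shows that the right-hand side is at most $R^\star(\varepsilon)$.

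For ``$\le$'', take any kernel $P_{\widehat Z\mid Z}$ with $\mathbb E\KL(Z\Vert\widehat Z)\le\varepsilon$. Define the summary $S:=\widehat Z$, generated by the composite channel $y\mapsto P_{\widehat Z\mid Z=\Pi_y}$. This channel is measurable because $y\mapsto\Pi_y$ is measurable into the space of probability measures, which is again standard Borel.

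\begin{itemize}[leftmargin=*]
\item \emph{Rate.} Here $S$ depends on $Y$ only through $Z$, i.e.\ $Y\to Z\to S$ is Markov. Hence $I(Y;S)=I(Z;S)+I(Y;S\mid Z)=I(Z;\widehat Z)$.
\item \emph{Distortion.} By the first claim combined with $\theta\to Y\to Z\to S$, the true summary posterior is $\Pi_S=\mathbb E[Z\mid S]$, and $\widehat Z$ is merely one measurable reconstruction $Q_S$. The compensation identity \eqref{eq:compensation} gives
\[
\Delta_\KL(S)=\mathbb E\KL(\Pi_Y\Vert\widehat Z)-\mathbb E\KL(\Pi_S\Vert\widehat Z)\le\varepsilon .
\]
\end{itemize}
So $R^\star(\varepsilon)\le I(Z;\widehat Z)$, and taking the infimum finishes the proof.

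\textbf{Main obstacle.} The delicate part is measure-theoretic. One must check that $\mathbb E[Z\mid S]$ is a genuine regular version equal to $P_{\theta\mid S}$; this follows from the first claim and the Markov property via the same tower argument. One must also apply \eqref{eq:compensation}, which requires finite expected divergence; this is guaranteed since it is at most $\varepsilon$, with the case $\varepsilon=\infty$ being trivial. Finally, one needs data processing for mutual information on general standard Borel spaces, which I would take as standard via the Gelfand--Yaglom--Perez definition.
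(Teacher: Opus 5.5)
Your proposal is correct and follows essentially the same route as the paper. Both use the tower property for $P(\theta\in\cdot\mid Z)=Z$, then the choice $\widehat Z=\Pi_S$ with data processing for one inequality. The other inequality comes from the composite channel $Y\mapsto Z\mapsto\widehat Z$ together with the compensation identity \eqref{eq:compensation}, taking $\widehat Z$ itself as the reconstruction $Q_S$.
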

We use posterior-source reduction to show that achievable rates and converses depend on two features of the random posterior: how accurately a finite codebook represents it on average, and how much predictive probability one accurate reconstruction captures. 

\subsection{Achievability by posterior quantization}
\label{sec:achievability}
We first ask how well the random posterior $\Pi_Y$ can be represented by a
small codebook of fixed reference distributions.
For $M\ge1$, write the average distortion of the quantized posterior as 
\begin{equation}
\label{eq:Dav}
D^{\mathrm{av}}_M:=\inf_{Q_1,\ldots,Q_M}\mathbb E\min_{j\le M}\KL(\Pi_Y\Vert Q_j),
\end{equation}
 where the expectation weights each posterior by its occurrence under the simulator and prior. Our next result identifies the optimal posterior-distortion incurred by any \(M\)-symbol summary with the average distortion of the quantized random posterior defined in \eqref{eq:Dav}. 

\begin{theorem}[Average posterior quantization]
\label{thm:avg_quant}
The optimal distortion with at most $M$ summary values satisfies
\begin{equation*}
D^{\mathrm{av}}_M
=\inf_{g:\mathcal Y\to[M]}\Delta_\KL(g(Y))
=\inf_{q(s\mid y):\,|\mathcal S|\le M}\Delta_\KL(S).
\end{equation*}
Moreover $D^{\mathrm{av}}_1=I(\theta;Y)$, and
$R^\star(\varepsilon)\le\log N^{\mathrm{av}}(\varepsilon),$ where $N^{\mathrm{av}}(\varepsilon):=\inf\{M:D^{\mathrm{av}}_M<\varepsilon\}.$
\end{theorem}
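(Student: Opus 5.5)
The plan is to prove the chain of equalities by two inequalities and then read off the endpoint facts. Write $D_{\det}:=\inf_{g}\Delta_\KL(g(Y))$ and $D_{\mathrm{st}}:=\inf_{q:\,|\mathcal S|\le M}\Delta_\KL(S)$. Every deterministic $g$ is a degenerate channel, so $D_{\mathrm{st}}\le D_{\det}$ trivially. It therefore suffices to show $D^{\mathrm{av}}_M\le D_{\mathrm{st}}$ and $D_{\det}\le D^{\mathrm{av}}_M$.

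For $D^{\mathrm{av}}_M\le D_{\mathrm{st}}$, take any channel with at most $M$ values $s_1,\dots,s_M$ and set $Q_j:=\Pi_{S=s_j}$; if some value has zero probability, assign it an arbitrary $Q_j$. By the compensation identity \eqref{eq:compensation} with $Q_S=\Pi_S$, or directly from Lemma~\ref{lem:identity},
\[
\Delta_\KL(S)=\mathbb E\,\KL(\Pi_Y\Vert\Pi_S)=\mathbb E\sum_j q(s_j\mid Y)\KL(\Pi_Y\Vert Q_j)\ge \mathbb E\min_j\KL(\Pi_Y\Vert Q_j).
\]
The middle equality uses the Markov property: given $Y$, the label $S$ is drawn from $q(\cdot\mid Y)$ independently of $\theta$, so $\Pi_Y$ is unchanged by conditioning on $S$ as well. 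Taking the infimum over channels gives the bound.

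For $D_{\det}\le D^{\mathrm{av}}_M$, fix any codebook $Q_1,\dots,Q_M$ with $\mathbb E\min_j\KL(\Pi_Y\Vert Q_j)<\infty$. Define $g(y):=$ the smallest index attaining $\min_j\KL(\Pi_y\Vert Q_j)$. This is measurable because $y\mapsto\KL(\Pi_y\Vert Q_j)$ is measurable by the standing assumption on divergences. Then apply \eqref{eq:compensation} to $S=g(Y)$ with reconstruction $Q_S:=Q_{g(Y)}$:
\[
\Delta_\KL(g(Y))=\mathbb E\,\KL(\Pi_Y\Vert Q_{g(Y)})-\mathbb E\,\KL(\Pi_{g(Y)}\Vert Q_{g(Y)})\le \mathbb E\min_j\KL(\Pi_Y\Vert Q_j).
\]
Taking the infimum over codebooks gives the inequality. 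The main care point is the finiteness hypothesis in \eqref{eq:compensation}. When the codebook objective is infinite there is nothing to prove. When it is finite, $\mathbb E\,\KL(\Pi_Y\Vert Q_{g(Y)})$ is that finite value, so the identity applies.

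Endpoints. For $M=1$ the summary is constant, so $\Pi_S$ is the prior and Lemma~\ref{lem:identity} gives $\Delta_\KL=I(\theta;Y)-0$. Thus $D^{\mathrm{av}}_1=I(\theta;Y)$.

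For the rate bound, take $M$ with $D^{\mathrm{av}}_M<\varepsilon$, which is possible whenever $N^{\mathrm{av}}(\varepsilon)<\infty$; otherwise the bound is vacuous. Since the infimum is strict, there is a deterministic $g$ with $\Delta_\KL(g(Y))\le\varepsilon$. Its rate satisfies $I(Y;g(Y))=H(g(Y))\le\log M$, so $R^\star(\varepsilon)\le\log M$. Taking $M=N^{\mathrm{av}}(\varepsilon)$ finishes the proof; the strict inequality in the definition of $N^{\mathrm{av}}$ is exactly what avoids needing the codebook infimum to be attained. I expect the only delicate step to be the measurability and integrability bookkeeping in the deterministic construction.
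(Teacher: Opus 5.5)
Your proof is correct and follows the paper's argument essentially step for step. You use the same sandwich $D^{\mathrm{av}}_M\le D_{\mathrm{st}}\le D_{\det}\le D^{\mathrm{av}}_M$ and the same choice $Q_j=\Pi_{S=s_j}$ for the first inequality; your averaging over $q(\cdot\mid Y)$ is simply the conditional-expectation form of the paper's pointwise bound $\min_j\KL(\Pi_Y\Vert Q_j)\le\KL(\Pi_Y\Vert Q_S)$. You also use the same smallest-index nearest-center quantizer with the compensation identity for the last inequality, and your $M=1$ and rate-bound steps match the paper, including the use of the strict inequality in $N^{\mathrm{av}}$ to avoid any attainment assumption.
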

The above infimum is remarkably achieved by a deterministic summary and the result holds for general Bayesian experiments in Borel spaces. We prove it by directly replacing any stochastic summary with a deterministic nearest-center rule and then using the cell barycenter as the representative, a trick often used for $k$-means clustering problems; see Appendix \ref{app:proofs-geometry}. 

Since the information rate of an $M$-symbol deterministic summary satisfies $R(S)=H(S)\le\log M$, Theorem \ref{thm:avg_quant} also yields an upper bound on the optimal rate-distortion frontier. The theorem further distinguishes \emph{cardinality} from \emph{information}: for any fixed cardinality, every stochastic summary can be replaced with a  deterministic summary, incurring no greater posterior distortion. However, this replacement preserves the alphabet-size constraint, but need not preserve mutual information. As we show in the Section \ref{sec:randomization_essential}, when the comparison happens at matched information rates rather than cardinality, randomized summaries achieve strictly lower distortion.


\subsection{Converses that retain predictive mass}
\label{sec:converse}
A packing of possible posteriors alone cannot lower-bound the rate of the observed experiment. For example, let $\theta=Y$ have probability $1-\delta$ at $0$ and probability $\delta/M$ at each of $1,\ldots,M$. Its posterior family always contains $M+1$ mutually singular distributions, but the constant summary has distortion $h_2(\delta)+\delta\log M\to0$. A converse therefore must retain the masses of those posterior states. Our next theorem states the probability-weighted analogue of a packing lower bound that accounts for the mass represented by a single reconstruction.
\begin{theorem}[Posterior small-ball bound; specialization of {\citet{xu2016information}}]
\label{thm:smallball}
For $t>\varepsilon$, define the largest predictive mass captured by a forward-KL ball as
$\mathcal Q_Z(t):=\sup_Q P\{\KL(Z\Vert Q)\le t\}>0.$ Then, the intrinsic rate is lower bounded as
$R^\star(\varepsilon)\ge
\left[\left(1-\frac{\varepsilon}{t}\right)\log\frac1{\mathcal Q_Z(t)}-\log2\right]_+.$
\end{theorem}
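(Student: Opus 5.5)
We work through the posterior-source reduction of Proposition~\ref{prop:reduction}. Fix any channel $P_{\widehat Z\mid Z}$ with $\mathbb E\KL(Z\Vert\widehat Z)\le\varepsilon$, and aim to show $I(Z;\widehat Z)\ge(1-\varepsilon/t)\log(1/\mathcal Q_Z(t))-\log 2$. Taking the infimum over channels and then the positive part gives the claim. The natural device is the success event $E:=\{\KL(Z\Vert\widehat Z)\le t\}$. By Markov's inequality, $P(E)\ge 1-\varepsilon/t$.

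\textbf{Step 1 (data processing).} Let $\mathbf 1_E$ be the indicator of $E$, a measurable function of the pair $(Z,\widehat Z)$. Compare the joint law $P_{Z\widehat Z}$ with the product law $P_Z\otimes P_{\widehat Z}$. Data processing for KL under the map $(z,\hat z)\mapsto\mathbf 1\{\KL(z\Vert\hat z)\le t\}$ gives
\[
I(Z;\widehat Z)=\KL(P_{Z\widehat Z}\Vert P_Z\otimes P_{\widehat Z})\ge \kld\bigl(p\,\Vert\, q\bigr),
\]
where $p:=P(E)$ and $q:=(P_Z\otimes P_{\widehat Z})(E)$ is the probability of the event when $Z$ and $\widehat Z$ are drawn independently.

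\textbf{Step 2 (bounding $q$).} Under the product law, condition on $\widehat Z=Q$. Then $q=\int P_Z\{\KL(Z\Vert Q)\le t\}\,P_{\widehat Z}(dQ)\le\mathcal Q_Z(t)$, by the definition of $\mathcal Q_Z(t)$ as a supremum over $Q$. This step needs joint measurability of $(z,\hat z)\mapsto\KL(z\Vert\hat z)$ on the space of probability measures, which is standard on Borel spaces since KL is lower semicontinuous in the weak topology. It is covered by the paper's standing assumption of measurable divergences.

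\textbf{Step 3 (binary bound).} Use the elementary inequality $\kld(p\Vert q)\ge p\log(1/q)-h_2(p)\ge p\log(1/q)-\log 2$. It follows by expanding $\kld(p\Vert q)=-h_2(p)+p\log(1/q)+(1-p)\log(1/(1-q))$ and dropping the last, nonnegative term. This bound is increasing in $p$ and decreasing in $q$. Substituting $p\ge1-\varepsilon/t$ and $q\le\mathcal Q_Z(t)$ yields $I(Z;\widehat Z)\ge(1-\varepsilon/t)\log(1/\mathcal Q_Z(t))-\log2$. The monotonicity in $p$ requires $\log(1/q)\ge0$, which always holds, so no case split is needed.

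The main obstacle is the measurability and regularity in Steps 1--2, namely treating $\widehat Z$ as a random probability measure and the ball event as jointly measurable. If $\mathcal Q_Z(t)=1$ the bound is trivially nonpositive, and $\mathcal Q_Z(t)>0$ is assumed in the statement, so no degenerate logarithms arise. This is exactly the specialization of the Xu--Raginsky small-ball argument.
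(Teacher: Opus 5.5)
Your proof is correct and follows the paper's argument. It uses the same ball event, Markov's inequality for the joint probability, Tonelli plus the supremum for the product probability, data processing down to the binary divergence, and the bound $\kld(p,q)\ge p\log(1/q)-\log 2$. The only difference is cosmetic: you start from an arbitrary reconstruction channel via Proposition~\ref{prop:reduction}, whereas the paper takes a feasible summary $S$, sets $\widehat Z=\Pi_S$, and invokes $I(Y;S)\ge I(Z;\widehat Z)$ directly, which is exactly the easy direction of that reduction.
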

Hence, applying general small-ball information bounds to the random posterior yields a converse governed by predictive mass. Take, for example, a uniform law on $M$ sufficiently separated posterior states, $\mathcal Q_Z(t)=1/M$ and the bound has the familiar $\log M$ scale.  Appendix~\ref{app:converses} gives a complementary cluster bound that retains the entropy of unequal cluster weights, and a KL-Laplace bound \citep{riegler2018rate}. Together, these results identify posterior approximation error and predictive concentration as complementary controls on the information budget, and clarify why cardinality and mutual information lead to different conclusions about encoder randomization.

We now discuss the Gaussian location model where the exact rate-distortion frontier is analytically available. Its sufficient summary statistic, the sample mean, has a continuous distribution, so transmitting it exactly costs infinite information. Under a positive finite rate budget, the classical Gaussian information-bottleneck solution \citep{chechik2005gaussianib,asoodeh2020bottleneck} instead adds noise to the mean to achieve the following optimal frontier.

\begin{proposition}[Gaussian-location frontier]
\label{thm:gaussian_exact}
Let $\theta\sim N(0,\tau^2I_d)$ and $Y_i\mid\theta\sim N(\theta,\sigma^2I_d)$ independently, and put $a=n\tau^2/\sigma^2$. Then, for $R\ge0$ and $0<\varepsilon<\tfrac d2\log(1+a)$
\begin{equation}
\label{eq:gaussian_exact}
D_n^\star(R)=\frac d2\log(1+ae^{-2R/d}),
\qquad R_n^\star(\varepsilon)=\frac d2\log\frac{a}{e^{2\varepsilon/d}-1}
\end{equation}
\end{proposition}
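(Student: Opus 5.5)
By Lemma~\ref{lem:identity}, $\Delta_\KL(S)=I(\theta;Y)-I(\theta;S)$, so computing $D_n^\star(R)$ means maximizing $I(\theta;S)$ subject to $I(Y;S)\le R$. This is a Gaussian information-bottleneck problem. The plan has three steps: reduce to the sample mean, prove a converse with the entropy power inequality, and show that adding Gaussian noise attains the bound.

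\textbf{Reduction to the sample mean.} Let $\bar Y=\frac1n\sum_i Y_i$. It is sufficient for $\theta$, and $\bar Y=\theta+W$ with $W\sim N(0,(\sigma^2/n)I_d)$ independent of $\theta$. Sufficiency gives the Markov chain $\theta\to\bar Y\to Y^n$, hence $\theta\to\bar Y\to Y^n\to S$. So $I(\theta;S)\le I(\theta;\bar Y)$, and data processing gives $I(\bar Y;S)\le I(Y^n;S)\le R$. Conversely, any channel from $\bar Y$ is a channel from $Y^n$ with the same rate. We may therefore work with $\bar Y$ alone. In that problem $I(\theta;Y)=I(\theta;\bar Y)=\frac d2\log(1+a)$.

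\textbf{Converse.} Fix $S$ with $I(\bar Y;S)\le R$. Then
\[
h(\bar Y\mid S)\ge h(\bar Y)-R=\tfrac d2\log\bigl(2\pi e(\tau^2+\sigma^2/n)\bigr)-R.
\]
Conditionally on $S=s$, $\theta$ and $W$ are still independent, because $S$ depends on $\theta$ only through $\bar Y$. I will double-check this step: it holds for the conditional law of $(\theta,W)$ given $S$ by the Markov structure. Conditionally, the decomposition $\bar Y=\theta+W$ is therefore not literally independent given $S$. The clean route is the standard one, applied in the reverse direction: write $\theta=\beta\bar Y+V$, where $\beta=\tau^2/(\tau^2+\sigma^2/n)$ and $V$ is Gaussian and independent of $\bar Y$, hence independent of $(\bar Y,S)$. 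The conditional EPI then gives
\[
e^{2h(\theta\mid S)/d}\ge \beta^2 e^{2h(\bar Y\mid S)/d}+e^{2h(V)/d}.
\]
This lower-bounds $h(\theta\mid S)$, hence upper-bounds $I(\theta;S)=h(\theta)-h(\theta\mid S)$ in terms of $R$. Substituting the variances ($\operatorname{Var}V=\tau^2(1-\beta)$) and simplifying gives
\[
I(\theta;S)\le \tfrac d2\log(1+a)-\tfrac d2\log\bigl(1+ae^{-2R/d}\bigr),
\]
and hence $\Delta_\KL\ge \frac d2\log(1+ae^{-2R/d})$. The main obstacle is this algebra: confirming that the EPI with the right scaling produces exactly this expression. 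I would check it first in the scalar case $d=1$ and then use isotropy, since all covariances are multiples of $I_d$.

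\textbf{Achievability and the inverse.} Take $S=\bar Y+N$ with $N\sim N(0,\nu I_d)$ independent. Then $I(\bar Y;S)=\frac d2\log\bigl(1+(\tau^2+\sigma^2/n)/\nu\bigr)$. Choose $\nu$ so that this equals $R$, and compute
\[
I(\theta;S)=\tfrac d2\log\bigl(1+\tau^2/(\sigma^2/n+\nu)\bigr).
\]
A direct substitution should match the converse bound. Finally, $D_n^\star$ is continuous and strictly decreasing from $\frac d2\log(1+a)$ to $0$, and the achieving channels can be varied continuously in $\nu$. So $R_n^\star(\varepsilon)$ is obtained by solving $\varepsilon=\frac d2\log(1+ae^{-2R/d})$, which gives $R=\frac d2\log\frac{a}{e^{2\varepsilon/d}-1}$ for $\varepsilon$ in the stated range.
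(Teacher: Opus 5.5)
Your proposal is correct and is essentially the paper's own proof: your decomposition $\theta=\beta\bar Y+V$ is exactly the paper's $\theta=m+\zeta$ (posterior mean plus posterior noise independent of the data and the summary). It is followed, as in the paper, by the conditional entropy power inequality, the bound $h(\beta\bar Y\mid S)\ge h(\beta\bar Y)-R$, and Gaussian noise added to the sufficient statistic for achievability; the paper additionally writes out the Jensen step, via convexity of $x\mapsto\log(e^{2x/d}+c)$, that passes from each fixed $S=s$ to averaged conditional entropies. The algebra you flagged does close: with $s^2:=\beta^2(\tau^2+\sigma^2/n)$ and $v:=\tau^2(1-\beta)$ one has $\tau^2=s^2+v$ and $s^2/v=a$, giving $I(\theta;S)\le\frac d2\log\frac{1+a}{1+ae^{-2R/d}}$. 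Delete the abandoned, and false, claim that $\theta$ and $W$ remain independent given $S$, and handle $R=0$ with the constant summary rather than a finite $\nu$.
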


It is rather surprising that the optimal frontier in \eqref{eq:gaussian_exact} can only be achieved by adding independent Gaussian noise, with variance chosen to meet the rate budget, to the sample mean summary statistic; no deterministic summary passing the same amount of information can achieve it. We prove this using a conditional entropy-power argument against arbitrary channels in Appendix~\ref{app:parametric}. 


\section{When are stochastic summaries strictly optimal?}
\label{sec:randomization_essential}

The simple example in Proposition \ref{thm:gaussian_exact} raises an important question: does randomization provably yield \textit{strictly} optimal frontiers even beyond known the Gaussian-location case? To formalize the comparison, let $\mathcal G$ contain all measurable deterministic quantizers (commonly called codes in information theory) $g:\mathcal Y\to\mathbb N$ with $H(g(Y))<\infty$, including constants. Since the information cost of a deterministic quantizer is its entropy, restricting to countable outputs loses no finite-rate deterministic summaries. One can then write the corresponding distortion function and rate-distortion frontier as
\begin{equation}
\label{eq:det_frontier}
D^{\mathrm{det}}(R):=\inf_{g\in\mathcal G:\,H(g(Y))\le R}\Delta_\KL(g(Y)),
\qquad
\mathcal A_{\mathrm{det}}:=\{(H(g(Y)),\Delta_\KL(g(Y))):g\in\mathcal G\}.
\end{equation}
Further, draw a finite selector $J$ independently of the experiment with probabilities
$\lambda_j$ and output $S=(J,g_J(Y))$, revealing which quantizer was used. The revealed mixture then satisfies 
\begin{equation*}
 I(Y;S)=\sum_j\lambda_j I(Y;g_j(Y)) = \sum_j\lambda_j H(g_j(Y)),\qquad
 \Delta_{\KL}(S)=\sum_j\lambda_j\Delta_{\KL}(g_j(Y)).
\end{equation*}
This gives the deterministic time-sharing frontier
\begin{equation}
\label{eq:ts_frontier}
D^{\mathrm{ts}}(R):=\inf\{d:(r,d)\in\operatorname{conv}(\mathcal A_{\mathrm{det}}),\ r\le R\}.
\end{equation}
Here $\operatorname{conv}$ denotes finite convex combinations \citep{wagner2021sawbridge, witsenhausen1975conditional}. Further, define the zero-distortion rate as $R_0:=\inf\{R\ge0:D^{\mathrm{ts}}(R)=0\}$
with $\inf\varnothing=\infty$. Appendix~\ref{app:separation-foundations} establishes the time-sharing identities and frontier properties.  
\begin{remark}
    Time-sharing also introduces randomness via the selector, but unlike the noise added to sample mean in the Gaussian-location example, the selector is revealed to the posterior decoder. Our main separation result concerns the advantage of truly stochastic summaries with hidden randomness, going beyond revealed time-sharing mixtures of deterministic summaries.
\end{remark}
\subsection{The main separation result}

We now show that randomizing the summary is not an arbitrary choice made specially for the Gaussian model: for general Bayesian experiments, any summary that approaches the optimal frontier must be truly stochastic (with randomness hidden from the decoder) under the only assumption that sampling laws share a common support. 
\begin{assumption}[Common support]
\label{ass:common_support}
The sampling distributions $\{P_{Y\mid\theta}:\theta\in\Theta_0\}$ are mutually absolutely continuous on a measurable set $\Theta_0$ with $\pi(\Theta_0)=1$. 
\end{assumption}
Under the standing assumption that the Bayes-information $I(\theta; Y)$ is finite, Assumption \ref{ass:common_support} is equivalent to $\pi\otimes P_Y\sim P_{\theta Y}$, or 
 $\Pi_y\sim\pi$ for $P_Y$-almost every $y$,
where $\sim$ denotes mutual absolute continuity. Thus, intuitively, no observation rules out a prior plausible value of the parameter. This is the standard homogeneity condition of \citet[Section~3]{lecam_hajek_undated}; see Proposition~\ref{prop:support_equivalence}.
\begin{theorem}[Strict stochastic advantage]
\label{thm:separation}
Let $\theta\to Y$ be a Bayesian experiment on standard Borel spaces with $0<I(\theta;Y)<\infty$ satisfying Assumption~\ref{ass:common_support}. Then
\begin{equation}
\label{eq:separation}
D^\star(R)<D^{\mathrm{ts}}(R)\qquad\text{for every }0<R<R_0.
\end{equation}
For each such $R$, a finite-alphabet stochastic summary has information rate exactly $R$ and distortion strictly below $D^{\mathrm{ts}}(R)$. Both frontiers equal $I(\theta;Y)$ at $R=0$, and both vanish for $R\ge R_0$ if $R_0<\infty$.
\end{theorem}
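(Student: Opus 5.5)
The plan is to take any point on the time-sharing frontier at a rate $0<R<R_0$ and beat it with a hidden-randomization perturbation of a deterministic quantizer. Three facts about $D^{\mathrm{ts}}$ come first; I will take them from Appendix~\ref{app:separation-foundations}. The function $D^{\mathrm{ts}}$ is convex and nonincreasing. It equals $I(\theta;Y)$ at $R=0$, because the only zero-entropy quantizer is constant and its distortion is $D^{\mathrm{av}}_1=I(\theta;Y)$ by Theorem~\ref{thm:avg_quant}. It is strictly positive on $[0,R_0)$. The endpoint claims in the statement follow from these. At $R=0$, $D^\star(0)=I(\theta;Y)$ because $I(Y;S)=0$ forces $I(\theta;S)=0$, and Lemma~\ref{lem:identity} then gives distortion $I(\theta;Y)$. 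For $R\ge R_0$ with $R_0<\infty$, convexity gives $D^{\mathrm{ts}}(R)=0$, and $D^\star\le D^{\mathrm{ts}}$ always holds because revealed mixtures are admissible channels.

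For the strict inequality, fix $0<R<R_0$ and $\eta>0$. Choose a finite time-sharing $S=(J,g_J(Y))$ with rate at most $R$ and distortion at most $D^{\mathrm{ts}}(R)+\eta$. Each $g_j$ may be truncated to finitely many cells at an arbitrarily small cost in both rate and distortion, using continuity of entropy and of $I(\theta;g(Y))$ under refinement limits. Since $D^{\mathrm{ts}}(R)>0$, at least one component, call it $g$, has positive distortion and positive entropy. I then build a channel $S_\alpha$. With probability $1-\alpha$ it outputs $g(Y)$. With probability $\alpha$ it outputs an independent draw $\tilde S\sim P_{g(Y)}$. The coin is hidden from the decoder. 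The output marginal is unchanged, so
\[
I(Y;S_\alpha)=H(g(Y))-\mathbb E\,[H(S_\alpha\mid Y)],
\]
where $H(S_\alpha\mid Y)$ is the entropy of the mixture $(1-\alpha)\delta_{g(Y)}+\alpha P_{g(Y)}$. Its derivative in $\alpha$ at $0^+$ is $+\infty$, of order $\alpha\log(1/\alpha)$. This is the rate saving.

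The distortion cost is computed through Lemma~\ref{lem:identity}: $\Delta_\KL(S_\alpha)=I(\theta;Y)-I(\theta;S_\alpha)$. Here $I(\theta;S_\alpha)$ is the mutual information of the garbled finite channel $\theta\to S_\alpha$, with kernel $(1-\alpha)P_{g\mid\theta}+\alpha P_g$. This is where Assumption~\ref{ass:common_support} enters. Every cell $s$ with $P(g=s)>0$ has $P(g=s\mid\theta)>0$ for $\pi$-a.e.\ $\theta$, so all log-likelihood ratios $\log\bigl(P(g=s\mid\theta)/P(g=s)\bigr)$ are finite. As a result, $I(\theta;S_\alpha)$ is differentiable at $\alpha=0$ with a finite derivative, $I(\theta;S_\alpha)=I(\theta;g(Y))-C\alpha+o(\alpha)$ with $C<\infty$; establishing this is the hardest step. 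Hence for small $\alpha$ the rate drops by $\asymp\alpha\log(1/\alpha)$ while the distortion grows only by $O(\alpha)$. Without common support, some cell could have zero probability under some $\theta$, the ratio would be infinite, and the distortion cost could also be of order $\alpha\log(1/\alpha)$; that is the case the assumption excludes.

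The last step spends the saved rate. Because $R<R_0$ and $D^{\mathrm{ts}}$ is convex and strictly decreasing on $[0,R_0)$, it has a finite negative left slope $-\kappa$ at $R$ (finite since $R>0$). So buying back rate $\delta$ through time-sharing with a finer quantizer lowers distortion by about $\kappa\delta$. Mix $S_\alpha$, revealed by the selector, with a near-optimal time-sharing point at a slightly higher rate, so that the total rate is exactly $R$. The resulting distortion is at most
\[
D^{\mathrm{ts}}(R)+\eta+O(\alpha)-\kappa'\,\alpha\log(1/\alpha),
\]
which is strictly below $D^{\mathrm{ts}}(R)$ once $\alpha$ is small and then $\eta\ll\alpha\log(1/\alpha)$. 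The rate can be matched exactly by continuity in the mixing weight. The construction has a finite alphabet.

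The main obstacle is making the finite-derivative claim for $I(\theta;S_\alpha)$ rigorous under only $I(\theta;Y)<\infty$. I would handle it with dominated convergence on $\sum_s\int P(s\mid\theta)\log\bigl(P_\alpha(s\mid\theta)/P_\alpha(s)\bigr)\,d\pi$, after first restricting to quantizers whose cell likelihoods are bounded below on a set of prior mass close to one. A secondary difficulty is justifying the reduction to finite-cell quantizers and securing a uniform slope $\kappa'>0$ near $R$.
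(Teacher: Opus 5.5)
Your mechanism is the paper's: hidden relabeling saves rate at order $\alpha\log(1/\alpha)$, and revealed time-sharing spends the saving. The argument does not close, however, because the order of choices is circular. Your gain is really $\lambda_g\kappa(g)\kappa'\,\alpha\log(1/\alpha)$ against a cost of order $\lambda_g C_g\alpha$. Here $\lambda_g$ is the weight of the perturbed component and $\kappa(g)=1-\sum_cP(g(Y)=c)^2$ is its collision gap; positive entropy alone gives no quantitative saving. $C_g$ is its cost constant and $\kappa'$ your slope. All of these belong to the $\eta$-near-optimal mixture, so ``$\alpha$ small, then $\eta\ll\alpha\log(1/\alpha)$'' cannot be carried out. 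Shrinking $\eta$ forces you to re-choose the mixture, which may drive $\lambda_g\kappa(g)\to0$ or $C_g\to\infty$ and so shrink the admissible $\alpha$. Your argument works only when $D^{\mathrm{ts}}(R)$ is attained, as for finite $\mathcal Y$. For general standard Borel experiments no attainment is available, and avoiding it is the point of the theorem.

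The missing idea is a pair of quantizer-uniform estimates that let $t$ (your $\alpha$) and $\eta$ be fixed \emph{before} the mixture is chosen.
\begin{itemize}
\item \emph{Uniform cost.} For any $T\in\mathcal G$ the added distortion is $tI(\theta;T)+\operatorname{JS}_t(Q_{\theta T},P_{\theta T})$. Data processing bounds it by the single experiment-level quantity $A(t)=tI(\theta;Y)+\operatorname{JS}_t(Q_{\theta Y},P_{\theta Y})$. This is $o(t\log(1/t))$ because common support gives $Q_{\theta Y}\ll P_{\theta Y}$.
\item \emph{Uniform saving.} Relabel \emph{every} component with the same $t$. Put $a=I(\theta;Y)-D^{\mathrm{ts}}(R)>0$ and take $\eta<a/2$. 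Then the mixture puts weight more than $a/(4I(\theta;Y))$ on components of relevance at least $a/4$. Uniform integrability of $y\mapsto\KL(\Pi_y\Vert\pi)$ gives these components a collision gap above some $\delta>0$ fixed by the experiment and $R$. So the total saving is at least $a\delta t\log(1/t)/(4I(\theta;Y))$, uniformly over near-optimal mixtures.
\end{itemize}
Spending the saving on a fixed finite quantizer $U$ with $\Delta_{\KL}(U)<D^{\mathrm{ts}}(R)$ then gives a conversion rate that is also fixed in advance.

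Separately, the step you single out as hardest is false under the theorem's hypotheses. Differentiating $I(\theta;S_\alpha)=\KL((1-\alpha)P_{\theta T}+\alpha Q_{\theta T}\Vert Q_{\theta T})$ at $0$ gives $-(I(\theta;T)+L_T)$. Here $L_T=\sum_cp_c\KL(\pi\Vert\Pi_c)$ is the quantizer-level lautum information. Almost-sure finiteness of the log-likelihood ratios does not make $L_T$ finite. Take a Cauchy prior, $Y\mid\theta\sim N(\theta,1)$ and $T=\mathbf 1\{Y>0\}$. Common support holds, and $0<I(\theta;Y)<\infty$ since $h(Y)<\infty$. But $\log(1/\Phi(-\theta))\sim\theta^2/2$ is not Cauchy-integrable, so $L_T=\infty$ and the cost is superlinear in $\alpha$.

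Restricting to quantizers whose cell likelihoods are bounded below on most of the prior mass does not repair this. The leftover set can still carry an infinite integral, and no such restriction can be imposed on an arbitrary near-optimal mixture. What is true, and sufficient, is the weaker $o(\alpha\log(1/\alpha))$ bound. $\operatorname{JS}_w(Q_{\theta T},P_{\theta T})/(w\log(1/w))$ converges to the mass of $Q_{\theta T}$ singular to $P_{\theta T}$, which is zero under common support.

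A minor point: rate is bought back by moving to higher rate, so the relevant slope is the right derivative of $D^{\mathrm{ts}}$ at $R$, not the left one. That right derivative is strictly negative for $R<R_0$.
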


The result establishes a striking limitation of the entire
deterministic summary class: at each interior information rate, any sequence of deterministic or time-sharing summaries cannot approach the optimal frontier; the optimal summary must therefore be truly stochastic. This conclusion applies to general Bayesian
experiments, including continuous observations, and requires no attainment assumption for either frontier. Moreover, the interval of separation need not
be bounded: whenever $R_0=\infty$, stochastic summaries
retain a strict advantage at every finite information budget.
The Gaussian experiment of Proposition~\ref{thm:gaussian_exact} provides an
explicit example: its noisy Gaussian summary outperforms the
deterministic time-sharing envelope on the entire positive half-line.

\begin{proof}[Proof sketch.]
    Consider a deterministic quantizer $T: \mathcal Y \to \mathbb N$ at any fixed interior budget. The proof rests on the argument that a careful random relabeling procedure applied to $T$ introduces uniformly lower posterior distortion than the amount of data information lost. Our construction id as follows: retain the original label of $T$ with probability $1-t$, otherwise replace it by an independent draw from its own marginal distribution, without revealing the replacement. The resulting  summary is stochastic with the same output distribution as $T$, but saves at least $\kappa(T)t\log(1/t)$ in information rate, where $\kappa(T):=1-\sum_c \mathbb P(T=c)^2$ is the collision gap. However, this random corruption also incurs additional posterior distortion, which can be exactly written as a weighted Jensen--Shannon divergence between
the quantizer-level joint and product laws. Under Assumption~\ref{ass:common_support}, data processing gives an $o(t\log(1/t))$ bound for this
distortion, uniformly over all $T$. 

To turn this mechanism into a strict gap between frontiers, we also control the rate saving uniformly.
This becomes possible because, at any interior rate, nearly optimal quantizers $T$ or their revealed mixtures must retain a positive amount of parameter information, but under the standing assumption that the Bayes
information is finite, any quantizer with $I(\theta;T)>0$
can not arbitrarily concentrate on one label. This gives a uniformly
positive lower bound on the mixture's average collision gap. Hence, relabeling \textit{uniformly}  saves a positive multiple of
$t\log(1/t)$ while only incuring $o(t\log(1/t))$ posterior distortion. One can then spend this saving by  revealed time-sharing
with a fixed finite quantizer of lower distortion
and restore the original rate while strictly reducing distortion.
Thus hidden mixing frees the budget and revealed mixing spends it on a better summary.
Appendix~\ref{app:general-separation} provides the full proof. We fix all comparison constants
before choosing the nearly optimal mixture, avoiding any assumption
of an attained deterministic optimum or a finite-rate sufficient quantizer.
\end{proof}

\subsection{Remarks about Theorem \ref{thm:separation}} \label{sec:remarks_separation}
\paragraph{Finite observation spaces.} This setting is a special case of Theorem \ref{thm:separation} in which both the
separation interval and the required output alphabet can be made
explicit. If $Y$ takes $m$ values with positive probabilities and its
posteriors are pairwise equivalent, then $R_0=H(\Pi_Y)$, and separation
holds throughout $0<R<H(\Pi_Y)$. Moreover, an optimum is attained with
at most $m+1$ output values, including when the parameter space is
general; this is the finite-observation IB cardinality bound
\citep{witsenhausen1975conditional,benger2023cardinality}.
In particular, Beta--Binomial experiments (Section~\ref{sec:experiments}) with positive prior
parameters satisfy these conditions, with $H(\Pi_C)=H(C)$.
They provide a controlled illustration of the theorem because the
deterministic count envelope can be enumerated independently of
neural optimization. 
Appendix~\ref{app:finite-separation} gives the complete statements,
the attainment argument, and an explicit stochastic count-channel
witness.

\paragraph{The common-support condition.} Assumption~\ref{ass:common_support} identifies a broad regime of strict
advantage, but is not a characterization of all such experiments.
We provide three simple examples that distinguish our result from weaker or stronger claims.
First, if the target itself is determined by the observation,
$\theta=f(Y)$, and has finite entropy, then the time sharing envelope is indeed optimal with 
\begin{equation}
\label{eq:det_target}
    D^{\mathrm{ts}}(R)=D^\star(R)=[H(\theta)-R]_+.
\end{equation}
Revealed time-sharing between the constant quantizer and $f(Y)$ attains
this line \citep{kolchinsky2019caveats}, making randomization beyond
time-sharing unnecessary. Second, posterior overlap alone neither guarantees nor rules out the stochastic advantage. For example, let $Y$ be a fair bit, and 
$\theta$ equal $Y$ with probability $\alpha\in(0,1)$ and an erasure
symbol $e$ otherwise, using an independent erasure coin. Although the posteriors overlap at $e$, we have
$I(\theta;S)=\alpha I(Y;S)$ for every summary $S$. Hence, both frontiers equal
$\alpha[\log 2-R]_+$. Conversely,
a fair-bit observation with posteriors $(0.1,0.9,0)$ and
$(0,0.1,0.9)$ has strict separation at every $0<R<\log 2$, even
though the sampling laws are not equivalent. Together, these
examples show that posterior overlap alone is insufficient and
common support is not necessary. See Appendix~\ref{app:support-examples} for
calculations and further boundary cases.

\paragraph{Quantifing the gap.} First, we clarify that the advantage is not a consequence of a particular frontier shape. The scalar Gaussian frontiers have the same initial slope (Proposition~\ref{prop:gaussian_same_slope}), while full-support
finite experiments can have an initial linear segment
\citep{benger2023cardinality}. Neither feature prevents strict
separation at every interior rate. What drives the proof is instead
the difference in scale between the information saved by hidden
relabeling and the additional posterior distortion incurred. Under the additional assumption
that the \textit{lautum information}
$L:=\KL(\pi\otimes P_Y\Vert P_{\theta Y})$ \citep{palomar2008lautum} is finite, one obtains a quantitative gap of $O(t \log (1/t))$; see Corollary~\ref{cor:lautum_gap}.

\paragraph{When does the stochastic advantage not hold?} 
The choice of information rate as the budget is essential for our result to hold. Relabeling
preserves the output distribution, so it reduces mutual information
without reducing output entropy. Together with
Theorem~\ref{thm:avg_quant}, the separation result establishes a
genuine contrast: while deterministic summaries suffice for the optimal
fixed-cardinality infimum, they are strictly suboptimal
under an (interior) information budget satisfying the
hypotheses of Theorem \ref{thm:separation}. Furthermore, the entropy-penalized deterministic information
bottleneck of \citet{strouse2017deterministic} studies a different
objective; neither its hard-assignment conclusions nor our
cardinality result imply deterministic optimality for every
output-entropy-constrained problem.

Finally, the result concerns a summary applied to one dataset at a
time. It does not compare against deterministic block codes that jointly
encode multiple independent Bayesian experiments. Such block codes can attain the
optimal tradeoff asymptotically in classical
finite-alphabet logarithmic-loss settings
\citep{courtade2014multiterminal}. The result,
however, does go beyond removing irrelevant variation within posterior
fibers by sufficient-statistic averaging
\citep{armstrong2026source}: strict improvement remains possible
even when there is no such variation to remove. The single-dataset setting is
precisely the one implemented by the NPE protocol below, where the
theorem motivates testing stochastic summaries at matched
information rates.
\section{Binary-bottleneck NPE: comparing summaries at measured rate}
\label{sec:npe_protocol}

The preceding sections distinguish nominal summary dimension, code cardinality, and information rate as different rate axes. In this section, we investigate the empirical gains of summary randomization, and motivate comparing  summary architectures at their measured rate. We examine two hypotheses:
\begin{description}[leftmargin=2.8em]
\item[(H1)] Stochastic bottlenecks can achieve lower evaluation posterior loss than deterministic bottlenecks at matched estimated rate. Where an exhaustive deterministic comparison is available, we also test improvement beyond deterministic time-sharing.
\item[(H2)] Comparisons between summary architectures depend on the information budget; one no-bottleneck score need not predict their ranking under compression.
\end{description}

To make information rates explicitly measurable, we insert a binary-bottleneck between the encoder and decoder networks of the standard NPE architecture. A summary encoder maps a dataset to $s=e_\psi(y)\in\mathbb R^{d_s}$. A binary encoder produces code probabilities $p_\eta(b_k=1\mid s)=\sigma(a_{\eta, k}(s))$ for $k \in [K]$, and a binary decoder maps a code $b\in\{0,1\}^K$ back to $\widetilde s=d_\eta(b)\in\mathbb R^{d_s}$. The posterior network receives only $\widetilde s$, with no skip connection from the original summary. The stochastic channel uses the conditionally independent Bernoulli channel and samples bits $b$ from $q_\eta(b\mid y)=\prod_{k=1}^{K} \operatorname{Bern}\!\left(b_k;p_{\eta,k}(e_\psi(y))\right)$. The deterministic channel uses a hard threshold of the same probabilities $b_k=\mathbf1\{p_{\eta,k}(e_\psi(y))>1/2\}$. 

\paragraph{Training and evaluation.}
The summary network, binary autoencoder, and posterior decoder are trained jointly with a sparse Bernoulli penalty which regularizes the information rate of the channel:
\begin{equation}
\begin{aligned}
 \mathcal L(\phi,\psi,\eta) =\mathbb E[-\log q_\phi(\theta\mid d_\eta(B))]+\beta\,\mathbb E_Y\sum_{k=1}^{K}
 \operatorname{KL}\!\left(q_\eta(b_k\mid y)
 \,\Vert\,\operatorname{Bern}(p_0)\right),
\end{aligned}
 \label{eq:training-objective}
\end{equation}
where $p_0=0.01$. Let $r_0(b):=\prod_k\operatorname{Bern}(b_k;p_0)$, then the variational identity \citep{alemi2016deep} $\mathbb E_Y\operatorname{KL}(q_\eta(B\mid Y)\Vert r_0) = I(Y;B)+\operatorname{KL}(q_\eta(B)\Vert r_0)$ shows that the penalty includes aggregate-marginal mismatch. We train both channels using forward codes and use straight-through gradients during the backward pass \citep{bengio2013estimating}. The deterministic channel's forward pass is annealed with a decreasing temperature: we start with the same stochastic training and slowly decrease the temperature of the channel followed by a hard threshold training phase to ensure convergence \citep{jang2017categorical, agustsson2017soft}. 

We vary $\beta$ to obtain operating points and measure the deployed rate after training. Let $q_i(b)=q_\eta(b\mid y_i)$ and $\bar q_M=M^{-1}\sum_iq_i$ for a reference set of $M$ simulator observations. We estimate $$R_M=\mathbb E_{B\sim\bar q_M}[-\log\bar q_M(B)]-M^{-1}\sum_iH(q_i)$$ by Monte Carlo. The exact empirical-mixture information obeys $R_M\leq\log M$ and has downward bias relative to population information after averaging over random reference sets. Hence, we also fit marginal  models $r_\omega$ on the marginal code distribution of the trained bottleneck NPE, giving a variational upper bound on the estimated rate. The two estimates are displayed as horizontal segments for experiments wherever exact enumeration is not feasible.

We replace the vertical axis in our experiments with evaluation NLL rather than a direct estimate of posterior KL. Write $q_\phi(\theta\mid B)$ for the decoder including its deterministic binary reconstruction. When the entropy and expected losses are finite, Lemma~\ref{lem:identity} gives
 $\mathrm{NLL}=H(\theta\mid Y)+\Delta_{\KL}(B)
 +\mathbb E_B\operatorname{KL}\!\left(p(\theta\mid B)\Vert q_\phi(\theta\mid B)\right).$
Since, within one simulator and parameterization, the first conditional entropy term is common, NLL differences combine information loss and decoder error. Although the resulting curves therefore describe the trained family, we verify on the analytic benchmarks that our trained decoders have multiple orders of magnitude less error than the measured separation in NLL.


\section{Experiments}
\label{sec:experiments}

We use a finite conjugate experiment and four simulator benchmarks to examine the hypotheses of Section~\ref{sec:npe_protocol}. The count experiment isolates information loss from decoder approximation and compares achievable stochastic channels with an exhaustive deterministic time-sharing hull. On the simulators, we compare evaluation NLL at estimated information rate within trained model families. The interpolation and comparator conventions are specified in Section~\ref{sec:npe_protocol}.

\begin{wrapfigure}[15]{r}{0.4\linewidth}
  \vspace{-\intextsep}          
  \centering
  \includegraphics[width=\linewidth]{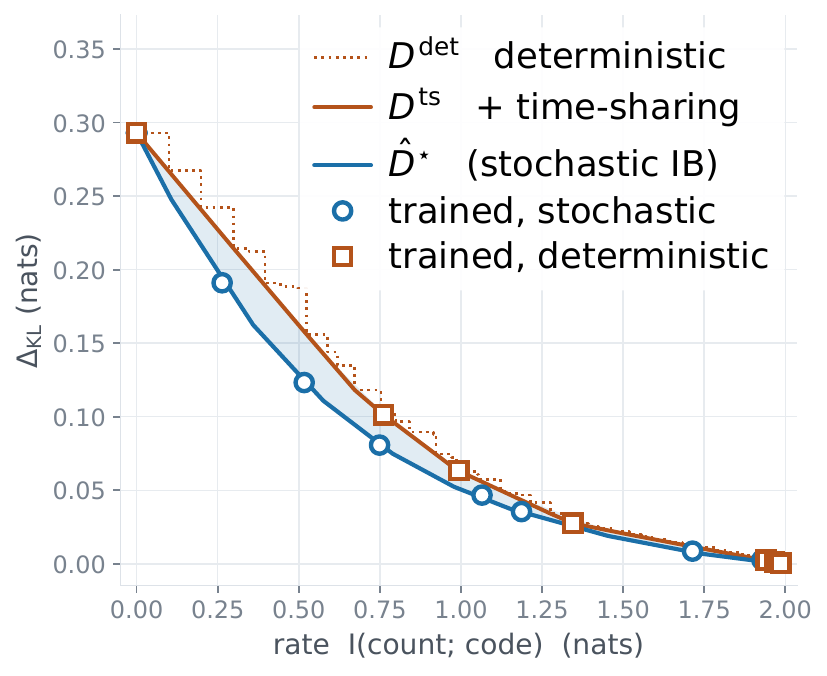}
  \caption{Randomization beyond deterministic time-sharing in the count
  benchmark.}
  \label{fig:beta-v6}
\end{wrapfigure}
\textbf{An enumerated posterior-compression example.}
Let $\theta\sim\operatorname{Beta}(5,5)$ and $Y_1,\ldots,Y_8\mid\theta$ be
independent Bernoulli observations. We encode their sufficient count
$C=\sum_iY_i$ using four bits. The nine count values have $21{,}147$
partitions, permitting enumeration of all deterministic count encoders up to
relabeling. Figure~\ref{fig:beta-v6} compares these pairs and their lower
convex hull with achievable stochastic channels from information-bottleneck
iterations. The largest observed reduction in information loss beyond this time-sharing hull is approximately $0.036$ nats.

All BNPE models use an exact finite-sum mutual-information penalty in this example and training lands at the two plotted frontiers. The maximum mean decoder approximation gap is approximately $4\times10^{-4}$ nats,
substantially smaller than the separation. 

\begin{figure}[t]
 \centering
 \includegraphics[width=\linewidth]{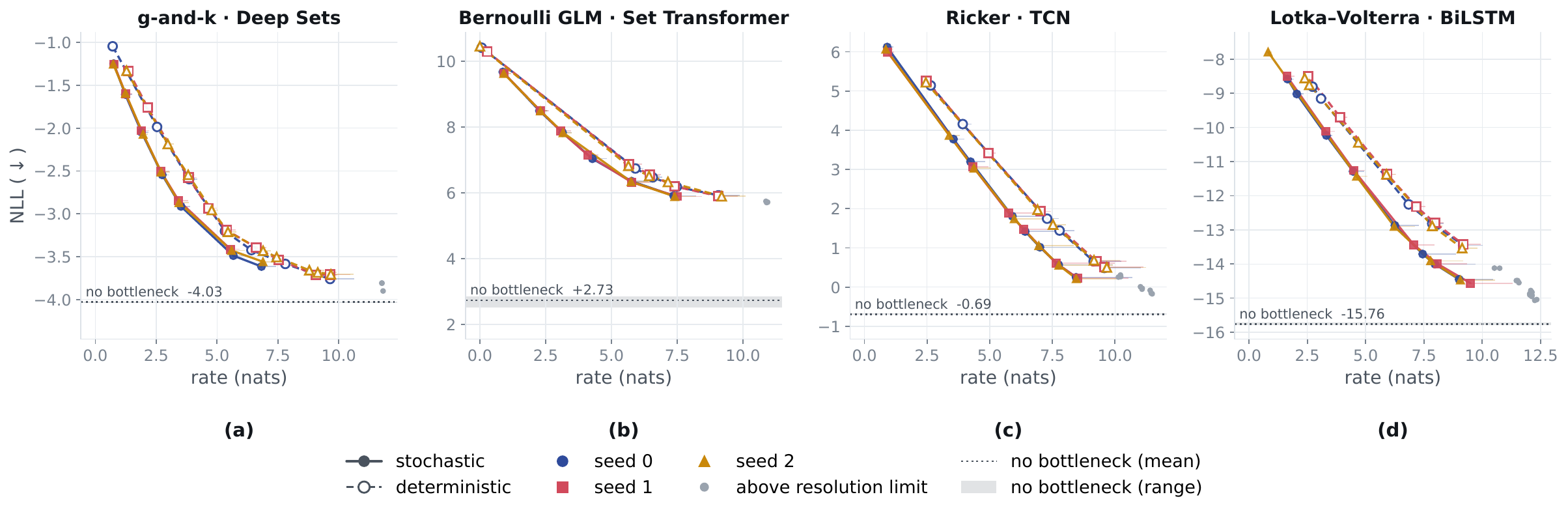}
 \caption{\textbf{Rate--NLL comparisons on four simulators.} All trained stochastic NPEs achieve lower NLL than the plotted lower convex hull of deterministic ones at matched information rate. Horizontal segments connect the MC and marginal-based rate estimates. Gray points exceed the 10-nat MC rate threshold and are omitted from averaged curves. Dotted lines are standard NPE references trained without a bottleneck.}
 \label{fig:arms-v6}
\end{figure}

\begin{figure}[t]
 \centering
 \includegraphics[width=\linewidth]{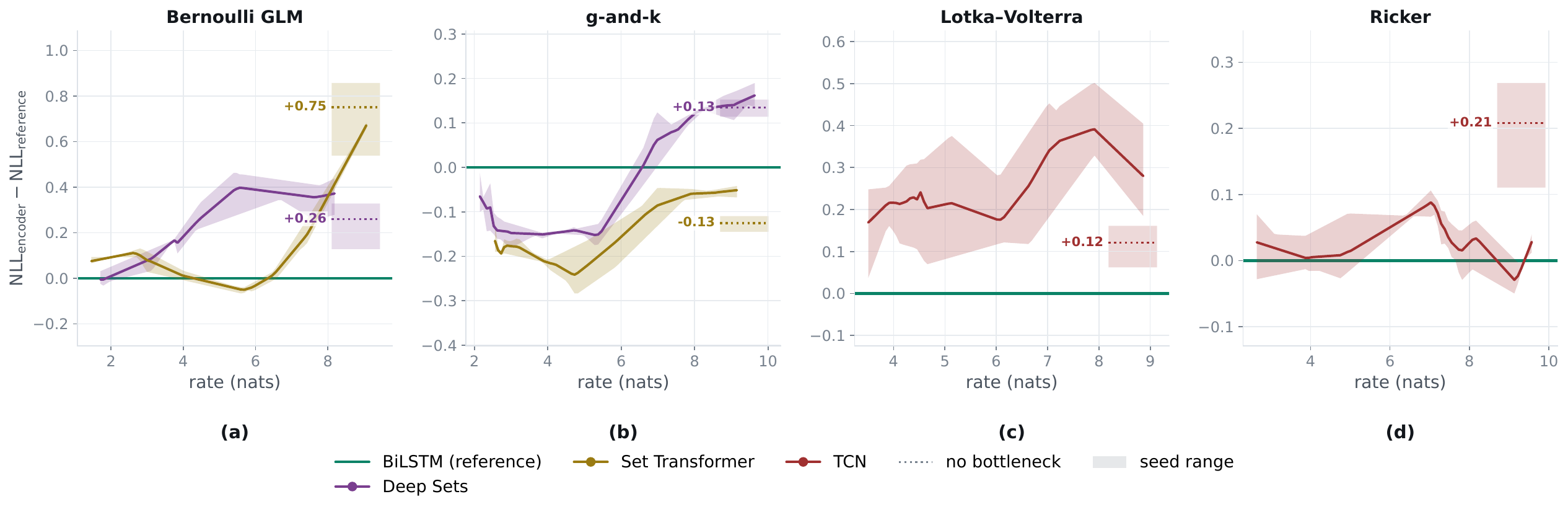}
 \caption{\textbf{Architecture differences at matched rate.} Each curve is encoder NLL minus BiLSTM NLL; negative values favor the named encoder. Shading shows the minimum-to-maximum range of the three paired differences. Dotted segments are corresponding mean no-bottleneck differences.}
 \vspace{-4mm}
 \label{fig:architectures-v6}
\end{figure}

\paragraph{Simulator protocol and estimands.}
We study a Bernoulli GLM, g-and-k, Lotka--Volterra, and Ricker, with Deep Sets, Set Transformer, BiLSTM, or TCN encoders as appropriate. Each supplies a 128-bit bottleneck followed by the same conditional-flow family. Appendix~\ref{app:experiments-v6} reports the configuration, estimators, and additional panels.

\textbf{(H1) Stochastic versus deterministic summaries.}
We show the comparison for one encoder architecture per simulator in Figure~\ref{fig:arms-v6} by plotting the time-sharing convex hull of deterministic summaries against the stochastic summary curve, separately for each training seed. All evaluated stochastic BNPE models exhibit better rate--NLL tradeoffs: the mean matched-rate NLL differences remain negative in all simulator-encoder combinations, ranging from $-0.168$ to $-0.617$ nats. The other six simulator-encoder combinations, deferred to Figure~\ref{fig:arms-rest-v6}, also show the same advantage. 


\textbf{(H2) Architecture comparisons depend on rate.}
Figure~\ref{fig:architectures-v6} shows rate-matched architecture comparisons for all four simulators. Within each training seed, we compute  differences from the BiLSTM reference, and then average them on a common rate grid. On g-and-k, the Deep Sets comparison changes sign, while Set Transformer remains favorable over the displayed shared range. On Bernoulli GLM, the Set Transformer gap is much smaller at intermediate rates than its no-bottleneck difference and the relative performance of DeepSets and Set Transformer changes with the rate. For Lotka--Volterra, TCN remains above BiLSTM throughout and for Ricker, the difference is smaller and varies with rate and seed. These comparisons demonstrate why one no-bottleneck score does not characterize an encoder under compression.

\section{Limitations and Conclusion}
\label{sec:conclusion}

\paragraph{Limitations.} Our experiments compare summaries at rates below roughly 12 nats, where the frontier is resolvable with our estimators. Whether the separation of Theorem \ref{thm:separation} has empirical consequences at the rates of unconstrained summary networks is a question we leave for future work.
Furthermore, the theorem concerns a summary channel applied to one observation or simulated dataset. It does not compare against deterministic block codes that summarize independent experiments or guarantee that a neural optimizer necessarily reaches the better channel. Furthermore, as we show via examples, the common support assumption is also only sufficient rather than necessary: characterizing the separation when sampling supports differ remains an open problem as well. 

\paragraph{Conclusion.} In this work, we show that posterior fidelity and retained information provide complementary coordinates for comparing summaries. A fixed representation dimension does not fix its information rate, and regular parametric concentration can require increasing rate even when a fixed-dimensional sufficient statistic exists. Posterior quantization and mass-sensitive converses connect this requirement to the law of the random posterior. At a prescribed cardinality, deterministic summaries achieve the same distortion infimum as stochastic ones. However, to the best of our knowledge, Theorem \ref{thm:separation} is the first formal result that establishes a strict advantage of stochastic summaries, over every deterministic and time-sharing mixture, when comparisons happen at matched information rate. 
Our binary-bottleneck NPE framework further turns this theoretical distinction into a measured comparison protocol. The count benchmark enumerates the deterministic envelope directly and shows strict gains. The scientific simulator comparisons also show favorable stochastic rate--NLL tradeoffs and architecture rankings that vary with rate. Together, the results establish an explicit information budget as a useful basis for studying both the statistical requirements and the learned performance of summaries in SBI.

\section*{AI use statement}
Generative AI tools assisted with literature searches,  code review, figure preparation, and manuscript editing. The authors are responsible for the scientific claims, source attribution, and final presentation.

\section*{Acknowledgments}
We thank Dr. Adam Klivans and the Institute for Foundations of Machine Learning (IFML) at UT Austin for providing the compute resources.

\clearpage
\bibliography{references}

@article{bengio2013estimating,
  title={Estimating or propagating gradients through stochastic neurons for conditional computation},
  author={Bengio, Yoshua and L{\'e}onard, Nicholas and Courville, Aaron},
  journal={arXiv preprint arXiv:1308.3432},
  year={2013}
}

@article{chen2024conditional,
  title={Conditional diffusions for amortized neural posterior estimation},
  author={Chen, Tianyu and Bansal, Vansh and Scott, James G},
  journal={arXiv preprint arXiv:2410.19105},
  year={2024}
}

@article{cranmer2020frontier,
  title={The frontier of simulation-based inference},
  author={Cranmer, Kyle and Brehmer, Johann and Louppe, Gilles},
  journal={Proceedings of the National Academy of Sciences},
  volume={117},
  number={48},
  pages={30055--30062},
  year={2020},
  publisher={National Academy of Sciences}
}

@article{papamakarios2016fast,
  title={Fast $\varepsilon$-free inference of simulation models with bayesian conditional density estimation},
  author={Papamakarios, George and Murray, Iain},
  journal={Advances in neural information processing systems},
  volume={29},
  year={2016}
}

@article{radev2020bayesflow,
  title={BayesFlow: Learning complex stochastic models with invertible neural networks},
  author={Radev, Stefan T and Mertens, Ulf K and Voss, Andreas and Ardizzone, Lynton and K{\"o}the, Ullrich},
  journal={IEEE transactions on neural networks and learning systems},
  volume={33},
  number={4},
  pages={1452--1466},
  year={2020},
  publisher={IEEE}
}

@article{chan2018likelihood,
  title={A likelihood-free inference framework for population genetic data using exchangeable neural networks},
  author={Chan, Jeffrey and Perrone, Valerio and Spence, Jeffrey and Jenkins, Paul and Mathieson, Sara and Song, Yun},
  journal={Advances in neural information processing systems},
  volume={31},
  year={2018}
}

@article{aakesson2021convolutional,
  title={Convolutional neural networks as summary statistics for approximate Bayesian computation},
  author={{\AA}kesson, Mattias and Singh, Prashant and Wrede, Fredrik and Hellander, Andreas},
  journal={IEEE/ACM Transactions on Computational Biology and Bioinformatics},
  volume={19},
  number={6},
  pages={3353--3365},
  year={2021},
  publisher={IEEE}
}

@article{blum2013comparative,
  title={A comparative review of dimension reduction methods in approximate Bayesian computation},
  author={Blum, Michael GB and Nunes, Maria Antonieta and Prangle, Dennis and Sisson, Scott A},
  journal={Statistical Science},
  pages={189--208},
  year={2013},
  publisher={JSTOR}
}

@article{nunes2010optimal,
  title={On optimal selection of summary statistics for approximate Bayesian computation.},
  author={Nunes, Matthew A and Balding, David J},
  journal={Statistical Applications in Genetics \& Molecular Biology},
  volume={9},
  number={1},
  year={2010}
}

@article{fearnhead2012constructing,
  title={Constructing summary statistics for approximate Bayesian computation: semi-automatic approximate Bayesian computation},
  author={Fearnhead, Paul and Prangle, Dennis},
  journal={Journal of the Royal Statistical Society Series B: Statistical Methodology},
  volume={74},
  number={3},
  pages={419--474},
  year={2012},
  publisher={Oxford University Press}
}

@article{charnock2018automatic,
  title={Automatic physical inference with information maximizing neural networks},
  author={Charnock, Tom and Lavaux, Guilhem and Wandelt, Benjamin D},
  journal={Physical Review D},
  volume={97},
  number={8},
  pages={083004},
  year={2018},
  publisher={APS}
}

@article{chen2020neural,
  title={Neural approximate sufficient statistics for implicit models},
  author={Chen, Yanzhi and Zhang, Dinghuai and Gutmann, Michael and Courville, Aaron and Zhu, Zhanxing},
  journal={arXiv preprint arXiv:2010.10079},
  year={2020}
}

@inproceedings{chen2023learning,
  title={Is learning summary statistics necessary for likelihood-free inference?},
  author={Chen, Yanzhi and Gutmann, Michael U and Weller, Adrian},
  booktitle={International Conference on Machine Learning},
  pages={4529--4544},
  year={2023},
  organization={PMLR}
}

@article{jiang2017learning,
  title={Learning summary statistic for approximate Bayesian computation via deep neural network},
  author={Jiang, Bai and Wu, Tung-yu and Zheng, Charles and Wong, Wing H},
  journal={Statistica Sinica},
  pages={1595--1618},
  year={2017},
  publisher={JSTOR}
}

@article{hoffmann2026unifying,
  title={Unifying Summary Statistic Selection for Approximate Bayesian Computation},
  author={Hoffmann, Till and Onnela, Jukka-Pekka},
  journal={Statistics and Computing},
  volume={36},
  number={2},
  pages={70},
  year={2026},
  publisher={Springer}
}

@article{lindley1956measure,
  title={On a measure of the information provided by an experiment},
  author={Lindley, Dennis V},
  journal={The Annals of Mathematical Statistics},
  volume={27},
  number={4},
  pages={986--1005},
  year={1956},
  publisher={Institute of Mathematical Statistics}
}

@article{tishby2000information,
  title={The information bottleneck method},
  author={Tishby, Naftali and Pereira, Fernando C and Bialek, William},
  journal={arXiv preprint physics/0004057},
  year={2000}
}

@article{alemi2016deep,
  title={Deep variational information bottleneck},
  author={Alemi, Alexander A and Fischer, Ian and Dillon, Joshua V and Murphy, Kevin},
  journal={arXiv preprint arXiv:1612.00410},
  year={2016}
}

@article{strouse2017deterministic,
  title={The deterministic information bottleneck},
  author={Strouse, Daniel J and Schwab, David J},
  journal={Neural computation},
  volume={29},
  number={6},
  pages={1611--1630},
  year={2017},
  publisher={MIT Press}
}

@inproceedings{harremoes2007revisited,
  title={The information bottleneck revisited or how to choose a good distortion measure},
  author={Harremo{\"e}s, Peter and Tishby, Naftali},
  booktitle={IEEE International Symposium on Information Theory (ISIT)},
  pages={566--570},
  year={2007}
}

@misc{armstrong2026source,
  title={Source-side sufficiency for the information bottleneck: exact reduction and finite-block equivalence},
  author={Armstrong, J.},
  note={arXiv:2604.26744},
  year={2026}
}

@article{banerjee2005bregman,
  title={Clustering with Bregman divergences},
  author={Banerjee, Arindam and Merugu, Srujana and Dhillon, Inderjit S. and Ghosh, Joydeep},
  journal={Journal of Machine Learning Research},
  volume={6},
  pages={1705--1749},
  year={2005}
}

@article{chou1989entropy,
  title={Entropy-constrained vector quantization},
  author={Chou, Philip A. and Lookabaugh, Tom and Gray, Robert M.},
  journal={IEEE Transactions on Acoustics, Speech, and Signal Processing},
  volume={37},
  number={1},
  pages={31--42},
  year={1989}
}

@misc{xu2016information,
  title={Information-theoretic lower bounds on Bayes risk in decentralized estimation},
  author={Xu, Aolin and Raginsky, Maxim},
  note={arXiv:1607.00550},
  year={2016}
}

@misc{riegler2018rate,
  title={Rate-distortion theory for general sets and measures},
  author={Riegler, Erwin and Koliander, G{\"u}nther and B{\"o}lcskei, Helmut},
  note={arXiv:1804.08980},
  year={2018}
}

@article{haussler1997mutual,
  title={Mutual information, metric entropy and cumulative relative entropy risk},
  author={Haussler, David and Opper, Manfred},
  journal={The Annals of Statistics},
  volume={25},
  number={6},
  pages={2451--2492},
  year={1997}
}

@article{clarkeBarron1994,
  title={Jeffreys' prior is asymptotically least favorable under entropy risk},
  author={Clarke, Bertrand S. and Barron, Andrew R.},
  journal={Journal of Statistical Planning and Inference},
  volume={41},
  number={1},
  pages={37--60},
  year={1994}
}

@misc{hayashi2018minimum,
  title={Minimum rates of approximate sufficient statistics},
  author={Hayashi, Masahito and Tan, Vincent Y. F.},
  note={arXiv:1612.02542; IEEE Transactions on Information Theory},
  year={2018}
}

@misc{elnmeir2020greedy,
  title={New approach to greedy vector quantization},
  author={El Nmeir, Rancy and Luschgy, Harald and Pag{\`e}s, Gilles},
  note={arXiv:2003.14145},
  year={2020}
}

@misc{saumard2014logconcavity,
  title={Log-concavity and strong log-concavity: a review},
  author={Saumard, Adrien and Wellner, Jon A.},
  note={arXiv:1404.5886},
  year={2014}
}

@article{chechik2005gaussianib,
  title={Information bottleneck for Gaussian variables},
  author={Chechik, Gal and Globerson, Amir and Tishby, Naftali and Weiss, Yair},
  journal={Journal of Machine Learning Research},
  volume={6},
  pages={165--188},
  year={2005}
}

@misc{asoodeh2020bottleneck,
  title={Bottleneck problems: information and estimation-theoretic view},
  author={Asoodeh, Shahab and Calmon, Flavio P.},
  note={arXiv:2011.06208},
  year={2020}
}

@inproceedings{kolchinsky2019caveats,
  title={Caveats for information bottleneck in deterministic scenarios},
  author={Kolchinsky, Artemy and Tracey, Brendan D. and Van Kuyk, Steven},
  booktitle={International Conference on Learning Representations (ICLR)},
  year={2019},
  note={arXiv:1808.07593}
}

@misc{benger2023cardinality,
  title={The cardinality bound on the information bottleneck representations is tight},
  author={Benger, Eyal and Asoodeh, Shahab and Chen, Jun},
  note={arXiv:2305.07000},
  year={2023}
}

@misc{sui2023evaluating,
  title={Evaluating summary statistics with mutual information for cosmological inference},
  author={Sui, Ce and Zhao, Xiaosheng and Jing, Tao and Mao, Yi},
  note={arXiv:2307.04994},
  year={2023}
}

@article{shao2022learning,
  title={Learning task-oriented communication for edge inference: an information bottleneck approach},
  author={Shao, Jiawei and Mao, Yuyi and Zhang, Jun},
  journal={IEEE Journal on Selected Areas in Communications},
  volume={40},
  number={1},
  pages={197--211},
  year={2022}
}

@article{joyce2008approximately,
  author  = {Joyce, Paul and Marjoram, Paul},
  title   = {Approximately Sufficient Statistics and {B}ayesian Computation},
  journal = {Statistical Applications in Genetics and Molecular Biology},
  volume  = {7},
  number  = {1},
  year    = {2008}
}

@article{alsing2018generalized,
  author = {Alsing, Justin and Wandelt, Benjamin},
  title = {Generalized Massive Optimal Data Compression},
  journal = {Monthly Notices of the Royal Astronomical Society: Letters},
  volume = {476},
  number = {1},
  pages = {L60--L64},
  year = {2018},
  doi = {10.1093/mnrasl/sly029},
  url = {https://academic.oup.com/mnrasl/article/476/1/L60/4909822}
}

@article{courtade2014multiterminal,
  author = {Courtade, Thomas A. and Weissman, Tsachy},
  title = {Multiterminal Source Coding Under Logarithmic Loss},
  journal = {IEEE Transactions on Information Theory},
  volume = {60},
  number = {1},
  pages = {740--761},
  year = {2014},
  doi = {10.1109/TIT.2013.2288257},
  url = {https://people.eecs.berkeley.edu/~courtade/pdfs/CourtadeWeissman_TransITJan2014.pdf}
}

@inproceedings{giladbachrach2003information,
  author = {Gilad-Bachrach, Ran and Navot, Amir and Tishby, Naftali},
  title = {An Information Theoretic Tradeoff between Complexity and Accuracy},
  booktitle = {Learning Theory and Kernel Machines},
  series = {Lecture Notes in Computer Science},
  volume = {2777},
  pages = {595--609},
  publisher = {Springer},
  year = {2003},
  doi = {10.1007/978-3-540-45167-9_43},
  url = {https://doi.org/10.1007/978-3-540-45167-9_43}
}

@inproceedings{hermans2020ratio,
  title = {Likelihood-free {MCMC} with Amortized Approximate Ratio Estimators},
  author = {Hermans, Joeri and Begy, Volodimir and Louppe, Gilles},
  booktitle = {Proceedings of the 37th International Conference on Machine Learning},
  volume = {119},
  pages = {4239--4248},
  year = {2020},
  publisher = {PMLR},
  url = {https://proceedings.mlr.press/v119/hermans20a.html}
}

@InProceedings{lueckmann19a,
  title = 	 {Likelihood-free inference with emulator networks},
  author =       {Lueckmann, Jan-Matthis and Bassetto, Giacomo and Karaletsos, Theofanis and Macke, Jakob H.},
  booktitle = 	 {Proceedings of The 1st Symposium on Advances in Approximate Bayesian Inference},
  pages = 	 {32--53},
  year = 	 {2019},
  editor = 	 {Ruiz, Francisco and Zhang, Cheng and Liang, Dawen and Bui, Thang},
  volume = 	 {96},
  series = 	 {Proceedings of Machine Learning Research},
  month = 	 {02 Dec},
  publisher =    {PMLR},
  url = 	 {https://proceedings.mlr.press/v96/lueckmann19a.html},
}

@inproceedings{lueckmann2021benchmarking,
  title={Benchmarking simulation-based inference},
  author={Lueckmann, Jan-Matthis and Boelts, Jan and Greenberg, David and Goncalves, Pedro and Macke, Jakob},
  booktitle={International conference on artificial intelligence and statistics},
  pages={343--351},
  year={2021},
  organization={PMLR}
}

@article{marin2012approximate,
  title={Approximate Bayesian computational methods},
  author={Marin, Jean-Michel and Pudlo, Pierre and Robert, Christian P and Ryder, Robin J},
  journal={Statistics and computing},
  volume={22},
  number={6},
  pages={1167--1180},
  year={2012},
  publisher={Springer}
}

@misc{lecam_hajek_undated,
  author = {Le Cam, Lucien},
  title = {On Some Results of {J. H\'{a}jek} Concerning Asymptotic Normality},
  year = {n.d.},
  howpublished = {Unpublished manuscript, University of California, Berkeley},
  note = {Undated manuscript in the Le Cam archive; Section 3, p. 14},
  url = {https://www.stat.berkeley.edu/~rice/LeCam/papers/Hajek2a.pdf}
}

@article{palomar2008lautum,
  author = {Palomar, Daniel P. and Verd{\'u}, Sergio},
  title = {Lautum information},
  journal = {IEEE Transactions on Information Theory},
  volume = {54},
  number = {3},
  pages = {964--975},
  year = {2008},
  doi = {10.1109/TIT.2007.915715},
  url = {https://palomar.home.ece.ust.hk/papers/2008/PalomarVerdu_TransIT2008_lautum.pdf}
}

@InProceedings{papamakarios19a,
  title = 	 {Sequential Neural Likelihood: Fast Likelihood-free Inference with Autoregressive Flows},
  author =       {Papamakarios, George and Sterratt, David and Murray, Iain},
  booktitle = 	 {Proceedings of the Twenty-Second International Conference on Artificial Intelligence and Statistics},
  pages = 	 {837--848},
  year = 	 {2019},
  editor = 	 {Chaudhuri, Kamalika and Sugiyama, Masashi},
  volume = 	 {89},
  series = 	 {Proceedings of Machine Learning Research},
  month = 	 {16--18 Apr},
  publisher =    {PMLR},
  url = 	 {https://proceedings.mlr.press/v89/papamakarios19a.html},
}

@inproceedings{slonim2000agglomerative,
  author = {Slonim, Noam and Tishby, Naftali},
  title = {Agglomerative Information Bottleneck},
  booktitle = {Advances in Neural Information Processing Systems},
  volume = {12},
  pages = {617--623},
  year = {2000},
  note = {Conference held in 1999},
  url = {https://proceedings.neurips.cc/paper_files/paper/1999/file/be3e9d3f7d70537357c67bb3f4086846-Paper.pdf}
}

@article{tavare1997inferring,
    author = {Tavaré, Simon and Balding, David J and Griffiths, R C and Donnelly, Peter},
    title = {Inferring Coalescence Times From DNA Sequence Data},
    journal = {Genetics},
    volume = {145},
    number = {2},
    pages = {505-518},
    year = {1997},
    month = {02},
    issn = {1943-2631},
    doi = {10.1093/genetics/145.2.505},
    url = {https://doi.org/10.1093/genetics/145.2.505},
    eprint = {https://academic.oup.com/genetics/article-pdf/145/2/505/35202057/genetics0505.pdf},
}

@inproceedings{theis2021advantages,
  author = {Theis, Lucas and Agustsson, Eirikur},
  title = {On the advantages of stochastic encoders},
  booktitle = {Neural Compression Workshop, International Conference on Learning Representations},
  year = {2021},
  url = {https://arxiv.org/abs/2102.09270}
}

@article{thomas2022likelihood,
  title={Likelihood-free inference by ratio estimation},
  author={Thomas, Owen and Dutta, Ritabrata and Corander, Jukka and Kaski, Samuel and Gutmann, Michael U},
  journal={Bayesian Analysis},
  volume={17},
  number={1},
  pages={1--31},
  year={2022},
  publisher={International Society for Bayesian Analysis}
}

@inproceedings{wagner2021sawbridge,
  author = {Wagner, Aaron B. and Ball{\'e}, Johannes},
  title = {Neural Networks Optimally Compress the {Sawbridge}},
  booktitle = {2021 Data Compression Conference (DCC)},
  pages = {143--152},
  year = {2021},
  publisher = {IEEE},
  doi = {10.1109/DCC50243.2021.00022},
  url = {https://arxiv.org/abs/2011.05065}
}

@article{witsenhausen1975conditional,
  author = {Witsenhausen, Hans S. and Wyner, Aaron D.},
  title = {A conditional entropy bound for a pair of discrete random variables},
  journal = {IEEE Transactions on Information Theory},
  volume = {21},
  number = {5},
  pages = {493--501},
  year = {1975},
  doi = {10.1109/TIT.1975.1055437},
  url = {https://doi.org/10.1109/TIT.1975.1055437}
}

@article{wood2010statistical,
  title={Statistical inference for noisy nonlinear ecological dynamic systems},
  author={Wood, Simon N},
  journal={Nature},
  volume={466},
  number={7310},
  pages={1102--1104},
  year={2010},
  publisher={Nature Publishing Group UK, London}
}

@article{wyner1973theorem,
  author = {Wyner, Aaron D. and Ziv, Jacob},
  title = {A theorem on the entropy of certain binary sequences and applications: {P}art {I}},
  journal = {IEEE Transactions on Information Theory},
  volume = {19},
  number = {6},
  pages = {769--772},
  year = {1973},
  doi = {10.1109/TIT.1973.1055107},
  url = {https://doi.org/10.1109/TIT.1973.1055107}
}

@inproceedings{germain2015made,
  title={Made: Masked autoencoder for distribution estimation},
  author={Germain, Mathieu and Gregor, Karol and Murray, Iain and Larochelle, Hugo},
  booktitle={International conference on machine learning},
  pages={881--889},
  year={2015},
  organization={Pmlr}
}

@article{zammit2025neural,
  title={Neural methods for amortized inference},
  author={Zammit-Mangion, Andrew and Sainsbury-Dale, Matthew and Huser, Rapha{\"e}l},
  journal={Annual Review of Statistics and Its Application},
  volume={12},
  number={1},
  pages={311--335},
  year={2025},
  publisher={Annual Reviews}
}

@inproceedings{agustsson2017soft,
  author    = {Agustsson, Eirikur and Mentzer, Fabian and Tschannen, Michael and
               Cavigelli, Lukas and Timofte, Radu and Benini, Luca and
               Van Gool, Luc},
  title     = {Soft-to-Hard Vector Quantization for End-to-End Learning
               Compressible Representations},
  booktitle = {Advances in Neural Information Processing Systems (NeurIPS)},
  year      = {2017}
}

@inproceedings{jang2017categorical,
  author    = {Jang, Eric and Gu, Shixiang and Poole, Ben},
  title     = {Categorical Reparameterization with {G}umbel-Softmax},
  booktitle = {International Conference on Learning Representations (ICLR)},
  year      = {2017}
}
\bibliographystyle{iclr2027_conference}
\clearpage
\appendix
\section*{Appendix guide}
The separation proof and its scope are developed in Appendices~\ref{app:separation-foundations}--\ref{app:support-examples}. The remaining appendices give the posterior-geometry and parametric arguments, experimental details, and consolidated related work.

\medskip
\noindent
\begin{tabularx}{\linewidth}{@{}lXr@{}}
\toprule
 & & Page \\
\midrule
\ref{app:separation-foundations} & \hyperref[app:separation-foundations]{Foundations for the separation theorem} & \pageref{app:separation-foundations} \\
\ref{app:general-separation} & \hyperref[app:general-separation]{Strict separation for general observation spaces} & \pageref{app:general-separation} \\
\ref{app:finite-separation} & \hyperref[app:finite-separation]{Finite observations: improvement and attainment} & \pageref{app:finite-separation} \\
\ref{app:support-examples} & \hyperref[app:support-examples]{Examples delimiting the support and rate assumptions} & \pageref{app:support-examples} \\
\ref{app:proofs-prelim} & \hyperref[app:proofs-prelim]{Posterior identities and source reduction} & \pageref{app:proofs-prelim} \\
\ref{app:proofs-geometry} & \hyperref[app:proofs-geometry]{Posterior quantization and uniform covering} & \pageref{app:proofs-geometry} \\
\ref{app:hellinger} & \hyperref[app:hellinger]{Hellinger covering and the local modulus} & \pageref{app:hellinger} \\
\ref{app:converses} & \hyperref[app:converses]{Further converses} & \pageref{app:converses} \\
\ref{app:parametric} & \hyperref[app:parametric]{Parametric rates: hypotheses and achievability} & \pageref{app:parametric} \\
\ref{app:parametric-rd-details} & \hyperref[app:parametric-rd-details]{Parametric rate--distortion calculations} & \pageref{app:parametric-rd-details} \\
\ref{app:experiments-v6} & \hyperref[app:experiments-v6]{Experimental protocol and additional results} & \pageref{app:experiments-v6} \\
\ref{app:related_works} & \hyperref[app:related_works]{Related work and connections} & \pageref{app:related_works} \\
\bottomrule
\end{tabularx}
\medskip

\section{Foundations for the separation theorem}
\label{app:separation-foundations}

Throughout these appendices, $\Theta$ and $\mathcal Y$ are standard Borel spaces, with finite Bayes information
$I:=I(\theta;Y)\in(0,\infty)$. Further, $\theta\to Y\to S$ is the summary Markov
chain. Auxiliary random variables are independent of the experiment unless
specified otherwise. 
\subsection{Notation}
\paragraph{Measures and quantizers.} In the general mixture statements below, $P$ and $Q$ denote probability
measures on the same measurable space $(\mathcal X,\mathcal A)$; $Q$ is
the reference measure in $\KL(P\Vert Q)$. We write $\mathcal G$ for the set of all countable-valued deterministic
quantizers (often called codes in the information-bottleneck literature), $T=g(Y)$ with $H(T)<\infty$. We include finite quantizers, and constants in $\mathcal G$ as well. 

For a quantizer $T=g(Y)$, write
\[
 p_c:=\mathbb P(T=c),\qquad
 P_T:=\sum_{c:\;p_c>0}p_c\delta_c,\qquad
 \Pi_c:=P_{\theta\mid T=c},\qquad
 \pi=\sum_{c:\;p_c>0}p_c\Pi_c.
\]
Thus $p_c$ is a cell probability, $P_T$ is the marginal law of the quantizer,
and $\Pi_c$ is the parameter posterior in cell $c$; $\delta_c$ is unit mass
at $c$. Thus $\Pi_T$ is the random posterior $\Pi_S$ of the main text
with $S=T$. For $p_c>0$, define its density relative to the prior by
\[
 f_c:=\frac{d\Pi_c}{d\pi},\qquad
 \Pi_c(d\theta)=f_c(\theta)\pi(d\theta).
\]
\paragraph{Relevance.} Following the information-bottleneck literature, we call $I(\theta;T)$ the
\emph{relevance} of the quantizer $T$: it is the information about the
parameter that the quantizer retains, and by Lemma~\ref{lem:identity} it
equals $I-\Delta_{\KL}(T)$, so a quantizer has high relevance exactly when
it has low posterior distortion. The same term is used below for the
relevance $I(\theta;S)$ of a general summary.


\paragraph{Singular mass and weighted Jensen-Shannon divergence.} For arbitrary $P,Q$, their Lebesgue decomposition is
$P=\tilde P_Q+P_Q^{\perp}$, where
$\tilde P_Q\ll Q$ and $P_Q^{\perp}\perp Q$.
The \emph{directed singular mass} is
\begin{equation}
\label{eq:singular_mass}
 \alpha(P\mid Q):=P_Q^{\perp}(\mathcal X)
 =\sup\{P(A):A\in\mathcal A,\ Q(A)=0\}.
\end{equation}
It measures the mass that $P$ places on events impossible under $Q$.
It lies in $[0,1]$, vanishes exactly when $P\ll Q$, and need not equal
$\alpha(Q\mid P)$. For $w\in(0,1)$, define the weighted Jensen-Shannon divergence as:
\begin{equation} \label{eq:weighted_js}
     M_w:=wP+(1-w)Q,\qquad
 \operatorname{JS}_w(P,Q)
 :=w\KL(P\Vert M_w)+(1-w)\KL(Q\Vert M_w).
\end{equation}
The first argument is the rare component as $w\downarrow0$.
We use $h_2(w)=-w\log w-(1-w)\log(1-w)$ for binary entropy.

\subsection{Results for deterministic summaries}
\begin{lemma}[Finite-rate deterministic summaries]
\label{lem:deterministic_atomic}
A deterministic summary with a standard Borel output space has finite
information rate if and only if its law is supported on countably many
atoms with finite entropy. In that case $I(Y;g(Y))=H(g(Y))$.
This restriction does not apply to stochastic summaries.
\end{lemma}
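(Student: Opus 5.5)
The plan is to prove the two directions separately, working with $T=g(Y)$ for a measurable $g:\mathcal Y\to\mathcal T$ into a standard Borel space, and with the Markov structure $Y\to T$ where $T$ is a function of $Y$. Since $T$ is determined by $Y$, the conditional law $P_{T\mid Y=y}=\delta_{g(y)}$. This gives $I(Y;T)=\KL(P_{YT}\Vert P_Y\otimes P_T)$, with $P_{YT}$ concentrated on the graph $\{(y,g(y))\}$. Throughout I will use the Gelfand--Yaglom--Perez characterization of mutual information as the supremum over finite measurable partitions of the discrete mutual informations.

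\emph{Sufficiency.} Suppose $P_T=\sum_c p_c\delta_c$ with countably many atoms and $H(T)<\infty$.
\begin{itemize}
\item For any finite partitions of $\mathcal Y$ and $\mathcal T$, the discrete mutual information is at most the discrete entropy of the quantized $T$. That entropy is at most $H(T)$ by grouping, so $I(Y;T)\le H(T)$.
\item Conversely, take partitions of $\mathcal T$ that separate the atoms $c_1,\dots,c_k$ and lump the remainder into one cell. Pair this with the induced partition $g^{-1}$ of $\mathcal Y$, under which the quantized $T$ is a function of the quantized $Y$. The discrete mutual information then equals the entropy of the lumped variable.
\item Letting $k\to\infty$, this entropy increases to $H(T)$, which yields $I(Y;g(Y))=H(g(Y))$.
\item Alternatively, compute directly: $dP_{YT}/d(P_Y\otimes P_T)(y,c)=1/p_c$ on the graph, so $I=\mathbb E[-\log p_T]=H(T)$.
\end{itemize}

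\emph{Necessity.} This is the main step. Suppose $I(Y;T)<\infty$. Then $P_{YT}\ll P_Y\otimes P_T$.
\begin{itemize}
\item The graph $G=\{(y,t):t=g(y)\}$ is measurable, because $\mathcal T$ is standard Borel and the diagonal is measurable. Also $P_{YT}(G)=1$.
\item By Fubini, $(P_Y\otimes P_T)(G)=\int P_T(\{g(y)\})\,P_Y(dy)=\sum_{\text{atoms } c}p_c^2$.
\item If $P_T$ has a nonatomic part, then on the set where $g(y)$ is not an atom we have $P_T(\{g(y)\})=0$. Write $A$ for the atoms of $P_T$, a countable and hence measurable set. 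The set $G\cap(g^{-1}(\mathcal T\setminus A)\times\mathcal T)$ then has product measure zero but positive $P_{YT}$-mass. This contradicts absolute continuity, so $P_T$ is purely atomic on countably many atoms.
\item With atomicity in hand, the sufficiency computation gives $H(T)=I(Y;T)<\infty$.
\end{itemize}
The delicate points are the measurability of the graph and of the atom set. These are exactly where the standard Borel hypothesis is used.

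\emph{The stochastic remark.} To show the restriction fails for stochastic summaries, I will exhibit $S=Y+N$ with independent Gaussian noise and $Y$ Gaussian. Here $S$ has a continuous law, yet $I(Y;S)=\tfrac12\log(1+\mathrm{snr})<\infty$, as in the proof of Proposition~\ref{thm:gaussian_exact}. This shows the atomic conclusion relies on $P_{S\mid Y}$ being a point mass.
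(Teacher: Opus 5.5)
Your proof is correct. The sufficiency half is essentially the paper's argument. The paper likewise writes $I(Y;T)=\sup_a H(a(T))$ over finite quantizers of $T$, and lets the quantizer that keeps the first $m$ atoms and lumps the rest drive the entropy up to $H(T)$.

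The genuine difference is in necessity. The paper stays inside that quantization formula. It splits the non-atomic part, of mass $c>0$, into $m$ measurable pieces of mass $c/m$ each, using that a non-atomic measure on a standard Borel space takes every intermediate value. This gives $H(a(T))\ge c\log m-c\log c\to\infty$. You argue by singularity instead. $P_{YT}$ is carried by the graph $G$, and the portion of $G$ over non-atom values is null for $P_Y\otimes P_T$ by Tonelli but has joint mass $c$. Hence $P_{YT}\not\ll P_Y\otimes P_T$ and the divergence is infinite.

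Your route needs only measurability of the diagonal, not the equal-mass splitting. The paper itself takes $I(Y;T)=\KL(P_{YT}\Vert P_Y\otimes P_T)$. So your singularity step, together with the explicit density $1/p_c$ on $G$, proves the whole lemma directly from the definition, and the partition bullets are not needed. The density computation gives $I=\mathbb E[-\log p_T]\in[0,\infty]$ without assuming finite entropy, which is what justifies reusing it at the end of your necessity step. Your identity $(P_Y\otimes P_T)(G)=\sum_c p_c^2=1-\kappa(T)$ also links nicely to the collision gap used later in the relabeling lemma.

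What the paper's construction buys in exchange is a quantitative statement. It exhibits an explicit family of finite deterministic quantizations of a continuous summary whose entropy grows like $c\log m$.
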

\begin{proof}
Let $T=g(Y)$ and let $\mu$ be the law of $T$ on its output space
$\mathcal X$. We first express the rate
of $T$ as a supremum of finite entropies, and then evaluate that supremum
separately for a law with a nonatomic part and for a purely atomic law.

Mutual information between general random variables is the supremum of
the mutual informations between their finite quantizations, and it is
enough to quantize one of the two arguments. Hence
\[
 I(Y;T)=\sup_{a}I(Y;a(T)),
\]
where $a$ ranges over measurable maps from the output space of $T$ to a
finite set. For a finite-valued $a(T)$, the mutual information is the
entropy difference $H(a(T))-H(a(T)\mid Y)$, and the conditional entropy
vanishes because $a(T)=a(g(Y))$ is a function of $Y$. Therefore
\begin{equation}
\label{eq:rate_as_sup_entropy}
 I(Y;T)=\sup_{a}H(a(T)).
\end{equation}
Since each $a(T)$ is a function of $T$, every term in the supremum is at
most $H(T)$, where $H(T)$ is the (possibly infinite) entropy of the
countable law when $\mu$ is atomic.

Suppose now that $\mu$ has a nonatomic component of total mass $c>0$.
Let $N$ be the countable set of atoms of $\mu$; the nonatomic part gives
$N$ zero mass, so $\mu$ restricted to $\mathcal X\setminus N$ is a
nonatomic measure of mass $c$. A nonatomic finite measure on a standard
Borel space takes every value between zero and its total mass on
measurable subsets, so $\mathcal X\setminus N$ can be partitioned into
measurable sets $A_1,\ldots,A_m$ with $\mu(A_i)=c/m$ for each $i$.
Define the quantizer $a$ to map $A_i$ to $i$ and $N$ to the symbol $0$.
Dropping the nonnegative term of the symbol $0$,
\[
 H(a(T))\ \ge\ \sum_{i=1}^m\frac cm\log\frac mc\ =\ c\log m-c\log c,
\]
which tends to infinity with $m$. By~\eqref{eq:rate_as_sup_entropy} the
rate is infinite.

Suppose instead that $\mu$ is purely atomic, $\mu=\sum_kp_k\delta_{x_k}$
with $p_1\ge p_2\ge\cdots$. Let $a_m$ retain the first $m$ atoms as
separate symbols and merge all remaining atoms into one symbol of mass
$r_m:=\sum_{k>m}p_k$. Then
\[
 H(a_m(T))=\sum_{k\le m}p_k\log\frac1{p_k}+r_m\log\frac1{r_m}.
\]
As $m\to\infty$, the partial sums increase to $H(T)$ and
$r_m\log(1/r_m)\to0$ because $r_m\to0$. Thus $\sup_aH(a(T))\ge H(T)$,
and the reverse inequality was noted above, so $I(Y;T)=H(T)$. The rate
is finite exactly when $H(T)<\infty$, which proves both assertions.
A stochastic summary can have a nonatomic law and finite rate, for
instance the observation plus independent Gaussian noise in
Proposition~\ref{thm:gaussian_exact}, so the restriction is specific to
deterministic quantizers.
\end{proof}

\begin{lemma}[Approximation by finite posterior quantizers]
\label{lem:finite_posterior_quantizers}
There are finite-valued deterministic quantizers $U_n$ such that
$I(\theta;U_n)\uparrow I$ and $\Delta_{\KL}(U_n)\downarrow0$.
Consequently, for every $\eta>0$ a finite quantizer has distortion less than
$\eta$, and $D^{\mathrm{det}}(R)\to0$ as $R\to\infty$.
\end{lemma}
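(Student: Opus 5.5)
We construct $U_n$ directly on the parameter side: a refining sequence of finite partitions of $\Theta$ approximates the information $I(\theta;Y)$, and this is then transferred into a quantizer of $Y$.

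The most direct route uses the characterization of mutual information as a supremum over finite quantizations of both arguments. Since $\Theta$ and $\mathcal Y$ are standard Borel, there are increasing sequences of finite measurable partitions $\mathcal P_n$ of $\Theta$ and $\mathcal Q_n$ of $\mathcal Y$ whose unions generate the respective Borel $\sigma$-algebras. By the Dobrushin/Pinsker theorem, $I([\theta]_{\mathcal P_n};[Y]_{\mathcal Q_n})\uparrow I(\theta;Y)=I$. Set $U_n:=[Y]_{\mathcal Q_n}$, the index of the cell of $\mathcal Q_n$ containing $Y$. This is a finite-valued deterministic function of $Y$, hence in $\mathcal G$.

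Data processing gives
\[
I([\theta]_{\mathcal P_n};U_n)\le I(\theta;U_n)\le I .
\]
Because the partitions refine, $U_n$ is a function of $U_{n+1}$, so $I(\theta;U_n)$ is nondecreasing in $n$. Squeezing then gives $I(\theta;U_n)\uparrow I$. Lemma~\ref{lem:identity} gives $\Delta_{\KL}(U_n)=I-I(\theta;U_n)\downarrow 0$, where finiteness of $I$ is essential for this subtraction.

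For the consequences, fix $\eta>0$ and choose $n$ with $\Delta_{\KL}(U_n)<\eta$. Then $U_n$ is a finite quantizer with distortion below $\eta$. Its rate is $H(U_n)\le\log|\mathcal Q_n|<\infty$, using Lemma~\ref{lem:deterministic_atomic} for the identification $I(Y;U_n)=H(U_n)$. Hence $D^{\mathrm{det}}(R)\le\eta$ for all $R\ge\log|\mathcal Q_n|$. Since $D^{\mathrm{det}}$ is nonincreasing and nonnegative, $D^{\mathrm{det}}(R)\to0$.

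The main point requiring care is the approximation step. We must justify that generating finite partitions suffice, i.e.\ that the supremum over all finite quantizations is attained along one fixed refining sequence. This follows from the standard fact that KL divergence, here $\KL(P_{\theta Y}\Vert\pi\otimes P_Y)$, is the increasing limit of divergences restricted to an increasing sequence of finite algebras generating the $\sigma$-algebra, by martingale convergence or lower semicontinuity. The product partitions $\mathcal P_n\times\mathcal Q_n$ form such a generating sequence on $\Theta\times\mathcal Y$, so no further argument about the posteriors themselves is needed.
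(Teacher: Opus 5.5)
Your proof is correct, but it reaches the key convergence by a different route from the paper's. You quantize both coordinates, take $I([\theta]_{\mathcal P_n};[Y]_{\mathcal Q_n})\uparrow I$ from Dobrushin's theorem as a black box, and then discard the parameter quantization with the data-processing sandwich $I([\theta]_{\mathcal P_n};U_n)\le I(\theta;U_n)\le I$.

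The paper never partitions $\Theta$. With $f=dP_{\theta Y}/d(\pi\otimes P_Y)$, it considers the martingale $f_n=\mathbb E_{\pi\otimes P_Y}[f\mid\mathcal B(\Theta)\otimes\sigma(\mathcal Q_n)]$ (in your notation). It checks that the $\theta$-sections of $f_n$ are the densities of the cell posteriors relative to $\pi$, so that $I(\theta;U_n)=\int f_n\log f_n\,d(\pi\otimes P_Y)$. It then gets the upper bound by conditional Jensen, and the lower bound by L\'evy's upward theorem together with Fatou applied to $f_n\log f_n+1/e$.

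Your version is shorter and modular, and the sandwich makes clear why only the observation needs to be quantized in the end. Its substance, however, lies entirely in the cited theorem. The standard proof of that theorem is essentially this martingale--Fatou argument, or alternatively approximating arbitrary finite partitions by generated ones in $(P+Q)$-measure combined with lower semicontinuity. The paper's argument is self-contained, uses countable generation only of $\mathcal B(\mathcal Y)$, and yields the cell-posterior densities as a by-product.

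One wording point: along the generating sequence the supremum over finite quantizations is approached, not attained. Approach is all your argument uses.
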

\begin{proof}
The idea is to write $I(\theta;U_n)$ as the integral of a convex function
of a martingale that converges to the joint density of the experiment,
and to pass to the limit from both sides.

Because $\mathcal Y$ is standard Borel, its Borel sigma-field is generated
by a countable family of sets $C_1,C_2,\ldots$. Let $\mathcal P_n$ be the
finite partition of $\mathcal Y$ generated by $C_1,\ldots,C_n$, and let
$U_n$ be the label of the cell of $\mathcal P_n$ that contains $Y$. The
partitions refine one another, so $U_n$ is a function of $U_{n+1}$, each
$U_n$ takes finitely many values, and $H(U_n)<\infty$.

Put $Q_{\theta Y}:=\pi\otimes P_Y$. Since
$I=\KL(P_{\theta Y}\Vert Q_{\theta Y})<\infty$, the joint law is absolutely
continuous with respect to the product law, and we write
$f:=dP_{\theta Y}/dQ_{\theta Y}$. Consider the increasing sigma-fields
$\mathcal F_n:=\mathcal B(\Theta)\otimes\sigma(\mathcal P_n)$ on
$\Theta\times\mathcal Y$ and the conditional expectations
\[
 f_n:=\mathbb E_{Q_{\theta Y}}[f\mid\mathcal F_n].
\]
Each $f_n$ is $\mathcal F_n$-measurable, hence of the form
$f_n(\theta,y)=\sum_{C\in\mathcal P_n}f_{n,C}(\theta)\mathbf 1_C(y)$.
For $A\in\mathcal B(\Theta)$ and a cell $C$, the defining property of
conditional expectation gives
\[
 \mathbb P(\theta\in A,\,U_n=C)
 =\int_{A\times C}f\,dQ_{\theta Y}
 =\int_{A\times C}f_n\,dQ_{\theta Y}
 =P_Y(C)\int_Af_{n,C}\,d\pi .
\]
Thus $f_{n,C}$ is the density of the cell posterior $P_{\theta\mid U_n=C}$
relative to the prior whenever $P_Y(C)>0$, and
\begin{align*}
 \int f_n\log f_n\,dQ_{\theta Y}
 &=\sum_{C}P_Y(C)\int f_{n,C}\log f_{n,C}\,d\pi\\
 &=\sum_CP(U_n=C)\,\KL(P_{\theta\mid U_n=C}\Vert\pi)
 =I(\theta;U_n).
\end{align*}

We now show that these integrals converge to $I$. The sigma-fields
$\mathcal F_n$ increase to $\mathcal B(\Theta)\otimes\mathcal B(\mathcal Y)$,
so by L\'evy's upward theorem the martingale $(f_n)$ converges to $f$
almost surely under $Q_{\theta Y}$. Write $\phi(u):=u\log u$, a convex
function bounded below by $-1/e$. Conditional Jensen gives
$\phi(f_n)\le\mathbb E_{Q_{\theta Y}}[\phi(f)\mid\mathcal F_n]$, and
integrating,
\[
 I(\theta;U_n)=\int\phi(f_n)\,dQ_{\theta Y}
 \le\int\phi(f)\,dQ_{\theta Y}=I .
\]
For the reverse direction, apply Fatou's lemma to the nonnegative
functions $\phi(f_n)+1/e$, using continuity of $\phi$ and the almost sure
convergence $f_n\to f$:
\[
 I+\frac1e=\int\Big(\phi(f)+\frac1e\Big)dQ_{\theta Y}
 \le\liminf_n\int\Big(\phi(f_n)+\frac1e\Big)dQ_{\theta Y}
 =\liminf_nI(\theta;U_n)+\frac1e .
\]
Together, $I(\theta;U_n)\to I$. The sequence is nondecreasing because
$U_n$ is a function of $U_{n+1}$ and data processing gives
$I(\theta;U_n)\le I(\theta;U_{n+1})$. Lemma~\ref{lem:identity} then gives
$\Delta_{\KL}(U_n)=I-I(\theta;U_n)\downarrow0$.

For the consequences, fix $\eta>0$ and choose $n$ with
$\Delta_{\KL}(U_n)<\eta$; this is the required finite quantizer. For every
$R\ge H(U_n)$ the quantizer $U_n$ is feasible in the definition of
$D^{\mathrm{det}}(R)$, so $D^{\mathrm{det}}(R)\le\Delta_{\KL}(U_n)<\eta$.
Since $D^{\mathrm{det}}$ is nonincreasing and nonnegative, and $\eta$ was
arbitrary, $D^{\mathrm{det}}(R)\to0$ as $R\to\infty$.
\end{proof}

\subsection{Properties of time-sharing mixtures}
\begin{proposition}[Revealed time-sharing]
\label{prop:time_sharing}
\label{lem:revealed_selector}
Let $S_1,\ldots,S_k$ be summary channels and let $J$ be independent of
the experiment and their auxiliary randomness, with probabilities
$\lambda_j$. The revealed mixture $S=(J,S_J)$ satisfies
\begin{equation}
\label{eq:ts_affine}
 I(Y;S)=\sum_j\lambda_j I(Y;S_j),\qquad
 \Delta_{\KL}(S)=\sum_j\lambda_j\Delta_{\KL}(S_j).
\end{equation}
For deterministic components $S_j=g_j(Y)$, the component rate is
$H(g_j(Y))$, and their finite revealed mixtures achieve exactly
$\operatorname{conv}(\mathcal A_{\mathrm{det}})$.
The frontier $D^{\mathrm{ts}}$ in~\eqref{eq:ts_frontier} is convex,
nonincreasing, and continuous on $[0,\infty)$, with
$D^{\mathrm{ts}}(0)=I$ and $D^{\mathrm{ts}}(R)\to0$ as $R\to\infty$.
Moreover $0\le D^\star\le D^{\mathrm{ts}}\le D^{\mathrm{det}}\le I$.
Its zero-distortion threshold satisfies $R_0\ge I>0$;
$D^{\mathrm{ts}}>0$ on $[0,R_0)$, and, if $R_0<\infty$,
$D^{\mathrm{ts}}=0$ on $[R_0,\infty)$.
\end{proposition}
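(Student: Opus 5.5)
The plan is to prove the two affine identities \eqref{eq:ts_affine} first, then derive the frontier properties from them.

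\textbf{Affine identities.} Since $J$ is independent of $(\theta,Y)$ and of the auxiliary randomness, the chain rule gives
\[
I(Y;J,S_J)=I(Y;J)+I(Y;S_J\mid J)=0+\sum_j\lambda_jI(Y;S_j),
\]
because conditionally on $J=j$ the pair $(Y,S_J)$ has the law of $(Y,S_j)$. The same argument with $\theta$ in place of $Y$ yields $I(\theta;S)=\sum_j\lambda_jI(\theta;S_j)$. Lemma~\ref{lem:identity} then turns this into the distortion identity, since $\Delta_\KL=I-I(\theta;\cdot)$. For deterministic components, Lemma~\ref{lem:deterministic_atomic} gives $I(Y;g_j(Y))=H(g_j(Y))$. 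Hence every finite revealed mixture realizes the corresponding convex combination of points of $\mathcal A_{\mathrm{det}}$. Conversely, each finite convex combination is realized by choosing the $\lambda_j$ and $g_j$, so the achieved set is exactly $\operatorname{conv}(\mathcal A_{\mathrm{det}})$.

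\textbf{Shape of $D^{\mathrm{ts}}$.}
\begin{itemize}
\item \emph{Monotone and convex.} $D^{\mathrm{ts}}$ is nonincreasing by definition. It is convex because it is the lower envelope of a convex set that is closed upward in $r$ (convex combinations of feasible points are feasible at the averaged budget).
\item \emph{Value at zero.} At $R=0$ every component must have $H(g_j(Y))=0$, so each is a.s.\ constant and has distortion $I$. This gives $D^{\mathrm{ts}}(0)=I$. The inequality $D^{\mathrm{ts}}\le I$ holds everywhere because constants are feasible.
\item \emph{Limit at infinity.} Since $D^{\mathrm{ts}}\le D^{\mathrm{det}}$ (single quantizers are trivial mixtures), Lemma~\ref{lem:finite_posterior_quantizers} gives $D^{\mathrm{ts}}(R)\to0$.
\item \emph{Continuity.} A finite convex function on $[0,\infty)$ is continuous on the open interior, so only continuity at $0$ needs checking. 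Right-continuity at $0$ follows by mixing the constant quantizer (weight $1-\lambda$) with any fixed finite quantizer $U$ (weight $\lambda=R/H(U)$). This gives $D^{\mathrm{ts}}(R)\le I-\lambda(I-\Delta_\KL(U))$, while $D^{\mathrm{ts}}\le I$ already holds, and the upper bound tends to $I$ as $R\downarrow0$. The matching lower bound comes from the next step.
\end{itemize}

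\textbf{The chain $0\le D^\star\le D^{\mathrm{ts}}\le D^{\mathrm{det}}\le I$.} The bound $D^\star\le D^{\mathrm{ts}}$ holds because revealed mixtures are particular channels. For the lower bounds, Lemma~\ref{lem:identity} gives $D^\star(R)\ge[I-R]_+$. In particular $D^{\mathrm{ts}}(R)\ge I-R$, which tends to $I$ as $R\downarrow0$ and completes continuity at $0$.

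\textbf{Properties of $R_0$.} The same lower bound shows $D^{\mathrm{ts}}(R)>0$ for $R<I$, hence $R_0\ge I>0$. Since $D^{\mathrm{ts}}$ is nonincreasing, it is positive on $[0,R_0)$ by definition of the infimum. If $R_0<\infty$, continuity gives $D^{\mathrm{ts}}(R_0)=0$, and monotonicity with nonnegativity gives $D^{\mathrm{ts}}=0$ on $[R_0,\infty)$.

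The main obstacle is the rigorous chain-rule step in the general standard Borel setting, specifically $I(Y;S_J\mid J=j)=I(Y;S_j)$. This requires checking that independence of $J$ from everything else makes the conditional joint law equal to the component law. Here $J$ is finite, so the conditional mutual information is a finite weighted sum and no measurability subtleties arise.
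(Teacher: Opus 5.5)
Your proposal is correct and follows essentially the same route as the paper: the chain rule over the independent revealed selector for the affine identities, inclusion of feasible classes for the chain of inequalities, the universal converse $I-R\le D^{\mathrm{ts}}(R)\le I$ for continuity at the origin and for $R_0\ge I$, and monotonicity plus interior continuity for the behaviour at the threshold. The differences are cosmetic. You derive the distortion identity from $I(\theta;S)=\sum_j\lambda_jI(\theta;S_j)$ rather than by decomposing $I(\theta;Y\mid J,S_J)$; either way one needs the one-line check that $\theta\to Y\to(J,S_J)$ is Markov so that Lemma~\ref{lem:identity} applies to $S$. Your mixing-with-$U$ bound in the continuity step is redundant, since $D^{\mathrm{ts}}\le I$ together with $D^{\mathrm{ts}}(R)\ge I-R$ already gives right-continuity at $0$.
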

\begin{proof}
We prove the two affine identities first, then identify the achieved
pairs, and finally derive the shape properties of $D^{\mathrm{ts}}$ from
them.

The revealed mixture is a valid summary channel: given $Y$, the pair
$(J,S_J)$ is generated from $Y$ and randomness independent of $\theta$,
so $\theta\to Y\to(J,S_J)$ is a Markov chain. By the chain rule for
mutual information,
\[
 I(Y;J,S_J)=I(Y;J)+I(Y;S_J\mid J)=0+\sum_j\lambda_jI(Y;S_J\mid J=j).
\]
The first term vanishes because $J$ is independent of $Y$. On the event
$J=j$, the pair $(Y,S_J)$ equals $(Y,S_j)$, and the conditional law of
$(Y,S_j)$ given $J=j$ is its unconditional law because $J$ is independent
of the experiment and of the auxiliary randomness of $S_j$. Hence
$I(Y;S_J\mid J=j)=I(Y;S_j)$, which is the first identity
in~\eqref{eq:ts_affine}. The same conditioning gives
\[
 I(\theta;Y\mid J,S_J)=\sum_j\lambda_jI(\theta;Y\mid S_J,J=j)
 =\sum_j\lambda_jI(\theta;Y\mid S_j),
\]
and Lemma~\ref{lem:identity} identifies each conditional information with
the corresponding posterior distortion, proving the second identity.

For deterministic components $S_j=g_j(Y)$, Lemma~\ref{lem:deterministic_atomic}
gives $I(Y;S_j)=H(g_j(Y))$, so the mixture $(J,g_J(Y))$ realizes the pair
$\sum_j\lambda_j\big(H(g_j(Y)),\Delta_{\KL}(g_j(Y))\big)$. Conversely,
every finite convex combination of points of $\mathcal A_{\mathrm{det}}$
has this form for a suitable finite law of $J$. The achieved set is
therefore exactly $\operatorname{conv}(\mathcal A_{\mathrm{det}})$.

We turn to the shape of $D^{\mathrm{ts}}$. The constant quantizer gives
$(0,I)\in\mathcal A_{\mathrm{det}}$, so $D^{\mathrm{ts}}(R)\le I<\infty$
for every $R\ge0$, and $D^{\mathrm{ts}}$ is nonincreasing because the
feasible set $\{(r,d)\in\operatorname{conv}(\mathcal A_{\mathrm{det}}):r\le R\}$
grows with $R$. For convexity, fix $R_1,R_2\ge0$, $\mu\in(0,1)$, and
$\varepsilon>0$, and choose feasible $(r_i,d_i)$ with $r_i\le R_i$ and
$d_i\le D^{\mathrm{ts}}(R_i)+\varepsilon$. The convex combination
$\mu(r_1,d_1)+(1-\mu)(r_2,d_2)$ again lies in
$\operatorname{conv}(\mathcal A_{\mathrm{det}})$ and has rate at most
$\mu R_1+(1-\mu)R_2$, so
\[
 D^{\mathrm{ts}}(\mu R_1+(1-\mu)R_2)
 \le\mu d_1+(1-\mu)d_2
 \le\mu D^{\mathrm{ts}}(R_1)+(1-\mu)D^{\mathrm{ts}}(R_2)+\varepsilon,
\]
and $\varepsilon\downarrow0$ gives convexity.

The chain of inequalities follows from inclusions of feasible classes.
A single deterministic quantizer with $H(g(Y))\le R$ is a one-component mixture,
so $D^{\mathrm{ts}}\le D^{\mathrm{det}}$, and the constant quantizer gives
$D^{\mathrm{det}}\le I$. Every feasible point $(r,d)$ of
$\operatorname{conv}(\mathcal A_{\mathrm{det}})$ with $r\le R$ is realized
by a revealed mixture, which is a summary channel of rate $r\le R$ and
distortion $d$, so $D^\star(R)\le d$; taking the infimum over such points
gives $D^\star\le D^{\mathrm{ts}}$. Finally $D^\star\ge0$ because
divergences are nonnegative.

For continuity, a finite convex function on $[0,\infty)$ is continuous on
the open half-line $(0,\infty)$. At the origin, only rate-zero mixtures
are feasible, that is, mixtures of constant quantizers, each with distortion
$I(\theta;Y\mid\text{constant})=I$; thus $D^{\mathrm{ts}}(0)=I$. The
universal converse in Lemma~\ref{lem:identity} gives, for $0\le R<I$,
\[
 I-R\le D^\star(R)\le D^{\mathrm{ts}}(R)\le I,
\]
so $D^{\mathrm{ts}}(R)\to I$ as $R\downarrow0$, which is continuity from
the right at zero. The limit at infinity follows from
$0\le D^{\mathrm{ts}}\le D^{\mathrm{det}}$ and
Lemma~\ref{lem:finite_posterior_quantizers}.

For the threshold, the display above shows $D^{\mathrm{ts}}(R)\ge I-R>0$
for $R<I$, so $R_0\ge I>0$. On $[0,R_0)$ the frontier is positive by the
definition of $R_0$ as an infimum. If $R_0<\infty$, then
$D^{\mathrm{ts}}(R)=0$ for some $R$ arbitrarily close to $R_0$ from the
right, monotonicity gives $D^{\mathrm{ts}}=0$ on $(R_0,\infty)$, and
continuity gives $D^{\mathrm{ts}}(R_0)=0$ as well.
\end{proof}

\begin{lemma}[Truncating countable quantizers]
\label{lem:truncate_quantizers}
If $T\in\mathcal G$, define $T_n=T$ on $\{T\le n\}$ and $T_n=n+1$
otherwise. Then $H(T_n)\le H(T)$ and
$\Delta_{\KL}(T_n)\downarrow\Delta_{\KL}(T)$.
Thus $D^{\mathrm{ts}}$ is unchanged if its component quantizers must be finite.
\end{lemma}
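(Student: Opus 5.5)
The plan is to handle the entropy inequality and the distortion convergence separately, and then deduce the statement about $D^{\mathrm{ts}}$ from the convex-hull description in Proposition~\ref{prop:time_sharing}.

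\textbf{Entropy.} $T_n$ is a deterministic function of $T$, since it merges the cells $\{T=c\}$ for $c>n$ into the single label $n+1$. Hence $H(T_n)\le H(T)$ at once; equivalently, this is the grouping inequality for entropy.

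\textbf{Distortion.} By Lemma~\ref{lem:identity}, $\Delta_{\KL}(T_n)=I-I(\theta;T_n)$, so it suffices to show $I(\theta;T_n)\uparrow I(\theta;T)$.
\begin{itemize}
\item \emph{Monotonicity.} $T_n$ is a function of $T_{n+1}$: merge the label $n+1$ with the overflow label $n+2$. Data processing then gives $I(\theta;T_n)\le I(\theta;T_{n+1})\le I(\theta;T)$.
\item \emph{Convergence.} In the cell notation,
\[
I(\theta;T_n)=\sum_{c\le n}p_c\KL(\Pi_c\Vert\pi)+r_n\KL\Bigl(\bar\Pi_n\Vert\pi\Bigr),
\]
where $r_n=\sum_{c>n}p_c$ and $\bar\Pi_n=r_n^{-1}\sum_{c>n}p_c\Pi_c$. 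The last term is nonnegative, so $\liminf_n I(\theta;T_n)\ge\sum_c p_c\KL(\Pi_c\Vert\pi)=I(\theta;T)$ by monotone convergence of the partial sums.
\end{itemize}
Combining this with the upper bound proves $\Delta_{\KL}(T_n)\downarrow\Delta_{\KL}(T)$. Finiteness of $I\ge I(\theta;T)$ keeps every subtraction legitimate. As an alternative for the convergence step, one could note that $\sigma(T_n)\uparrow\sigma(T)$ and invoke the martingale argument of Lemma~\ref{lem:finite_posterior_quantizers}, but the direct sum is simpler.

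\textbf{Consequence for $D^{\mathrm{ts}}$.} Let $D^{\mathrm{ts}}_{\mathrm{fin}}$ be the frontier built from the convex hull of points $(H(g(Y)),\Delta_{\KL}(g(Y)))$ with $g$ finite-valued.
\begin{itemize}
\item Finite quantizers form a subclass of $\mathcal G$, so $D^{\mathrm{ts}}\le D^{\mathrm{ts}}_{\mathrm{fin}}$.
\item For the reverse inequality, fix $R$ and $\varepsilon>0$. Take a feasible mixture $\sum_j\lambda_j(H(g_j),\Delta_{\KL}(g_j))$ with rate at most $R$ and distortion at most $D^{\mathrm{ts}}(R)+\varepsilon$.
\item Replace each $g_j$ by its truncation $(g_j)_n$ with $n$ large. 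The rate of each component does not increase. Each distortion is within $\varepsilon$ of the original, because there are finitely many components and each converges.
\item The truncated mixture is therefore feasible for $D^{\mathrm{ts}}_{\mathrm{fin}}(R)$ with distortion at most $D^{\mathrm{ts}}(R)+2\varepsilon$. Letting $\varepsilon\downarrow0$ gives equality.
\end{itemize}

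I expect no real obstacle. The only delicate point is the lower bound in the convergence step, namely discarding the merged-tail term, which works because that term is a nonnegative KL divergence.
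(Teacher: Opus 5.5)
Your proof is correct. Apart from the convergence step, it follows the same route as the paper:
\begin{itemize}
\item the entropy inequality, because $T_n$ is a function of $T$;
\item monotonicity, because $T_n$ is a function of $T_{n+1}$ and data processing applies;
\item the $D^{\mathrm{ts}}$ claim, by truncating each component of a near-optimal finite mixture.
\end{itemize}

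For convergence, the paper bounds the gap directly. The chain rule gives
\[
\Delta_{\KL}(T_n)-\Delta_{\KL}(T)=I(\theta;T\mid T_n)\le H(T\mid T_n)\le\sum_{c>n}p_c\log(1/p_c),
\]
and the right side is a tail of the entropy series. That route gives an explicit rate of convergence for each quantizer, but it relies on $H(T)<\infty$.

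Your route writes $I(\theta;T_n)$ cell by cell, drops the nonnegative merged-tail term $r_n\KL(\bar\Pi_n\Vert\pi)$, and squeezes against the data-processing bound $I(\theta;T_n)\le I(\theta;T)$. This is a lower-semicontinuity argument. It gives no rate, but it needs only $I(\theta;T)\le I<\infty$, so it would also cover countable quantizers of infinite entropy.

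Either argument is enough for the lemma. The consequence for $D^{\mathrm{ts}}$ only needs convergence for each of finitely many mixture components, with component rates that do not increase.
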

\begin{proof}
Take the quantizer values in $\mathbb N=\{1,2,\ldots\}$. Since $T_n$ is a
function of $T$, both $H(T_n)\le H(T)$ and, by data processing,
$I(\theta;T_n)\le I(\theta;T)$. Moreover $T_n$ is a function of
$T_{n+1}$, so $I(\theta;T_n)$ is nondecreasing in $n$, and by
Lemma~\ref{lem:identity} the distortions $\Delta_{\KL}(T_n)=I-I(\theta;T_n)$
are nonincreasing in $n$ and bounded below by $\Delta_{\KL}(T)$.

To bound the gap, apply the chain rule to $I(\theta;T,T_n)$ in both
orders. Because $T_n$ is a function of $T$,
\[
 I(\theta;T)=I(\theta;T,T_n)=I(\theta;T_n)+I(\theta;T\mid T_n),
\]
so that
\[
 \Delta_{\KL}(T_n)-\Delta_{\KL}(T)=I(\theta;T\mid T_n)\le H(T\mid T_n).
\]
Conditioning on $T_n=c$ with $c\le n$ determines $T=c$ and contributes
no entropy. Conditioning on $T_n=n+1$ leaves the tail law
$(p_c/r_n)_{c>n}$ with $r_n:=\sum_{c>n}p_c$, so
\[
 H(T\mid T_n)=\sum_{c>n}p_c\log\frac{r_n}{p_c}
 \le\sum_{c>n}p_c\log\frac1{p_c},
\]
using $r_n\le1$. The right side is a tail of the convergent series
$H(T)=\sum_cp_c\log(1/p_c)$, hence tends to zero. This proves
$\Delta_{\KL}(T_n)\downarrow\Delta_{\KL}(T)$.

For the final claim, let $(\lambda_j,T_j)_{j\le k}$ be any finite mixture
of quantizers in $\mathcal G$ with average rate at most $R$ and average
distortion $d$. Truncating every component at level $n$ gives a mixture of
finite quantizers whose average rate is still at most $R$, by the first
inequality, and whose average distortion
$\sum_j\lambda_j\Delta_{\KL}(T_{j,n})$ decreases to $d$ as $n\to\infty$.
Hence the infimum over mixtures of finite quantizers is at most $d$, and
taking the infimum over all mixtures shows it is at most
$D^{\mathrm{ts}}(R)$. The reverse inequality holds because finite quantizers
belong to $\mathcal G$.
\end{proof}

\begin{lemma}[Cost of exact preservation]
\label{lem:zero_distortion_cost}
If $\Delta_{\KL}(S)=0$, then $\Pi_Y=\Pi_S$ almost surely, hence $\Pi_Y$
is measurable from $S$. If $\Pi_Y$ is countably valued, every such summary
has rate at least $H(\Pi_Y)$, allowing infinite entropy.
\end{lemma}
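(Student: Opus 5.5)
The plan is to handle the two assertions separately. The first follows from the strictness case of the identity in Lemma~\ref{lem:identity}. The second combines a measurability observation with the deterministic-rate characterization of Lemma~\ref{lem:deterministic_atomic}, replaced by a data-processing argument for stochastic $S$.

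\emph{Step 1: $\Pi_Y=\Pi_S$ almost surely.} By Lemma~\ref{lem:identity}, $\Pi_S=\mathbb E[\Pi_Y\mid S]$ is a version of $P_{\theta\mid S}$, and
\[
\Delta_\KL(S)=\mathbb E\,\KL(\Pi_Y\Vert\Pi_S)=0 .
\]
Since $\KL\ge0$, this forces $\KL(\Pi_Y\Vert\Pi_S)=0$ almost surely, hence $\Pi_Y=\Pi_S$ as measures almost surely. The right side is $\sigma(S)$-measurable, viewed as a random element of the standard Borel space of probability measures on $\Theta$ with the weak (or evaluation) $\sigma$-field. Therefore $\Pi_Y$ coincides almost surely with a measurable function $\phi(S)$. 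The one technical point is that equality of measures on a countably generated $\sigma$-field is a measurable event, so the almost-sure statement is well posed.

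\emph{Step 2: rate lower bound.} Suppose $\Pi_Y$ takes countably many values, and set $Z=\Pi_Y$. Then $Z$ is a deterministic function of $Y$, and by Step~1, $Z=\phi(S)$ almost surely. Consider the chain $Z\to Y\to S$ together with $Z=\phi(S)$. Data processing gives
\[
I(Y;S)\ge I(Z;S)\ge I(Z;\phi(S))=I(Z;Z)=H(Z).
\]
The last equality is the countable-alphabet identity $I(Z;Z)=H(Z)$, which holds also when $H(Z)=\infty$; for example, take the supremum over finite quantizations as in the proof of Lemma~\ref{lem:deterministic_atomic}. To be careful about the almost-sure equality in $I(Z;\phi(S))$, note that mutual information depends only on the joint law, and $(Z,\phi(S))$ has the same law as $(Z,Z)$.

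\emph{Main obstacle.} The only delicate part is the measure-theoretic bookkeeping in Step~1: choosing a regular version of $\Pi_S$ that is a measurable map into the space of probability measures, and justifying that $\Pi_S$ equals $\mathbb E[\Pi_Y\mid S]$ as a kernel. Lemma~\ref{lem:identity} and the standing standard Borel assumptions supply both. Step~2 is then a short data-processing chain.
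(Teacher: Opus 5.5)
Your proposal is correct and follows the paper's proof essentially step for step. Nonnegativity of $\KL$ forces $\KL(\Pi_Y\Vert\Pi_S)=0$ almost surely, hence $\Pi_Y=\Pi_S$ is $\sigma(S)$-measurable, and then the chain $I(Y;S)\ge I(Z;S)\ge I(Z;\Pi_S)=I(Z;Z)=H(Z)$ follows, with the last equality taken from the purely atomic case of Lemma~\ref{lem:deterministic_atomic}.
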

\begin{proof}
The random variable $\KL(\Pi_Y\Vert\Pi_S)$ is nonnegative, so
$\mathbb E\,\KL(\Pi_Y\Vert\Pi_S)=0$ forces $\KL(\Pi_Y\Vert\Pi_S)=0$ almost
surely, and a divergence vanishes only between equal measures. Thus
$\Pi_Y=\Pi_S$ almost surely, and $\Pi_S$ is a measurable function of $S$.

Suppose the posterior label $Z:=\Pi_Y$ takes countably many values. Since
$Z$ is a function of $Y$ and $\Pi_S$ is a function of $S$, two
applications of data processing give
\[
 I(Y;S)\ \ge\ I(Z;S)\ \ge\ I(Z;\Pi_S)\ =\ I(Z;Z),
\]
the last equality because $\Pi_S=Z$ almost surely. For a countably valued
$Z$, the identity $I(Z;Z)=H(Z)$ holds with both sides possibly infinite:
it is the purely atomic case of Lemma~\ref{lem:deterministic_atomic}, where
finite quantizations of $Z$ have entropies increasing to $H(Z)$.
\end{proof}

\begin{lemma}[Countable mixture compensation]
\label{lem:countable_compensation}
Let $\bar P=\sum_j\lambda_jP_j$, with $\lambda_j>0$, and suppose
$\KL(\bar P\Vert Q)<\infty$. Then every $\KL(P_j\Vert Q)$ is finite,
$\KL(P_j\Vert\bar P)\le\log(1/\lambda_j)$, and
\[
 \sum_j\lambda_j\KL(P_j\Vert Q)
 =\sum_j\lambda_j\KL(P_j\Vert\bar P)+\KL(\bar P\Vert Q),
\]
with the sums allowed to be infinite.
\end{lemma}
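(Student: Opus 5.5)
The plan is to reduce every claim to the chain rule for KL divergence with respect to a reference measure that dominates everything in sight. Put $\mu:=\bar P+Q$. Every $P_j$ is dominated by $\bar P$ with $dP_j/d\bar P\le 1/\lambda_j$, because $\lambda_jP_j\le\bar P$. This pointwise bound gives $\KL(P_j\Vert\bar P)=\int\log(dP_j/d\bar P)\,dP_j\le\log(1/\lambda_j)$ at once.

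For finiteness, the hypothesis $\KL(\bar P\Vert Q)<\infty$ forces $\bar P\ll Q$. Hence $P_j\ll\bar P\ll Q$, and the density factorizes as
\[
 \frac{dP_j}{dQ}=\frac{dP_j}{d\bar P}\,\frac{d\bar P}{dQ}\quad Q\text{-a.e.}
\]
Taking logarithms and integrating against $P_j$ then gives
\begin{equation*}
 \KL(P_j\Vert Q)=\KL(P_j\Vert\bar P)+\int\log\frac{d\bar P}{dQ}\,dP_j .
\end{equation*}
This identity is legitimate once the two integrals are shown not to produce $\infty-\infty$. The first term lies in $[0,\log(1/\lambda_j)]$, so only the second needs care. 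Its negative part satisfies $\int(\log\frac{d\bar P}{dQ})_-\,dP_j\le\lambda_j^{-1}\int(\log\frac{d\bar P}{dQ})_-\,d\bar P$, which is finite; this is the standard fact that the negative part of the log-likelihood ratio has $\bar P$-integral at most $1/e$ relative to $Q$. Its positive part is bounded the same way by $\lambda_j^{-1}\int(\log\frac{d\bar P}{dQ})_+\,d\bar P<\infty$, using $\KL(\bar P\Vert Q)<\infty$. So each $\KL(P_j\Vert Q)$ is finite.

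Multiply the displayed identity by $\lambda_j$ and sum over $j$. The cross terms recombine because $\sum_j\lambda_j\int F\,dP_j=\int F\,d\bar P$ for $F=\log(d\bar P/dQ)$. This interchange is justified by applying monotone convergence separately to $F_+$ and $F_-$, since both $\int F_\pm\,d\bar P$ are finite. The summed cross terms therefore equal exactly $\KL(\bar P\Vert Q)$. The remaining terms $\sum_j\lambda_j\KL(P_j\Vert\bar P)$ and $\sum_j\lambda_j\KL(P_j\Vert Q)$ are series of nonnegative reals, so they may be infinite, and the identity holds in $[0,\infty]$ because the finite term $\KL(\bar P\Vert Q)$ is simply added.

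The main obstacle is exactly this bookkeeping of integrability for countably many components, and it is handled by the splitting into $F_\pm$ just described. As a check, the left sum is infinite iff the first right sum is. That sum is the mutual information between the index $J$ and a draw from $P_J$, which is bounded by $H(\lambda)$ and may be infinite when the weights have infinite entropy.
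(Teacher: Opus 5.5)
Your proof is correct and takes essentially the same route as the paper. It uses the density bound $dP_j/d\bar P\le1/\lambda_j$ and the chain rule $\log(dP_j/dQ)=\log(dP_j/d\bar P)+\log r$ with $r=d\bar P/dQ$; it gets $\bar P$-integrability of $\log r$ from the $1/e$ bound on its negative part and finiteness of $\KL(\bar P\Vert Q)$, transfers this to each $P_j$ through the factor $\lambda_j^{-1}$, and interchanges sum and integral (you by monotone convergence on $F_\pm$, the paper by absolute convergence of $\sum_j\lambda_j\int|\log r|\,dP_j$, which amounts to the same thing). The auxiliary measure $\mu=\bar P+Q$ you introduce is never actually used, since $Q$ itself serves as the dominating reference once $\bar P\ll Q$ is established.
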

\begin{proof}
We first collect the integrability facts, then prove the identity for a
single component, and finally sum over components.

Since $\lambda_jP_j\le\bar P$, we have $P_j\ll\bar P$ with
$dP_j/d\bar P\le1/\lambda_j$, and therefore
\[
 \KL(P_j\Vert\bar P)=\int\log\frac{dP_j}{d\bar P}\,dP_j\le\log\frac1{\lambda_j}.
\]
The hypothesis $\KL(\bar P\Vert Q)<\infty$ gives $\bar P\ll Q$; write
$r:=d\bar P/dQ$. The negative part of $\log r$ is $\bar P$-integrable
without any assumption: since $u\log(1/u)\le1/e$ for $u\in(0,1]$,
\[
 \int(\log r)_-\,d\bar P=\int_{\{r<1\}}r\log\frac1r\,dQ\le\frac1e .
\]
The positive part is then integrable because
$\int(\log r)_+\,d\bar P=\KL(\bar P\Vert Q)+\int(\log r)_-\,d\bar P<\infty$.
Hence $\log r\in L^1(\bar P)$, and by the density bound also
$\log r\in L^1(P_j)$ with $\int|\log r|\,dP_j\le\lambda_j^{-1}\int|\log r|\,d\bar P$.

Since $P_j\ll\bar P\ll Q$, the chain rule for densities gives
$dP_j/dQ=(dP_j/d\bar P)\,r$ almost everywhere under $Q$, hence
\[
 \log\frac{dP_j}{dQ}=\log\frac{dP_j}{d\bar P}+\log r
 \qquad P_j\text{-almost surely.}
\]
Both terms on the right are $P_j$-integrable, so integrating against $P_j$
is legitimate and yields
\[
 \KL(P_j\Vert Q)=\KL(P_j\Vert\bar P)+\int\log r\,dP_j ,
\]
a finite quantity. This proves the first two claims.

Multiply the last display by $\lambda_j$ and sum over $j$. The series
$\sum_j\lambda_j\int|\log r|\,dP_j=\int|\log r|\,d\bar P$ converges, so
the sum and the integral in the last term can be exchanged:
\[
 \sum_j\lambda_j\int\log r\,dP_j=\int\log r\,d\bar P=\KL(\bar P\Vert Q).
\]
The remaining two series have nonnegative terms, and they differ by the
finite number $\KL(\bar P\Vert Q)$, so they are either both finite or both
infinite, and in the finite case the identity holds as stated.
\end{proof}
\subsection{Asymptotics of mixtures}
\begin{lemma}[Rare-mixture asymptotics]
\label{lem:rare_mixture}
 For the weighted Jensen-Shannon divergence $\operatorname{JS}_w(P,Q)$ defined in \eqref{eq:weighted_js} the following holds:
\begin{enumerate}[label=(\roman*),leftmargin=*,itemsep=3pt]
    \item $\operatorname{JS}_w(P,Q)$ is at most $h_2(w)$ and decreases under measurable maps.
    \item Let $M_w=wP + (1-w)Q$, and $\alpha(P\mid Q)$ denote the directed singular mass. Then, \[
 \lim_{w\downarrow0}\frac{\KL(P\Vert M_w)}{\log(1/w)}
 =\alpha(P\mid Q),\qquad
 \lim_{w\downarrow0}\frac{\operatorname{JS}_w(P,Q)}{w\log(1/w)}
 =\alpha(P\mid Q).
\]
In particular, $\operatorname{JS}_w(P,Q)=o(w\log(1/w))$ exactly when
$P\ll Q$.
\item  If $\KL(P\Vert Q)<\infty$, then
\[
 \operatorname{JS}_w(P,Q)
 =w\KL(P\Vert Q)-\KL(M_w\Vert Q)
 \le w\KL(P\Vert Q).
\]
\end{enumerate}
\end{lemma}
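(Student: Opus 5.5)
Item (i) follows from a mutual-information representation. Let $B\sim\mathrm{Bern}(w)$ and let $X\mid B=1\sim P$, $X\mid B=0\sim Q$. Then $X\sim M_w$, and the definition in \eqref{eq:weighted_js} is exactly
\[
\operatorname{JS}_w(P,Q)=I(B;X).
\]
This gives $\operatorname{JS}_w\le H(B)=h_2(w)$. If $\phi$ is measurable, then $\phi(X)$ has conditional laws $P\circ\phi^{-1}$ and $Q\circ\phi^{-1}$. Data processing gives $I(B;\phi(X))\le I(B;X)$, which is the monotonicity claim.

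For (iii), assume $\KL(P\Vert Q)<\infty$, so $P\ll Q$ and $M_w\ll Q$. Apply Lemma~\ref{lem:countable_compensation} to the two-component mixture $M_w=wP+(1-w)Q$ with reference $Q$. Its hypothesis $\KL(M_w\Vert Q)\le w\KL(P\Vert Q)<\infty$ holds by convexity. The lemma gives
\[
w\KL(P\Vert Q)+(1-w)\cdot 0=\operatorname{JS}_w(P,Q)+\KL(M_w\Vert Q).
\]
This is the identity, and the inequality follows because $\KL(M_w\Vert Q)\ge0$.

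For (ii), use the Lebesgue decomposition $P=\tilde P+P^\perp$ with $\tilde P\ll Q$, and write $\alpha=\alpha(P\mid Q)$. Let $A$ be a $Q$-null set carrying $P^\perp$, and set $g=d\tilde P/dQ$ off $A$. Then
\[
\frac{dP}{dM_w}=\frac1w\ \text{on }A,\qquad \frac{dP}{dM_w}=\frac{g}{wg+1-w}\ \text{off }A.
\]
Hence
\[
\KL(P\Vert M_w)=\alpha\log\frac1w+\int_{A^c}\log\frac{g}{wg+1-w}\,dP.
\]
The integrand off $A$ is bounded above by $\log\frac1w$, so the relevant quantity is the pointwise limit $\log\frac{g}{wg+1-w}\to\log g$, which is finite $P$-a.s.\ on $A^c$. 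To conclude that the second term is $o(\log(1/w))$, I would split on the size of $g$. On $\{g\ge1\}$ the integrand lies in $[0,\log g]$, but $\log g$ need not be integrable. The working bound there is $0\le\log\frac{g}{wg+1-w}\le\min(\log g,\log\frac1w)$; dividing by $\log(1/w)$ and applying dominated convergence (the quotient is bounded by $1$ and tends to $0$) gives $o(\log(1/w))$. On $\{g<1\}$ the integrand is $\ge\log g$ and $\le 0$, and $\int_{\{g<1\}} g\log(1/g)\,dQ\le 1/e$ shows it is $P$-integrable. This proves the first limit.

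For the JS limit, write $\operatorname{JS}_w=w\KL(P\Vert M_w)+(1-w)\KL(Q\Vert M_w)$. The first term is $\alpha w\log(1/w)+o(w\log(1/w))$ by the above. It remains to show that $\KL(Q\Vert M_w)=o(w\log(1/w))$, and this is the main obstacle.

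\textbf{The term $\KL(Q\Vert M_w)$.} Since $M_w\ge(1-w)Q$, we have $dQ/dM_w\le 1/(1-w)$. Writing $dQ/dM_w=1/(1-w+wg)$ off $A$, one gets
\[
\KL(Q\Vert M_w)=-\int\log(1-w+wg)\,dQ.
\]
Jensen's inequality gives $\KL(Q\Vert M_w)\ge-\log\bigl(1-w+w\tilde P(\mathcal X)\bigr)=-\log(1-w\alpha)\ge0$. For the upper bound, $-\log(1-w+wg)\le-\log(1-w)\le 2w$ for small $w$. So $\KL(Q\Vert M_w)=O(w)=o(w\log(1/w))$.

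Both limits then follow. The "exactly when" statement is immediate, since $\alpha=0$ if and only if $P\ll Q$.
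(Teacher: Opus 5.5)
Your proof is correct and follows essentially the same route as the paper: the Bernoulli-label mutual-information representation for (i), and the countable compensation lemma for (iii). For (ii), both use a split into the singular part, which contributes exactly $\alpha(P\mid Q)\log(1/w)$, a nonnegative part handled by dominated convergence with the bound $\log(1/w)$, a negative part bounded uniformly by $1/e$, and the estimate $\KL(Q\Vert M_w)\le\log\frac{1}{1-w}$. The only difference is cosmetic: you work with $g=d\tilde P/dQ$ and split on $\{g\ge 1\}$, whereas the paper uses densities with respect to $P+Q$ and splits the integrand into positive and negative parts, which is the same partition. Note also that the step you flag as the main obstacle is the easy one, and the Jensen lower bound there is not needed.
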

\begin{proof}
The first follows from an information-theoretic reading of
$\operatorname{JS}_w$. Let $\xi\sim\operatorname{Bernoulli}(w)$ and, given
$\xi$, draw $X$ from $P$ if $\xi=1$ and from $Q$ if $\xi=0$. Then
$X\sim M_w$, and the mutual information between the label and the draw is
\[
 I(\xi;X)=w\KL(P\Vert M_w)+(1-w)\KL(Q\Vert M_w)=\operatorname{JS}_w(P,Q).
\]
Hence $\operatorname{JS}_w(P,Q)\le H(\xi)=h_2(w)$, and for a measurable
map $\varphi$, data processing gives
$\operatorname{JS}_w(P\circ\varphi^{-1},Q\circ\varphi^{-1})=I(\xi;\varphi(X))\le I(\xi;X)$.

We now prove the limits. Take densities $p:=dP/d(P+Q)$ and
$q:=dQ/d(P+Q)$, so that $M_w$ has density $wp+(1-w)q$. Since
$M_w\ge wP$, the density $dP/dM_w=p/(wp+(1-w)q)$ is at most $1/w$ on
$\{p>0\}$, so $\KL(P\Vert M_w)\le\log(1/w)$ is finite, and
\[
 \KL(P\Vert M_w)=\int_{\{p>0\}}\log\frac{p}{wp+(1-w)q}\,dP .
\]
Split the domain into $S_0:=\{p>0,q=0\}$ and $S_1:=\{p>0,q>0\}$. On $S_0$
the integrand equals $\log(1/w)$ exactly. The set $S_0$ has $Q$-measure
zero, and $P$ restricted to $\{q=0\}$ is the singular part of $P$
relative to $Q$, so $P(S_0)=\alpha(P\mid Q)$ and
\[
 \int_{S_0}\log\frac{p}{wp+(1-w)q}\,dP=\alpha(P\mid Q)\log\frac1w .
\]
On $S_1$, write the integrand as $g_w=g_w^+-g_w^-$. As $w\downarrow0$ the
ratio $p/(wp+(1-w)q)$ converges pointwise to $p/q\in(0,\infty)$, so
$g_w$ converges to the finite number $\log(p/q)$ and
$g_w^+/\log(1/w)\to0$ pointwise on $S_1$. The bound $g_w^+\le\log(1/w)$
makes $g_w^+/\log(1/w)\le1$, which is $P$-integrable, so dominated
convergence gives $\int_{S_1}g_w^+\,dP=o(\log(1/w))$. For the negative
part, put $m:=wp+(1-w)q$ and note that on $\{m>p\}$,
\[
 p\log\frac mp=m\cdot\frac pm\log\frac mp\le\frac me,
\]
again by $u\log(1/u)\le1/e$. Integrating against $P+Q$ gives
$\int_{S_1}g_w^-\,dP\le\frac1e\int m\,d(P+Q)=\frac1e$, uniformly in $w$.
Dividing by $\log(1/w)$ and combining the three contributions,
\[
 \frac{\KL(P\Vert M_w)}{\log(1/w)}
 =\alpha(P\mid Q)+\frac{\int_{S_1}g_w^+\,dP}{\log(1/w)}
 -\frac{\int_{S_1}g_w^-\,dP}{\log(1/w)}
 \longrightarrow\alpha(P\mid Q).
\]

For the second limit, the other component of $\operatorname{JS}_w$ is
small: $dQ/dM_w\le1/(1-w)$ gives $\KL(Q\Vert M_w)\le\log\frac1{1-w}$, and
$-\log(1-w)/w\to1$ while $\log(1/w)\to\infty$, so
\[
 \frac{(1-w)\KL(Q\Vert M_w)}{w\log(1/w)}
 \le\frac{-\log(1-w)}{w\log(1/w)}\longrightarrow0 .
\]
Dividing $\operatorname{JS}_w(P,Q)=w\KL(P\Vert M_w)+(1-w)\KL(Q\Vert M_w)$
by $w\log(1/w)$ and using the first limit gives the second. Since
$\alpha(P\mid Q)=0$ exactly when $P\ll Q$, the divergence is
$o(w\log(1/w))$ exactly in that case.

Finally assume $\KL(P\Vert Q)<\infty$. Apply
Lemma~\ref{lem:countable_compensation} to the two-component mixture
$M_w=wP+(1-w)Q$ with reference $Q$. Its hypothesis holds because
convexity of the divergence in its first argument gives
$\KL(M_w\Vert Q)\le w\KL(P\Vert Q)+(1-w)\KL(Q\Vert Q)=w\KL(P\Vert Q)<\infty$.
The lemma's identity reads
\[
 w\KL(P\Vert Q)+(1-w)\KL(Q\Vert Q)
 =w\KL(P\Vert M_w)+(1-w)\KL(Q\Vert M_w)+\KL(M_w\Vert Q),
\]
that is, $w\KL(P\Vert Q)=\operatorname{JS}_w(P,Q)+\KL(M_w\Vert Q)$.
Rearranging gives the stated identity, and dropping the nonnegative term
$\KL(M_w\Vert Q)$ gives the inequality.
\end{proof}

\section{Strict separation for general observation spaces}
\label{app:general-separation}

\subsection{Notation for the experiment}
For $T=g(Y)$, write $P_{\theta Y}$ and $P_{\theta T}$ for the joint laws
of $(\theta,Y)$ and $(\theta,T)$. The corresponding product laws are
\[
 Q_{\theta Y}:=\pi\otimes P_Y,\qquad
 Q_{\theta T}:=\pi\otimes P_T,
 \qquad I=\KL(P_{\theta Y}\Vert Q_{\theta Y}).
\]
Here $P_Y$ and $P_T$ are the marginal laws of the observation and quantizer.
Use the cell probabilities $p_c=P_T(\{c\})$ and cell posteriors $\Pi_c$
defined in Appendix~\ref{app:separation-foundations}.
For a measurable $A\subseteq\Theta$ and a positive-probability cell $c$,
\[
 P_{\theta T}(A\times\{c\})=p_c\Pi_c(A),\qquad
 Q_{\theta T}(A\times\{c\})=p_c\pi(A).
\]
The laws $P_{\theta T},Q_{\theta T}$ are the images of
$P_{\theta Y},Q_{\theta Y}$ under $(\theta,y)\mapsto(\theta,g(y))$:
$P_{\theta T}$ retains the parameter--quantizer dependence, while
$Q_{\theta T}$ makes them independent with the same marginals.

Since $\KL(P_{\theta Y}\Vert Q_{\theta Y})=I<\infty$, a joint density
$f=dP_{\theta Y}/dQ_{\theta Y}$ exists. Versions of the conditional distributions satisfy
\[
 \Pi_y(d\theta)=f(\theta,y)\pi(d\theta),\qquad
 P_{Y\mid\theta}(dy)=f(\theta,y)P_Y(dy)
\]
for almost every conditioning value. Indeed, Tonelli's theorem and the
two marginal identities give $\int f(\theta,y)\pi(d\theta)=1$
for $P_Y$-almost every $y$, and $\int f(\theta,y)P_Y(dy)=1$
for $\pi$-almost every $\theta$. Defining the kernels arbitrarily on
the exceptional null sets gives regular conditional versions.
\subsection{The common support assumption}
\begin{proposition}[Equivalent support formulations]
\label{prop:support_equivalence}
Under $I(\theta;Y)<\infty$, Assumption~\ref{ass:common_support} is
equivalent to each of
\begin{enumerate}[label=(\roman*),leftmargin=*]
\item $P_{\theta Y}\sim\pi\otimes P_Y$;
\item $\Pi_y\sim\pi$ for $P_Y$-almost every $y$;
\item $P_{Y\mid\theta}\sim P_Y$ for $\pi$-almost every $\theta$.
\end{enumerate}
For a dominated simulator with density $p(y\mid\theta)$, these conditions
are equivalent to $p(y\mid\theta)>0$ for
$\pi\otimes P_Y$-almost every $(\theta,y)$.
\end{proposition}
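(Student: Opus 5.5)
The plan is to use the density $f=dP_{\theta Y}/dQ_{\theta Y}$, which exists because $I<\infty$, and to translate each of the three conditions into a statement about where $f$ is positive. Every condition will turn out to be equivalent to
\[
f>0\quad Q_{\theta Y}\text{-almost everywhere.}\qquad(\ast)
\]
Since $P_{\theta Y}\ll Q_{\theta Y}$ always holds, (i) reduces to the reverse absolute continuity $Q_{\theta Y}\ll P_{\theta Y}$. That reverse direction is exactly $(\ast)$: $Q_{\theta Y}(\{f=0\})=0$ holds if and only if every $Q_{\theta Y}$-positive set carries positive $P_{\theta Y}$-mass, because $P_{\theta Y}(A)=\int_A f\,dQ_{\theta Y}$.

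For (ii) and (iii) I will use the regular versions $\Pi_y(d\theta)=f(\theta,y)\pi(d\theta)$ and $P_{Y\mid\theta}(dy)=f(\theta,y)P_Y(dy)$ recorded just before the proposition. These give $\Pi_y\ll\pi$ and $P_{Y\mid\theta}\ll P_Y$ automatically. Hence (ii) is equivalent to $\pi(\{\theta:f(\theta,y)=0\})=0$ for $P_Y$-a.e.\ $y$. Likewise (iii) is equivalent to $P_Y(\{y:f(\theta,y)=0\})=0$ for $\pi$-a.e.\ $\theta$. By Tonelli applied to the indicator of the measurable set $\{f=0\}$ under the product measure $\pi\otimes P_Y$, each of these sectionwise statements is equivalent to $(\ast)$. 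This gives (i)$\Leftrightarrow$(ii)$\Leftrightarrow$(iii).

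Next I connect these to Assumption~\ref{ass:common_support}. If (iii) holds, let $\Theta_0$ be the full-measure set on which $P_{Y\mid\theta}\sim P_Y$. Then any two members of $\{P_{Y\mid\theta}:\theta\in\Theta_0\}$ are both equivalent to $P_Y$ and hence to each other, so the assumption holds.

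The converse is the step needing care. Assume the $P_{Y\mid\theta}$, $\theta\in\Theta_0$, are mutually equivalent. Since $P_Y=\int P_{Y\mid\theta}\,\pi(d\theta)$ and $\pi(\Theta_0)=1$, I will fix $\theta_0\in\Theta_0$ and show $P_Y\sim P_{Y\mid\theta_0}$.
\begin{itemize}
\item If $P_{Y\mid\theta_0}(A)=0$, then $P_{Y\mid\theta}(A)=0$ for all $\theta\in\Theta_0$, so integrating gives $P_Y(A)=0$.
\item If $P_Y(A)=0$, then $P_{Y\mid\theta}(A)=0$ for $\pi$-a.e.\ $\theta$, and in particular for some $\theta\in\Theta_0$. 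Mutual equivalence then gives $P_{Y\mid\theta'}(A)=0$ for every $\theta'\in\Theta_0$.
\end{itemize}
So $P_{Y\mid\theta}\sim P_Y$ for all $\theta\in\Theta_0$, which is (iii). One subtlety is that the assumption refers to a particular version of the kernel. I will note that it transfers to the version $f(\theta,\cdot)P_Y$, since the two versions agree for $\pi$-a.e.\ $\theta$ and we may shrink $\Theta_0$ by a null set.

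For the dominated case, suppose $P_{Y\mid\theta}(dy)=p(y\mid\theta)\,\nu(dy)$. Then $P_Y$ has density $m(y)=\int p(y\mid\theta)\pi(d\theta)$, and $f(\theta,y)=p(y\mid\theta)/m(y)$ holds $\pi\otimes P_Y$-a.e., since both sides are densities of $P_{\theta Y}$ with respect to $\pi\otimes P_Y$ and $m>0$ $P_Y$-a.e. Hence $(\ast)$ is equivalent to $p(y\mid\theta)>0$ for $\pi\otimes P_Y$-a.e.\ $(\theta,y)$.

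I expect the main obstacle to be the version and measurability bookkeeping rather than any real difficulty. Specifically, I need $\{f=0\}$ to be jointly measurable so that Tonelli applies, and I need the a.e.\ choices of kernel versions to be compatible with the everywhere-on-$\Theta_0$ phrasing of the assumption.
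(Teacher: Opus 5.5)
Your proposal is correct and follows the paper's proof essentially step for step. You reduce (i), (ii) and (iii) to $Q_{\theta Y}\{f=0\}=0$ via the two Tonelli iterated integrals, link the assumption to (iii) by fixing $\theta_0\in\Theta_0$ and comparing null sets through $P_Y=\int P_{Y\mid\theta}\,\pi(d\theta)$, and identify $f=p/m$ on $\{m>0\}$ in the dominated case. The only bookkeeping the paper makes explicit and you leave implicit is in (iii)$\Rightarrow$assumption: take $\Theta_0:=\{\theta:P_Y\{y:f(\theta,y)=0\}=0\}$, which is measurable by Tonelli, intersected with the set where the displayed kernel is a probability measure, so that $\Theta_0$ is measurable as the assumption requires.
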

\begin{proof}
All three conditions concern the zero set of the joint density $f$. We
first show that each of (i), (ii), (iii) is equivalent to
$Q_{\theta Y}\{f=0\}=0$, and then relate (iii) to the assumption as stated.

For (i): we already know $P_{\theta Y}\ll Q_{\theta Y}$, so (i) is the
reverse relation $Q_{\theta Y}\ll P_{\theta Y}$. If
$Q_{\theta Y}\{f=0\}=0$ and $P_{\theta Y}(A)=\int_Af\,dQ_{\theta Y}=0$,
then $f=0$ almost everywhere on $A$ under $Q_{\theta Y}$, so
$Q_{\theta Y}(A)=Q_{\theta Y}(A\cap\{f=0\})=0$. Conversely, the set
$\{f=0\}$ has $P_{\theta Y}$-measure $\int_{\{f=0\}}f\,dQ_{\theta Y}=0$,
so $Q_{\theta Y}\ll P_{\theta Y}$ forces $Q_{\theta Y}\{f=0\}=0$. Thus
\[
 \text{(i)}\iff Q_{\theta Y}\{f=0\}=0 .
\]
By Tonelli's theorem this common quantity can be computed in either order:
\[
 Q_{\theta Y}\{f=0\}
 =\int\pi\{\theta:f(\theta,y)=0\}\,P_Y(dy)
 =\int P_Y\{y:f(\theta,y)=0\}\,\pi(d\theta).
\]

For (ii): by the displayed version of the posterior, $\Pi_y\ll\pi$ holds
for $P_Y$-almost every $y$ with density $f(\cdot,y)$. The same one-variable
argument as above shows that $\pi\ll\Pi_y$ holds exactly when
$\pi\{\theta:f(\theta,y)=0\}=0$. Hence (ii) holds exactly when the
integrand of the first integral vanishes for $P_Y$-almost every $y$,
that is, exactly when $Q_{\theta Y}\{f=0\}=0$.

For (iii): likewise $P_{Y\mid\theta}\ll P_Y$ with density $f(\theta,\cdot)$
for $\pi$-almost every $\theta$, and $P_Y\ll P_{Y\mid\theta}$ holds exactly
when $P_Y\{y:f(\theta,y)=0\}=0$. Hence (iii) holds exactly when the
integrand of the second integral vanishes for $\pi$-almost every $\theta$,
again exactly when $Q_{\theta Y}\{f=0\}=0$.

It remains to connect (iii) with Assumption~\ref{ass:common_support},
which asks for a measurable set $\Theta_0$ with $\pi(\Theta_0)=1$ on
which all sampling laws are mutually absolutely continuous.
Suppose first that the assumption holds and fix $\theta_0\in\Theta_0$.
Since $\pi(\Theta_0)=1$, the predictive law is the mixture
$P_Y=\int_{\Theta_0}P_{Y\mid\theta}\,\pi(d\theta)$. If
$P_{Y\mid\theta_0}(A)=0$, then $P_{Y\mid\theta}(A)=0$ for every
$\theta\in\Theta_0$ by equivalence, so $P_Y(A)=0$. If $P_Y(A)=0$, then
$P_{Y\mid\theta}(A)=0$ for $\pi$-almost every $\theta\in\Theta_0$, hence
for at least one such $\theta$, and equivalence gives
$P_{Y\mid\theta_0}(A)=0$. Thus $P_{Y\mid\theta_0}\sim P_Y$, and therefore
$P_{Y\mid\theta}\sim P_{Y\mid\theta_0}\sim P_Y$ for every
$\theta\in\Theta_0$, which is (iii).
Conversely, suppose (iii) holds and let
$\Theta_0:=\{\theta:P_Y\{y:f(\theta,y)=0\}=0\}$, a measurable set by
Tonelli's theorem, intersected with the full-measure set on which the
displayed version of $P_{Y\mid\theta}$ is a probability measure. Then
$\pi(\Theta_0)=1$ by (iii), and for $\theta,\theta'\in\Theta_0$ we have
$P_{Y\mid\theta}\sim P_Y\sim P_{Y\mid\theta'}$, so the assumption holds
with this $\Theta_0$.

Finally, let the simulator be dominated by a sigma-finite measure $\nu$
with density $p(y\mid\theta)$ and predictive density
$m(y)=\int p(y\mid\theta)\pi(d\theta)$, so that $P_Y(dy)=m(y)\nu(dy)$.
Then $P_{\theta Y}(d\theta,dy)=p(y\mid\theta)\pi(d\theta)\nu(dy)$ and
$Q_{\theta Y}(d\theta,dy)=m(y)\pi(d\theta)\nu(dy)$, so on the set
$\{m>0\}$, which carries all of $P_Y$, the joint density is
$f(\theta,y)=p(y\mid\theta)/m(y)$. The condition $Q_{\theta Y}\{f=0\}=0$
is therefore exactly $p(y\mid\theta)>0$ for $\pi\otimes P_Y$-almost every
$(\theta,y)$.
\end{proof}

\paragraph{Scope of the support condition.}
The condition is mutual absolute continuity, not equality of topological
supports. A sufficient statistic preserves it: conditional distributions
of the raw observation given the statistic are parameter-independent,
so equivalence of the statistic laws lifts through their shared kernel.
It is also preserved by finite i.i.d. replication. A parameter-independent
observation kernel having strictly positive densities on a common support
gives equivalent sampling laws after integrating any latent dynamics.
Finiteness of $I$ must still be checked separately; equivalence alone
does not bound mutual information.

\subsection{Supporting lemmas for Theorem \ref{thm:separation}}
For a quantizer $T=g(Y)$ and a cell with $p_c>0$, the cell posterior is the
average of the observation posteriors over the cell, so its density
relative to the prior is
\[
 f_c(\theta)=p_c^{-1}\int_{\{g=c\}}f(\theta,y)P_Y(dy)
 \le p_c^{-1}\int f(\theta,y)P_Y(dy)
 =p_c^{-1}
\]
for $\pi$-almost every $\theta$. Hence
$\KL(\Pi_c\Vert\pi)=\int\log f_c\,d\Pi_c\le\log(1/p_c)$, and
$I(\theta;T)=\sum_c p_c\KL(\Pi_c\Vert\pi)\le I$ by data processing.
Define the integrable posterior-information function
$\varphi(y)=\KL(\Pi_y\Vert\pi)$, whose integral is
$\int\varphi\,dP_Y=I<\infty$, and, for $0\le\varepsilon<1$, set the moduli
\[
 \omega(\delta)=
 \sup_{A:\; P_Y(A)\le\delta}\int_A\varphi\,dP_Y,
 \qquad
 \psi(\delta)=\log\frac1{1-\delta}+\omega(\delta).
\]
Absolute continuity of the integral gives $\omega(\delta)\to0$ as
$\delta\downarrow0$. For any $\Lambda>0$, we have $\omega(\delta)\le\Lambda\delta+\int(\varphi-\Lambda)_+\,dP_Y$. The last
integral tends to zero as $\Lambda\to\infty$ by dominated convergence,
because $0\le(\varphi-\Lambda)_+\le\varphi$ and $\varphi$ is integrable.
Then, choosing $\Lambda$ large and then $\delta$ small makes $\omega(\delta)$
arbitrarily small, and both moduli are nondecreasing in $\delta$.

\begin{lemma}[Informative quantizers cannot be nearly constant]
\label{lem:informative_quantizer_collision}
For $T\in\mathcal G$, let
$p_{\max}=\max_c p_c$ and $\kappa(T)=1-\sum_c p_c^2$.
Then
\[
 1-p_{\max}\le\kappa(T)\le2(1-p_{\max}),\qquad
 I(\theta;T)\le\psi(1-p_{\max}).
\]
Consequently, if $\psi(\delta)<u$, every quantizer with relevance at least
$u$ has $\kappa(T)>\delta$.
\end{lemma}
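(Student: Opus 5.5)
The argument splits into three parts. First the two elementary inequalities for the collision gap, then the relevance bound, which isolates the largest cell. The consequence is immediate: if $\kappa(T)\le\delta$, then $1-p_{\max}\le\kappa(T)\le\delta$. Since $\psi$ is nondecreasing, this gives $I(\theta;T)\le\psi(\delta)<u$, contradicting relevance at least $u$.

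\textbf{Collision-gap inequalities.} For the lower bound, use $\sum_c p_c^2\le p_{\max}\sum_c p_c=p_{\max}$, so $\kappa(T)\ge 1-p_{\max}$. For the upper bound, keep only the largest term, $\sum_c p_c^2\ge p_{\max}^2$. Then
\[
\kappa(T)\le 1-p_{\max}^2=(1-p_{\max})(1+p_{\max})\le 2(1-p_{\max}).
\]
The maximum is attained because a countable probability vector has a largest entry. Call that cell $c^\star$ and set $\delta:=1-p_{\max}=P_Y(A)$, where $A:=\{g\neq c^\star\}$.

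\textbf{Relevance bound.} I would split
\[
I(\theta;T)=p_{\max}\KL(\Pi_{c^\star}\Vert\pi)+\sum_{c\neq c^\star}p_c\KL(\Pi_c\Vert\pi).
\]

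For the large cell, the density bound $f_{c^\star}\le p_{c^\star}^{-1}$ derived just above the lemma gives
\[
p_{\max}\KL(\Pi_{c^\star}\Vert\pi)\le p_{\max}\log\frac1{p_{\max}}\le\log\frac1{1-\delta}.
\]

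For the small cells, each $\Pi_c$ is the $P_Y$-average of $\Pi_y$ over the cell $\{g=c\}$. Convexity of $\KL(\cdot\Vert\pi)$ in its first argument then gives
\[
p_c\KL(\Pi_c\Vert\pi)\le\int_{\{g=c\}}\varphi\,dP_Y.
\]
Summing over $c\neq c^\star$ yields at most $\int_A\varphi\,dP_Y\le\omega(\delta)$. Adding the two pieces gives $I(\theta;T)\le\psi(\delta)$.

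\textbf{Main obstacle.} The delicate step is the convexity (Jensen) inequality for a continuous mixture over the cell. I would justify it via the compensation identity of Lemma~\ref{lem:identity} applied to the cell, with reference $Q=\pi$: conditional on $T=c$, the average of $\KL(\Pi_Y\Vert\pi)$ equals $\KL(\Pi_c\Vert\pi)$ plus a nonnegative conditional distortion term. Equivalently, one can use lower semicontinuity and joint convexity of KL on the regular conditional versions.

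The large-cell term could be handled the same way, but that would give the cruder bound $\int\varphi\,dP_Y=I$, which is useless. This is why the density bound $f_c\le p_c^{-1}$ is used there instead.
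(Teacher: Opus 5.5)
Your proof is correct and follows the paper's argument step for step: the same two collision-gap bounds, the same split of $I(\theta;T)$ at the largest cell, and the same treatment of each piece. The largest cell is handled by the density bound $f_c\le p_c^{-1}$ and the remaining cells by cellwise Jensen plus the modulus $\omega$, and the consequence uses the monotonicity of $\psi$ exactly as the paper does, which you argue contrapositively. The only deviation is in how you justify the cellwise Jensen step: you use the per-cell form of the compensation identity with reference $\pi$, where the paper uses Donsker--Varadhan convexity of $\KL(\cdot\Vert\pi)$. Your version is available because the paper proves that identity for each fixed summary value before averaging, so both routes work. One small fix: rename your $\delta:=1-p_{\max}$, since it clashes with the $\delta$ in the statement.
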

\begin{proof}
The collision bounds come from comparing $\sum_cp_c^2$ with $p_{\max}$
in both directions. Since every $p_c\le p_{\max}$ and the $p_c$ sum to
one, $\sum_cp_c^2\le p_{\max}\sum_cp_c=p_{\max}$, so
$\kappa(T)\ge1-p_{\max}$. Since the largest cell alone contributes
$p_{\max}^2$,
\[
 \kappa(T)\le1-p_{\max}^2=(1-p_{\max})(1+p_{\max})\le2(1-p_{\max}).
\]

For the relevance bound, let $b$ be a cell with $p_b=p_{\max}$ and split
$I(\theta;T)=\sum_cp_c\KL(\Pi_c\Vert\pi)$ into the largest cell and the
rest. For the largest cell, the density bound $f_b\le1/p_b$ gives
\[
 p_b\KL(\Pi_b\Vert\pi)\le p_b\log\frac1{p_b}\le\log\frac1{p_b}.
\]
For every other cell, the cell posterior is the mixture
$\Pi_c=p_c^{-1}\int_{\{g=c\}}\Pi_y\,P_Y(dy)$. The divergence
$P\mapsto\KL(P\Vert\pi)$ is convex, being a supremum of affine functionals
of $P$ by the Donsker--Varadhan representation, so the integral form of
Jensen's inequality gives
\[
 p_c\KL(\Pi_c\Vert\pi)
 \le\int_{\{g=c\}}\KL(\Pi_y\Vert\pi)\,P_Y(dy)
 =\int_{\{g=c\}}\varphi\,dP_Y .
\]
Summing over $c\ne b$, the cells $\{g=c\}$ with $c\ne b$ are disjoint with
union $\{g\ne b\}$, a set of $P_Y$-probability $1-p_b$. By the definition
of $\omega$,
\[
 \sum_{c\ne b}p_c\KL(\Pi_c\Vert\pi)\le\int_{\{g\ne b\}}\varphi\,dP_Y
 \le\omega(1-p_b).
\]
Adding the two parts,
$I(\theta;T)\le\log(1/p_b)+\omega(1-p_b)=\psi(1-p_{\max})$.

For the last assertion, suppose $\psi(\delta)<u\le I(\theta;T)$. Then
$\psi(1-p_{\max})\ge I(\theta;T)>\psi(\delta)$, and since $\psi$ is
nondecreasing this forces $1-p_{\max}>\delta$. The lower collision bound
gives $\kappa(T)\ge1-p_{\max}>\delta$.
\end{proof}

\begin{definition}[Lautum information {\citep{palomar2008lautum}}]
\label{def:lautum_information}
The lautum information of the experiment is the reverse product/joint
divergence
\[
 L:=\KL(Q_{\theta Y}\Vert P_{\theta Y})
   =\int\KL(\pi\Vert\Pi_y)\,P_Y(dy),
\]
and its quantizer-level version is
\[
 L_T:=\KL(Q_{\theta T}\Vert P_{\theta T})
     =\sum_{c:p_c>0}p_c\KL(\pi\Vert\Pi_c).
\]
Either quantity may be infinite. Mutual information uses the opposite
direction, $I=\KL(P_{\theta Y}\Vert Q_{\theta Y})$; its finiteness does not imply $L<\infty$.
\end{definition}
Since $P_{\theta T},Q_{\theta T}$ are images of $P_{\theta Y},Q_{\theta Y}$ under the same map, data processing
gives $L_T\le L$. Finite lautum information will yield a quantitative
bound; the qualitative separation needs only the support condition.

\begin{lemma}[Random relabeling: channel law and rate saving]
\label{lem:random_relabeling}
For $T\in\mathcal G$ and $t\in(0,1)$, independently draw
$\xi\sim\operatorname{Bernoulli}(t)$ and $T'\sim P_T$, and set
$V_t=T$ if $\xi=0$ and $V_t=T'$ otherwise.
\begin{enumerate}[label=(\roman*),leftmargin=*,itemsep=3pt]
\item \emph{Relabeling preserves the quantizer marginal and mixes the joint law.}
\[
 V_t\sim P_T,\qquad
 P_{\theta V_t}=(1-t)P_{\theta T}+tQ_{\theta T}.
\]
\item \emph{Relabeling gives a $t\log(1/t)$ lower bound on the rate saving.}
We have $I(Y;V_t)=I(T;V_t)$, and
\begin{equation}
\label{eq:relabel_saving}
\begin{aligned}
 B_t(T)&:=H(T)-I(Y;V_t)\\
       &=H(T\mid V_t)=H(V_t\mid T)
         \ge\kappa(T)t\log(1/t).
\end{aligned}
\end{equation}
\item \emph{A nonconstant quantizer retains positive information after relabeling.}
\[
 0<B_t(T)<H(T)\qquad\text{if }H(T)>0.
\]
\end{enumerate}
\end{lemma}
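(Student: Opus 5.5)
We verify the three parts in order, using only the construction of $V_t$ and elementary entropy identities. The whole argument rests on one fact: conditionally on $Y$, the channel $Y\to V_t$ depends on $Y$ only through $T=g(Y)$. Indeed, the auxiliary pair $(\xi,T')$ is independent of $(\theta,Y)$.

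\emph{Part (i).} For measurable $A\subseteq\Theta$ and a label $c$, we condition on $\xi$:
\[
\mathbb P(\theta\in A,V_t=c)=(1-t)\,\mathbb P(\theta\in A,T=c)+t\,\pi(A)\,p_c .
\]
On $\{\xi=1\}$, $T'$ is independent of $\theta$ with law $P_T$. This identity is exactly $P_{\theta V_t}=(1-t)P_{\theta T}+tQ_{\theta T}$. Summing over $A=\Theta$ gives $\mathbb P(V_t=c)=(1-t)p_c+tp_c=p_c$, so $V_t\sim P_T$.

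\emph{Part (ii).} Since $Y\to T\to V_t$ is Markov and $T$ is a function of $Y$, we have
\[
I(Y;V_t)=I(Y,T;V_t)=I(T;V_t)+I(Y;V_t\mid T)=I(T;V_t).
\]
Here $I(Y;V_t\mid T)=0$ by the Markov property. Everything is countable with $H(T)<\infty$, and $H(V_t)=H(T)$ by part (i). Hence
\[
H(T)-I(T;V_t)=H(T\mid V_t)=H(V_t)-I(T;V_t)=H(V_t\mid T).
\]
For the lower bound, we use the conditional law given $T=c$: $V_t=c$ with probability $(1-t)+tp_c$, and $V_t=c'\neq c$ with probability $tp_{c'}$. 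Then
\[
H(V_t\mid T=c)\ge\sum_{c'\ne c}tp_{c'}\log\frac{1}{tp_{c'}}\ge t(1-p_c)\log\frac1t .
\]
The second inequality drops the nonnegative $\log(1/p_{c'})$ terms, and dropping the $c'=c$ term is also legitimate since it is nonnegative. Averaging over $c$ with weights $p_c$ gives
\[
B_t(T)\ge t\log(1/t)\sum_cp_c(1-p_c)=\kappa(T)\,t\log(1/t).
\]

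\emph{Part (iii).} If $H(T)>0$, then $T$ has at least two positive cells, so $\kappa(T)>0$, and part (ii) gives $B_t(T)>0$. For the upper bound, $B_t(T)<H(T)$ is equivalent to $I(T;V_t)>0$, i.e.\ $V_t$ is not independent of $T$. Take a cell $c$ with $0<p_c<1$. Then
\[
\mathbb P(V_t=c\mid T=c)=(1-t)+tp_c>p_c=\mathbb P(V_t=c),
\]
because $(1-t)(1-p_c)>0$. So the joint law of $(T,V_t)$ is not the product law, and $I(T;V_t)>0$.

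The only step needing care is the entropy lower bound. Its legitimacy requires that each conditional entropy be a countable sum of nonnegative terms, which holds because $H(T)<\infty$ keeps all these quantities finite. That is the point I would check most carefully, but I expect no real obstacle there.
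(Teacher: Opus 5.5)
Your proposal is correct and follows the paper's proof essentially step for step. You condition on the coin for (i), identify $I(Y;V_t)=I(T;V_t)$ and use equal marginal entropies for $H(T\mid V_t)=H(V_t\mid T)$, apply the same diagonal-dropping bound to get $\kappa(T)t\log(1/t)$, and check non-independence on a cell with $0<p_c<1$ for (iii); the only differences are cosmetic (a chain rule in place of two data-processing inequalities, and equal marginals in place of symmetry of the joint law of $(T,V_t)$).
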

\begin{proof}
The pair $(\xi,T')$ is independent of $(\theta,Y)$, so $V_t$ is obtained
from $T$ by a channel that does not look at $\theta$ or $Y$; in
particular $\theta\to Y\to T\to V_t$ is a Markov chain.

For (i), condition on the coin. On $\{\xi=0\}$ the pair $(\theta,V_t)$
equals $(\theta,T)$, whose law is $P_{\theta T}$. On $\{\xi=1\}$ it equals
$(\theta,T')$, and $T'$ is independent of $\theta$ with law $P_T$, so its
law is $\pi\otimes P_T=Q_{\theta T}$. Averaging over the coin,
\[
 P_{\theta V_t}=(1-t)P_{\theta T}+tQ_{\theta T}.
\]
Both components have $\theta$-marginal $\pi$ and quantizer marginal $P_T$, so
$V_t\sim P_T$.

For (ii), the Markov chain $Y\to T\to V_t$ gives $I(Y;V_t)\le I(T;V_t)$,
and $T$ being a function of $Y$ gives $I(T;V_t)\le I(Y;V_t)$; hence
$I(Y;V_t)=I(T;V_t)$. All entropies below are finite because
$H(T)<\infty$ and $H(V_t)=H(T)$. Writing $I(T;V_t)=H(T)-H(T\mid V_t)$,
\[
 B_t(T)=H(T)-I(T;V_t)=H(T\mid V_t).
\]
The joint law of $(T,V_t)$ is
\[
 \mathbb P(T=c,V_t=c')=(1-t)p_c\mathbf 1\{c=c'\}+tp_cp_{c'},
\]
which is symmetric in $(c,c')$ and has both marginals equal to $P_T$.
Therefore $H(T\mid V_t)=H(T,V_t)-H(V_t)=H(T,V_t)-H(T)=H(V_t\mid T)$.
To bound $H(V_t\mid T)$ from below, condition on $T=c$: the label $V_t$
equals $c$ with probability $(1-t)+tp_c$ and equals each $c'\ne c$ with
probability $tp_{c'}$. Dropping the nonnegative diagonal term and using
$p_{c'}\le1$,
\[
 H(V_t\mid T=c)
 \ \ge\ \sum_{c'\ne c}tp_{c'}\log\frac1{tp_{c'}}
 \ \ge\ t\log\frac1t\sum_{c'\ne c}p_{c'}
 \ =\ t\log\frac1t\,(1-p_c).
\]
Averaging over $c$ with weights $p_c$,
\[
 B_t(T)=\sum_cp_cH(V_t\mid T=c)\ \ge\ t\log\frac1t\sum_cp_c(1-p_c)
 =\kappa(T)\,t\log\frac1t .
\]

For (iii), let $T$ be nonconstant, so that $0<p_c<1$ for at least one
cell and $\kappa(T)=\sum_cp_c(1-p_c)>0$. The lower bound in (ii) then gives
$B_t(T)>0$. For the upper bound, $B_t(T)<H(T)$ is the same as
$I(T;V_t)>0$, that is, $T$ and $V_t$ are not independent. If they were,
the diagonal of the joint law would satisfy
$(1-t)p_c+tp_c^2=p_c^2$, that is, $(1-t)p_c(1-p_c)=0$ for every $c$,
which forces every $p_c\in\{0,1\}$ and contradicts nonconstancy.
\end{proof}

\begin{lemma}[Posterior cost of relabeling and uniform bounds]
\label{lem:relabeling_cost}
For the channel in Lemma~\ref{lem:random_relabeling}, let
$L_t(T):=\Delta_{\KL}(V_t)-\Delta_{\KL}(T)$ be its added posterior
distortion. The following bounds separate the exact cost from the
conditions used to control it.
\begin{enumerate}[label=(\roman*),leftmargin=*,itemsep=3pt]
\item \emph{The exact cost is a relevance term plus a mixture divergence.}
\begin{equation}
\label{eq:relabel_cost}
 L_t(T)=tI(\theta;T)+\operatorname{JS}_t(Q_{\theta T},P_{\theta T}).
\end{equation}
\item \emph{One experiment-level quantity bounds the cost for every quantizer.}
\begin{equation}
\label{eq:relabel_uniform}
 0\le L_t(T)\le A(t):=tI+\operatorname{JS}_t(Q_{\theta Y},P_{\theta Y})\le tI+h_2(t).
\end{equation}
\item \emph{Common support makes this uniform cost smaller than the saved-rate scale.}
Under Assumption~\ref{ass:common_support},
\[
 A(t)=o(t\log(1/t))\qquad(t\downarrow0).
\]
\item \emph{Finite experiment-level lautum information gives a linear bound.}
\[
 L<\infty\quad\Longrightarrow\quad A(t)\le t(I+L).
\]
\item \emph{Finite quantizer-level lautum information gives a second exact identity.}
If $L_T<\infty$, then
\[
 L_t(T)=t\{I(\theta;T)+L_T\}
       -\KL((1-t)P_{\theta T}+tQ_{\theta T}\Vert P_{\theta T}).
\]
\end{enumerate}
\end{lemma}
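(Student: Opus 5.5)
Everything runs through Lemma~\ref{lem:identity}, which turns distortion differences into relevance differences. Since $V_t$ is a garbling of $T$, we have $\Delta_{\KL}(V_t)-\Delta_{\KL}(T)=I(\theta;T)-I(\theta;V_t)$. By Lemma~\ref{lem:random_relabeling}(i), $P_{\theta V_t}=(1-t)P_{\theta T}+tQ_{\theta T}=:M$, and this mixture has the same marginals $\pi$ and $P_T$ as $Q_{\theta T}$. Hence $I(\theta;V_t)=\KL(M\Vert Q_{\theta T})$.

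For (i), I would apply Lemma~\ref{lem:countable_compensation} to the two-component mixture $M$ with reference $Q_{\theta T}$. The hypothesis $\KL(M\Vert Q_{\theta T})<\infty$ holds by convexity, since $\KL(M\Vert Q_{\theta T})\le(1-t)I(\theta;T)$. The identity then gives
\[
(1-t)I(\theta;T)=\KL(M\Vert Q_{\theta T})+(1-t)\KL(P_{\theta T}\Vert M)+t\KL(Q_{\theta T}\Vert M).
\]
Rearranging, $I(\theta;T)-I(\theta;V_t)=tI(\theta;T)+\operatorname{JS}_t(Q_{\theta T},P_{\theta T})$, which is \eqref{eq:relabel_cost}.

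For (ii), nonnegativity follows from data processing. For the upper bound, I would use $I(\theta;T)\le I$ and Lemma~\ref{lem:rare_mixture}(i): the pair $(Q_{\theta T},P_{\theta T})$ is the image of $(Q_{\theta Y},P_{\theta Y})$ under $(\theta,y)\mapsto(\theta,g(y))$, and $\operatorname{JS}_t$ decreases under measurable maps. The bound $\operatorname{JS}_t\le h_2(t)$ is also in Lemma~\ref{lem:rare_mixture}(i).

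For (iii), note that $tI=o(t\log(1/t))$ trivially. By Lemma~\ref{lem:rare_mixture}(ii), $\operatorname{JS}_t(Q_{\theta Y},P_{\theta Y})/(t\log(1/t))\to\alpha(Q_{\theta Y}\mid P_{\theta Y})$. Proposition~\ref{prop:support_equivalence}(i) gives $Q_{\theta Y}\ll P_{\theta Y}$, so this singular mass is zero.

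For (iv), I would use Lemma~\ref{lem:rare_mixture}(iii) with $P=Q_{\theta Y}$ and $Q=P_{\theta Y}$. Its finiteness hypothesis is $L<\infty$, and it yields $\operatorname{JS}_t\le tL$.

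For (v), apply Lemma~\ref{lem:rare_mixture}(iii) at the quantizer level, with $P=Q_{\theta T}$ and $Q=P_{\theta T}$. Since $\KL(Q_{\theta T}\Vert P_{\theta T})=L_T<\infty$, it gives $\operatorname{JS}_t(Q_{\theta T},P_{\theta T})=tL_T-\KL(M\Vert P_{\theta T})$. Substituting this into (i) yields the stated identity.

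The main obstacle is the bookkeeping in (i): one must check that the mixture weights line up so that the JS term comes out with the rare component $Q_{\theta T}$ at weight $t$. One must also justify finiteness, which here is supplied by $I(\theta;T)\le I<\infty$, so the compensation lemma applies. Everything else is a direct citation of earlier lemmas.
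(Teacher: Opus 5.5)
Your proposal is correct and follows the paper's proof essentially step for step. It uses the compensation lemma on $(1-t)P_{\theta T}+tQ_{\theta T}$ with reference $Q_{\theta T}$ for (i), and data processing of $\operatorname{JS}_t$ under $(\theta,y)\mapsto(\theta,g(y))$ for (ii). It uses the vanishing directed singular mass $\alpha(Q_{\theta Y}\mid P_{\theta Y})=0$ for (iii), and the finite-divergence identity of Lemma~\ref{lem:rare_mixture}(iii) at the experiment and quantizer levels for (iv) and (v), with the same finiteness justification $\KL(M\Vert Q_{\theta T})\le(1-t)I(\theta;T)\le I<\infty$.
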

\begin{proof}
By Lemma~\ref{lem:identity}, $\Delta_{\KL}(V_t)-\Delta_{\KL}(T)
=I(\theta;T)-I(\theta;V_t)$, and this is nonnegative by data processing
along $\theta\to T\to V_t$. The exact identity in (i) comes from
expanding $I(\theta;V_t)$ with the compensation lemma.

For (i), Lemma~\ref{lem:random_relabeling}(i) gives
$P_{\theta V_t}=\bar P:=(1-t)P_{\theta T}+tQ_{\theta T}$ with marginals
$\pi$ and $P_T$, so
\[
 I(\theta;V_t)=\KL(\bar P\Vert\pi\otimes P_{V_t})=\KL(\bar P\Vert Q_{\theta T}).
\]
Apply Lemma~\ref{lem:countable_compensation} to the two-component mixture
$\bar P$ with reference $Q_{\theta T}$. Its finite-divergence hypothesis
holds because convexity of the divergence gives
$\KL(\bar P\Vert Q_{\theta T})\le(1-t)\KL(P_{\theta T}\Vert Q_{\theta T})+t\cdot0
=(1-t)I(\theta;T)\le I<\infty$. The compensation identity reads
\[
 (1-t)\KL(P_{\theta T}\Vert Q_{\theta T})+t\KL(Q_{\theta T}\Vert Q_{\theta T})
 =\big[(1-t)\KL(P_{\theta T}\Vert\bar P)+t\KL(Q_{\theta T}\Vert\bar P)\big]
  +\KL(\bar P\Vert Q_{\theta T}).
\]
The left side is $(1-t)I(\theta;T)$. The bracket on the right is
$\operatorname{JS}_t(Q_{\theta T},P_{\theta T})$, since
$\bar P=tQ_{\theta T}+(1-t)P_{\theta T}$ is the weighted mixture with the
rare component $Q_{\theta T}$ carrying weight $t$. The last term is
$I(\theta;V_t)$. Hence
\[
 (1-t)I(\theta;T)=\operatorname{JS}_t(Q_{\theta T},P_{\theta T})+I(\theta;V_t),
\]
and therefore
\begin{align*}
 L_t(T)&=I(\theta;T)-I(\theta;V_t)\\
 &=I(\theta;T)-(1-t)I(\theta;T)+\operatorname{JS}_t(Q_{\theta T},P_{\theta T})\\
 &=tI(\theta;T)+\operatorname{JS}_t(Q_{\theta T},P_{\theta T}).
\end{align*}

For (ii), bound the two terms of~\eqref{eq:relabel_cost} separately.
First, $I(\theta;T)\le I$. Second, the map
$(\theta,y)\mapsto(\theta,g(y))$ sends $Q_{\theta Y}$ to $Q_{\theta T}$
and $P_{\theta Y}$ to $P_{\theta T}$, so the data-processing property of
Lemma~\ref{lem:rare_mixture} gives
$\operatorname{JS}_t(Q_{\theta T},P_{\theta T})\le\operatorname{JS}_t(Q_{\theta Y},P_{\theta Y})$.
Together, $L_t(T)\le tI+\operatorname{JS}_t(Q_{\theta Y},P_{\theta Y})=A(t)$,
and the entropy bound $\operatorname{JS}_t\le h_2(t)$ from the same lemma
gives the last inequality.

For (iii), Proposition~\ref{prop:support_equivalence}(i) turns the
support assumption into $Q_{\theta Y}\ll P_{\theta Y}$, so the directed
singular mass $\alpha(Q_{\theta Y}\mid P_{\theta Y})$ is zero. The second
limit in Lemma~\ref{lem:rare_mixture}, applied with the rare component
$Q_{\theta Y}$, gives $\operatorname{JS}_t(Q_{\theta Y},P_{\theta Y})=o(t\log(1/t))$.
The other term satisfies $tI/(t\log(1/t))=I/\log(1/t)\to0$. Hence
$A(t)=o(t\log(1/t))$.

For (iv), if $L=\KL(Q_{\theta Y}\Vert P_{\theta Y})<\infty$, the final
inequality of Lemma~\ref{lem:rare_mixture} with $P=Q_{\theta Y}$ and
$Q=P_{\theta Y}$ gives $\operatorname{JS}_t(Q_{\theta Y},P_{\theta Y})\le tL$,
so $A(t)\le tI+tL$.

For (v), if $L_T=\KL(Q_{\theta T}\Vert P_{\theta T})<\infty$, the final
identity of Lemma~\ref{lem:rare_mixture} with $P=Q_{\theta T}$ and
$Q=P_{\theta T}$ reads
\[
 \operatorname{JS}_t(Q_{\theta T},P_{\theta T})
 =tL_T-\KL\big(tQ_{\theta T}+(1-t)P_{\theta T}\,\big\Vert\,P_{\theta T}\big).
\]
Substituting this into~\eqref{eq:relabel_cost} gives the stated formula.
\end{proof}
\subsection{Proof of Theorem~\ref{thm:separation}}
\begin{proof}
Fix $R\in(0,R_0)$ and write $D=D^{\mathrm{ts}}(R)$. The argument has
three stages. We first fix constants that depend only on the experiment
and on $R$; then we take a deterministic mixture that is nearly optimal at
rate $R$ and relabel each of its components, which saves rate at a
$t\log(1/t)$ scale while costing distortion at a smaller scale; finally
we spend the saved rate on a fixed finite quantizer with distortion below $D$.
Fixing the constants before choosing the mixture is what makes the
argument independent of attainment.

We begin by showing $0<D<I$. Positivity holds because $R<R_0$ and
$D^{\mathrm{ts}}>0$ on $[0,R_0)$ by Proposition~\ref{prop:time_sharing}.
For the upper bound, Lemma~\ref{lem:finite_posterior_quantizers} supplies a
finite quantizer $U'$ with $I(\theta;U')>0$; such a quantizer is nonconstant, so
$H(U')>0$. Time-sharing $U'$ with the constant quantizer at weight
$\lambda:=\min\{1,R/H(U')\}$ on $U'$ has rate $\lambda H(U')\le R$ and,
by~\eqref{eq:ts_affine} and Lemma~\ref{lem:identity}, distortion
\[
 \lambda\Delta_{\KL}(U')+(1-\lambda)I=I-\lambda I(\theta;U')<I .
\]
Hence $D<I$, and we set $a:=I-D>0$.

Next, Lemma~\ref{lem:finite_posterior_quantizers} also supplies a finite quantizer
$U$ with $d_U:=\Delta_{\KL}(U)<D$. Its rate $r_U:=H(U)$ must exceed $R$:
otherwise $U$ alone would be a feasible one-component mixture at rate
$R$ with distortion below $D^{\mathrm{ts}}(R)$.

We now fix the remaining constants. Since $\psi(\delta)\to0$ as
$\delta\downarrow0$, choose $\delta\in(0,1)$ with $\psi(\delta)<a/4$,
and set
\[
 K:=\frac{a\delta}{4I}>0,\qquad
 c:=\frac{K(D-d_U)}{r_U}>0 .
\]
By Lemma~\ref{lem:relabeling_cost}(iii), $A(t)/(t\log(1/t))\to0$, so we
can choose $t\in(0,1/2]$ with
\[
 A(t)\le\frac c2\,t\log\frac1t .
\]
Finally choose $\eta$ with
\[
 0<\eta<\min\Big\{\frac a2,\ \frac c4\,t\log\frac1t\Big\}.
\]
Hence, every constant so far depends only on the experiment, $R$, and the choices
of $U$ and $\delta$.

We now take a nearly optimal mixture. By the definition of
$D^{\mathrm{ts}}(R)$ as an infimum and Lemma~\ref{lem:truncate_quantizers},
there is a finite revealed mixture of finite quantizers $(\lambda_j,T_j)_{j\le k}$
with average rate $r:=\sum_j\lambda_jH(T_j)\le R$ and average distortion
$d:=\sum_j\lambda_j\Delta_{\KL}(T_j)$ satisfying $D\le d\le D+\eta$.
By Lemma~\ref{lem:identity}, its average relevance is
\[
 \sum_j\lambda_jI(\theta;T_j)=I-d\ \ge\ I-D-\eta=a-\eta>\frac a2 .
\]
Let $v$ be the total weight of the components with
$I(\theta;T_j)\ge a/4$. Bounding the relevance of the other components by
$a/4$ and that of all components by $I$,
\[
 \frac a2<\sum_j\lambda_jI(\theta;T_j)\le\frac a4(1-v)+Iv\le\frac a4+Iv,
\]
so $v>a/(4I)$. Each component counted in $v$ has relevance at least
$a/4>\psi(\delta)$, so Lemma~\ref{lem:informative_quantizer_collision} gives
$\kappa(T_j)>\delta$ for it. Therefore
\[
 \sum_j\lambda_j\kappa(T_j)\ \ge\ \delta v\ >\ \frac{a\delta}{4I}=K .
\]

We now relabel. Apply the channel of Lemma~\ref{lem:random_relabeling}
to every component $T_j$ with the same $t$, using independent coins and
draws for different components, and keep the original selector $J$
visible. The result is the revealed mixture $S':=(J,V_{t,J})$ of the
relabeled quantizers $V_{t,j}$, which is a summary channel because all added
randomness is independent of the experiment. By~\eqref{eq:ts_affine} and
Lemma~\ref{lem:random_relabeling}(ii), its rate is
\[
 r'=\sum_j\lambda_jI(Y;V_{t,j})=\sum_j\lambda_j\{H(T_j)-B_t(T_j)\}=r-B,
\]
where the total saving satisfies
\[
 B:=\sum_j\lambda_jB_t(T_j)\ \ge\ \sum_j\lambda_j\kappa(T_j)\,t\log\frac1t\ \ge\ Kt\log\frac1t>0 .
\]
By~\eqref{eq:ts_affine} and Lemma~\ref{lem:relabeling_cost}(ii), its
distortion is
\[
 \Delta_{\KL}(S')=\sum_j\lambda_j\{\Delta_{\KL}(T_j)+L_t(T_j)\}=d+L',
 \qquad
 0\le L':=\sum_j\lambda_jL_t(T_j)\le A(t).
\]
In particular $0\le r'<r\le R<r_U$.

We now spend the saved rate. Use a fresh revealed selector, independent of
everything else, to transmit $U$ with probability
\[
 u:=\frac{R-r'}{r_U-r'}\in(0,1)
\]
and $S'$ otherwise. By~\eqref{eq:ts_affine} the final summary $S$ has rate
$(1-u)r'+ur_U=r'+u(r_U-r')=R$ exactly, and distortion
\[
 \Delta_{\KL}(S)=(1-u)(d+L')+ud_U=(d+L')-u\{(d+L')-d_U\}.
\]
Two facts control the subtracted term. First, $d\ge D>d_U$ and $L'\ge0$
give $(d+L')-d_U\ge D-d_U>0$. Second, using $R\ge r$ and $r_U-r'\le r_U$,
\[
 u=\frac{R-r'}{r_U-r'}\ \ge\ \frac{r-r'}{r_U}=\frac{B}{r_U}\ \ge\ \frac{Kt\log(1/t)}{r_U}.
\]
Therefore
\begin{align*}
 \Delta_{\KL}(S)
 &\le d+L'-\frac{Kt\log(1/t)}{r_U}(D-d_U)\\
 &\le D+\eta+A(t)-c\,t\log\frac1t\\
 &\le D+\frac c4\,t\log\frac1t+\frac c2\,t\log\frac1t-c\,t\log\frac1t\\
 &=D-\frac c4\,t\log\frac1t\ <\ D,
\end{align*}
where the second line uses $d\le D+\eta$, $L'\le A(t)$, and the definition
of $c$, and the third line uses the choices of $\eta$ and $t$.

Every component quantizer and every selector takes finitely many values, so the
alphabet of $S$ is finite, and all randomization is generated independently
of the experiment, so $S$ is a valid summary channel of rate exactly $R$.
Hence $D^\star(R)\le\Delta_{\KL}(S)<D^{\mathrm{ts}}(R)$, which
is~\eqref{eq:separation}; no attainment of either frontier was used.

For the endpoints, a summary with $I(Y;S)=0$ is independent of $Y$, hence
of $\theta$, so $\Delta_{\KL}(S)=I-I(\theta;S)=I$ and
$D^\star(0)=I=D^{\mathrm{ts}}(0)$. If $R_0<\infty$,
Proposition~\ref{prop:time_sharing} gives $D^{\mathrm{ts}}=0$ on
$[R_0,\infty)$, and $0\le D^\star\le D^{\mathrm{ts}}$ gives the same for
$D^\star$.
\end{proof}

\begin{corollary}[Pointwise exclusion of deterministic near-optimizers]
\label{cor:nearoptimal_stochastic}
For each fixed $R\in(0,R_0)$, put
$\gamma_R=D^{\mathrm{ts}}(R)-D^\star(R)>0$.
A channel with $I(Y;S)\le R$ and
$\Delta_{\KL}(S)<D^\star(R)+\gamma_R$ cannot be a finite revealed
mixture of deterministic quantizers, including a deterministic quantizer itself.
Every rate-feasible minimizing sequence eventually lies outside this class.
The gap is pointwise in $R$, not asserted uniform over rates, experiments,
or neural model classes.
\end{corollary}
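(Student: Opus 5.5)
The corollary follows from Theorem~\ref{thm:separation} together with the affine description of revealed mixtures in Proposition~\ref{prop:time_sharing}. The plan is to show positivity of the gap, then exclusion of the deterministic class by contradiction, then the statement about minimizing sequences.

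First, positivity. For fixed $R\in(0,R_0)$, Theorem~\ref{thm:separation} gives $D^\star(R)<D^{\mathrm{ts}}(R)$. Both quantities are finite, since $0\le D^\star\le D^{\mathrm{ts}}\le I<\infty$ by Proposition~\ref{prop:time_sharing}. Hence $\gamma_R>0$ is a well-defined positive number depending only on the experiment and $R$.

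Second, exclusion, argued by contradiction. Suppose a channel $S$ with $I(Y;S)\le R$ and $\Delta_{\KL}(S)<D^\star(R)+\gamma_R=D^{\mathrm{ts}}(R)$ were a finite revealed mixture $(J,g_J(Y))$ of quantizers $g_j\in\mathcal G$, with weights $\lambda_j$. By~\eqref{eq:ts_affine}, its rate and distortion are
\[
\sum_j\lambda_jH(g_j(Y))\le R,\qquad \sum_j\lambda_j\Delta_{\KL}(g_j(Y))=\Delta_{\KL}(S).
\]
So the pair $(I(Y;S),\Delta_{\KL}(S))$ lies in $\operatorname{conv}(\mathcal A_{\mathrm{det}})$ with rate at most $R$. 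By the definition~\eqref{eq:ts_frontier}, this forces $D^{\mathrm{ts}}(R)\le\Delta_{\KL}(S)$, which contradicts the assumed strict inequality.

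A single deterministic summary $g(Y)$ is the one-component case. It needs one check: if $g$ takes uncountably many values but has $I(Y;g(Y))\le R<\infty$, then Lemma~\ref{lem:deterministic_atomic} shows its law is countably atomic with $H(g(Y))=I(Y;g(Y))$. After relabeling the atoms, $g$ therefore belongs to $\mathcal G$ without loss of generality. The same reduction applies to each component of a mixture, since each component's rate is bounded by $\sum_j\lambda_jI(Y;S_j)/\lambda_j<\infty$.

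Third, minimizing sequences. Let $S_n$ satisfy $I(Y;S_n)\le R$ and $\Delta_{\KL}(S_n)\to D^\star(R)$. Then eventually $\Delta_{\KL}(S_n)<D^\star(R)+\gamma_R$, so by the previous step $S_n$ lies outside the class from that index onward.

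The final sentence of the corollary is a scope remark and needs no proof. At most one could note that $\gamma_R$ depends on $R$ through Theorem~\ref{thm:separation}'s construction, so no uniformity is claimed.

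The only step needing care is the reduction of arbitrary deterministic maps with Borel outputs to $\mathcal G$ via Lemma~\ref{lem:deterministic_atomic}. It is routine, so I expect no real obstacle.
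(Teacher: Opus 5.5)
Your proposal is correct and follows the paper's proof essentially verbatim. It takes $\gamma_R>0$ from Theorem~\ref{thm:separation}, uses the fact that any finite revealed deterministic mixture of rate at most $R$ lies in $\operatorname{conv}(\mathcal A_{\mathrm{det}})$ and so has distortion at least $D^{\mathrm{ts}}(R)$, and concludes with the eventual-exclusion argument for minimizing sequences. Your extra reduction of Borel-valued deterministic maps to $\mathcal G$ via Lemma~\ref{lem:deterministic_atomic} is harmless, since the paper builds it into the definition of $\mathcal G$. One notational slip: the component bound should read $I(Y;S_j)\le\lambda_j^{-1}\sum_k\lambda_kI(Y;S_k)\le R/\lambda_j$, not $\sum_j\lambda_jI(Y;S_j)/\lambda_j$.
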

\begin{proof}
Theorem~\ref{thm:separation} gives $\gamma_R>0$. A finite revealed mixture
of deterministic quantizers with average rate at most $R$ realizes a point
$(r,d)\in\operatorname{conv}(\mathcal A_{\mathrm{det}})$ with $r\le R$, so
by the definition of the time-sharing frontier
$d\ge D^{\mathrm{ts}}(R)=D^\star(R)+\gamma_R$; a single deterministic
quantizer is the one-component case. If $S_n$ are channels with
$I(Y;S_n)\le R$ and $\Delta_{\KL}(S_n)\to D^\star(R)$, then
$\Delta_{\KL}(S_n)<D^\star(R)+\gamma_R$ for all large $n$, so these $S_n$
are not deterministic mixtures.
\end{proof}

\begin{corollary}[A quantitative gap under finite lautum information]
\label{cor:lautum_gap}
If $L<\infty$, retain $a,U,\delta,K,c$ from the proof above.
For every $t\in(0,1/2]$ with $\log(1/t)\ge2(I+L)/c$,
\[
 D^\star(R)\le D^{\mathrm{ts}}(R)-\frac c2t\log(1/t).
\]
\end{corollary}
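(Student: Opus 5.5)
The plan is to rerun the proof of Theorem~\ref{thm:separation} with the same fixed constants $a,U,\delta,K,c$ and the same $d_U<D=D^{\mathrm{ts}}(R)$. The one change is that the qualitative bound $A(t)=o(t\log(1/t))$ from Lemma~\ref{lem:relabeling_cost}(iii) is replaced by the linear bound $A(t)\le t(I+L)$ of Lemma~\ref{lem:relabeling_cost}(iv). Given $t\in(0,1/2]$ with $\log(1/t)\ge 2(I+L)/c$, we get
\[
 A(t)\le t(I+L)\le \frac c2\,t\log\frac1t .
\]
This is exactly the inequality imposed on $t$ in the main proof. So every admissible $t$ in the corollary can play that role without invoking any limit.

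Next we fix such a $t$. Choose any $\eta$ with $0<\eta<\min\{a/2,\ \eta_0\}$, where $\eta_0>0$ is arbitrary and will be sent to zero; it replaces the earlier cap $\frac c4 t\log(1/t)$. Take a finite revealed mixture of finite quantizers with rate at most $R$ and distortion $d\in[D,D+\eta]$. The collision argument via Lemma~\ref{lem:informative_quantizer_collision} needs only $\eta<a/2$, so it again gives $\sum_j\lambda_j\kappa(T_j)>K$. Lemma~\ref{lem:random_relabeling}(ii) then yields a saving $B\ge Kt\log(1/t)$, and the added distortion is $L'\le A(t)$. Spending the saving on $U$ with weight $u\ge B/r_U$ produces, as in the main chain of inequalities, a summary of rate exactly $R$ with
\[
 \Delta_{\KL}(S)\le D+\eta+A(t)-c\,t\log\frac1t\le D+\eta-\frac c2\,t\log\frac1t .
\]

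Finally, $D^\star(R)\le\Delta_{\KL}(S)$ holds for every $\eta\in(0,\min\{a/2,\eta_0\})$. Letting $\eta\downarrow0$ gives $D^\star(R)\le D^{\mathrm{ts}}(R)-\frac c2 t\log(1/t)$, which removes the slack $\frac c4$ that was absorbed in the main proof.

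The only delicate point is the feasibility of the final step, namely $u\in(0,1)$ and $r'<R<r_U$. This follows as before from $B>0$ and $r_U>R$, and it does not depend on the size of $\eta$. We also need the set of admissible $t$ to be nonempty, which holds because $c>0$ and $L,I<\infty$. I do not expect a genuine obstacle: the work is in checking that no step of the original proof used the $\frac c4$ choice of $\eta$ except the final absorption.
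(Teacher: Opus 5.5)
Your proposal is correct and follows the paper's proof essentially verbatim. Like the paper, you use Lemma~\ref{lem:relabeling_cost}(iv) to get $A(t)\le t(I+L)\le\frac c2 t\log(1/t)$, rerun the construction with any $\eta\in(0,a/2)$ (the only constraint needed for $v>a/(4I)$), obtain a rate-$R$ summary with $\Delta_{\KL}(S)\le D+\eta-\frac c2 t\log(1/t)$, and let $\eta\downarrow0$; your extra cap $\eta_0$ is harmless but unnecessary.
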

\begin{proof}
By Lemma~\ref{lem:relabeling_cost}(iv), $A(t)\le t(I+L)$, and the
condition $\log(1/t)\ge2(I+L)/c$ gives
\[
 A(t)\le t(I+L)\le\frac c2\,t\log\frac1t ,
\]
so this $t$ satisfies the requirement imposed on $t$ in the proof of
Theorem~\ref{thm:separation}. Run the construction of that proof with this
$t$ and with any $\eta\in(0,a/2)$; the restriction $\eta<a/2$ is all that
was used to obtain $v>a/(4I)$, and the final chain of inequalities becomes
\[
 \Delta_{\KL}(S)\le D+\eta+A(t)-c\,t\log\frac1t
 \le D+\eta-\frac c2\,t\log\frac1t .
\]
Since $S$ has rate exactly $R$, $D^\star(R)\le D+\eta-\frac c2t\log(1/t)$
for every such $\eta$, and letting $\eta\downarrow0$ gives the claim.
\end{proof}

\subsection{Equal initial slopes do not rule out a strict gap}
The separation theorem concerns every positive interior rate, whereas an
initial-slope comparison describes only the first-order behavior at zero.
In information-bottleneck theory, the best relevance-to-rate ratio is the
sharp constant in a mutual-information strong data-processing inequality;
it also determines the frontier's initial slope
\citep{asoodeh2020bottleneck}. For the Gaussian experiment this constant is
$\rho^2$, consistently with the classical Gaussian IB solution
\citep{chechik2005gaussianib}. Proposition~\ref{prop:gaussian_same_slope}
shows that deterministic threshold quantizers approach the same constant as
stochastic channels. Nevertheless, the deterministic time-sharing
frontier remains strictly above the stochastic frontier at every positive
rate. Thus a strict inequality between initial slopes is unnecessary for
our theorem: the Gaussian example has the same first-order information
efficiency in both classes while retaining a strict gap at each fixed
positive rate.

\begin{proposition}[Gaussian initial slopes]
\label{prop:gaussian_same_slope}
Let $(\theta,X)$ be standard jointly Gaussian with correlation
$\rho\in(0,1)$, with $X$ the observation. Define
\[
 \beta_{\mathrm{det}}=
 \sup_{g:\,0<H(g(X))<\infty}\frac{I(\theta;g(X))}{H(g(X))},
 \qquad
 s^\star=\sup_{S:\,0<I(X;S)<\infty}
                 \frac{I(\theta;S)}{I(X;S)}.
\]
Then $\beta_{\mathrm{det}}=s^\star=\rho^2$, the deterministic
supremum is not attained, and the right derivatives of
$D^{\mathrm{ts}}$ and $D^\star$ at zero both equal $-\rho^2$.
Nevertheless $D^\star(R)<D^{\mathrm{ts}}(R)$ for every $R>0$.
\end{proposition}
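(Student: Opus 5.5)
The plan is to prove the four claims in order: the stochastic constant $s^\star=\rho^2$, then the deterministic constant $\beta_{\mathrm{det}}=\rho^2$, then non-attainment, then the slopes. The strict gap at the end should follow from Theorem~\ref{thm:separation}.

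For $s^\star$, I will not invoke an external Gaussian inequality. Instead I will read it off Proposition~\ref{thm:gaussian_exact}. Realize $X=\theta'$-type data by taking $d=1$ and $a=\rho^2/(1-\rho^2)$, so that $I=\tfrac12\log(1+a)$; the experiment $(\theta,X)$ is a rescaled Gaussian location model with $n=1$. That proposition gives
\[
 I(\theta;S)\le I-D^\star(R)=\tfrac12\log\frac{1+a}{1+ae^{-2R}}.
\]
This right-hand side is concave in $R$, vanishes at $R=0$, and has derivative $a/(1+a)=\rho^2$ there. Hence $I(\theta;S)\le\rho^2 I(X;S)$ for every channel, i.e.\ $s^\star\le\rho^2$. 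Equality in the limit comes from $S=X+N(0,\sigma^2)$ with $\sigma\to\infty$, where both informations are $\tfrac12\log(1+\cdot)$ expansions with ratio tending to $\rho^2$. The same bound also gives $\beta_{\mathrm{det}}\le s^\star$.

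For the reverse inequality $\beta_{\mathrm{det}}\ge\rho^2$, use threshold quantizers $g_\tau=\mathbf 1\{X>\tau\}$ with $\tau\to\infty$.
\begin{itemize}
\item With $p=\bar\Phi(\tau)$, the entropy is $H(g_\tau)=p\log(1/p)+O(p)$, and $\log(1/p)=\tau^2/2+O(\log\tau)$.
\item The relevance is $I(\theta;g_\tau)=p\,\KL(\Pi_1\Vert\pi)+(1-p)\KL(\Pi_0\Vert\pi)$, and the second term is $O(p^2\tau^2)$.
\item Under $\Pi_1=P_{\theta\mid X>\tau}$, the variable $\theta$ is approximately $N(\rho\tau,1-\rho^2)$. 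Its divergence from $N(0,1)$ is therefore $\rho^2\tau^2/2+O(\log\tau)$. I will check this with explicit Mills-ratio bounds, using that $X\mid X>\tau$ concentrates at $\tau+O(1/\tau)$.
\end{itemize}
The ratio is then $\rho^2+o(1)$.

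Non-attainment is the step I expect to be hardest. I need $I(\theta;g(X))<\rho^2H(g(X))$ strictly for every finite-entropy nonconstant $g$. First try the conditional entropy-power route used for Proposition~\ref{thm:gaussian_exact} in Appendix~\ref{app:parametric}. Write $\theta=\rho X+\sqrt{1-\rho^2}W$, with $W$ independent of $(X,T)$ where $T=g(X)$. Conditional EPI gives
\[
 e^{2h(\theta\mid T)}\ge \rho^2e^{2h(X\mid T)}+(1-\rho^2)2\pi e,
\]
and hence $I(\theta;T)\le\tfrac12\log\frac{1}{\rho^2e^{-2I(X;T)}+1-\rho^2}$. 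The right side lies strictly below $\rho^2I(X;T)$ for $I(X;T)>0$, by strict concavity. This is the same curve as before, so it gives strictness for free.

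For the slopes, $D^\star(R)\ge I-\rho^2R$ holds, and the noisy channels show that the right derivative of $D^\star$ at zero equals $-\rho^2$. For $D^{\mathrm{ts}}$, Proposition~\ref{prop:time_sharing} and convexity give
\[
 D^{\mathrm{ts}}(R)\ge I-\beta_{\mathrm{det}}R.
\]
In the other direction, time-sharing the constant quantizer with $g_\tau$ gives $D^{\mathrm{ts}}(R)\le I-R\,I(\theta;g_\tau)/H(g_\tau)$ for small $R$. Together these make the right derivative of $D^{\mathrm{ts}}$ at zero equal to $-\rho^2$.

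Finally, the strict gap. Assumption~\ref{ass:common_support} holds because the Gaussian densities are positive, and $0<I<\infty$. Proposition~\ref{thm:gaussian_exact} gives $D^{\mathrm{ts}}\ge D^\star>0$ for all $R$, so $R_0=\infty$. Theorem~\ref{thm:separation} then yields $D^\star(R)<D^{\mathrm{ts}}(R)$ for every $R>0$.
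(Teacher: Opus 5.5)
Your proposal is correct and follows essentially the same route as the paper. Both arguments obtain $s^\star=\rho^2$ and non-attainment from the strict tangent inequality $I-D^\star(R)<\rho^2R$ for the Gaussian frontier. Your conditional-EPI rederivation reproduces exactly that curve, so the step you expected to be ``hardest'' is already finished by your first paragraph. Both also use threshold quantizers for $\beta_{\mathrm{det}}\ge\rho^2$, the same two-sided bounds for the slopes, and Theorem~\ref{thm:separation} with $R_0=\infty$ for the gap.

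The one place where the paper is lighter is the threshold relevance. Instead of Mills-ratio asymptotics for all of $\KL(\Pi_1\Vert\pi)$, it drops the other cell and applies the Donsker--Varadhan bound $\KL(P\Vert N(0,1))\ge m^2/2$. Taking $m=\mathbb E[\theta\mid X>\tau]=\rho\,\mathbb E[X\mid X>\tau]\ge\rho\tau$ gives the needed lower bound immediately. This avoids having to make your ``approximately $N(\rho\tau,1-\rho^2)$'' step rigorous.
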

\begin{proof}
We first identify the stochastic frontier and read off $s^\star$ from its
tangent at the origin; then we show that threshold quantizers approach the same
ratio, which gives $\beta_{\mathrm{det}}$; then we convert the two ratios
into initial slopes; and finally we verify the hypotheses of
Theorem~\ref{thm:separation}.

Write $X=\rho\theta+\sqrt{1-\rho^2}\,W$ with $W$ standard normal and
independent of $\theta$. Rescaling the observation to $X/\rho$ changes no
mutual information and puts the model in the form of
Proposition~\ref{thm:gaussian_exact} with $d=1$, $n=1$, $\tau^2=1$, and
$\sigma^2=(1-\rho^2)/\rho^2$, so that $a:=n\tau^2/\sigma^2=\rho^2/(1-\rho^2)$.
The theorem gives
\[
 D^\star(R)=\tfrac12\log(1+ae^{-2R}),\qquad
 I=D^\star(0)=\tfrac12\log(1+a)=\tfrac12\log\frac1{1-\rho^2}.
\]
Differentiating,
\[
 \frac{d}{dR}D^\star(R)=-\frac{ae^{-2R}}{1+ae^{-2R}},\qquad
 \frac{d^2}{dR^2}D^\star(R)=\frac{2ae^{-2R}}{(1+ae^{-2R})^2}>0,
\]
so $D^\star$ is strictly convex, positive at every finite $R$, and has
slope $-a/(1+a)=-\rho^2$ at zero. Strict convexity places the graph
above its tangent at the origin,
\[
 D^\star(R)\ \ge\ I-\rho^2R\qquad\text{with strict inequality for }R>0 .
\]
For any channel with $0<I(X;S)<\infty$, Lemma~\ref{lem:identity} gives
$I(\theta;S)=I-\Delta_{\KL}(S)\le I-D^\star(I(X;S))\le\rho^2I(X;S)$, so
$s^\star\le\rho^2$. The Gaussian channels in the achievability part of
Proposition~\ref{thm:gaussian_exact} attain $\Delta_{\KL}=D^\star(R)$ at
every rate $R>0$, and for them
$I(\theta;S)/I(X;S)=\{I-D^\star(R)\}/R\to-\frac{d}{dR}D^\star(0)=\rho^2$
as $R\downarrow0$. Hence $s^\star=\rho^2$.

We now show that threshold quantizers approach the same ratio. For $b>0$ let
$T_b=\mathbf 1\{X>b\}$, $p_b=\mathbb P(X>b)$, and $\mu_b=\mathbb E[X\mid X>b]$.
The rate of $T_b$ is $H(T_b)=h_2(p_b)$. For its relevance, keep only the
cell $\{X>b\}$ in $I(\theta;T_b)=\sum_cp_c\KL(\Pi_c\Vert\pi)$:
\[
 I(\theta;T_b)\ \ge\ p_b\,\KL\big(P_{\theta\mid X>b}\,\big\Vert\,N(0,1)\big).
\]
The conditional mean of $\theta$ in this cell is
$\mathbb E[\theta\mid X>b]=\mathbb E[\mathbb E[\theta\mid X]\mid X>b]=\rho\mu_b$.
A law $P$ on $\mathbb R$ with finite mean $m$ satisfies
$\KL(P\Vert N(0,1))\ge m^2/2$: the Donsker--Varadhan inequality
$\KL(P\Vert N(0,1))\ge\int g\,dP-\log\int e^g\,dN(0,1)$ with $g(x)=sx$ gives
$sm-s^2/2$, and the choice $s=m$ gives $m^2/2$. Therefore
\[
 I(\theta;T_b)\ \ge\ \frac{p_b\rho^2\mu_b^2}{2},\qquad H(T_b)=h_2(p_b).
\]
As $b\to\infty$, the Gaussian tail estimates give $p_b\sim\phi(b)/b$ and
$\mu_b=\phi(b)/p_b\sim b$, where $\phi$ is the standard normal density,
hence $\log(1/p_b)\sim b^2/2$; also
$h_2(p_b)=p_b\log(1/p_b)+(1-p_b)\log\frac1{1-p_b}$, whose second term is
of order $p_b$ and negligible against the first, so
$h_2(p_b)\sim p_b\log(1/p_b)\sim p_bb^2/2$. Consequently
\[
 \liminf_{b\to\infty}\frac{I(\theta;T_b)}{H(T_b)}
 \ \ge\ \lim_{b\to\infty}\frac{p_b\rho^2b^2/2}{p_bb^2/2}=\rho^2 .
\]
A deterministic quantizer is a channel with $I(X;g(X))=H(g(X))$, so
$\beta_{\mathrm{det}}\le s^\star=\rho^2$, and the threshold quantizers show
$\beta_{\mathrm{det}}\ge\rho^2$. Thus $\beta_{\mathrm{det}}=s^\star=\rho^2$.
The deterministic supremum is not attained: a quantizer with
$0<H(g(X))=R<\infty$ has
$I(\theta;g(X))=I-\Delta_{\KL}(g(X))\le I-D^\star(R)<\rho^2R$ by the
strict tangent inequality at $R>0$.

Next we convert the ratios into initial slopes. For the time-sharing
frontier, every deterministic component satisfies
$I(\theta;g_j(X))\le\beta_{\mathrm{det}}H(g_j(X))$, so a mixture with
average rate at most $R$ has distortion
$I-\sum_j\lambda_jI(\theta;g_j(X))\ge I-\beta_{\mathrm{det}}R$; hence
$D^{\mathrm{ts}}(R)\ge I-\beta_{\mathrm{det}}R$ and the right derivative
at zero, which exists by convexity, is at least $-\beta_{\mathrm{det}}$.
Conversely, fix a quantizer $g$ with $0<H(g(X))=:r<\infty$ and relevance
$\iota:=I(\theta;g(X))$. Time-sharing it with the constant quantizer at weight
$R/r$ for $R\le r$ gives rate $R$ and distortion $I-(\iota/r)R$, so
$D^{\mathrm{ts}}(R)\le I-(\iota/r)R$ near zero and the right derivative is
at most $-\iota/r$. Taking the supremum over $g$ gives at most
$-\beta_{\mathrm{det}}$. The right derivative of $D^{\mathrm{ts}}$ at
zero is therefore $-\beta_{\mathrm{det}}=-\rho^2$. For the unrestricted
frontier the same two-sided argument with $s^\star$ in place of
$\beta_{\mathrm{det}}$ applies, and agrees with the direct computation
$\frac{d}{dR}D^\star(0)=-\rho^2$ above.

Finally we check the hypotheses of Theorem~\ref{thm:separation}. The
joint law of $(\theta,X)$ is a nondegenerate bivariate Gaussian and the
product law $\pi\otimes P_X$ is a standard bivariate Gaussian; both have
strictly positive densities on $\mathbb R^2$, so they are mutually
absolutely continuous, which is condition (i) of
Proposition~\ref{prop:support_equivalence}. The Bayes information
$I=\tfrac12\log(1+a)$ lies in $(0,\infty)$, and the lautum information is
also finite: for the two zero-mean Gaussians with covariances
$\Sigma_1=I_2$ and $\Sigma_0=\begin{psmallmatrix}1&\rho\\\rho&1\end{psmallmatrix}$,
\begin{align*}
 L=\KL(N(0,\Sigma_1)\Vert N(0,\Sigma_0))
 &=\tfrac12\big\{\operatorname{tr}(\Sigma_0^{-1}\Sigma_1)-2-\log\det(\Sigma_0^{-1}\Sigma_1)\big\}\\
 &=\frac{\rho^2}{1-\rho^2}-\tfrac12\log\frac1{1-\rho^2}
 =a-\tfrac12\log(1+a),
\end{align*}
using $\operatorname{tr}\Sigma_0^{-1}=2/(1-\rho^2)$ and
$\det\Sigma_0^{-1}=1/(1-\rho^2)$. Since $D^{\mathrm{ts}}(R)\ge D^\star(R)>0$
at every finite $R$, the threshold is $R_0=\infty$, and
Theorem~\ref{thm:separation} gives $D^\star(R)<D^{\mathrm{ts}}(R)$ for
every $R>0$.
\end{proof}

\section{Finite observation spaces: strict improvement and attainment}
\label{app:finite-separation}
Finite observation spaces reveal both additional conclusions and a concrete
mechanism behind the general theorem. The finite set of deterministic
partitions makes its time-sharing envelope attainable and identifies the
zero-distortion threshold exactly. A perturbation that confuses just two
cells then expresses the distortion cost per saved nat through the
directed singular mass of their posteriors. This gives a criterion that
can prove strict improvement even when common support fails, as in the
unequal-support example of Appendix~\ref{app:support-examples}.

Finite observations also permit an existence result beyond the general
separation statement: the stochastic optimum is attained with at most
$|\mathcal Y|+1$ outputs, including when the parameter space is infinite.
The final construction connects these results to the count benchmark by
specifying a feasible one-bit channel and the exhaustive deterministic
comparison. Thus the appendix supplies a support-sensitive extension, an
attainment result, and an explicit example, while also giving an elementary
finite-cell explanation of the separation mechanism.

In this section, we assume that $Y$ has finite support with $m$ positive-probability atoms.
The parameter space may be any standard Borel space. Every deterministic
quantizer is a partition of the observation support, so there are finitely many
distinct deterministic rate--distortion pairs and their convex hull is compact.

\begin{lemma}[Finite-experiment envelope]
\label{lem:finite_envelope}
Every $D^{\mathrm{ts}}(R)$ is attained by a finite mixture. Its
zero-distortion threshold is $R_0=H(\Pi_Y)$, which is positive when $I>0$.
At each $0<R<R_0$, every optimal mixture has positive distortion and
uses rate exactly $R$.
\end{lemma}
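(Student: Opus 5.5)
The plan has three parts: attainment from finiteness, identification of $R_0$ by a zero-distortion argument, and exact rate from convexity and monotonicity of $D^{\mathrm{ts}}$.

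\emph{Attainment.} Since $Y$ has $m$ atoms, a deterministic quantizer is determined up to relabeling by a partition of the support. Relabeling changes neither $H(g(Y))$ nor $\Delta_{\KL}(g(Y))$, so $\mathcal A_{\mathrm{det}}$ is a finite set of points. Its convex hull is therefore a compact polytope. Each point of it is a finite convex combination, and by Carath\'eodory at most three points suffice in $\mathbb R^2$. The feasible set $\{(r,d)\in\operatorname{conv}(\mathcal A_{\mathrm{det}}):r\le R\}$ is compact and nonempty, since it contains the constant point $(0,I)$. The continuous function $d$ attains its minimum on it, and the minimizer is realized by a finite revealed mixture via Proposition~\ref{prop:time_sharing}.

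\emph{The threshold $R_0=H(\Pi_Y)$.} Let $Z=\Pi_Y$, which takes finitely many values. For the upper bound, $Z$ itself is a deterministic quantizer, being a function of $Y$. It has rate $H(Z)$ by Lemma~\ref{lem:deterministic_atomic} and distortion $0$, because $Z$ is sufficient by Proposition~\ref{prop:reduction}. Hence $D^{\mathrm{ts}}(H(Z))=0$ and $R_0\le H(Z)$. For the lower bound, I use attainment. If $D^{\mathrm{ts}}(R)=0$, an attaining mixture has components, each with positive weight, all of zero distortion. Lemma~\ref{lem:zero_distortion_cost} then gives each component rate at least $H(Z)$, so the average rate is at least $H(Z)$ and $R\ge H(Z)$. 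Positivity of $R_0$ when $I>0$ is already in Proposition~\ref{prop:time_sharing}, since $R_0\ge I$; equivalently, $Z$ is nonconstant.

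\emph{Interior optimal mixtures.} Fix $0<R<R_0$ and an optimal mixture. Its distortion equals $D^{\mathrm{ts}}(R)$, which is positive by Proposition~\ref{prop:time_sharing}. Suppose its average rate $r$ were strictly below $R$. Then $D^{\mathrm{ts}}(r)\le D^{\mathrm{ts}}(R)$ by monotonicity, and the reverse inequality also holds because this mixture is feasible at rate $r$. So $D^{\mathrm{ts}}$ would be constant on $[r,R]$.

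The hard step is turning this flat piece into a contradiction, and I would use convexity. $D^{\mathrm{ts}}$ is convex and nonincreasing, and it is eventually $0$ at $R_0<\infty$ while positive at $R$. A flat segment at a positive level, followed by a strict decrease to zero, violates convexity: the slope on $[r,R]$ would be $0$, but some later slope is negative. Concretely, take the point $R_0$. Convexity gives
\[
D^{\mathrm{ts}}(R)\le \frac{R_0-R}{R_0-r}D^{\mathrm{ts}}(r)+\frac{R-r}{R_0-r}\cdot 0<D^{\mathrm{ts}}(r),
\]
where the strict inequality uses $D^{\mathrm{ts}}(r)>0$. This contradicts the constancy on $[r,R]$, so $r=R$. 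Finiteness of $R_0$ is guaranteed by the identification $R_0=H(\Pi_Y)<\infty$ in the previous part, which is the reason that part comes first.
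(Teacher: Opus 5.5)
Your proof is correct and follows the paper's argument essentially step for step: attainment from the finite set of partitions and compactness, $R_0=H(\Pi_Y)$ via the posterior-label quantizer and Lemma~\ref{lem:zero_distortion_cost}, and exact rate via your convexity chord from $(r,D^{\mathrm{ts}}(r))$ to $(R_0,0)$, which is precisely the paper's explicit time-sharing with the posterior-label quantizer. One small slip is that monotonicity gives $D^{\mathrm{ts}}(r)\ge D^{\mathrm{ts}}(R)$, while feasibility of the mixture at rate $r$ gives $D^{\mathrm{ts}}(r)\le D^{\mathrm{ts}}(R)$, so your two justifications are swapped, although the resulting equality is right.
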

\begin{proof}
Two deterministic quantizers that induce the same partition of the $m$ atoms
have the same entropy and the same cell posteriors, hence the same
rate--distortion pair. So $\mathcal A_{\mathrm{det}}$ is a finite set and
$\operatorname{conv}(\mathcal A_{\mathrm{det}})$ is a compact convex
polygon in $\mathbb R^2$.

For attainment, fix $R\ge0$. The feasible set
$\{(r,d)\in\operatorname{conv}(\mathcal A_{\mathrm{det}}):r\le R\}$ is a
closed subset of a compact set, and it is nonempty because it contains
the constant quantizer's point $(0,I)$. The continuous coordinate
$(r,d)\mapsto d$ therefore attains its minimum $D^{\mathrm{ts}}(R)$ at
some feasible point, and every point of the convex hull is a finite
convex combination of points of $\mathcal A_{\mathrm{det}}$, which by
Proposition~\ref{prop:time_sharing} is realized by a finite revealed
mixture.

For the threshold, consider the posterior label quantizer $g(y):=\Pi_y$,
which takes at most $m$ values. Since $g(Y)$ determines $\Pi_Y$, the
barycenter formula gives $\Pi_{g(Y)}=\mathbb E[\Pi_Y\mid g(Y)]=\Pi_Y$, so
$\Delta_{\KL}(g(Y))=0$, and its rate is $H(g(Y))=H(\Pi_Y)$. Thus
$D^{\mathrm{ts}}(H(\Pi_Y))=0$ and $R_0\le H(\Pi_Y)$. Conversely, suppose
$D^{\mathrm{ts}}(R)=0$ for some $R$, and let $(\lambda_j,T_j)$ be an
optimal mixture at rate $R$, which exists by the first part. Since
$\sum_j\lambda_j\Delta_{\KL}(T_j)=0$ with all $\lambda_j>0$, every
component has $\Delta_{\KL}(T_j)=0$, and
Lemma~\ref{lem:zero_distortion_cost} gives $H(T_j)\ge H(\Pi_Y)$ for each
$j$. The average rate is then at least $H(\Pi_Y)$, so feasibility forces
$R\ge H(\Pi_Y)$. Hence $R_0\ge H(\Pi_Y)$, and the two bounds give
$R_0=H(\Pi_Y)$.

The threshold is positive when $I>0$: $H(\Pi_Y)=0$ means that $\Pi_Y$
is almost surely constant, and since $\mathbb E\,\Pi_Y=\pi$ this constant
is $\pi$, which gives $I=\mathbb E\,\KL(\Pi_Y\Vert\pi)=0$.

Finally fix $0<R<R_0$. Every optimal mixture has distortion
$D^{\mathrm{ts}}(R)>0$ by Proposition~\ref{prop:time_sharing}. Suppose an
optimal mixture had average rate $r<R$. Mix it, through a fresh revealed
selector, with the posterior label quantizer, which has rate $R_0$ and
distortion zero, giving the label quantizer weight $u\in(0,1)$. The new
average rate is $(1-u)r+uR_0$, which is at most $R$ when
$u\le(R-r)/(R_0-r)$, a positive number because $r<R<R_0$. The new
distortion is $(1-u)D^{\mathrm{ts}}(R)<D^{\mathrm{ts}}(R)$. This feasible
mixture is strictly better, contradicting optimality; so every optimal
mixture uses rate exactly $R$.
\end{proof}

\begin{lemma}[Rare confusion between two cells]
\label{lem:rare_confusion}
Choose two cells $a\ne b$ of a deterministic quantizer $T$, with masses
$p_a,p_b>0$ and posteriors $\Pi_a,\Pi_b$. Change label $a$ to $b$ with
probability $t\in(0,1)$, using a hidden independent coin, and leave all
other labels unchanged. Put
$w_t=p_at/(p_b+p_at)$ and $M_t=w_t\Pi_a+(1-w_t)\Pi_b$.
The resulting summary $V_t$ has rate saving and added distortion
\begin{align}
 B_t&=H(T)-I(Y;V_t)=(p_b+p_at)h_2(w_t)>0,\\
 L_t&=\Delta_{\KL}(V_t)-\Delta_{\KL}(T)
     =p_at\KL(\Pi_a\Vert M_t)+p_b\KL(\Pi_b\Vert M_t).
\end{align}
Moreover $B_t\sim p_at\log(1/t)$ and
$L_t/B_t\to\alpha(\Pi_a\mid\Pi_b)$. If $\Pi_a\ll\Pi_b$, then $L_t=o(B_t)$;
if $\KL(\Pi_a\Vert\Pi_b)<\infty$, then
$L_t\le p_at\KL(\Pi_a\Vert\Pi_b)$.
\end{lemma}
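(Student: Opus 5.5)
The plan is to compute both quantities directly from the joint law of $(T,V_t)$ and $(\theta,V_t)$, then read off the asymptotics from Lemma~\ref{lem:rare_mixture}.

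\textbf{Rate saving.} As in Lemma~\ref{lem:random_relabeling}, the perturbation uses a coin independent of the experiment, so $\theta\to Y\to T\to V_t$ and $I(Y;V_t)=I(T;V_t)$. Since $T$ has finite entropy, $B_t=H(T\mid V_t)$. Given $V_t=c$ with $c\notin\{a,b\}$, or given $V_t=a$, the label $T$ is determined. Given $V_t=b$, which has probability $p_b+p_at$, $T$ equals $a$ with conditional probability $w_t$ and $b$ otherwise. Hence $B_t=(p_b+p_at)h_2(w_t)$, which is positive because $0<w_t<1$.

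For the asymptotics, $w_t\sim p_at/p_b$. We have $h_2(w)=w\log(1/w)+O(w)$ and $\log(1/w_t)=\log(1/t)+O(1)$. Therefore $B_t\sim p_b w_t\log(1/t)\sim p_at\log(1/t)$.

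\textbf{Added distortion.} By Lemma~\ref{lem:identity}, $L_t=I(\theta;T)-I(\theta;V_t)$. The cells $c\notin\{a,b\}$ are unchanged and cancel. Cell $a$ now has mass $p_a(1-t)$ and posterior $\Pi_a$. The merged cell $b$ has mass $p_b+p_at$ and posterior $M_t$. Writing each relevance as $\sum_c p_c\KL(\Pi_c\Vert\pi)$, we get
\[
L_t=p_at\KL(\Pi_a\Vert\pi)+p_b\KL(\Pi_b\Vert\pi)-(p_b+p_at)\KL(M_t\Vert\pi).
\]
Next apply Lemma~\ref{lem:countable_compensation} to the two-component mixture $M_t$ with weights $w_t,1-w_t$ and reference $\pi$. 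Its hypothesis holds because $\KL(M_t\Vert\pi)\le w_t\KL(\Pi_a\Vert\pi)+(1-w_t)\KL(\Pi_b\Vert\pi)<\infty$, using $f_c\le1/p_c$. Multiplying the resulting identity by $p_b+p_at$ turns the bracket into exactly $p_at\KL(\Pi_a\Vert M_t)+p_b\KL(\Pi_b\Vert M_t)$.

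\textbf{Ratio limit.} This is the main step. Rewrite the distortion as
\[
L_t=(p_b+p_at)\operatorname{JS}_{w_t}(\Pi_a,\Pi_b).
\]
Lemma~\ref{lem:rare_mixture}(ii) gives $\operatorname{JS}_{w}(\Pi_a,\Pi_b)\sim\alpha(\Pi_a\mid\Pi_b)\,w\log(1/w)$ as $w\downarrow0$. It also covers $\alpha=0$ as $o(w\log(1/w))$. The expansion $h_2(w)\sim w\log(1/w)$ then gives
\[
\frac{L_t}{B_t}=\frac{\operatorname{JS}_{w_t}}{h_2(w_t)}\to\alpha(\Pi_a\mid\Pi_b).
\]
Here we use only that $w_t\downarrow0$ as $t\downarrow0$, so the rare-mixture limit applies along the reparametrization $w=w_t$. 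The case $\Pi_a\ll\Pi_b$ is the case $\alpha=0$, which gives $L_t=o(B_t)$.

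\textbf{Linear bound.} If $\KL(\Pi_a\Vert\Pi_b)<\infty$, Lemma~\ref{lem:rare_mixture}(iii) gives $\operatorname{JS}_{w_t}\le w_t\KL(\Pi_a\Vert\Pi_b)$. Multiplying by $p_b+p_at$ and using $(p_b+p_at)w_t=p_at$ yields $L_t\le p_at\KL(\Pi_a\Vert\Pi_b)$.
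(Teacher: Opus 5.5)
Your proof is correct and follows the paper's argument in the rate saving, the asymptotics $B_t\sim p_at\log(1/t)$, the ratio limit via Lemma~\ref{lem:rare_mixture}(ii), and the linear bound via Lemma~\ref{lem:rare_mixture}(iii). The only difference is in the exact distortion identity, and the two derivations are equivalent: you expand both relevances cell by cell and apply Lemma~\ref{lem:countable_compensation} to $M_t$ with reference $\pi$ (as the paper itself does in Lemma~\ref{lem:relabeling_cost}(i)), whereas the paper writes $L_t=I(\theta;T\mid V_t)$ by the chain rule and identifies the only nonzero conditional term, given $V_t=b$, as $\operatorname{JS}_{w_t}(\Pi_a,\Pi_b)$.
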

\begin{proof}
The channel from $T$ to $V_t$ uses only $T$ and an independent coin, so
$\theta\to Y\to T\to V_t$ is a Markov chain, and as in
Lemma~\ref{lem:random_relabeling} we have $I(Y;V_t)=I(T;V_t)$. Both the
rate saving and the added distortion are conditional quantities given
$V_t$, and only one output value of $V_t$ leaves anything uncertain.

For the rate saving, $B_t=H(T)-I(T;V_t)=H(T\mid V_t)$. If $V_t=c$ with
$c\ne b$, then $T=c$ with certainty, because a label is changed only into
$b$. If $V_t=b$, then either $T=b$, or $T=a$ and the coin came up. The
output $b$ has probability $p_b+p_at$, and conditionally on it,
\[
 \mathbb P(T=a\mid V_t=b)=\frac{p_at}{p_b+p_at}=w_t,\qquad
 \mathbb P(T=b\mid V_t=b)=1-w_t .
\]
Hence $H(T\mid V_t)=(p_b+p_at)h_2(w_t)$, which is positive because
$w_t\in(0,1)$.

For the added distortion, Lemma~\ref{lem:identity} and the chain rule
give, as in Lemma~\ref{lem:truncate_quantizers},
\[
 L_t=I(\theta;T)-I(\theta;V_t)=I(\theta;T\mid V_t)
 =\sum_c\mathbb P(V_t=c)\,I(\theta;T\mid V_t=c).
\]
For $c\ne b$ the label $T$ is determined given $V_t=c$, and the
corresponding term vanishes. Given $V_t=b$, the label is $a$ with
probability $w_t$ and $b$ with probability $1-w_t$, and the conditional
law of $\theta$ given $(T,V_t)$ is $\Pi_T$ because the coin is
independent of $\theta$ given $T$. So the conditional law of $\theta$
given $V_t=b$ is the mixture $M_t$, and
\[
 I(\theta;T\mid V_t=b)=w_t\KL(\Pi_a\Vert M_t)+(1-w_t)\KL(\Pi_b\Vert M_t)
 =\operatorname{JS}_{w_t}(\Pi_a,\Pi_b).
\]
Multiplying by $\mathbb P(V_t=b)=p_b+p_at$ and using
$(p_b+p_at)w_t=p_at$ and $(p_b+p_at)(1-w_t)=p_b$ gives the stated formula
for $L_t$.

For the asymptotics as $t\downarrow0$, we have $w_t\sim(p_a/p_b)t$, so
$\log(1/w_t)=\log(1/t)+O(1)\sim\log(1/t)$, and $h_2(w)\sim w\log(1/w)$
as $w\downarrow0$. Therefore
\[
 B_t=(p_b+p_at)h_2(w_t)\sim p_b\cdot\frac{p_a}{p_b}t\log\frac1t=p_at\log\frac1t .
\]
For $L_t=(p_b+p_at)\operatorname{JS}_{w_t}(\Pi_a,\Pi_b)$, the second limit
in Lemma~\ref{lem:rare_mixture} gives
$\operatorname{JS}_{w_t}(\Pi_a,\Pi_b)\sim\alpha(\Pi_a\mid\Pi_b)\,w_t\log(1/w_t)$
when the singular mass is positive, and
$\operatorname{JS}_{w_t}(\Pi_a,\Pi_b)=o(w_t\log(1/w_t))$ when it is zero.
In either case
\[
 \frac{L_t}{B_t}=\frac{\operatorname{JS}_{w_t}(\Pi_a,\Pi_b)}{h_2(w_t)}
 =\frac{\operatorname{JS}_{w_t}(\Pi_a,\Pi_b)}{w_t\log(1/w_t)}\cdot
   \frac{w_t\log(1/w_t)}{h_2(w_t)}
 \longrightarrow\alpha(\Pi_a\mid\Pi_b)\cdot1 .
\]
If $\Pi_a\ll\Pi_b$ the limit is zero, so $L_t=o(B_t)$. If
$\KL(\Pi_a\Vert\Pi_b)<\infty$, the last inequality of
Lemma~\ref{lem:rare_mixture} gives
$\operatorname{JS}_{w_t}(\Pi_a,\Pi_b)\le w_t\KL(\Pi_a\Vert\Pi_b)$, and
multiplying by $p_b+p_at$ gives $L_t\le p_at\KL(\Pi_a\Vert\Pi_b)$.
\end{proof}

\begin{proposition}[A directional support criterion]
\label{prop:finite_refined_criterion}
Fix $0<R<R_0$ and $D=D^{\mathrm{ts}}(R)$. Suppose an optimal
deterministic mixture contains a positive-weight component having two
positive-mass cells with
\[
 \alpha(\Pi_a\mid\Pi_b)<\frac{D}{R_0-R}.
\]
Then a finite stochastic channel of rate exactly $R$ has distortion
strictly below $D$. The comparison threshold satisfies
$D/(R_0-R)\le I/R_0\le1$.
\end{proposition}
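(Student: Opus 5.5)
The plan is to perturb one optimal deterministic mixture by the two-cell confusion of Lemma~\ref{lem:rare_confusion}, and then to spend the saved rate on the zero-distortion posterior-label quantizer. By Lemma~\ref{lem:finite_envelope}, $D=D^{\mathrm{ts}}(R)$ is attained by a finite revealed mixture $(\lambda_j,T_j)$ with $D>0$ and average rate exactly $R$. Let $T_{j_0}$ be the component with weight $\lambda:=\lambda_{j_0}>0$ that contains cells $a,b$ satisfying the hypothesis. Replace it by the channel $V_t$ of Lemma~\ref{lem:rare_confusion}, keeping the selector visible. By \eqref{eq:ts_affine}, the new mixture has rate $R-\lambda B_t$ and distortion $D+\lambda L_t$. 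Lemma~\ref{lem:rare_confusion} gives $B_t>0$ and $L_t/B_t\to\alpha(\Pi_a\mid\Pi_b)$ as $t\downarrow0$.

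Next, mix this perturbed summary through a fresh revealed selector with the posterior-label quantizer $g(y)=\Pi_y$. This quantizer has rate $R_0=H(\Pi_Y)$ and distortion $0$, as shown in the proof of Lemma~\ref{lem:finite_envelope}. Put weight $u$ on it. Choosing
\[
 u=\frac{\lambda B_t}{R_0-R+\lambda B_t}\in(0,1)
\]
makes the total rate $(1-u)(R-\lambda B_t)+uR_0$ equal to $R$ exactly. The resulting distortion is $(1-u)(D+\lambda L_t)$. This is strictly below $D$ exactly when $\lambda L_t(1-u)<uD$. Since $u/(1-u)=\lambda B_t/(R_0-R)$, the condition is equivalent to
\[
 \frac{L_t}{B_t}<\frac{D}{R_0-R}.
\]
Because the limit of the left side is $\alpha(\Pi_a\mid\Pi_b)$, which is strictly below the right side, the inequality holds for all sufficiently small $t$. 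Every ingredient takes finitely many values, so the resulting channel is finite.

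For the threshold bound, I would use the facts in Proposition~\ref{prop:time_sharing} and Lemma~\ref{lem:finite_envelope}: $D^{\mathrm{ts}}$ is convex, $D^{\mathrm{ts}}(0)=I$, and $D^{\mathrm{ts}}(R_0)=0$. Convexity then gives $D\le I(1-R/R_0)$, so $D/(R_0-R)\le I/R_0$. Finally, $R_0\ge I$ from Proposition~\ref{prop:time_sharing} yields $I/R_0\le1$.

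The only delicate point is the asymptotic comparison $L_t/B_t\to\alpha(\Pi_a\mid\Pi_b)$. It is already supplied by Lemma~\ref{lem:rare_confusion}, so the remaining work is the exact-rate bookkeeping above. One should also check that $R_0-R>0$ and $B_t>0$, so that $u\in(0,1)$ is well defined.
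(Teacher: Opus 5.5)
Your proposal is correct and follows the paper's proof essentially step for step. You use the same rare two-cell confusion on a positive-weight component of the optimal mixture, the same exact-rate restoration by time-sharing with the posterior-label quantizer at weight $u=\lambda B_t/(R_0-R+\lambda B_t)$, and the same chord argument giving $D/(R_0-R)\le I/R_0$. The only cosmetic difference is that you get $I/R_0\le 1$ from $R_0\ge I$ in Proposition~\ref{prop:time_sharing}, whereas the paper uses sufficiency of the posterior label, $I=I(\theta;\Pi_Y)\le H(\Pi_Y)=R_0$; both are valid.
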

\begin{proof}
The construction perturbs one component of the optimal mixture with the
two-cell confusion of Lemma~\ref{lem:rare_confusion}, and then restores
the rate by mixing in the exact posterior label. The perturbation trades
rate for distortion at the ratio $\alpha(\Pi_a\mid\Pi_b)$, while the label
quantizer buys distortion back at the ratio $D/(R_0-R)$; the hypothesis says the
first ratio is the cheaper one.

Let $(\lambda_j,T_j)$ be an optimal mixture with the stated component
$T=T_{j_0}$ of weight $v:=\lambda_{j_0}>0$. By
Lemma~\ref{lem:finite_envelope} it has rate exactly $R$ and distortion
$D>0$. Replace $T_{j_0}$ by the channel $V_t$ of
Lemma~\ref{lem:rare_confusion}, keeping the visible selector. By the
affine identities~\eqref{eq:ts_affine}, the rate and distortion become
\[
 R-\delta r,\qquad D+\delta d,\qquad
 \delta r:=vB_t>0,\quad \delta d:=vL_t\ge0,
\]
and Lemma~\ref{lem:rare_confusion} gives
$\delta d/\delta r=L_t/B_t\to\alpha(\Pi_a\mid\Pi_b)$ as $t\downarrow0$.

Now use a fresh revealed selector to transmit the exact posterior label
quantizer, of rate $R_0=H(\Pi_Y)$ and distortion zero, with probability
\[
 u:=\frac{\delta r}{R_0-R+\delta r},
\]
and the perturbed mixture otherwise. The rate returns to
\[
 (1-u)(R-\delta r)+uR_0
 =R-\delta r+u\,(R_0-R+\delta r)=R,
\]
and the distortion becomes
\[
 (1-u)(D+\delta d)=\frac{R_0-R}{R_0-R+\delta r}\,(D+\delta d).
\]
This is strictly less than $D$ exactly when
$(R_0-R)(D+\delta d)<D(R_0-R+\delta r)$, that is, when
\[
 \frac{\delta d}{\delta r}<\frac{D}{R_0-R}.
\]
Since $\delta d/\delta r\to\alpha(\Pi_a\mid\Pi_b)<D/(R_0-R)$, this holds
for all sufficiently small $t$. The resulting channel has finitely many
outputs and rate exactly $R$.

For the threshold, $D^{\mathrm{ts}}$ is convex with
$D^{\mathrm{ts}}(0)=I$ and $D^{\mathrm{ts}}(R_0)=0$, so its graph lies
below the chord between these points and
$D\le I(1-R/R_0)$, which rearranges to $D/(R_0-R)\le I/R_0$. Finally, the
posterior label is a sufficient statistic, so
$I=I(\theta;\Pi_Y)\le H(\Pi_Y)=R_0$, giving $I/R_0\le1$.
\end{proof}
\begin{corollary}[Finite observations]
\label{cor:finite_separation}
If $Y$ has finite support with positive probabilities, $I(\theta;Y)>0$, and the posteriors $\Pi_y$ are pairwise equivalent, then $R_0=H(\Pi_Y)$ and \eqref{eq:separation} holds for every $0<R<H(\Pi_Y)$.
\end{corollary}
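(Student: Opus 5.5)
Two routes are available. The first reduces the corollary to Theorem~\ref{thm:separation} plus Lemma~\ref{lem:finite_envelope}. The second, which avoids the general machinery, goes through Proposition~\ref{prop:finite_refined_criterion}. I will present the first as the main argument and keep the second as a cross-check.

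\textbf{Main route.} The threshold identity $R_0=H(\Pi_Y)$ is already part of Lemma~\ref{lem:finite_envelope}. That lemma needs only finite support and $I>0$, and it also gives $R_0>0$. So what remains is to check the hypotheses of Theorem~\ref{thm:separation}; strict separation on $(0,R_0)$ then follows at once.

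\emph{Finiteness of $I$.} Since $I=I(\theta;Y)\le H(Y)\le\log m$, we have $I<\infty$, and $I>0$ is assumed.

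\emph{Common support.} I will verify condition (ii) of Proposition~\ref{prop:support_equivalence}, namely $\Pi_y\sim\pi$ for $P_Y$-a.e.\ $y$. With finitely many atoms of positive mass, $\pi=\sum_{y'}P_Y(y')\Pi_{y'}$ is a finite mixture with positive weights.
\begin{itemize}
\item $\Pi_y\ll\pi$ holds because $P_Y(y)\Pi_y\le\pi$.
\item For $\pi\ll\Pi_y$: if $\Pi_y(A)=0$, then pairwise equivalence gives $\Pi_{y'}(A)=0$ for every atom $y'$, hence $\pi(A)=0$.
\end{itemize}
Thus condition (ii) holds at every atom. Proposition~\ref{prop:support_equivalence} then yields Assumption~\ref{ass:common_support}.

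\emph{Conclusion.} Theorem~\ref{thm:separation} applies and gives \eqref{eq:separation} for every $0<R<R_0=H(\Pi_Y)$.

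\textbf{Cross-check via the directional criterion.} For fixed $R\in(0,R_0)$, Lemma~\ref{lem:finite_envelope} supplies an optimal mixture of rate exactly $R$ with $D>0$. Some component of this mixture must have at least two cells of positive mass. Otherwise every component is constant, so the rate is $0<R$, a contradiction. For any two such cells $a,b$, each cell posterior is a positive-weight finite mixture of atom posteriors. Pairwise equivalence of the atom posteriors therefore gives $\Pi_a\sim\Pi_b$, so $\alpha(\Pi_a\mid\Pi_b)=0<D/(R_0-R)$. Proposition~\ref{prop:finite_refined_criterion} then gives a strictly better finite stochastic channel at rate exactly $R$.

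\textbf{Main obstacle.} There is essentially none. The only care needed is the measure-theoretic step that a finite positive mixture of mutually equivalent measures is equivalent to each component. This is used twice: once to get $\Pi_y\sim\pi$, and once to get $\Pi_a\sim\Pi_b$.
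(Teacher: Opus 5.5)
Your proposal is correct and follows the paper's proof essentially line for line. The paper also checks $\Pi_y\sim\pi$ using the positive finite mixture $\pi=\sum_y\mathbb P(Y=y)\Pi_y$, then invokes Proposition~\ref{prop:support_equivalence}(ii), the bound $0<I\le H(Y)<\infty$, and Lemma~\ref{lem:finite_envelope} for $R_0=H(\Pi_Y)$ before applying Theorem~\ref{thm:separation}. It offers the same alternative argument via Proposition~\ref{prop:finite_refined_criterion}.
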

\begin{proof}
We verify the hypotheses of Theorem~\ref{thm:separation}, using the
finite structure to identify $R_0$.

For Corollary~\ref{cor:finite_separation}, the prior is the finite mixture
$\pi=\sum_y\mathbb P(Y=y)\Pi_y$ with positive weights. Each $\Pi_y\ll\pi$
because $\mathbb P(Y=y)\Pi_y\le\pi$. Conversely, if $\Pi_y(A)=0$ for some
$y$, pairwise equivalence gives $\Pi_{y'}(A)=0$ for every $y'$, hence
$\pi(A)=0$. Thus $\Pi_y\sim\pi$ for every $y$, which is condition (ii) of
Proposition~\ref{prop:support_equivalence}, so
Assumption~\ref{ass:common_support} holds. The Bayes information satisfies
$0<I\le H(Y)<\infty$, the spaces are standard Borel, and
Theorem~\ref{thm:separation} applies on $(0,R_0)$, where
$R_0=H(\Pi_Y)$ by Lemma~\ref{lem:finite_envelope}. Alternatively, one can
use Proposition~\ref{prop:finite_refined_criterion}: every cell posterior
of every deterministic quantizer is a positive mixture of observation
posteriors, hence equivalent to $\pi$, so any two cells have
$\alpha(\Pi_a\mid\Pi_b)=0<D/(R_0-R)$; and an optimal mixture at
$0<R<R_0$ has rate exactly $R>0$, so it contains a nonconstant component,
which has at least two cells.
\end{proof}

\begin{corollary}[Beta--Binomial experiments]
\label{cor:beta_binomial}
For $\theta\sim\operatorname{Beta}(\alpha,\beta)$ and $C\mid\theta\sim\operatorname{Binomial}(n,\theta)$ with $\alpha,\beta>0$ and $n\ge1$, separation holds throughout $0<R<H(C)$. It also holds when the observation is the full ordered Bernoulli sample.
\end{corollary}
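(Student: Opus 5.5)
The plan is to derive Corollary~\ref{cor:beta_binomial} from Corollary~\ref{cor:finite_separation} for the count, and then transfer the result to the ordered sample. Two facts come first. The observation $C$ takes the $n+1$ values $0,\ldots,n$. Each has positive probability, because the beta-binomial mass $\binom nc B(\alpha+c,\beta+n-c)/B(\alpha,\beta)$ is positive when $\alpha,\beta>0$.

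\textbf{Step 1: the count.} The posterior given $C=c$ is $\operatorname{Beta}(\alpha+c,\beta+n-c)$. It has a density proportional to $\theta^{\alpha+c-1}(1-\theta)^{\beta+n-c-1}$, which is strictly positive on $(0,1)$. So all posteriors are mutually equivalent to Lebesgue measure on $(0,1)$, hence pairwise equivalent. Also $I(\theta;C)>0$, since distinct $c$ give distinct posteriors; for example, their means $(\alpha+c)/(\alpha+\beta+n)$ differ. Corollary~\ref{cor:finite_separation} then gives separation on $0<R<H(\Pi_C)$.

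It remains to show $H(\Pi_C)=H(C)$. The map $c\mapsto\Pi_c$ is injective by the distinct posterior means. Therefore $\Pi_C$ is a bijective relabeling of $C$, and the two have the same entropy.

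\textbf{Step 2: the ordered sample.} Let $Y=(Y_1,\ldots,Y_n)\in\{0,1\}^n$. Every sequence has positive probability $B(\alpha+c,\beta+n-c)/B(\alpha,\beta)$, where $c$ is its sum. The posterior given $Y=y$ depends only on $c=\sum_i y_i$, so the posteriors are again pairwise equivalent. Corollary~\ref{cor:finite_separation} applies directly with $R_0=H(\Pi_Y)$.

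Since $\Pi_Y=\Pi_C$ as random measures, we get $H(\Pi_Y)=H(\Pi_C)=H(C)$. The separation interval is therefore the same, $0<R<H(C)$, with $R_0$ identified via Lemma~\ref{lem:finite_envelope}.

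\textbf{Main obstacle.} The only point needing care is bookkeeping. The frontiers $D^\star$ and $D^{\mathrm{ts}}$ for the ordered-sample experiment are defined over channels from $Y$, not from $C$. Applying Corollary~\ref{cor:finite_separation} directly to the experiment $\theta\to Y$ avoids having to compare the two experiments' frontiers, so I will not need a separate sufficiency reduction.
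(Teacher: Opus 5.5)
Your proposal is correct and follows the paper's proof essentially step for step. Both arguments use the same ingredients:
\begin{itemize}
\item positive predictive masses;
\item strictly positive Beta densities on $(0,1)$, which give pairwise equivalence;
\item injectivity of $c\mapsto\Pi_c$, which gives $H(\Pi_C)=H(C)$;
\item a direct application of Corollary~\ref{cor:finite_separation} to the ordered sample with the same threshold, rather than a transfer through the sufficient statistic.
\end{itemize}
The only difference is cosmetic: you obtain $I>0$ from the posteriors being distinct, while the paper notes that no posterior equals the prior. Both are valid.
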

\begin{proof}
For Corollary~\ref{cor:beta_binomial}, every count $c\in\{0,\ldots,n\}$
has positive predictive probability, and the posterior given $C=c$ is
$\operatorname{Beta}(\alpha+c,\beta+n-c)$. These $n+1$ laws have strictly
positive densities on $(0,1)$, so they are pairwise equivalent, and they
are distinct because their parameters differ. Distinctness makes
$c\mapsto\Pi_c$ injective, so $H(\Pi_C)=H(C)$, which is positive because
$C$ is nonconstant for $n\ge1$; and $I>0$ because no posterior equals the
prior. Corollary~\ref{cor:finite_separation} gives separation on
$0<R<H(C)$. For the ordered Bernoulli sample $Y\in\{0,1\}^n$, all $2^n$
outcomes have positive probability and the posterior given $Y=y$ is
$\operatorname{Beta}(\alpha+\sum_iy_i,\beta+n-\sum_iy_i)$, which depends
on $y$ only through the count. These posteriors are again pairwise
equivalent, and $H(\Pi_Y)=H(C)$. Corollary~\ref{cor:finite_separation}
applies directly to this finite observation space with the same
threshold. This is a separate application, not a transfer of a
deterministic restriction through a sufficient statistic.
\end{proof}

\begin{proposition}[Finite-alphabet attainment with a general parameter]
\label{prop:finite_attainment}
If $Y$ has $m$ positive-probability atoms, $D^\star(R)$ is attained,
for every $R\ge0$, by a channel with at most $m+1$ outputs, even when
$\theta$ is not finite-valued.
\end{proposition}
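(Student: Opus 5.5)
The plan is the standard two-part information-bottleneck argument: reduce channels to points in a finite-dimensional convex set, then prove attainment by compactness and cardinality by Carathéodory. Let the atoms be $y_1,\dots,y_m$ with masses $p_i>0$. A channel is a law of $S$ together with a Markov kernel $s\mapsto(q_i(s))_{i\le m}$, where $q_i(s)=P(Y=y_i\mid S=s)$, a point of the simplex $\Delta_m$. The rate and distortion are then averages over $S$ of functionals of that point:
\[
 I(Y;S)=\mathbb E\,F(q(S)),\qquad F(q)=\sum_i q_i\log\frac{q_i}{p_i},
\]
\[
 \Delta_{\KL}(S)=\mathbb E\,G(q(S)),\qquad G(q)=\sum_i q_i\KL\Big(\Pi_{y_i}\Big\Vert\sum_k q_k\Pi_{y_k}\Big).
\]
Here $G$ is the Jensen--Shannon-type distortion given by Lemma~\ref{lem:identity}, because $\Pi_S=\sum_k q_k(S)\Pi_{y_k}$. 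A channel thus corresponds to a probability measure $\mu$ on $\Delta_m$ with barycenter constraint $\int q\,d\mu=p$. Conversely, every such $\mu$ defines a valid channel via Bayes' rule $q(s\mid y_i)\propto q_i(s)$.

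The key steps are as follows.

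\emph{Regularity.} First show that $F$ and $G$ are continuous on the compact simplex $\Delta_m$, even though $\theta$ is general. $F$ is clearly continuous. For $G$, each $\KL(\Pi_{y_i}\Vert M_q)$ with $M_q=\sum_kq_k\Pi_{y_k}$ is bounded by $\log(1/q_i)$, since $d\Pi_{y_i}/dM_q\le1/q_i$. Hence $q_i\KL(\Pi_{y_i}\Vert M_q)\le q_i\log(1/q_i)$, which tends to $0$ as $q_i\to0$. On $\{q_i\ge\epsilon\}$, continuity follows by dominated convergence: the densities $d\Pi_{y_i}/dM_q=f_i/\sum_kq_kf_k$, taken relative to $\pi$, vary continuously and their logarithms are bounded by $\log(1/\epsilon)$. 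The lower tail $\log$ of small ratios is handled by $u\log(1/u)\le1/e$, exactly as in Lemma~\ref{lem:rare_mixture}. I expect this step to be the main obstacle: $G$ is continuous with no assumption on $\Theta$ only because every KL here is against a mixture that dominates its argument with a bounded density.

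\emph{Attainment.} The set of probability measures on $\Delta_m$ with barycenter $p$ is weakly compact, since $\Delta_m$ is compact and the barycenter map is continuous. Both $\mu\mapsto\int F\,d\mu$ and $\mu\mapsto\int G\,d\mu$ are weakly continuous, so the feasible set $\{\int F\,d\mu\le R\}$ is closed and nonempty, containing $\delta_p$. The continuous objective $\int G\,d\mu$ therefore attains its minimum, and that minimum equals $D^\star(R)$.

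\emph{Cardinality.} Given an optimal $\mu$, consider the vector $(q,F(q),G(q))$. The barycenter constraint imposes $m-1$ independent affine conditions, so together with $\int F$ and $\int G$ there are $m+1$ moments to preserve. By Carathéodory's theorem for the convex hull of the connected compact image curve (the Fenchel--Eggleston form), some $\mu'$ supported on at most $m+1$ points has the same barycenter, the same $\int F$ and the same $\int G$. One could even reduce to $m$ points, but $m+1$ suffices as stated. The resulting $\mu'$ is a channel with at most $m+1$ outputs that attains $D^\star(R)$.
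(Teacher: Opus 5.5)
Your proof is correct. Its skeleton matches the paper's: reduce to laws on the simplex with barycenter $p$, use weak compactness for attainment, then apply a Carath\'eodory-type support reduction. You depart from the paper at two steps.

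\textbf{Continuity.} The paper tracks the relevance functional $w\mapsto\KL(\sum_iw_i\Pi_i\Vert\pi)$ rather than your distortion functional $G$. Its continuity is a single dominated-convergence step, because $d(\sum_iw_i\Pi_i)/d\pi\in[0,1/\min_ip_i]$. Your argument (boundary terms via $q_i\log(1/q_i)$, dominated convergence on $\{q_i\ge\epsilon\}$) also works. For the negative part, use the pointwise dominator $m_q/e\le1/(e\min_kp_k)$ rather than the integral bound of Lemma~\ref{lem:rare_mixture}. The obstacle you anticipated actually disappears: Lemma~\ref{lem:countable_compensation} gives $G(q)=\sum_iq_i\KL(\Pi_{y_i}\Vert\pi)-\KL(M_q\Vert\pi)$, which is a linear function minus the paper's continuous one.

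\textbf{Cardinality.} The paper applies ordinary Carath\'eodory in $\mathbb R^{m+1}$ to get $m+2$ atoms. It then removes one atom by moving along a kernel direction that fixes the barycenter and the rate moment, using optimality in both directions to keep the objective moment constant. Your Fenchel--Eggleston step reaches $m+1$ atoms at once and preserves all three moments, so it needs no optimality argument. The price is a less elementary theorem; note that it needs only connectedness of $\Phi(\Delta_m)$, which is not a curve. You should also state that $\int\Phi\,d\mu\in\operatorname{conv}\Phi(\Delta_m)$, which holds because the convex hull of a compact set in $\mathbb R^{m+1}$ is closed.

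\textbf{The aside about $m$ points.} Drop the remark that one could reduce to $m$ points. That is true for the penalized (Lagrangian) problem, where only the barycenter and one combined moment must be preserved. It is not true in general for the rate-constrained $D^\star(R)$. When $R$ lies inside a linear segment of the frontier, the rate constraint is a genuine extra moment, and $m+1$ outputs can be necessary; this is the tightness result of \citet{benger2023cardinality}, cited alongside this bound. The aside plays no role in your proof, so the stated proposition is unaffected.
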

\begin{proof}
The proof rewrites the optimization over channels as an optimization
over probability laws on the simplex, shows that this second problem has
a compact feasible set and a continuous objective, and then reduces the
support of an optimizer by Carath\'eodory's theorem.

Write $p_i>0$ for the observation probabilities and $\Pi_i$ for the
posteriors, $i=1,\ldots,m$, so that $\pi=\sum_ip_i\Pi_i$. For a summary
$S$, define the conditional observation-probability vector
$w(S):=(\mathbb P(Y=i\mid S))_{i=1}^m$, which takes values in the compact
simplex $\mathcal D_m$, and let $\nu$ be its law. Taking expectations,
the barycenter of $\nu$ is $\int w\,\nu(dw)=(\mathbb P(Y=i))_i=p$.
Conversely, let $\nu$ be any probability law on $\mathcal D_m$ with
barycenter $p$, and define the channel
\[
 q(dw\mid Y=i):=\frac{w_i}{p_i}\,\nu(dw).
\]
This is a probability kernel because $\int(w_i/p_i)\nu(dw)=p_i/p_i=1$.
Under it, $\mathbb P(Y=i,S\in dw)=p_i(w_i/p_i)\nu(dw)=w_i\nu(dw)$, so $S$
has law $\nu$ and $\mathbb P(Y=i\mid S=w)=w_i$; that is, $w(S)=S$ and the
law of $w(S)$ is $\nu$. The laws on the simplex with barycenter $p$ are
therefore exactly the laws of $w(S)$ over all summaries.

We express rate and distortion through $\nu$. With
$h(w):=-\sum_iw_i\log w_i$,
\[
 I(Y;S)=H(Y)-H(Y\mid S)=H(p)-\mathbb E\,h(w(S))=H(p)-\int h\,d\nu .
\]
By the barycenter formula, $\Pi_S=\sum_i\mathbb P(Y=i\mid S)\Pi_i=\sum_iw_i(S)\Pi_i$,
so with $F(w):=\KL(\sum_iw_i\Pi_i\Vert\pi)$,
\[
 I(\theta;S)=\mathbb E\,\KL(\Pi_S\Vert\pi)=\int F\,d\nu,
 \qquad
 \Delta_{\KL}(S)=I-\int F\,d\nu .
\]
Since a summary's distortion depends on $S$ only through the joint law of
$(Y,S)$, and any $\nu$ is realized by the explicit channel above,
\[
 D^\star(R)=I-\sup\Big\{\int F\,d\nu:\ \nu\in\mathcal P(\mathcal D_m),\
 \int w\,\nu(dw)=p,\ \int h\,d\nu\ge H(p)-R\Big\}.
\]

Both integrands are continuous on $\mathcal D_m$. The entropy $h$ is
continuous with the convention $0\log0=0$. For $F$, note that
$p_i\Pi_i\le\pi$ gives $d\Pi_i/d\pi\le1/p_i$, so the density
$f_w:=d(\sum_iw_i\Pi_i)/d\pi=\sum_iw_i\,d\Pi_i/d\pi$ takes values in
$[0,1/\min_ip_i]$ and depends linearly, hence continuously, on $w$ at
every point of $\Theta$. The function $u\mapsto u\log u$ is bounded on
this interval, so dominated convergence under $\pi$ gives
$F(w)=\int f_w\log f_w\,d\pi\to F(w')$ as $w\to w'$.

The feasible set is nonempty and compact. It contains $\delta_p$, the law
of the constant summary, since $\int h\,d\delta_p=H(p)\ge H(p)-R$. The
space $\mathcal P(\mathcal D_m)$ is compact in the weak topology because
$\mathcal D_m$ is a compact metric space. The barycenter constraints
$\int w_i\,d\nu=p_i$ and the rate constraint $\int h\,d\nu\ge H(p)-R$ are
closed, since $w_i$ and $h$ are continuous and bounded. The objective
$\nu\mapsto\int F\,d\nu$ is weakly continuous for the same reason.
Therefore the supremum is attained by some $\nu^\star$.

We reduce the support. Consider the continuous map
\[
 \Phi(w):=\big(w_1,\ldots,w_{m-1},h(w),F(w)\big)\in\mathbb R^{m+1}.
\]
The moment vector $\int\Phi\,d\nu^\star$ lies in the convex hull of the
compact set $\Phi(\mathcal D_m)$. By Carath\'eodory's theorem in
$\mathbb R^{m+1}$, it is a convex combination of at most $m+2$ points
$\Phi(w^{(1)}),\ldots,\Phi(w^{(m+2)})$ with weights $\mu_k$. The discrete
law $\nu':=\sum_k\mu_k\delta_{w^{(k)}}$ has the same first $m-1$
barycenter coordinates as $\nu^\star$, hence the same barycenter since
$w_m=1-\sum_{i<m}w_i$, the same entropy moment, and the same $F$-moment.
So $\nu'$ is feasible and optimal.

Suppose $\nu'$ has exactly $m+2$ atoms with positive weights. The linear
map $\varepsilon\mapsto\big(\sum_k\varepsilon_k,\ \sum_k\varepsilon_kw^{(k)}_i\ (i<m),\
\sum_k\varepsilon_kh(w^{(k)})\big)$ from $\mathbb R^{m+2}$ to
$\mathbb R^{m+1}$ has a nonzero kernel element $\varepsilon$. For
$|s|$ small, the weights $\mu+s\varepsilon$ are positive, sum to one, and
give a law with the same barycenter and the same entropy moment, hence a
feasible law, whose $F$-moment is
$\int F\,d\nu'+s\sum_k\varepsilon_kF(w^{(k)})$. Optimality of $\nu'$ for
both signs of $s$ forces $\sum_k\varepsilon_kF(w^{(k)})=0$, so the
$F$-moment is constant along the whole segment. Since
$\sum_k\varepsilon_k=0$ and $\varepsilon\ne0$, some coordinate of
$\varepsilon$ is negative; increase $s$ until the first weight reaches
zero. The resulting law is still feasible and optimal and has at most
$m+1$ atoms.

Finally, the explicit channel $q(dw\mid Y=i)=(w_i/p_i)\nu(dw)$ built from
this finitely supported optimal $\nu$ has at most $m+1$ outputs, rate at
most $R$, and distortion $I-\int F\,d\nu=D^\star(R)$.
\end{proof}

Under the separation hypotheses, a finite-observation optimum at an
interior rate cannot be deterministic or a revealed deterministic mixture.
The attainment proposition does not supply a global numerical optimizer;
it establishes existence and a cardinality bound.

\paragraph{The explicit count witness and enumeration.}
For $\alpha=\beta=5$, $n=8$, take the observation to be $C$, write
$w_c=P(C=c)$ and $\Pi_c=P_{\theta\mid C=c}=\operatorname{Beta}(5+c,13-c)$.
For any finite channel $q(b\mid c)$, its induced posterior is
\[
 \Pi_b=\frac{\sum_cw_cq(b\mid c)\Pi_c}{\sum_cw_cq(b\mid c)},
\]
on positive-probability outputs, with
\[
 R=\sum_{c,b}w_cq(b\mid c)\log\frac{q(b\mid c)}{\sum_jw_jq(b\mid j)},
 \qquad
 \Delta_{\KL}=\sum_{c,b}w_cq(b\mid c)\KL(\Pi_c\Vert\Pi_b).
\]
All $\operatorname{Bell}(9)=21{,}147$ count partitions are enumerated;
posterior mixtures are integrated by Gauss--Legendre quadrature and the
deterministic envelope is their lower convex hull.
The displayed quadrature values are numerical approximations,
not certified integration enclosures.

Put $w_A=P(C\le3)$ and define $\varrho_\star$ as the unique solution in
$(0,1)$ of
\[
 h_2(w_A\varrho)-w_Ah_2(\varrho)=0.5.
\]
The left side is the information rate of
$q(1\mid c)=\varrho\mathbf1\{c\le3\}$: the output $B$ has
$\mathbb P(B=1)=w_A\varrho$, so $H(B)=h_2(w_A\varrho)$, while
$H(B\mid C)=w_Ah_2(\varrho)$ because $B$ is random only on the counts
$c\le3$. It is strictly increasing in $\varrho$: its derivative is
$w_A\log\{(1-w_A\varrho)/(w_A(1-\varrho))\}$, which is positive because
$1-w_A\varrho>w_A(1-\varrho)$ reduces to $w_A<1$.
The solution is approximately $0.8782923057$.
Its distortion is approximately $0.154529$, below the time-sharing
envelope $0.162253$ and the best individual deterministic count quantizer
$0.187462$ at rate $0.5$.
The stochastic point is feasible, not claimed globally optimal.
Neither the nine-count enumeration nor its numerical envelope enumerates
all deterministic functions of the ordered raw sample.

\section{Examples delimiting the support and rate assumptions}
\label{app:support-examples}

\paragraph{A deterministic target.}
Suppose $\theta=f(Y)$ is discrete with $H(\theta)<\infty$.
For every summary,
$\Delta_{\KL}(S)=H(\theta)-I(\theta;S)
\ge H(\theta)-I(Y;S)$.
Time-sharing the constant summary and $f(Y)$ attains the bound up to
rate $H(\theta)$; transmitting $f(Y)$ handles larger budgets, proving
\eqref{eq:det_target}. Not every deterministic quantizer is efficient:
if $Y=(\theta,U)$ with independent fair bits, the quantizer
$T=U$ has positive rate and zero relevance, lying above the frontier.
Channels driven by $\theta$ alone satisfy
$I(Y;S)=I(\theta;S)$ and attain the line at their achieved rates.

\paragraph{Overlapping posteriors without an advantage.}
Let $Y$ be a fair bit and independently draw $E\sim\operatorname{Bernoulli}(\alpha)$,
where $0<\alpha<1$. Set $\theta=Y$ when $E=1$ and $\theta=e$
otherwise, with $e$ a distinct erasure symbol. For every channel from $Y$,
the indicator $E$ is independent of $(Y,S)$ and is determined by $\theta$.
The chain rule therefore gives
$I(\theta;S)=\alpha I(Y;S)$ and $I(\theta;Y)=\alpha\log2$.
It follows that
\[
 D^\star(R)=D^{\mathrm{ts}}(R)=\alpha[\log2-R]_+,
\]
attained by visible time-sharing of the constant and identity quantizers.
The target is not a function of $Y$, and the posteriors overlap at $e$
with mass $1-\alpha$: neither property ensures strict separation.

\paragraph{Strict advantage despite unequal supports.}
Let $Y$ be a fair bit with posteriors
$\Pi_0=(0.1,0.9,0)$ and $\Pi_1=(0,0.1,0.9)$.
There are only two deterministic partition types, so
$R_0=\log2$ and
$D^{\mathrm{ts}}(R)=I(1-R/\log2)$ on $[0,\log2]$.
The prior is $(0.05,0.5,0.45)$; only its middle atom leaves $Y$
uncertain, with conditional binary entropy $h_2(0.1)$. Hence
$I=\log2-\tfrac12h_2(0.1)$ and
$I/\log2\ge1/2>0.1=\alpha(\Pi_0\mid\Pi_1)$.
At every interior rate, the optimal mixture uses the identity quantizer with
positive weight. Proposition~\ref{prop:finite_refined_criterion} therefore
gives strict separation at every such rate, despite failure of equivalence.

\paragraph{Topological support is insufficient.}
For a probability law on a metric observation space, its topological
support is
\[
 \operatorname{supp}_{\mathrm{top}}(P)
 :=\{y:P(U)>0\text{ for every open neighborhood }U\text{ of }y\}.
\]
This definition tests neighborhoods. Mutual absolute continuity instead
requires agreement on all measurable null events:
$P(A)=0$ if and only if $Q(A)=0$ for every measurable $A$.
Equal topological supports need not ensure this stronger property.
Let $\theta$ be a fair bit. Given $\theta=0$, let $Y$ be uniform on
$[0,1]$; given $\theta=1$, give positive masses to an enumeration of
the rationals in $[0,1]$. Both sampling laws have topological support
$[0,1]$, but they are mutually singular and
$\theta=\mathbf1\{Y\in\mathbb Q\}$ almost surely.
Thus \eqref{eq:det_target} gives no advantage despite equal topological
supports, unlike the measure equivalence in Assumption~\ref{ass:common_support}.
The separation proof uses the latter property to make the directed
singular mass $\alpha(Q_{\theta Y}\mid P_{\theta Y})$ vanish for its joint and product laws.
No choice of topology is needed for that argument. Common support in this
measure-theoretic sense is sufficient for the theorem, but the preceding
unequal-support example shows that it is not necessary for separation.

\paragraph{An output-entropy constraint is a different problem.}
For $\theta=Y$ a fair bit and $0<R<\log2$, every deterministic quantizer with entropy at most $R$
is constant. Choose $t\in(0,1)$ with $h_2(t/2)=R$ and use
$q(B=1\mid Y=1)=t$, $q(B=1\mid Y=0)=0$.
Then $H(B)=R$ but
\[
 I(\theta;B)=h_2(t/2)-\tfrac12h_2(t)>0.
\]
This improves on every feasible deterministic quantizer under $H(S)\le R$;
the cardinality result and hard-assignment claims for an entropy-penalized
Lagrangian do not extend to this constraint. Relabeling preserves output
entropy and saves no cost under that measure.

\paragraph{Infinite mutual information need not mean a noiseless component.}
The finite-information consequences in Section~\ref{sec:preliminaries}
do not characterize which noisy models satisfy that assumption, nor do
they imply finite exact-preservation rate (Proposition~\ref{prop:shape}).
For a counterexample, let $\theta=N$ have a countable
full-support distribution $p$ with $H(N)=\infty$, for example a
normalized $p_n\propto1/[n(\log n)^2]$ on $n\ge2$.
Let $Y=N+G$ with $G\sim N(0,1)$ independent. All sampling laws have
strictly positive densities on $\mathbb R$, so they are equivalent,
and no nonconstant function of $N$ is determined by $Y$.
For $Z=\operatorname{round}(Y)$ and
$E=\operatorname{round}(G)$, translation by the integer $N$ gives
$Z=N+E$ almost surely. The discretized Gaussian $E$ has finite entropy,
and hence
\[
 H(N\mid Y)\le H(N\mid Z)=H(E\mid Z)\le H(E)<\infty.
\]
For finite quantizations $U_m$ of $N$ with $H(U_m)\to\infty$,
$I(N;Y)\ge I(U_m;Y)\ge H(U_m)-H(E)$, proving
$I(N;Y)=\infty$ despite additive Gaussian observation noise.
Thus an exactly observed continuous parameter coordinate is only one
possible cause of infinite mutual information.

\section{Proofs for Section \ref{sec:preliminaries}}
\label{app:proofs-prelim}

\begin{proof}[Proof of Lemma \ref{lem:identity}]
We prove the three parts of \eqref{eq:identity} in turn, then the
compensation identity \eqref{eq:compensation} by applying the chain rule
for divergences in two different orders, and finally the two converses.

Since $\theta\to Y\to S$ is a Markov chain, $\theta$ is conditionally
independent of $S$ given $Y$, so the conditional law of $\theta$ given
$(Y,S)$ is the conditional law given $Y$ alone:
$P_{\theta\mid Y,S}=\Pi_Y$ almost surely. On standard Borel spaces the
conditional mutual information is the expected divergence between the
two conditional laws of $\theta$,
\[
 I(\theta;Y\mid S)=\mathbb E\,\KL(P_{\theta\mid Y,S}\Vert P_{\theta\mid S})
 =\mathbb E\,\KL(\Pi_Y\Vert\Pi_S)=\Delta_\KL(S).
\]
For the second equality in \eqref{eq:identity}, expand $I(\theta;Y,S)$
with the chain rule in both orders:
\[
 I(\theta;Y,S)=I(\theta;Y)+I(\theta;S\mid Y)=I(\theta;Y),
 \qquad
 I(\theta;Y,S)=I(\theta;S)+I(\theta;Y\mid S),
\]
where the first line uses $I(\theta;S\mid Y)=0$ from the Markov property.
Equating the two expansions gives $I(\theta;Y\mid S)=I(\theta;Y)-I(\theta;S)$,
and all terms are finite because $0\le I(\theta;S)\le I(\theta;Y)<\infty$.
For the barycenter formula, fix a measurable $A\subseteq\Theta$ and apply
the tower property together with the Markov property:
\[
 \mathbb P(\theta\in A\mid S)
 =\mathbb E\big[\mathbb P(\theta\in A\mid Y,S)\,\big|\,S\big]
 =\mathbb E\big[\Pi_Y(A)\,\big|\,S\big].
\]
Since $\Theta$ is standard Borel, a countable generating algebra of sets
$A$ suffices to identify the two kernels, giving $\Pi_S=\mathbb E[\Pi_Y\mid S]$
almost surely.

For \eqref{eq:compensation}, let $Q_S$ be a measurable reconstruction with
$\mathbb E\,\KL(\Pi_Y\Vert Q_S)<\infty$, and fix a value $s$ of $S$. On
$\Theta\times\mathcal Y$ compare the conditional joint law
$P_{\theta Y\mid S=s}$ with the product $Q_s\otimes P_{Y\mid S=s}$, whose
$\mathcal Y$-marginal agrees with that of the joint. The chain rule for
divergences holds in $[0,\infty]$ without integrability assumptions.
Disintegrating along $Y$ first,
\begin{align*}
 \KL\big(P_{\theta Y\mid S=s}\,\big\Vert\,Q_s\otimes P_{Y\mid S=s}\big)
 &=\KL(P_{Y\mid S=s}\Vert P_{Y\mid S=s})+\int\KL(P_{\theta\mid Y=y,S=s}\Vert Q_s)\,P_{Y\mid S=s}(dy)\\
 &=\int\KL(\Pi_y\Vert Q_s)\,P_{Y\mid S=s}(dy).
\end{align*}
Disintegrating along $\theta$ first,
\begin{align*}
 \KL\big(P_{\theta Y\mid S=s}\,\big\Vert\,Q_s\otimes P_{Y\mid S=s}\big)
 &=\KL(P_{\theta\mid S=s}\Vert Q_s)\\
 &\quad+\int\KL(P_{Y\mid\theta,S=s}\Vert P_{Y\mid S=s})\,P_{\theta\mid S=s}(d\theta).
\end{align*}
The last integral is the mutual information between $\theta$ and $Y$ under
the conditional law given $S=s$, which by symmetry of mutual information
equals $\int\KL(\Pi_y\Vert\Pi_s)\,P_{Y\mid S=s}(dy)$. Equating the two
expansions and averaging over $s$ with respect to the law of $S$,
\[
 \mathbb E\,\KL(\Pi_Y\Vert Q_S)=\mathbb E\,\KL(\Pi_Y\Vert\Pi_S)+\mathbb E\,\KL(\Pi_S\Vert Q_S)
 =\Delta_\KL(S)+\mathbb E\,\KL(\Pi_S\Vert Q_S).
\]
The left side is finite by hypothesis, so both terms on the right are
finite and no difference of infinities occurs. When all measures have
densities $p_Y,p_S,q_S$ with respect to a common reference measure, the
same identity is the expectation of the pointwise decomposition
$\log(p_Y/q_S)=\log(p_Y/p_S)+\log(p_S/q_S)$, using that $\log(p_S/q_S)$ is
$\sigma(S)$-measurable and $\mathbb E[p_Y\mid S]=p_S$; the argument above
avoids the choice of a reference measure.

For the converses, data processing along $\theta\to Y\to S$ gives
$I(\theta;S)\le I(Y;S)$, and with the identity just proved,
\[
 \Delta_\KL(S)=I(\theta;Y)-I(\theta;S)\ \ge\ I(\theta;Y)-I(Y;S).
\]
If $\Delta_\KL(S)\le\varepsilon$ then $I(Y;S)\ge I(\theta;Y)-\varepsilon$,
and rates are nonnegative, so $R^\star(\varepsilon)\ge[I(\theta;Y)-\varepsilon]_+$.
If $I(Y;S)\le R$ then $\Delta_\KL(S)\ge I(\theta;Y)-R$, and distortions are
nonnegative, so $D^\star(R)\ge[I(\theta;Y)-R]_+$.
\end{proof}

\begin{proof}[Proof of Proposition \ref{prop:reduction}]
We first verify that $Z=\Pi_Y$ is its own posterior, then prove the two
inequalities behind the reduction identity $R^\star(\varepsilon)=\inf_{P_{\widehat Z\mid Z}:\,\mathbb E\KL(Z\Vert\widehat Z)\le\varepsilon} I(Z;\widehat Z)$, and finally give the explicit
reduction of an arbitrary kernel, which also proves Remark~\ref{rem:nondet}.

For a measurable $A\subseteq\Theta$, the tower property gives
\[
 \mathbb P(\theta\in A\mid Z)=\mathbb E\big[\mathbb P(\theta\in A\mid Y)\,\big|\,Z\big]
 =\mathbb E[Z(A)\mid Z]=Z(A),
\]
since $Z(A)$ is $\sigma(Z)$-measurable and $Z$ is a function of $Y$.
Consequently the law of $Z$ determines the joint law of $(\theta,Z)$:
for measurable $A$ and $C$,
$\mathbb P(\theta\in A,Z\in C)=\mathbb E[Z(A)\mathbf 1\{Z\in C\}]$.

We first show that the right side of the reduction identity is at most the left.
Let $S$ be a summary channel with $\Delta_\KL(S)\le\varepsilon$ and set
$\widehat Z:=\Pi_S$, a probability-measure-valued function of $S$. The
joint law of $(Z,\widehat Z)$ defines a channel from $Z$ to reconstructions,
with distortion
$\mathbb E\,\KL(Z\Vert\widehat Z)=\mathbb E\,\KL(\Pi_Y\Vert\Pi_S)=\Delta_\KL(S)\le\varepsilon$.
Since $Z$ is a function of $Y$ and $\widehat Z$ a function of $S$, data
processing gives $I(Z;\widehat Z)\le I(Y;S)$. Thus every feasible summary
yields a feasible reconstruction channel of no larger rate, and taking the
infimum over summaries shows that the right side is at most $R^\star(\varepsilon)$.

Now, we show the reverse inequality. Let $P_{\widehat Z\mid Z}$ be a channel
with $\mathbb E\,\KL(Z\Vert\widehat Z)\le\varepsilon$. Build a summary from
$Y$ in two stages: compute $Z=\Pi_Y$, then draw $S$ from
$P_{\widehat Z\mid Z}(\cdot\mid Z)$ using fresh randomness. Then
$\theta\to Y\to S$ is a Markov chain, and the joint law of $(Z,S)$ equals
that of $(Z,\widehat Z)$. Because $S$ depends on $Y$ only through $Z$, and
$Z$ is a function of $Y$, the chain rule gives
\[
 I(Y;S)=I(Y,Z;S)=I(Z;S)+I(Y;S\mid Z)=I(Z;S)=I(Z;\widehat Z).
\]
For the distortion, use the value of $S$ itself as a reconstruction,
$Q_S:=S$. Its expected divergence is
$\mathbb E\,\KL(\Pi_Y\Vert Q_S)=\mathbb E\,\KL(Z\Vert\widehat Z)\le\varepsilon<\infty$,
so \eqref{eq:compensation} applies and gives
\[
 \Delta_\KL(S)=\mathbb E\,\KL(\Pi_Y\Vert Q_S)-\mathbb E\,\KL(\Pi_S\Vert Q_S)
 \le\mathbb E\,\KL(Z\Vert\widehat Z)\le\varepsilon .
\]
Thus $S$ is feasible for $R^\star(\varepsilon)$ with rate $I(Z;\widehat Z)$,
and taking the infimum over reconstruction channels gives
$R^\star(\varepsilon)\le$ the right side of the posterior source reduction.

Finally we reduce an arbitrary kernel explicitly. Let $S$ be generated
from $Y$ by a kernel $q(ds\mid y)$, define
$q'(ds\mid z):=\mathbb P(S\in ds\mid Z=z)$, and generate $S'$ from $Z$
through $q'$ with fresh randomness. For measurable $A$ and $C$, using
$\mathbb P(\theta\in A\mid Y,S)=Z(A)$ and $\mathbb P(S\in C\mid Y)=q(C\mid Y)$,
\begin{align*}
 \mathbb P(\theta\in A,S\in C)
 &=\mathbb E\big[Z(A)\,q(C\mid Y)\big]
 =\mathbb E\big[Z(A)\,\mathbb E[q(C\mid Y)\mid Z]\big]\\
 &=\mathbb E\big[Z(A)\,q'(C\mid Z)\big]
 =\mathbb P(\theta\in A,S'\in C),
\end{align*}
where the second equality is the tower property with the
$\sigma(Z)$-measurable factor $Z(A)$, the third uses
$\mathbb E[q(C\mid Y)\mid Z]=\mathbb P(S\in C\mid Z)=q'(C\mid Z)$, and the
last uses that $S'$ is conditionally independent of $\theta$ given $Y$
with $\mathbb P(S'\in C\mid Y)=q'(C\mid Z)$. Hence $(\theta,S)$ and
$(\theta,S')$ have the same joint law, so they induce the same
reconstruction $\Pi_S$ and, by \eqref{eq:identity}, the same distortion.
Their rates compare as
$I(Y;S')=I(Z;S')=I(Z;S)\le I(Y;S)$, by the same chain-rule computation
as above and data processing. This proves Remark~\ref{rem:nondet}: even if
$q$ is degenerate, so that $S=g(Y)$ is deterministic, the averaged kernel
$q'(\cdot\mid z)=\mathbb P(g(Y)\in\cdot\mid Z=z)$ is degenerate only when
$g(Y)$ is a function of $Z$.
\end{proof}

\begin{proposition}[Shape and endpoints]
\label{prop:shape}
$R^\star$ is nonincreasing and convex, $R^\star(\varepsilon)=0$ for $\varepsilon\ge I(\theta;Y)$, and $R^\star(0)=\infty$ whenever the law of $Z=\Pi_Y$ is non-atomic.
\end{proposition}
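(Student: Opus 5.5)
I will work through the posterior-source reduction of Proposition~\ref{prop:reduction}, which writes $R^\star(\varepsilon)$ as an infimum of $I(Z;\widehat Z)$ over reconstruction channels with $\mathbb E\KL(Z\Vert\widehat Z)\le\varepsilon$. The three claims are then handled in turn: monotonicity, convexity, the vanishing region, and the infinite endpoint.

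\emph{Monotonicity.} Enlarging $\varepsilon$ enlarges the feasible set, so $R^\star$ is nonincreasing.

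\emph{Convexity.} I will use revealed time-sharing exactly as in Proposition~\ref{prop:time_sharing}. Fix $\varepsilon_1,\varepsilon_2$, $\mu\in(0,1)$ and $\eta>0$, and take summaries $S_i$ with $\Delta_\KL(S_i)\le\varepsilon_i$ and $I(Y;S_i)\le R^\star(\varepsilon_i)+\eta$. The revealed mixture $(J,S_J)$ with $\mathbb P(J=1)=\mu$ has, by the affine identities~\eqref{eq:ts_affine}, distortion at most $\mu\varepsilon_1+(1-\mu)\varepsilon_2$ and rate $\mu I(Y;S_1)+(1-\mu)I(Y;S_2)$. Letting $\eta\downarrow0$ gives convexity, with the usual convention if either value is infinite.

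\emph{Vanishing region.} For $\varepsilon\ge I(\theta;Y)$, the constant summary has rate $0$. By Lemma~\ref{lem:identity}, its distortion is $I(\theta;Y)-0\le\varepsilon$. Hence $R^\star(\varepsilon)=0$.

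\emph{Infinite endpoint.} Suppose $\Delta_\KL(S)=0$. Lemma~\ref{lem:zero_distortion_cost} gives $\Pi_S=Z$ almost surely, so $Z$ is a measurable function of $S$. Data processing then gives
\[
 I(Y;S)\ge I(Z;S)\ge I(Z;\Pi_S)=I(Z;Z).
\]
It remains to show $I(Z;Z)=\infty$ when the law of $Z$ is non-atomic. This is the nonatomic case of the argument in Lemma~\ref{lem:deterministic_atomic}, applied with $g$ the identity on the standard Borel space of probability measures on $\Theta$. Partition that space into $m$ measurable cells of equal mass $1/m$, which is possible by non-atomicity. The finite quantization $a(Z)$ then satisfies $I(Z;Z)\ge H(a(Z))=\log m$, and letting $m\to\infty$ gives $I(Z;Z)=\infty$. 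So every zero-distortion summary has infinite rate, and $R^\star(0)=\infty$.

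The main obstacle is technical rather than conceptual. One must check that the space of probability measures on a standard Borel $\Theta$, with its usual Borel structure, is itself standard Borel, so that the equal-mass partition and the supremum-over-quantizations formula for mutual information apply. Measurability of $Z=\Pi_Y$ as a random element of this space follows from the regular conditional distribution assumption.
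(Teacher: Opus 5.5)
Your proposal is correct and follows the paper's proof essentially step for step. The steps are: monotonicity from nested feasible sets; convexity by revealed time-sharing with an $\eta$-slack; the constant summary for $\varepsilon\ge I(\theta;Y)$; and the chain $I(Y;S)\ge I(Z;S)\ge I(Z;\Pi_S)=I(Z;Z)=\infty$, which uses Lemma~\ref{lem:zero_distortion_cost} and the nonatomic case of Lemma~\ref{lem:deterministic_atomic}. Your opening appeal to the posterior-source reduction is never used, since you argue directly with summary channels as the paper does, so you can drop it.
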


\begin{remark}[Deterministic restrictions and sufficient statistics]
\label{rem:nondet}
Replacing $q(\cdot\mid Y)$ by its conditional average given $Z$ preserves the joint law of $(\theta,S)$ and can only reduce rate. Even if $S=g(Y)$, this averaged kernel need not be deterministic. A deterministic restriction on a sufficient statistic is therefore distinct from a deterministic restriction on the raw observation. The deterministic comparison in Section~\ref{sec:randomization_essential} refers to functions of the original observation. 
\end{remark}

\begin{proof}[Proof of Proposition \ref{prop:shape}]
Monotonicity holds because the feasible set
$\{q:\Delta_\KL(S)\le\varepsilon\}$ grows with $\varepsilon$.

For convexity, fix $\varepsilon_1,\varepsilon_2\ge0$, $\mu\in(0,1)$, and
$\eta>0$, and choose channels $S_1,S_2$ with $\Delta_\KL(S_i)\le\varepsilon_i$
and $I(Y;S_i)\le R^\star(\varepsilon_i)+\eta$; if some
$R^\star(\varepsilon_i)$ is infinite there is nothing to prove. Let $S=(J,S_J)$
be the revealed mixture with $\mathbb P(J=1)=\mu$. By \eqref{eq:ts_affine},
\[
 \Delta_\KL(S)=\mu\Delta_\KL(S_1)+(1-\mu)\Delta_\KL(S_2)\le\mu\varepsilon_1+(1-\mu)\varepsilon_2,
\]
\[
 I(Y;S)=\mu I(Y;S_1)+(1-\mu)I(Y;S_2)\le\mu R^\star(\varepsilon_1)+(1-\mu)R^\star(\varepsilon_2)+\eta .
\]
Hence $R^\star(\mu\varepsilon_1+(1-\mu)\varepsilon_2)\le\mu R^\star(\varepsilon_1)+(1-\mu)R^\star(\varepsilon_2)+\eta$,
and $\eta\downarrow0$ gives convexity.

For the zero-rate endpoint, the constant summary has $I(Y;S)=0$ and, by
\eqref{eq:identity}, distortion $I(\theta;Y)-I(\theta;S)=I(\theta;Y)$. It is
feasible whenever $\varepsilon\ge I(\theta;Y)$, so $R^\star(\varepsilon)=0$ there.

For exact preservation, let $\Delta_\KL(S)=0$. Lemma~\ref{lem:zero_distortion_cost}
gives $Z=\Pi_S$ almost surely, so $Z$ coincides with a measurable function
of $S$. Data processing then gives
\[
 I(Y;S)\ \ge\ I(Z;S)\ \ge\ I(Z;\Pi_S)\ =\ I(Z;Z).
\]
The space of probability measures on $\Theta$ is standard Borel, so
Lemma~\ref{lem:deterministic_atomic} applies to the deterministic map
$Z\mapsto Z$: when the law of $Z$ is non-atomic, $I(Z;Z)=\infty$. Hence
every summary with zero distortion has infinite rate, and $R^\star(0)=\infty$.
\end{proof}

\section{Proofs for Section \ref{sec:posterior_geometry}}
\label{app:proofs-geometry}

\begin{proof}[Proof of Theorem \ref{thm:avg_quant}]
Write
\[
 D^{\mathrm{card}}_M:=\inf_{g:\mathcal Y\to[M]}\Delta_\KL(g(Y)),
 \qquad
 D^{\mathrm{stoch}}_M:=\inf_{q(s\mid y):\,|\mathcal S|\le M}\Delta_\KL(S)
\]
for the deterministic and stochastic $M$-symbol infima. We show
$D^{\mathrm{av}}_M\le D^{\mathrm{stoch}}_M\le D^{\mathrm{card}}_M\le D^{\mathrm{av}}_M$,
then evaluate the case $M=1$, and finally derive the rate bound.

We first show $D^{\mathrm{av}}_M\le D^{\mathrm{stoch}}_M$. Let $S$ be any
summary with values in $[M]$, deterministic or not. For each output $j$
with $\mathbb P(S=j)>0$ put $Q_j:=\Pi_{S=j}=P_{\theta\mid S=j}$, and choose
the remaining $Q_j$ arbitrarily. Pointwise, the nearest center is at least
as close as the center actually selected by $S$:
\[
 \min_{j\le M}\KL(\Pi_Y\Vert Q_j)\ \le\ \KL(\Pi_Y\Vert Q_S)=\KL(\Pi_Y\Vert\Pi_S).
\]
Taking expectations gives $\mathbb E\min_j\KL(\Pi_Y\Vert Q_j)\le\Delta_\KL(S)$,
so $D^{\mathrm{av}}_M\le\Delta_\KL(S)$; the infimum over $S$ gives the claim.
The inequality $D^{\mathrm{stoch}}_M\le D^{\mathrm{card}}_M$ holds because a
deterministic $g:\mathcal Y\to[M]$ is a particular $M$-symbol channel.

We now show $D^{\mathrm{card}}_M\le D^{\mathrm{av}}_M$ by the
nearest-center construction. Let $Q_1,\ldots,Q_M$ be a codebook with
finite average distortion $\mathbb E\min_j\KL(\Pi_Y\Vert Q_j)<\infty$; such
codebooks exist because $Q_1=\cdots=Q_M=\pi$ gives $\mathbb E\,\KL(\Pi_Y\Vert\pi)=I(\theta;Y)<\infty$.
Define the nearest-center quantizer with the smallest index breaking ties,
\[
 g(y):=\min\argmin_{j\le M}\KL(\Pi_y\Vert Q_j).
\]
It is measurable because each $y\mapsto\KL(\Pi_y\Vert Q_j)$ is measurable
by the standing assumption and
$\{g=j\}=\bigcap_{i<j}\{\KL(\Pi_y\Vert Q_j)<\KL(\Pi_y\Vert Q_i)\}\cap\bigcap_{i>j}\{\KL(\Pi_y\Vert Q_j)\le\KL(\Pi_y\Vert Q_i)\}$.
Put $S:=g(Y)$ and use $Q_S$ as a reconstruction. By construction
$\KL(\Pi_Y\Vert Q_S)=\min_j\KL(\Pi_Y\Vert Q_j)$, so
$\mathbb E\,\KL(\Pi_Y\Vert Q_S)$ is finite and the compensation identity
\eqref{eq:compensation} applies:
\[
 \mathbb E\min_j\KL(\Pi_Y\Vert Q_j)
 =\mathbb E\,\KL(\Pi_Y\Vert Q_S)
 =\Delta_\KL(S)+\mathbb E\,\KL(\Pi_S\Vert Q_S)
 \ \ge\ \Delta_\KL(S).
\]
Conditioning on each positive-mass cell, this is the statement that
replacing the center $Q_j$ by the cell barycenter $\Pi_{g=j}$ can only
reduce the cell's expected divergence. Hence
$D^{\mathrm{card}}_M\le\Delta_\KL(g(Y))\le\mathbb E\min_j\KL(\Pi_Y\Vert Q_j)$
for every codebook, and taking the infimum over codebooks gives
$D^{\mathrm{card}}_M\le D^{\mathrm{av}}_M$. The three infima are therefore
equal, which proves the result.

For $M=1$ the only deterministic quantizer is the constant, whose distortion is
$I(\theta;Y\mid\text{constant})=I(\theta;Y)$, so $D^{\mathrm{av}}_1=I(\theta;Y)$.
The same value is visible from the compensation identity with a constant
$S$ and a single center $Q$, which reads
$\mathbb E\,\KL(\Pi_Y\Vert Q)=I(\theta;Y)+\KL(\pi\Vert Q)$ and is minimized
at $Q=\pi$.

For the rate bound, suppose $D^{\mathrm{av}}_M<\varepsilon$. By the
definition of the infimum there is a codebook with
$\mathbb E\min_j\KL(\Pi_Y\Vert Q_j)<\varepsilon$, and by the display above
its nearest-center quantizer $S=g(Y)$ has $\Delta_\KL(S)<\varepsilon$. Its
rate is $I(Y;S)=H(S)\le\log M$ because $S$ is a deterministic quantizer with at
most $M$ values. Hence $R^\star(\varepsilon)\le\log M$ whenever
$D^{\mathrm{av}}_M<\varepsilon$, and taking $M=N^{\mathrm{av}}(\varepsilon)$
gives the bound. The strict inequality in the definition
of $N^{\mathrm{av}}$ is what guarantees a feasible codebook without an
attainment assumption; if only $D^{\mathrm{av}}_M\le\varepsilon$ is known,
the same argument gives $R^\star(\varepsilon+\eta)\le\log M$ for every $\eta>0$.
\end{proof}

\begin{corollary}[Uniform posterior covering]
\label{cor:unif_cover}
Let $N^{\mathrm{unif}}_\KL(\varepsilon)$ be the least number of probability-measure centers $Q_j$ with $\min_j\KL(\Pi_y\Vert Q_j)\le\varepsilon$ for $P_Y$-almost every $y$. If this number is finite, a deterministic summary achieves distortion at most $\varepsilon$ and rate at most $\log N^{\mathrm{unif}}_\KL(\varepsilon)$.
\end{corollary}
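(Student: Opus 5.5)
The plan is to build the summary directly as a nearest-center quantizer from a finite uniform covering; the only real work is the compensation identity. Put $N:=N^{\mathrm{unif}}_\KL(\varepsilon)<\infty$ and choose centers $Q_1,\ldots,Q_N$ with $\min_j\KL(\Pi_y\Vert Q_j)\le\varepsilon$ for $P_Y$-almost every $y$. If the least number is only approached rather than attained, note that $N$ is an integer infimum over a nonempty set, so it is attained. Define the nearest-center rule $g(y):=\min\argmin_{j\le N}\KL(\Pi_y\Vert Q_j)$, exactly as in the proof of Theorem~\ref{thm:avg_quant}. Measurability of $g$ follows from the standing assumption that $y\mapsto\KL(\Pi_y\Vert Q_j)$ is measurable, with the same tie-breaking description of the cells $\{g=j\}$.

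Next, bound the distortion of $S:=g(Y)$. By construction, $\KL(\Pi_Y\Vert Q_S)=\min_j\KL(\Pi_Y\Vert Q_j)\le\varepsilon$ almost surely. Hence $\mathbb E\,\KL(\Pi_Y\Vert Q_S)\le\varepsilon<\infty$, so the compensation identity \eqref{eq:compensation} of Lemma~\ref{lem:identity} applies with reconstruction $Q_S$. It gives
\[
\Delta_\KL(S)=\mathbb E\,\KL(\Pi_Y\Vert Q_S)-\mathbb E\,\KL(\Pi_S\Vert Q_S)\le\varepsilon.
\]
Equivalently, we can argue through the average quantization number: $D^{\mathrm{av}}_N\le\varepsilon$, and the nearest-center part of the proof of Theorem~\ref{thm:avg_quant} gives a deterministic $g$ achieving $\Delta_\KL(g(Y))\le\mathbb E\min_j\KL(\Pi_Y\Vert Q_j)$. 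This route avoids the strict-inequality attainment issue flagged after that proof, because here we have a concrete codebook rather than only an infimum.

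Finally, bound the rate. $S$ is a deterministic quantizer with at most $N$ values, so Lemma~\ref{lem:deterministic_atomic} gives $I(Y;S)=H(S)\le\log N$. I expect no serious obstacle. The only point needing care is that the uniform covering holds only almost surely, and the exceptional null set does not affect the expectations.
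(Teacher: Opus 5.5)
Your proposal is correct and follows essentially the same route as the paper. Both take the nearest-center quantizer of a finite uniform cover, apply the compensation identity \eqref{eq:compensation} to get $\Delta_\KL(S)\le\mathbb E\min_j\KL(\Pi_Y\Vert Q_j)\le\varepsilon$, and bound the rate by $H(S)\le\log N$. Your remark that the least $N$ is attained, since it is the minimum of a nonempty set of positive integers, is a harmless detail the paper leaves implicit.
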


\begin{proof}[Proof of Corollary \ref{cor:unif_cover}]
Let $Q_1,\ldots,Q_N$ be a uniform cover with $N=N^{\mathrm{unif}}_\KL(\varepsilon)$,
and let $S=g(Y)$ be its nearest-center quantizer as in the proof above.
The pointwise bound $\min_j\KL(\Pi_y\Vert Q_j)\le\varepsilon$ for
$P_Y$-almost every $y$ gives $\mathbb E\min_j\KL(\Pi_Y\Vert Q_j)\le\varepsilon$,
and the compensation step gives $\Delta_\KL(S)\le\mathbb E\min_j\KL(\Pi_Y\Vert Q_j)\le\varepsilon$.
The rate is $H(S)\le\log N$. Since the codebook is given, no infimum has
to be approximated.
\end{proof}

\begin{proposition}[Unbounded Gaussian posterior means]
\label{prop:no_unif_cover}
For $\mathcal P=\{N(m,vI_d):m\in\mathbb R^d\}$, fixed $v>0$, and a full-support predictive law for $m$, every finite codebook has infinite essential-uniform forward-KL radius. With normalized squared Hellinger distance $\Hel^2(P,Q)=\tfrac12\int(\sqrt p-\sqrt q)^2$, no finite uniform cover of radius $r<1$ exists.
\end{proposition}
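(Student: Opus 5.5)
The plan is to show that the posterior means of a finite codebook's nearest center can be forced arbitrarily far away on a set of positive predictive probability. For the forward-KL claim, fix a finite codebook $Q_1,\ldots,Q_N$. If some $Q_j$ is not absolutely continuous with respect to Lebesgue measure on a relevant set, the divergence $\KL(N(m,vI_d)\Vert Q_j)$ is already $+\infty$ for every $m$, because each Gaussian has a positive density everywhere. We may therefore assume every $Q_j$ is a probability measure with Lebesgue density $q_j$. The key estimate is a lower bound of the form
\[
\KL(N(m,vI_d)\Vert Q_j)\ \ge\ -H\bigl(N(m,vI_d)\bigr)-\log Q_j\bigl(B(m,\rho)\bigr)\cdot c-\text{const},
\]
obtained more cleanly through data processing. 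Apply the indicator of the ball $B(m,\rho)$, where the radius $\rho$ is chosen so that $N(m,vI_d)(B(m,\rho))\ge 1/2$ uniformly in $m$. The binary divergence then gives
\[
\KL(N(m,vI_d)\Vert Q_j)\ \ge\ \kld\bigl(\tfrac12\,\Vert\,Q_j(B(m,\rho))\bigr)\ \ge\ \tfrac12\log\frac{1}{Q_j(B(m,\rho))}-\log2,
\]
and the same bound holds with $1/2$ replaced by the exact ball mass.

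Since each $Q_j$ is a probability measure, $Q_j(B(m,\rho))\to0$ as $|m|\to\infty$ by tightness. Hence $\min_j\KL(N(m,vI_d)\Vert Q_j)\to\infty$ as $|m|\to\infty$, uniformly over the finitely many $j$. Given any level $K$, the set $\{m:\min_j\KL>K\}$ contains the complement of a large ball. That complement is open and nonempty, so it has positive predictive probability by the full-support assumption. The essential supremum of the radius therefore exceeds every $K$, which shows it is infinite.

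For the Hellinger claim, I would use $1-\Hel^2(P,Q)=\int\sqrt{pq}$. Cauchy--Schwarz on $B(m,\rho)$ and on its complement gives
\[
\int\sqrt{pq}\le\sqrt{P(B)Q(B)}+\sqrt{P(B^c)}.
\]
Choose $\rho$ so that $P(B^c)\le\eta^2$ for all $m$, which is possible by translation invariance. Then $1-\Hel^2\le\sqrt{Q_j(B(m,\rho))}+\eta$, and this is below $1-r$ once $\eta<(1-r)/2$ and $|m|$ is large enough, by tightness of each $Q_j$. Here $Q_j$ may be arbitrary, with no density assumption needed, using the absolutely continuous part. Full support again gives this a positive mass of means, so no finite cover of radius $r<1$ exists.

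The only delicate step is the uniformity in $m$ of the ball mass, which is immediate because the family is a translation family with fixed covariance. Tightness of the finitely many centers then does the rest.
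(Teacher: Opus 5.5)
Your proposal is correct and follows essentially the same route as the paper. The common steps are a ball around $m$ with Gaussian mass at least $1/2$ uniformly by translation invariance, tightness of the finitely many centers, the two-cell data-processing bound $\KL(P_m\Vert Q_j)\ge\tfrac12\log(1/Q_j(B))-\log2$, full support of the predictive law, and Cauchy--Schwarz on the affinity for the Hellinger claim, where splitting over $B(m,\rho)$ instead of the paper's fixed compact set $K$ is an equivalent choice.

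One correction: your opening reduction is misstated. Finiteness of $\KL(P_m\Vert Q_j)$ requires only $P_m\ll Q_j$, not that $Q_j$ have a Lebesgue density; a Gaussian plus an atom gives a finite divergence. The reduction is never used, though, because your data-processing bound holds for arbitrary $Q_j$.
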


\begin{proof}[Proof of Proposition \ref{prop:no_unif_cover}]
Both claims rest on the same picture: a finite codebook is tight, so it
places little mass far from the origin, while the Gaussian $N(m,vI_d)$
places half its mass near $m$; for large $\Vert m\Vert$ these two regions
are disjoint. Write $P_m:=N(m,vI_d)$.

Fix centers $Q_1,\ldots,Q_M$ and $\delta\in(0,1)$. By translation
invariance, choose $A>0$ with $P_m(B(m,A))=N(0,vI_d)(B(0,A))\ge1/2$ for
every $m$. Each $Q_j$ is tight, and a finite union of compact sets is
compact, so there is a compact $K$ with $Q_j(K^{\mathrm c})<\delta$ for
all $j$. Let $\rho_K:=\sup_{x\in K}\Vert x\Vert$. If
$\Vert m\Vert>A+\rho_K$, the ball $B(m,A)$ does not meet $K$, so
$B(m,A)\subseteq K^{\mathrm c}$ and
\[
 P_m(B(m,A))\ge\tfrac12,\qquad Q_j(B(m,A))\le Q_j(K^{\mathrm c})<\delta .
\]
Data processing for the two-cell partition $\{B(m,A),B(m,A)^{\mathrm c}\}$
bounds the divergence by the binary divergence of the two cell
probabilities, and with $p:=P_m(B(m,A))\ge1/2$, $q:=Q_j(B(m,A))<\delta$,
\[
 \KL(P_m\Vert Q_j)\ \ge\ \kld(p,q)
 =p\log\frac pq+(1-p)\log\frac{1-p}{1-q}
 \ \ge\ p\log\frac1q-h_2(p)
 \ \ge\ \tfrac12\log\frac1\delta-\log2,
\]
where the middle step drops the nonnegative term $(1-p)\log\frac1{1-q}$
and the last uses $p\ge1/2$, $q<\delta$, and $h_2\le\log2$. This bound
holds for every $j$ and every $m$ with $\Vert m\Vert>A+\rho_K$. That set
of means is open and nonempty, hence has positive probability under the
full-support predictive law, so the essential supremum of
$\min_j\KL(P_m\Vert Q_j)$ is at least $\tfrac12\log(1/\delta)-\log2$.
Letting $\delta\downarrow0$ shows the essential-uniform radius is infinite.

For the Hellinger claim, write the affinity as
$\mathcal A(P,Q):=\int\sqrt{pq}$, so that $\Hel^2(P,Q)=1-\mathcal A(P,Q)$.
Fix $\delta\in(0,1/4)$ and again a compact $K$ with $Q_j(K^{\mathrm c})<\delta$
for all $j$. Since $P_m(K)=N(0,vI_d)(K-m)$ and $K-m$ moves away from the
origin as $\Vert m\Vert\to\infty$, there is $A'$ with $P_m(K)<\delta$
whenever $\Vert m\Vert>A'+\rho_K$. Splitting the affinity over $K$ and
$K^{\mathrm c}$ and applying the Cauchy--Schwarz inequality on each piece,
\[
 \mathcal A(P_m,Q_j)
 \le\sqrt{P_m(K)\,Q_j(K)}+\sqrt{P_m(K^{\mathrm c})\,Q_j(K^{\mathrm c})}
 \le\sqrt\delta+\sqrt\delta=2\sqrt\delta .
\]
Hence $\Hel^2(P_m,Q_j)\ge1-2\sqrt\delta$ for all $j$ on the
positive-probability set $\{\Vert m\Vert>A'+\rho_K\}$. Given any radius
$r<1$, choose $\delta$ so small that $1-2\sqrt\delta>r^2$; then
$\min_j\Hel(P_m,Q_j)>r$ on a set of positive predictive probability, so
the codebook is not a uniform cover of radius $r$.
\end{proof}

\section{Hellinger covering and the local modulus}
\label{app:hellinger}

This extension makes the quantization argument usable when posterior geometry is controlled in Hellinger distance. An explicit comparison between the two divergences then converts the geometric bound into a posterior KL guarantee.

We use normalized squared Hellinger distance $\Hel^2(P,Q)=\tfrac12\int(\sqrt p-\sqrt q)^2\,d\mu$. Let $\mathcal P$ be the essential posterior family, $\Mix(\mathcal P)$ its measurable mixtures, and
\[
\omega(r)=\sup\big\{\KL(P\Vert Q):P\in\mathcal P,\ Q\in\Mix(\mathcal P),\ \Hel(P,Q)\le r\big\}.
\]

\begin{proposition}
\label{prop:hellinger_cover}
A uniform Hellinger cover of radius $r$ with $M$ centres yields a deterministic summary with
$I(Y;S)\le\log M$, $\Delta_{\Hel}(S):=\mathbb{E}\,\Hel^2(\Pi_Y,\Pi_S)\le4r^2$ and $\Delta_\KL(S)\le\omega(2r)$. For an arbitrary $M$-centre codebook with expected squared nearest-centre error $e$, the nearest-centre quantizer satisfies $\Delta_{\Hel}(S)\le4e$, and if $\KL(P\Vert Q)\le C\,\Hel^2(P,Q)$ holds for every relevant posterior--mixture pair then $\Delta_\KL(S)\le4Ce$.
\end{proposition}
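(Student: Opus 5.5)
We follow the nearest-centre construction from the proof of Theorem~\ref{thm:avg_quant}, with Hellinger distance in place of forward KL. Given centres $Q_1,\ldots,Q_M$, set $g(y):=\min\argmin_{j\le M}\Hel(\Pi_y,Q_j)$; this is measurable by the same argument as before. Put $S:=g(Y)$. Since $S$ is deterministic with at most $M$ values, Lemma~\ref{lem:deterministic_atomic} gives $I(Y;S)=H(S)\le\log M$. The rest of the argument compares $\Pi_Y$ with the cell barycenter $\Pi_S=\mathbb E[\Pi_Y\mid S]$ from Lemma~\ref{lem:identity}, using the selected centre $Q_S$ as the intermediate point.

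\textbf{Step 1 (Hellinger bound).} Hellinger distance is a metric and $\Hel^2(\cdot,Q)$ is convex. The triangle inequality gives
\[
\Hel(\Pi_Y,\Pi_S)\le\Hel(\Pi_Y,Q_S)+\Hel(Q_S,\Pi_S),
\]
hence $\Hel^2(\Pi_Y,\Pi_S)\le2\Hel^2(\Pi_Y,Q_S)+2\Hel^2(Q_S,\Pi_S)$. Conditionally on $S=j$, the barycenter is $\Pi_j=\mathbb E[\Pi_Y\mid S=j]$. Jensen's inequality for the convex map $P\mapsto\Hel^2(P,Q_j)$, which is jointly convex as an $f$-divergence, gives
\[
\Hel^2(\Pi_j,Q_j)\le\mathbb E[\Hel^2(\Pi_Y,Q_j)\mid S=j].
\]
Taking expectations, $\Delta_{\Hel}(S)\le4\,\mathbb E\Hel^2(\Pi_Y,Q_S)=4e$. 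Under a uniform cover of radius $r$, $e\le r^2$, which yields the bound $4r^2$.

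\textbf{Step 2 (uniform case, KL bound via $\omega$).} This step needs a pointwise bound. The conditional Jensen step above is only an average, so it does not give $\Hel(\Pi_j,Q_j)\le r$ pointwise. Under a uniform cover, however, every $\Pi_y$ in cell $j$ lies in the Hellinger ball of radius $r$ around $Q_j$. The square of the Hellinger distance is convex and balls of the Hellinger metric are convex (being sublevel sets of $\Hel^2(\cdot,Q_j)$), so the mixture $\Pi_j$ also lies in that ball. The triangle inequality then gives $\Hel(\Pi_y,\Pi_j)\le2r$ for a.e.\ $y$ in the cell. Moreover, $\Pi_y\in\mathcal P$ and $\Pi_j\in\Mix(\mathcal P)$. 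By the definition of $\omega$, $\KL(\Pi_Y\Vert\Pi_S)\le\omega(2r)$ almost surely, and taking expectations gives $\Delta_\KL(S)\le\omega(2r)$.

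\textbf{Step 3 (arbitrary codebook).} The pair $(\Pi_Y,\Pi_S)$ is a posterior--mixture pair. Applying the hypothesis $\KL\le C\Hel^2$ pointwise to it and taking expectations gives $\Delta_\KL(S)\le C\Delta_{\Hel}(S)\le4Ce$ by Step 1.

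\textbf{Main obstacle.} The delicate point is Step 2's passage from uniform closeness of the $\Pi_y$ to closeness of their barycenter. This requires convexity of Hellinger balls under measurable (infinite) mixtures, not just finite ones. We would verify it directly from $\Hel^2(P,Q)=1-\int\sqrt{pq}$, using that the affinity $\int\sqrt{pq}$ is concave in $p$; concavity under integral mixtures then follows from Jensen applied pointwise in the density, together with Fubini. We also need the standing measurability of $y\mapsto\Hel(\Pi_y,Q_j)$, which we assume as for the KL case.
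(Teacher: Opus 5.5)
Your proposal is correct and follows essentially the same route as the paper: the nearest-centre quantizer with rate $H(S)\le\log M$, the triangle inequality through the selected centre $Q_S$, Jensen's inequality for the convex map $P\mapsto\Hel^2(P,Q_j)$ applied to the cell barycenter $\Pi_j$, and then the modulus $\omega$ or the comparison $\KL\le C\,\Hel^2$ applied pointwise. One correction: you say the conditional Jensen step does not give $\Hel(\Pi_j,Q_j)\le r$, but each cell has a single barycenter, so $\Hel^2(\Pi_j,Q_j)\le\mathbb E[\Hel^2(\Pi_Y,Q_j)\mid S=j]\le r^2$ follows at once from the a.s.\ cover bound; this is exactly how the paper obtains the pointwise $2r$ estimate, so your ``main obstacle'' is already handled by the Jensen inequality in Step~1.
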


\begin{proof}
Let $Q_1,\ldots,Q_M$ be the codebook and let $S=g(Y)$ be its
nearest-centre quantizer, $g(y)=\min\argmin_j\Hel(\Pi_y,Q_j)$, which is
measurable as in the proof of Theorem~\ref{thm:avg_quant}. Its rate is
$H(S)\le\log M$. Write $\Pi_j:=\Pi_{S=j}=\mathbb E[\Pi_Y\mid S=j]$ for the
cell posterior, which is a mixture of posteriors in cell $j$ and hence
an element of $\Mix(\mathcal P)$. The argument compares $\Pi_Y$ with
$\Pi_S$ through the centre $Q_S$, using two facts about the Hellinger
distance: it is a metric, and its square is convex in each argument.

The convexity is seen from the affinity: $\Hel^2(P,Q)=1-\int\sqrt{pq}\,d\mu$,
and for fixed $q$ the map $p\mapsto\sqrt{pq}$ is concave pointwise, so
$P\mapsto\Hel^2(P,Q)$ is convex. The integral form of Jensen's inequality
applied to the mixture $\Pi_j$ gives, for each positive-mass cell,
\begin{equation}
\label{eq:hellinger_cell_jensen}
 \Hel^2(Q_j,\Pi_j)\ \le\ \mathbb E\big[\Hel^2(Q_j,\Pi_Y)\,\big|\,S=j\big].
\end{equation}

We first treat the uniform cover. Its defining property is
$\Hel(\Pi_y,Q_{g(y)})\le r$ for $P_Y$-almost every $y$, so the right side
of~\eqref{eq:hellinger_cell_jensen} is at most $r^2$ and
$\Hel(Q_j,\Pi_j)\le r$ for every cell. The triangle inequality then gives,
almost surely,
\[
 \Hel(\Pi_Y,\Pi_S)\le\Hel(\Pi_Y,Q_S)+\Hel(Q_S,\Pi_S)\le r+r=2r .
\]
Squaring and taking expectations gives $\Delta_{\Hel}(S)\le4r^2$. For the
divergence, $\Pi_Y\in\mathcal P$ and $\Pi_S\in\Mix(\mathcal P)$ are at
Hellinger distance at most $2r$, so by the definition of the modulus
$\KL(\Pi_Y\Vert\Pi_S)\le\omega(2r)$ almost surely, and
$\Delta_\KL(S)\le\omega(2r)$.

We now treat an arbitrary codebook with expected squared nearest-centre
error $e=\mathbb E\min_j\Hel^2(\Pi_Y,Q_j)=\mathbb E\,\Hel^2(\Pi_Y,Q_S)$.
The triangle inequality and $(a+b)^2\le2a^2+2b^2$ give, pointwise,
\[
 \Hel^2(\Pi_Y,\Pi_S)\le2\Hel^2(\Pi_Y,Q_S)+2\Hel^2(Q_S,\Pi_S).
\]
Take conditional expectations given $S=j$ and
use~\eqref{eq:hellinger_cell_jensen} on the second term:
\[
 \mathbb E\big[\Hel^2(\Pi_Y,\Pi_S)\,\big|\,S=j\big]
 \le2\,\mathbb E\big[\Hel^2(\Pi_Y,Q_j)\,\big|\,S=j\big]+2\Hel^2(Q_j,\Pi_j)
 \le4\,\mathbb E\big[\Hel^2(\Pi_Y,Q_j)\,\big|\,S=j\big].
\]
Averaging over the cells gives $\Delta_{\Hel}(S)\le4e$. If
$\KL(P\Vert Q)\le C\,\Hel^2(P,Q)$ for every $P\in\mathcal P$ and
$Q\in\Mix(\mathcal P)$, apply it pointwise to the pair $(\Pi_Y,\Pi_S)$ and
take expectations: $\Delta_\KL(S)\le C\,\Delta_{\Hel}(S)\le4Ce$.
\end{proof}

A local quadratic modulus is only usable when individual distances are controlled on its range of validity, which a uniform cover supplies and an average Hellinger bound does not. A checkable sufficient condition for the global comparison is that the density ratios $t=p/q$ of every relevant pair lie in a compact subinterval of $(0,\infty)$: the function $t\mapsto(t\log t-t+1)/(\sqrt t-1)^2$, extended by the value $2$ at $t=1$, is then bounded, and integrating gives $\KL\le C\,\Hel^2$. Conversely, unbounded likelihood ratios alone do not establish that every local comparison fails. We record this route because Hellinger entropy is the standard currency in nonparametric Bayesian theory, but we do not import covering rates for sampling models into posterior geometry: the two are related only through an explicit map and tail control, and Proposition~\ref{prop:no_unif_cover} shows the translation can fail.

\section{Further converses}
\label{app:converses}

\subsection{Proof of the posterior small-ball bound}
\begin{proof}[Proof of Theorem \ref{thm:smallball}]
The bound compares the joint law of the random posterior and its
reconstruction with the corresponding product law on a single event, the
ball of radius $t$. Under the joint law the ball is likely, because the
average distortion is small; under the product law it is unlikely, because
no single reconstruction captures more than $\mathcal Q_Z(t)$ of the
predictive mass.

Let $S$ be any summary with $\Delta_\KL(S)\le\varepsilon$, put
$\widehat Z:=\Pi_S$, and define the event
$E_t:=\{(z,Q):\KL(z\Vert Q)\le t\}$ in the product of two copies of the
space of probability measures on $\Theta$. Write
\[
 p:=P_{Z,\widehat Z}(E_t)=\mathbb P\big(\KL(Z\Vert\widehat Z)\le t\big),
 \qquad
 q:=(P_Z\otimes P_{\widehat Z})(E_t).
\]
For the product probability, Tonelli's theorem integrates the ball
probability of a fixed reconstruction against the law of $\widehat Z$:
\[
 q=\int\mathbb P\{\KL(Z\Vert Q)\le t\}\,P_{\widehat Z}(dQ)
 \ \le\ \sup_Q\mathbb P\{\KL(Z\Vert Q)\le t\}=\mathcal Q_Z(t).
\]
For the joint probability, Markov's inequality applied to the nonnegative
variable $\KL(Z\Vert\widehat Z)$ gives
\[
 1-p=\mathbb P\big(\KL(Z\Vert\widehat Z)>t\big)
 \le\frac{\mathbb E\,\KL(Z\Vert\widehat Z)}{t}=\frac{\Delta_\KL(S)}{t}\le\frac\varepsilon t,
\]
so $p\ge a:=1-\varepsilon/t>0$.

Now compare the two laws. Since $Z$ is a function of $Y$ and $\widehat Z$
a function of $S$, data processing gives $I(Y;S)\ge I(Z;\widehat Z)$.
The mutual information $I(Z;\widehat Z)=\KL(P_{Z,\widehat Z}\Vert P_Z\otimes P_{\widehat Z})$
can only decrease when both laws are reduced to the two-cell partition
$\{E_t,E_t^{\mathrm c}\}$, where it becomes the binary divergence of the
cell probabilities:
\[
 I(Z;\widehat Z)\ \ge\ \kld(p,q)
 =p\log\frac pq+(1-p)\log\frac{1-p}{1-q}.
\]
Dropping the nonnegative term $(1-p)\log\frac1{1-q}$ and rearranging,
\[
 \kld(p,q)\ \ge\ p\log\frac1q-h_2(p)
 \ \ge\ p\log\frac1{\mathcal Q_Z(t)}-\log2
 \ \ge\ a\log\frac1{\mathcal Q_Z(t)}-\log2,
\]
using $q\le\mathcal Q_Z(t)<1$, $h_2\le\log2$, and $p\ge a$ together with
$\log(1/\mathcal Q_Z(t))>0$. Since $S$ was an arbitrary feasible summary
and rates are nonnegative, this yeilds the bound.

For the strengthening, recall that $\kld(p,q)$ is nondecreasing in $p$ on
$p\ge q$ and nonincreasing in $q$ on $q\le p$. If $a>\mathcal Q_Z(t)$, then
$p\ge a>\mathcal Q_Z(t)\ge q$, so both monotonicity properties apply and
$\kld(p,q)\ge\kld(a,q)\ge\kld(a,\mathcal Q_Z(t))$.

No metric or density structure was used: only the measurability of the
event $E_t$ and the nonnegativity of the distortion, so the argument
applies verbatim to any jointly measurable nonnegative distortion.
\end{proof}

\subsection{A Laplace form}

For $s>0$ let $\mathcal A_Z(s):=\sup_Q\mathbb{E}\exp\{-s\KL(Z\Vert Q)\}$.

\begin{proposition}
\label{prop:laplace}
$R^\star(\varepsilon)\ \ge\ \big[\sup_{s>0}\{-s\varepsilon-\log\mathcal A_Z(s)\}\big]_+$, and the statement holds for any measurable nonnegative distortion and arbitrary reconstructions.
\end{proposition}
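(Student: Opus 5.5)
The plan mirrors the proof of Theorem~\ref{thm:smallball}, replacing the single ball event by an exponential tilt through the Donsker--Varadhan variational formula. Fix $s>0$ and any summary $S$ with $\Delta_\KL(S)\le\varepsilon$, or more generally any channel $P_{\widehat Z\mid Z}$ with reconstruction $\widehat Z$ and $\mathbb E\,d(Z,\widehat Z)\le\varepsilon$ for a measurable nonnegative distortion $d$; for the KL case take $\widehat Z=\Pi_S$, or any measurable reconstruction $Q_S$. By data processing, exactly as in Proposition~\ref{prop:reduction}, $I(Y;S)\ge I(Z;\widehat Z)=\KL(P_{Z\widehat Z}\Vert P_Z\otimes P_{\widehat Z})$. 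It therefore suffices to lower bound the latter divergence.

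Apply Donsker--Varadhan with the test function $g(z,Q):=-s\,d(z,Q)$. This function is bounded above by $0$, so $e^{g}\le1$ is integrable, and $g$ is bounded below in expectation under the joint law because $\mathbb E\,d\le\varepsilon<\infty$. The variational inequality gives
\[
 I(Z;\widehat Z)\ \ge\ \mathbb E_{P_{Z\widehat Z}}[g]-\log\mathbb E_{P_Z\otimes P_{\widehat Z}}[e^{g}]
 \ \ge\ -s\varepsilon-\log\int\mathbb E_Z\exp\{-s\,d(Z,Q)\}\,P_{\widehat Z}(dQ).
\]
Tonelli's theorem bounds the inner integral by $\sup_Q\mathbb E\exp\{-s\,d(Z,Q)\}=\mathcal A_Z(s)$, and $\log$ is monotone. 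So every feasible channel has rate at least $-s\varepsilon-\log\mathcal A_Z(s)$.

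Taking the supremum over $s>0$ and using nonnegativity of the rate gives the $[\cdot]_+$ form. Infimizing over feasible summaries yields the bound on $R^\star(\varepsilon)$. Nothing specific to KL was used beyond measurability and nonnegativity of $d$, which gives the stated generality.

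The only delicate point is applying Donsker--Varadhan when $d$ may take the value $+\infty$ or be unbounded. I would handle it by applying the inequality to the truncations $\min(d,n)$, which are bounded measurable functions, and letting $n\to\infty$. On the left, $\mathbb E\min(d,n)\le\mathbb E d\le\varepsilon$, so the bound $-s\varepsilon$ holds at every $n$. On the right, monotone convergence gives $\mathbb E e^{-s\min(d,n)}\downarrow\mathbb E e^{-sd}$ under the product law. Together these yield the displayed inequality in the limit.
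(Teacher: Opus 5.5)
Your proof is correct and follows the paper's argument. It uses the same ingredients: the Gibbs/Donsker--Varadhan inequality with test function $-s\,d$, Tonelli to bound the log-partition term by $\log\mathcal A_Z(s)$, data processing to pass to $I(Y;S)$, and truncation of the unbounded distortion, as the paper does with $f_N=\max\{f,-N\}$. The only difference is cosmetic: you apply the inequality once to $P_{Z\widehat Z}$ against $P_Z\otimes P_{\widehat Z}$, whereas the paper applies it to each conditional law $P_{Z\mid S=s_0}$ against $P_Z$ and then averages, which gives an intermediate bound with $\mathbb E\log$ in place of your $\log\mathbb E$ (tighter by Jensen) but the same final inequality once $\mathcal A_Z(s)$ is substituted.
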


\begin{proof}
The tool is the Gibbs variational inequality: for probability measures
$P\ll\mu$ and a measurable $f\le0$,
\begin{equation}
\label{eq:gibbs_variational}
 \KL(P\Vert\mu)\ \ge\ \int f\,dP-\log\int e^f\,d\mu .
\end{equation}
For bounded $f$ this follows by writing $\mu_f(dz):=e^{f(z)}\mu(dz)/\int e^f\,d\mu$
and computing
\[
 \KL(P\Vert\mu)-\Big[\int f\,dP-\log\int e^f\,d\mu\Big]
 =\int\log\Big(\frac{dP}{d\mu}\,e^{-f}\int e^f\,d\mu\Big)dP
 =\KL(P\Vert\mu_f)\ \ge\ 0 .
\]
For unbounded $f\le0$, apply this to the truncations $f_N:=\max\{f,-N\}$;
as $N\to\infty$, $\int f_N\,dP\downarrow\int f\,dP$ by monotone convergence
and $\int e^{f_N}d\mu\to\int e^f\,d\mu$ by dominated convergence with the
bound $e^{f_N}\le1$, so the right side of \eqref{eq:gibbs_variational}
for $f_N$ converges to that for $f$.

Fix $s>0$, a summary $S$, and a measurable reconstruction $Q_S$. Apply
\eqref{eq:gibbs_variational} with $\mu:=P_Z$, $P:=P_{Z\mid S=s_0}$, and
$f(z):=-s\KL(z\Vert Q_{s_0})\le0$:
\begin{align*}
 \KL(P_{Z\mid S=s_0}\Vert P_Z)
 &\ \ge\ -s\,\mathbb E\big[\KL(Z\Vert Q_S)\,\big|\,S=s_0\big]
 -\log\mathbb E_{Z\sim P_Z}\exp\{-s\KL(Z\Vert Q_{s_0})\}\\
 &\ \ge\ -s\,\mathbb E\big[\KL(Z\Vert Q_S)\,\big|\,S=s_0\big]-\log\mathcal A_Z(s),
\end{align*}
where the last step bounds the Laplace transform of the fixed
reconstruction $Q_{s_0}$ by its supremum over reconstructions. Averaging
over $s_0$ with respect to the law of $S$ turns the left side into
$I(Z;S)=\mathbb E\,\KL(P_{Z\mid S}\Vert P_Z)$, and data processing gives
\[
 I(Y;S)\ \ge\ I(Z;S)\ \ge\ -s\,\mathbb E\,\KL(Z\Vert Q_S)-\log\mathcal A_Z(s).
\]
Now take $Q_S:=\Pi_S$ and a feasible summary, so that
$\mathbb E\,\KL(Z\Vert Q_S)=\Delta_\KL(S)\le\varepsilon$; then
$I(Y;S)\ge-s\varepsilon-\log\mathcal A_Z(s)$. Taking the supremum over
$s>0$ and the infimum over feasible summaries gives the claim, and the
bracket $[\cdot]_+$ records that rates are nonnegative. The quantity
$\mathcal A_Z(s)$ is strictly positive, since the choice $Q=\pi$ gives
$\KL(Z\Vert\pi)<\infty$ almost surely because
$\mathbb E\,\KL(Z\Vert\pi)=I(\theta;Y)<\infty$; and it is at most one, so
$-\log\mathcal A_Z(s)\ge0$.
\end{proof}

This is the general Shannon lower bound of \cite{riegler2018rate} with the prior-predictive law of the random posterior as reference measure, in which case the generalized entropy term vanishes. It can be stronger than the small ball bound of Theorem \ref{thm:smallball} because it integrates the whole distortion profile rather than thresholding it.

\subsection{Matching exponents under small-ball regularity}

Suppose $\mathcal Q_Z(t)\le Ct^\alpha$ for all $t>0$. For any reconstruction
$Q$, put $X:=\KL(Z\Vert Q)\ge0$ and write the Laplace transform as a
layer-cake integral, substituting $u=e^{-st}$:
\begin{align*}
 \mathbb E\,e^{-sX}=\int_0^1\mathbb P\big(e^{-sX}\ge u\big)\,du
 &=\int_0^\infty\mathbb P(X\le t)\,se^{-st}\,dt\\
 &\le\int_0^\infty Ct^\alpha se^{-st}\,dt
 =C\,\Gamma(\alpha+1)\,s^{-\alpha}.
\end{align*}
Taking the supremum over $Q$ gives $\mathcal A_Z(s)\le C\Gamma(\alpha+1)s^{-\alpha}$,
so Proposition~\ref{prop:laplace} yields
$R^\star(\varepsilon)\ge-s\varepsilon+\alpha\log s-\log\{C\Gamma(\alpha+1)\}$
for every $s>0$. The right side is maximized at $s=\alpha/\varepsilon$, where
it equals
\[
R^\star(\varepsilon)\ \ge\ \alpha\log\frac{\alpha}{\varepsilon}-\alpha-\log\{C\Gamma(\alpha+1)\}.
\]
If in addition $D^{\mathrm{av}}_M\le KM^{-1/\alpha}$, Theorem~\ref{thm:avg_quant} gives the matching upper bound and $R^\star(\varepsilon)=\alpha\log(1/\varepsilon)+O(1)$ as $\varepsilon\downarrow0$. These are explicit sufficient conditions for a matching exponent for a fixed law, not a general theorem; for a sequence of experiments indexed by $n$ all dependence of $C,K,\alpha$ on $n$ must be tracked, and the conditions exclude a finite posterior source at resolutions below its atom scale.

\subsection{A mass-sensitive cluster converse}

\begin{proposition}
\label{prop:cluster}
Let $A_1,\dots,A_M$ be disjoint measurable data sets with $M\ge2$, $A=\bigcup_jA_j$, $m=P_Y(A)>0$ and $q_j=P_Y(A_j)/m$. Suppose $\Hel(\Pi_y,P_j)\le\rho$ for $y\in A_j$ and $\Hel(P_i,P_j)\ge2(\rho+\eta)$ for $i\ne j$, with $\eta>0$. Then every summary satisfies
\[
\Delta_{\Hel}(S)\ \ge\ m\eta^2\left[\frac{H(q)-I(Y;S)/m-\log2}{\log M}\right]_+,
\qquad
\Delta_\KL(S)\ \ge\ 2\Delta_{\Hel}(S).
\]
\end{proposition}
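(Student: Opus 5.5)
Both inequalities reduce to statements about the random posterior $Z=\Pi_Y$ and its reconstruction $\widehat Z=\Pi_S$. The plan is a Fano-type argument for the first inequality and a pointwise divergence comparison for the second.

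\textbf{The second inequality.} This one is pointwise. For any pair $P,Q$ the standard comparison gives $\KL(P\Vert Q)\ge 2\Hel^2(P,Q)$ with the normalized Hellinger distance. It follows from $\KL\ge -2\log\int\sqrt{pq}=-2\log(1-\Hel^2)\ge 2\Hel^2$, where the first step is Jensen applied to $\log\sqrt{q/p}$ under $P$. We apply it to $(\Pi_Y,\Pi_S)$ and take expectations.

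\textbf{The first inequality: reduction to a Fano bound.} Let $J:=j$ on $\{Y\in A_j\}$ and write $E:=\{Y\in A\}$. Given $S$, define the decoder $\widehat J:=\arg\min_j\Hel(\Pi_S,P_j)$. The separation $\Hel(P_i,P_j)\ge2(\rho+\eta)$ and the triangle inequality show the following. On $E$, if $\widehat J\ne J$, then $\Hel(\Pi_S,P_J)\ge\rho+\eta$. Hence $\Hel(\Pi_Y,\Pi_S)\ge \Hel(\Pi_S,P_J)-\rho\ge\eta$. Therefore
\[
\Delta_{\Hel}(S)\ \ge\ \eta^2\,\mathbb P(E,\ \widehat J\ne J)=m\eta^2\,P_e,
\qquad P_e:=\mathbb P(\widehat J\ne J\mid E).
\]
It then suffices to show $P_e\ge [H(q)-I(Y;S)/m-\log2]_+/\log M$.

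\textbf{Fano step.} Work conditionally on $E$, where $J\sim q$ has entropy $H(q)$. Fano's inequality gives
\[
H(q)-I(J;\widehat J\mid E)=H(J\mid \widehat J,E)\le \log2+P_e\log M .
\]
Here $h_2(P_e)\le\log 2$, and the case $\widehat J$ outside $[M]$ does not arise. Data processing along $J\to Y\to S\to\widehat J$ under the conditional law gives $I(J;\widehat J\mid E)\le I(Y;S\mid E)$. This conditioning must be handled with care, because conditioning on $E$ is conditioning on a function of $Y$, so the Markov chain $Y\to S$ persists given $E=1$. The main obstacle is then the bound $I(Y;S\mid E{=}1)\le I(Y;S)/m$. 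Let $\mathbf 1_E$ be a function of $Y$. Then
\[
I(Y;S)=I(Y;S\mid \mathbf 1_E)+I(\mathbf 1_E;S)\ge m\,I(Y;S\mid \mathbf 1_E=1).
\]
This holds by the chain rule, since $I(Y,\mathbf 1_E;S)=I(Y;S)$ and $I(\mathbf 1_E;S)\ge0$. It gives exactly the required $I(Y;S)/m$.

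Combining the three steps yields $P_e\log M\ge H(q)-I(Y;S)/m-\log2$. Multiplying by $m\eta^2$ proves the bound, and the positive part is harmless because $P_e\ge0$. No attainment or density assumptions are needed beyond measurability of the nearest-center decoder, which follows as in the proof of Theorem~\ref{thm:avg_quant}.
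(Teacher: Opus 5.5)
Your proposal is correct and follows essentially the same route as the paper: a Hellinger-ball decoder for the cluster index, Fano's inequality conditional on $A$, the chain-rule bound relating the conditional information on $A$ to $I(Y;S)/m$, a Markov-type lower bound $\Delta_{\Hel}(S)\ge m\eta^2 P_e$, and the pointwise Jensen comparison $\KL\ge 2\Hel^2$. The differences are cosmetic. Your nearest-center decoder is one admissible instance of the paper's ``unique $j$ with $\Hel(P_j,\Pi_S)<\rho+\eta$, arbitrary otherwise'' rule, and you pass through $I(Y;S\mid E)$ where the paper uses $I(J;S\mid A)$ with $I(Y;S)\ge I(\mathbf 1_A,J;S)$.
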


\begin{proof}
The argument turns a summary into a decoder of the cluster index and
applies Fano's inequality. An accurate summary posterior identifies the
cluster; so if the cluster is hard to identify from $S$, the summary
posterior is often inaccurate.

On the event $A$, let $J$ be the cluster index, $J=j$ when $Y\in A_j$,
and set $J:=0$ off $A$. Define a decoder $\widehat J(S)$ as the unique
$j$ with $\Hel(P_j,\Pi_S)<\rho+\eta$ when such a $j$ exists, and
arbitrarily otherwise. Uniqueness holds because two such indices $j\ne j'$
would give $\Hel(P_j,P_{j'})\le\Hel(P_j,\Pi_S)+\Hel(\Pi_S,P_{j'})<2(\rho+\eta)$,
contradicting the separation hypothesis. If $Y\in A_j$ and
$\Hel(\Pi_Y,\Pi_S)<\eta$, then
\[
 \Hel(P_j,\Pi_S)\le\Hel(P_j,\Pi_Y)+\Hel(\Pi_Y,\Pi_S)<\rho+\eta,
\]
so the decoder outputs $j$ correctly. Hence the conditional error
probability $e:=\mathbb P(\widehat J\ne J\mid A)$ satisfies
\[
 e\ \le\ \mathbb P\big(\Hel(\Pi_Y,\Pi_S)\ge\eta\,\big|\,A\big).
\]

Fano's inequality in its nonuniform form states that for a variable $J$
with $M$ possible values and any estimator $\widehat J$ with error
probability $e$, $H(J\mid\widehat J)\le h_2(e)+e\log(M-1)\le\log2+e\log M$.
Under the conditional law given $A$, and since $\widehat J$ is a function
of $S$,
\[
 I(J;S\mid A)=H(J\mid A)-H(J\mid S,A)\ \ge\ H(q)-H(J\mid\widehat J,A)
 \ \ge\ H(q)-\log2-e\log M .
\]
To relate this to the rate, note that $(\mathbf 1_A,J)$ is a function of
$Y$, so by data processing and the chain rule,
\[
 I(Y;S)\ \ge\ I(\mathbf 1_A,J;S)\ \ge\ I(J;S\mid\mathbf 1_A)
 =m\,I(J;S\mid A)+(1-m)\,I(J;S\mid A^{\mathrm c})=m\,I(J;S\mid A),
\]
because $J$ is constant off $A$. Combining the last two displays and
solving for $e$,
\[
 e\ \ge\ \frac{H(q)-I(Y;S)/m-\log2}{\log M}.
\]
Finally, restrict the Hellinger distortion to $A$ and apply Markov's
inequality to the event $\{\Hel(\Pi_Y,\Pi_S)\ge\eta\}$:
\[
 \Delta_{\Hel}(S)\ \ge\ m\,\mathbb E\big[\Hel^2(\Pi_Y,\Pi_S)\,\big|\,A\big]
 \ \ge\ m\eta^2\,\mathbb P\big(\Hel(\Pi_Y,\Pi_S)\ge\eta\,\big|\,A\big)
 \ \ge\ m\eta^2e .
\]
Inserting the lower bound on $e$, and noting that $\Delta_{\Hel}\ge0$
makes the positive part harmless, gives the first display.

For the second display, compare the two divergences pointwise. By Jensen's
inequality for the concave logarithm,
\begin{align*}
 \KL(P\Vert Q)=-2\,\mathbb E_P\log\sqrt{q/p}
 &\ \ge\ -2\log\mathbb E_P\sqrt{q/p}
 =-2\log\int\sqrt{pq}\\
 &=-2\log\big(1-\Hel^2(P,Q)\big)\ \ge\ 2\Hel^2(P,Q),
\end{align*}
the last step using $-\log(1-x)\ge x$. Taking expectations with
$(P,Q)=(\Pi_Y,\Pi_S)$ gives $\Delta_\KL(S)\ge2\Delta_{\Hel}(S)$.
\end{proof}

Proposition~\ref{prop:cluster} is not dominated by Theorem~\ref{thm:smallball}: it depends on the entropy $H(q)$ of the cluster weights, whereas the small-ball quantity can be controlled by the single largest cluster. It is the appropriate statement when the experiment concentrates on a few well-separated regions of unequal probability. If the law is genuinely uniform on $M$ separated posterior atoms, setting $m=1$, $\rho=0$ and $q_j=1/M$ recovers a finite Fano converse for that experiment.

\subsection{Evaluating the converses}

Both $\mathcal Q_Z(t)$ and $\mathcal A_Z(s)$ are suprema over reconstructions, and the converses require \emph{upper} bounds on them. Maximizing over a restricted family, or reporting the best value returned by a local optimizer, bounds the supremum from below and therefore yields an optimistic quantity that is not a valid converse. Repeated restarts do not repair this. Any numerical value should either certify the supremum or be reported as a candidate.

\section{Parametric rates: hypotheses and achievability}
\label{app:parametric}

\paragraph{The prior-averaged expansion.}
Lemma~\ref{lem:cb_benchmark} uses the prior-averaged entropy-risk theorem of \citet[Theorem~1, Eq.~(2.3)]{clarkeBarron1994}. Its hypotheses include an interior compact prior support with the stated continuity and integrability conditions, likelihood smoothness and score/Hessian moment bounds, positive-definite Fisher information, and identifiability. Under those conditions,
\begin{equation}
\label{eq:cb_expansion}
I(\theta;Y^n)=\frac d2\log\frac{n}{2\pi e}+h(\pi)+\frac12\mathbb E_\pi\log\det\mathcal I(\theta)+o(1).
\end{equation}
Combining this equality with the posterior-loss identity proves the corollary: for a summary with $\Delta_\KL(S_n)\le\varepsilon$, Lemma~\ref{lem:identity} and data processing give $I(Y^n;S_n)\ge I(\theta;S_n)=I(\theta;Y^n)-\Delta_\KL(S_n)\ge I(\theta;Y^n)-\varepsilon$, and substituting \eqref{eq:cb_expansion} gives \eqref{eq:cb_rate_bound}. Pointwise asymptotic normality alone does not justify averaging a remainder under the prior.

\subsection{A mean-square route to the lower bound}

\begin{proposition}
\label{prop:rootn}
Suppose $h(\pi)$ is finite and some estimator $\widehat\theta_n(Y^n)$ has $0<D_n=\mathbb{E}\Vert \theta-\widehat\theta_n\Vert^2<\infty$. Then $I(\theta;Y^n)\ge h(\pi)-\frac d2\log\frac{2\pi eD_n}{d}$. If $\limsup_n nD_n\le K\in(0,\infty)$, every summary with $\Delta_\KL(S)\le\varepsilon$ satisfies $I(Y^n;S)\ge\frac d2\log n+h(\pi)-\frac d2\log\frac{2\pi eK}{d}-\varepsilon-o(1)$.
\end{proposition}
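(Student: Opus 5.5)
The plan has two stages. The first is the classical estimation-theoretic lower bound on mutual information via the maximum-entropy property of Gaussians. The second combines that bound with the posterior-loss identity of Lemma~\ref{lem:identity} exactly as in the proof of Lemma~\ref{lem:cb_benchmark}.

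For the first claim, we use data processing along $\theta\to Y^n\to\widehat\theta_n$ to get $I(\theta;Y^n)\ge I(\theta;\widehat\theta_n)$. We then write $I(\theta;\widehat\theta_n)=h(\theta)-h(\theta\mid\widehat\theta_n)$, with $h(\theta)=h(\pi)$ finite. Translation invariance gives $h(\theta\mid\widehat\theta_n)=h(\theta-\widehat\theta_n\mid\widehat\theta_n)\le h(\theta-\widehat\theta_n)$. The error vector $E=\theta-\widehat\theta_n$ has second moment $\mathbb E\Vert E\Vert^2=D_n$, and among densities on $\mathbb R^d$ with this second moment the isotropic Gaussian $N(0,(D_n/d)I_d)$ maximizes differential entropy. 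This gives $h(E)\le\frac d2\log\frac{2\pi e D_n}{d}$, which is the stated inequality.

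The main obstacle is measure-theoretic. The difference $h(\pi)-h(\theta\mid\widehat\theta_n)$ must be meaningful, and the conditional entropy might be $-\infty$ if the conditional law is singular. I would first try to argue directly with divergences. Compare $P_{\theta\widehat\theta}$ with the reference law in which $\theta$ is replaced by $\widehat\theta_n+G$, where $G$ is Gaussian and independent. If $I(\theta;\widehat\theta_n)=\infty$ there is nothing to prove. Otherwise $P_{\theta\mid\widehat\theta}\ll$ Lebesgue almost surely, because $\pi$ has a density and $I<\infty$, so conditional entropies are defined and bounded above by the Gaussian value. The case $I(\theta;\widehat\theta_n)=\infty$ is trivially fine. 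If needed, I would also truncate to ensure integrability of $\log$ densities. The hypothesis $D_n>0$ only ensures the logarithm is finite.

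For the second claim, take any summary with $\Delta_\KL(S)\le\varepsilon$. Data processing gives $I(Y^n;S)\ge I(\theta;S)$, and Lemma~\ref{lem:identity} gives $I(\theta;S)=I(\theta;Y^n)-\Delta_\KL(S)\ge I(\theta;Y^n)-\varepsilon$. Inserting the first bound yields
\[
I(Y^n;S)\ge h(\pi)-\frac d2\log\frac{2\pi e D_n}{d}-\varepsilon
= \frac d2\log n+h(\pi)-\frac d2\log\frac{2\pi e\, nD_n}{d}-\varepsilon .
\]
Since $\limsup_n nD_n\le K$, continuity and monotonicity of the logarithm give $\log(nD_n)\le\log K+o(1)$. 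This produces the stated bound with the $o(1)$ term. If $nD_n$ is eventually below $K$, the $o(1)$ term can even be dropped.
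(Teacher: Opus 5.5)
Your proof is correct, and your second step (Lemma~\ref{lem:identity} plus $\log D_n=\log(nD_n)-\log n$) is exactly the paper's.

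For the first claim you take a somewhat different route. You pass by data processing to $I(\theta;\widehat\theta_n)$ and then invoke the classical estimation counterpart of Fano's inequality: translation invariance, $h(E\mid\widehat\theta_n)\le h(E)$, and the maximum-entropy bound for the error $E=\theta-\widehat\theta_n$.

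The paper never leaves $Y^n$. After disposing of the case $I(\theta;Y^n)=\infty$, it writes $I(\theta;Y^n)=h(\pi)-\mathbb E\,h(\Pi_Y)$, assigning entropy $-\infty$ to posteriors without a Lebesgue density. It then applies Gibbs' inequality for each $y$ against $N(\widehat\theta_n(y),(D_n/d)I_d)$, a Gaussian whose variance does not depend on $y$. Averaging the resulting bound $h(\Pi_y)\le\frac d2\log\frac{2\pi D_n}{d}+\frac{d}{2D_n}\mathbb E_{\Pi_y}\Vert\theta-\widehat\theta_n(y)\Vert^2$ finishes in one line.

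That route never needs $E$ to have a density and never uses ``conditioning reduces differential entropy''. These are exactly the points your main route must verify. First, $E$ is absolutely continuous because its law is a mixture of the a.s.\ absolutely continuous conditional laws given $\widehat\theta_n$. Second, $h(E\mid\widehat\theta_n)\le h(E)$ follows from a Gibbs/Tonelli argument, once one notes that the finite second moment makes the positive part of $-\int f_E\log f_E$ finite. Your fallback comparison with $\widehat\theta_n+G$, taking $G\sim N(0,(D_n/d)I_d)$, is precisely the paper's argument with $\widehat\theta_n$ in place of $Y^n$, and it makes those checks unnecessary.

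What your route buys is a modular reduction to a textbook inequality. It also gives a marginally stronger intermediate statement: the estimator alone already satisfies $I(\theta;\widehat\theta_n)\ge h(\pi)-\frac d2\log\frac{2\pi eD_n}{d}$. What the paper's route buys is a self-contained derivation with almost no measure-theoretic overhead.
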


\begin{proof}
The first claim bounds the conditional entropy of $\theta$ given the data
by comparing each posterior with a Gaussian centred at the estimator. If
$I(\theta;Y^n)=\infty$ there is nothing to prove, so assume it is finite;
then, with $h(\pi)$ finite, $I(\theta;Y^n)=h(\pi)-h(\theta\mid Y^n)$, where
$h(\theta\mid Y^n)=\mathbb E\,h(\Pi_Y)$ and a posterior without a Lebesgue
density is assigned entropy $-\infty$.

Fix a data value $y$ and let $G_y:=N\big(\widehat\theta_n(y),(D_n/d)I_d\big)$
with density $g_y$. If $\Pi_y$ has a density, nonnegativity of
$\KL(\Pi_y\Vert G_y)=-h(\Pi_y)-\mathbb E_{\Pi_y}\log g_y$ gives
\[
 h(\Pi_y)\ \le\ -\mathbb E_{\Pi_y}\log g_y
 =\frac d2\log\frac{2\pi D_n}{d}+\frac{d}{2D_n}\,\mathbb E_{\Pi_y}\Vert\theta-\widehat\theta_n(y)\Vert^2,
\]
and the inequality is trivial otherwise. Averaging over $y$ and using
$\mathbb E\Vert\theta-\widehat\theta_n\Vert^2=D_n$,
\[
 h(\theta\mid Y^n)\ \le\ \frac d2\log\frac{2\pi D_n}{d}+\frac{d}{2D_n}D_n
 =\frac d2\log\frac{2\pi eD_n}{d}.
\]
Subtracting from $h(\pi)$ gives $I(\theta;Y^n)\ge h(\pi)-\frac d2\log\frac{2\pi eD_n}{d}$.

For the second claim, let $\Delta_\KL(S)\le\varepsilon$. By
Lemma~\ref{lem:identity} and data processing,
\[
 I(Y^n;S)\ \ge\ I(\theta;S)=I(\theta;Y^n)-\Delta_\KL(S)
 \ \ge\ h(\pi)-\frac d2\log\frac{2\pi eD_n}{d}-\varepsilon .
\]
The hypothesis $\limsup_nnD_n\le K$ gives $D_n\le(K+o(1))/n$, hence
\[
 -\frac d2\log\frac{2\pi eD_n}{d}\ \ge\ -\frac d2\log\frac{2\pi e(K+o(1))}{nd}
 =\frac d2\log n-\frac d2\log\frac{2\pi eK}{d}-o(1),
\]
which gives the stated bound.
\end{proof}

If $D_n=0$, then $\theta$ is determined by the data and, for a continuous prior, $I(\theta;Y^n)=\infty$. For positive error, the proposition replaces the Clarke--Barron conditions by a posterior localization hypothesis at scale $n^{-1/2}$, which is implied by regular Bernstein--von Mises theory together with uniform integrability. Weak posterior convergence alone does not imply it.

\subsection{Achievability under a quadratic posterior chart}

\begin{assumption}[Quadratic posterior chart]
\label{ass:chart}
There is a measurable statistic $T_n(Y^n)\in\R^d$ and a constant $L_n<\infty$ with
$\KL(\Pi_y\Vert\Pi_{y'})\le L_n\Vert T_n(y)-T_n(y')\Vert^2$ for all $y,y'$ in a set of full probability. Write $V_n=\mathbb{E}\Vert T_n-\mathbb{E}T_n\Vert^2<\infty$.
\end{assumption}

The assumption implies that observations with equal chart coordinates have equal posteriors, and is therefore considerably stronger than the existence of an asymptotically normal estimator. We verify it for an exponential family class.

\begin{lemma}
\label{lem:chart_cell}
Under Assumption~\ref{ass:chart}, every deterministic quantizer $S=g(T_n)$ satisfies
$\Delta_\KL(S)\le2L_n\,\mathbb{E}\Vert T_n-\mathbb{E}[T_n\mid S]\Vert^2\le2L_n\,\mathbb{E}\Vert T_n-t_S\Vert^2$ for any cell representatives $t_S$.
\end{lemma}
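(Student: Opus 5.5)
The plan is to write $\Pi_S$ as a cell barycenter and then use convexity of KL in its second argument, together with the quadratic chart. Since $S=g(T_n)$ is a function of $Y^n$, Lemma~\ref{lem:identity} gives $\Pi_S=\mathbb E[\Pi_Y\mid S]$. For a fixed value of $S$, let $Y'$ be an independent copy of $Y^n$ drawn from the conditional law given that value of $S$. Then $\Pi_S=\mathbb E[\Pi_{Y'}\mid S]$. The map $Q\mapsto\KL(P\Vert Q)$ is convex, by joint convexity of the divergence or by the integral Jensen inequality as in Lemma~\ref{lem:informative_quantizer_collision}. Hence
\[
\KL(\Pi_Y\Vert\Pi_S)\le\mathbb E\big[\KL(\Pi_Y\Vert\Pi_{Y'})\,\big|\,Y,S\big].
\]
Applying Assumption~\ref{ass:chart} pointwise, off a null set, bounds the right side by $L_n\,\mathbb E[\Vert T_n(Y)-T_n(Y')\Vert^2\mid Y,S]$.

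Next, take expectations. Conditional on $S$, the variables $Y$ and $Y'$ are i.i.d., so the cross term vanishes:
\[
\mathbb E\big[\Vert T_n(Y)-T_n(Y')\Vert^2\,\big|\,S\big]=2\,\mathbb E\big[\Vert T_n-\mathbb E[T_n\mid S]\Vert^2\,\big|\,S\big].
\]
Averaging over $S$ gives the first inequality with constant $2L_n$. The second inequality is the usual fact that the conditional mean minimizes conditional mean square error among $\sigma(S)$-measurable representatives $t_S$: expand around $\mathbb E[T_n\mid S]$ and drop the nonnegative term. Finiteness of $V_n$ makes all second moments finite, so these manipulations are legitimate.

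The main obstacle is measurability, together with the null set in the chart condition. I would build the pair $(Y,Y')$ on the product space through a regular conditional kernel of $Y^n$ given $S$, which exists because the spaces are standard Borel. The integral Jensen step for $\Pi_S$ requires $\Pi_S$ to be exactly the kernel average of $\Pi_{Y'}$. The chart bound must hold for $P$-almost every pair $(Y,Y')$. This follows because the joint law of $(Y,Y')$ has both marginals equal to $P_{Y^n}$, so a full-probability set $A$ has $(Y,Y')\in A\times A$ almost surely.
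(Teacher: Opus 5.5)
Your proposal is correct and follows the paper's proof essentially step for step. Both use the independent same-cell copy $Y'$, Jensen's inequality for convexity of $\KL$ in its second argument applied to $\Pi_S=\mathbb E[\Pi_{Y'}\mid Y,S]$, the chart bound, the identity $\mathbb E[\Vert T_n(Y)-T_n(Y')\Vert^2\mid S]=2\,\mathbb E[\Vert T_n-\mathbb E[T_n\mid S]\Vert^2\mid S]$, and conditional-mean minimization for the second inequality. Your added care about the regular conditional kernel and the null set is sound, with one small correction: Lemma~\ref{lem:informative_quantizer_collision} invokes convexity in the \emph{first} argument, so the second-argument version you need is better justified by joint convexity or data processing.
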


\begin{proof}
Write $Y$ for $Y^n$. The cell posterior $\Pi_S$ is the average of the
observation posteriors over the cell, and the chart assumption controls
the divergence between two observation posteriors; the proof compares
$\Pi_Y$ with a random second observation from the same cell.

Given $(Y,S)$, let $Y'$ be drawn from the conditional law $P_{Y\mid S}$,
independently of $Y$. Then the barycenter formula reads
$\Pi_S=\mathbb E[\Pi_Y\mid S]=\mathbb E[\Pi_{Y'}\mid Y,S]$. The divergence
$Q\mapsto\KL(P\Vert Q)$ is convex in its second argument, so the integral
form of Jensen's inequality gives, almost surely,
\begin{align*}
 \KL(\Pi_Y\Vert\Pi_S)=\KL\big(\Pi_Y\,\big\Vert\,\mathbb E[\Pi_{Y'}\mid Y,S]\big)
 &\ \le\ \mathbb E\big[\KL(\Pi_Y\Vert\Pi_{Y'})\,\big|\,Y,S\big]\\
 &\ \le\ L_n\,\mathbb E\big[\Vert T_n(Y)-T_n(Y')\Vert^2\,\big|\,Y,S\big],
\end{align*}
where the last step is Assumption~\ref{ass:chart} applied to the pair
$(Y,Y')$. Taking expectations,
\[
 \Delta_\KL(S)\ \le\ L_n\,\mathbb E\Vert T_n-T_n'\Vert^2,\qquad T_n':=T_n(Y').
\]
Given $S$, the vectors $T_n$ and $T_n'$ are independent with the same
conditional law, so the expected squared difference is twice the trace of
the conditional covariance:
\[
 \mathbb E\big[\Vert T_n-T_n'\Vert^2\,\big|\,S\big]
 =2\,\mathbb E\big[\Vert T_n-\mathbb E[T_n\mid S]\Vert^2\,\big|\,S\big].
\]
This gives the first inequality. For the second, the conditional mean
minimizes conditional expected squared error among $\sigma(S)$-measurable
vectors:
$\mathbb E[\Vert T_n-t_S\Vert^2\mid S]=\mathbb E[\Vert T_n-\mathbb E[T_n\mid S]\Vert^2\mid S]+\Vert\mathbb E[T_n\mid S]-t_S\Vert^2$.
\end{proof}

\begin{theorem}
\label{thm:chart_achiev}
Under Assumption~\ref{ass:chart} with $L_n>0$, for every $\varepsilon>0$ there is a countable-valued deterministic summary with $\Delta_\KL(S)\le\varepsilon$ and
\[
I(Y^n;S)\ \le\ \frac d2\log\left\{2\pi e\left[\left(\sqrt{\frac{L_nV_n}{2\varepsilon}}+\frac12\right)^{\!2}+\frac1{12}\right]\right\}.
\]
In particular, if $L_n=O(n)$ and $V_n=O(1)$ then $R^\star_n(\varepsilon)\le\frac d2\log n+O_{d,\varepsilon}(1)$, which with Lemma~\ref{lem:cb_benchmark} gives the rate $\frac d2\log n+O(1)$.
\end{theorem}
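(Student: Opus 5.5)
The plan is to build a lattice quantizer of the chart statistic $T_n$ and bound its distortion with Lemma~\ref{lem:chart_cell}. Its rate then follows from the standard entropy bound for a discretized vector with controlled second moment.

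Fix a mesh $\Delta>0$ and let $S=g(T_n)$ be the cubic lattice quantizer that maps $T_n$ to the index of the cell $\prod_i[k_i\Delta,(k_i+1)\Delta)$ containing it. Take as representative $t_S$ the cell centre. Then $\Vert T_n-t_S\Vert^2\le d\Delta^2/4$ pointwise. Lemma~\ref{lem:chart_cell} gives $\Delta_\KL(S)\le 2L_n\,\mathbb E\Vert T_n-t_S\Vert^2\le L_n d\Delta^2/2$. The bound involves only $V_n$, not $d$ explicitly, so I would rather bound per coordinate. Since each coordinate error is at most $\Delta/2$, I would choose $\Delta$ through $d\Delta^2/4\cdot 2L_n=\varepsilon$ after normalizing appropriately. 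Concretely, set $\Delta:=\sqrt{2\varepsilon/(L_n d)}$ up to the constant, so that $\Delta_\KL(S)\le\varepsilon$. The precise constant will be fixed so that the final expression matches the stated $\sqrt{L_nV_n/(2\varepsilon)}$ term.

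For the rate, $S$ is deterministic, so $I(Y^n;S)=H(S)$ by Lemma~\ref{lem:deterministic_atomic}, once we know $H(S)<\infty$. Write $K=S$ as an integer vector and $X:=K+U$, where $U$ is uniform on $[0,1)^d$ and independent. Then $X$ has density equal to the cell probabilities on unit cubes, so $H(K)=h(X)$. The maximum-entropy bound gives $h(X)\le\frac d2\log(2\pi e\,\sigma^2)$, where $d\sigma^2=\mathbb E\Vert X-\mathbb E X\Vert^2$. Since $X\approx T_n/\Delta$ up to a shift, the difference $X-T_n/\Delta$ is bounded by $1$ per coordinate but is not independent of $T_n$. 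I would decompose it as $X=(T_n/\Delta)+(K-T_n/\Delta+U)$ and use Minkowski's inequality on the root-mean-square. The first part contributes $\sqrt{V_n}/\Delta$. The rounding part $K+\tfrac12-T_n/\Delta$ has magnitude at most $\tfrac12$ per coordinate. The term $U-\tfrac12$ is independent of everything else and uniform, so it adds variance $1/12$ exactly. This gives a per-coordinate bound $\sigma^2\le(\sqrt{V_n/d}/\Delta+\tfrac12)^2+\tfrac1{12}$. With $\Delta$ chosen so that $\sqrt{V_n/d}/\Delta=\sqrt{L_nV_n/(2\varepsilon)}$, this reproduces the displayed bound. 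Finiteness of $H(S)$ follows from the same bound.

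The main obstacle is bookkeeping the constants between the per-coordinate and total second moments. If the lattice distortion bound $d\Delta^2/4$ does not match, I would instead use the sharper average-error form in Lemma~\ref{lem:chart_cell} with the conditional mean, or rescale $\Delta$ by $\sqrt d$ and absorb the factor. For the final claim, $L_n=O(n)$ and $V_n=O(1)$ make the bracket $O(n/\varepsilon)$, so $R_n^\star(\varepsilon)\le\frac d2\log n+O_{d,\varepsilon}(1)$. Combining this with the lower bound of Lemma~\ref{lem:cb_benchmark} yields the stated rate $\frac d2\log n+O(1)$.
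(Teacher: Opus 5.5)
Your proposal is correct and follows essentially the same route as the paper: a cubic lattice quantizer of $T_n$ with mesh $\sqrt{2\varepsilon/(L_nd)}$, the distortion bound from Lemma~\ref{lem:chart_cell} with cell-centre representatives, uniform dithering to identify $H(S)$ with a differential entropy, the Gaussian maximum-entropy bound, and the $L^2$ triangle inequality giving $\sqrt{\operatorname{tr}\operatorname{Cov}(S)}\le\sqrt d/2+\sqrt{V_n}/\Delta$. Your floor-plus-centre cells and $[0,1)^d$ dither are only a shift of the paper's rounding. The constants you hedged on already match exactly with $\Delta=\sqrt{2\varepsilon/(L_nd)}$, since $2L_n\cdot d\Delta^2/4=\varepsilon$ and $\sqrt{V_n/d}/\Delta=\sqrt{L_nV_n/(2\varepsilon)}$, so no rescaling or fallback to the conditional-mean form is needed.
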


\begin{proof}
The summary is a cubic lattice quantizer of the chart statistic. We first
choose the cell width so that the distortion bound of
Lemma~\ref{lem:chart_cell} equals $\varepsilon$, and then bound the entropy
of the lattice label by the entropy of a continuous variable with the
same covariance up to the dithering term.

Set $\delta^2:=2\varepsilon/(L_nd)$ and $S:=\operatorname{round}(T_n/\delta)\in\mathbb Z^d$,
rounding each coordinate to the nearest integer. Each coordinate of
$T_n-\delta S$ has absolute value at most $\delta/2$, so
$\Vert T_n-\delta S\Vert^2\le d\delta^2/4$. Lemma~\ref{lem:chart_cell}
with the cell representatives $t_S=\delta S$ gives
\[
 \Delta_\KL(S)\ \le\ 2L_n\cdot\frac{d\delta^2}{4}=\frac{L_nd\delta^2}{2}=\varepsilon .
\]

For the rate, $I(Y^n;S)=H(S)$ because $S$ is a deterministic
countable-valued quantizer (Lemma~\ref{lem:deterministic_atomic}), so it
suffices to bound $H(S)$. Let $U$ be uniform on $[-\frac12,\frac12]^d$ and
independent of everything else. The dithered variable $W:=S+U$ has a
density equal to $\mathbb P(S=k)$ on each unit cube $k+[-\frac12,\frac12)^d$,
so its differential entropy is
\[
 h(W)=-\sum_{k\in\mathbb Z^d}\mathbb P(S=k)\log\mathbb P(S=k)=H(S).
\]
Among $\mathbb R^d$-valued variables with covariance trace $v$, the
isotropic Gaussian maximizes differential entropy, which gives
$h(W)\le\frac d2\log\frac{2\pi ev}{d}$ with $v=\operatorname{tr}\operatorname{Cov}(W)$.
By independence,
$\operatorname{tr}\operatorname{Cov}(W)=\operatorname{tr}\operatorname{Cov}(S)+\operatorname{tr}\operatorname{Cov}(U)
=\operatorname{tr}\operatorname{Cov}(S)+d/12$, since each coordinate of $U$
has variance $1/12$.

It remains to bound $\operatorname{tr}\operatorname{Cov}(S)=\mathbb E\Vert S-\mathbb ES\Vert^2$.
Centring at $\mathbb E[T_n]/\delta$ instead of $\mathbb ES$ can only increase
the mean squared norm, and the triangle inequality in $L^2$ gives
\begin{align*}
 \sqrt{\operatorname{tr}\operatorname{Cov}(S)}
 &\ \le\ \sqrt{\mathbb E\Vert S-\mathbb E[T_n]/\delta\Vert^2}\\
 &\ \le\ \sqrt{\mathbb E\Vert S-T_n/\delta\Vert^2}+\sqrt{\mathbb E\Vert T_n/\delta-\mathbb E[T_n]/\delta\Vert^2}
 \ \le\ \frac{\sqrt d}{2}+\frac{\sqrt{V_n}}{\delta}.
\end{align*}
Substituting $\delta=\sqrt{2\varepsilon/(L_nd)}$ gives
$\sqrt{V_n}/\delta=\sqrt d\,\sqrt{L_nV_n/(2\varepsilon)}$, hence
\[
 \operatorname{tr}\operatorname{Cov}(S)\ \le\ d\Big(\sqrt{\frac{L_nV_n}{2\varepsilon}}+\frac12\Big)^2,
 \qquad
 \frac{\operatorname{tr}\operatorname{Cov}(W)}{d}\ \le\ \Big(\sqrt{\frac{L_nV_n}{2\varepsilon}}+\frac12\Big)^2+\frac1{12}.
\]
Inserting this into the maximum-entropy bound gives the display; in
particular the finite second moment of $T_n$ makes $H(S)$ finite. If
$L_n=O(n)$ and $V_n=O(1)$, the bracket is $O(n/\varepsilon)$ and the bound
reads $\frac d2\log n+O_{d,\varepsilon}(1)$.
\end{proof}

A fixed-cardinality variant follows from the extended Pierce lemma \cite[Thm.~1.1(b)]{elnmeir2020greedy}: if $\mathbb{E}\Vert T_n\Vert^{2+\gamma}<\infty$ for some $\gamma>0$ then $D^{\mathrm{av}}_M\le2L_n\kappa^2_{d,2,\gamma}(\mathbb{E}\Vert T_n\Vert^{2+\gamma})^{2/(2+\gamma)}M^{-2/d}$, whence $\log N^{\mathrm{av}}(\varepsilon)\le\frac d2\log L_n+\frac d2\log(1/\varepsilon)+O(1)$. The displayed $O(1)$ is uniform over a sequence of experiments only when the required $(2+\gamma)$ moments are uniformly bounded. The entropy-constrained bound above needs only $V_n<\infty$.

\begin{remark}
    The deterministic quantizer of Theorem \ref{thm:chart_achiev} attains the leading order $\tfrac d2\log n$ but not the optimal frontier: by Theorem \ref{thm:separation}, its rate exceeds $R^\star_n(\varepsilon)$ by a strictly positive amount, which is bounded in $n$ and therefore absorbed in the $O(1)$.
\end{remark}
\subsection{A verified non-Gaussian class}

\begin{proposition}
\label{prop:expfam}
In one dimension let $p(y\mid x)=h_0(y)\exp\{xy-A(x)\}$ with $A$ finite and twice differentiable on $\R$ and $0<\kappa\le A''(x)\le K<\infty$ for all $x$. Let $\theta\sim N(0,\tau^2)$ and take $n$ conditionally independent observations. Then,
\begin{enumerate}[label=(\roman*),leftmargin=*,itemsep=3pt]
\item $T_n=\bar Y$ satisfies Assumption~\ref{ass:chart} with $L_n=n^2/\{2(\tau^{-2}+n\kappa)\}$ and $V_n\le K^2\tau^2+K/n$.
\item $\widehat\theta_n=(A')^{-1}(\bar Y)$ has Bayes mean-square error at most $K/(\kappa^2n)$.
\end{enumerate}
Consequently, $R^\star_n(\varepsilon)=\frac12\log n+O_\varepsilon(1)$ for each fixed $\varepsilon>0$, and $\frac d2\log n+O_{d,\varepsilon}(1)$ for $d$ independent such coordinates.
\end{proposition}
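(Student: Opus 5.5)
The plan is to treat the posterior itself as a one-dimensional exponential family indexed by the sufficient statistic. For (i) this turns the chart constant $L_n$ into a posterior-variance bound; for (ii) and the consequence, it reduces everything to Proposition~\ref{prop:rootn}, Lemma~\ref{lem:cb_benchmark}'s mean-square analogue, and Theorem~\ref{thm:chart_achiev}.

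Write $s:=n\bar y$. The posterior density is
\[
\pi_s(\theta)=\exp\{\theta s-\Phi(s)\}\,\exp\{-nA(\theta)-\theta^2/(2\tau^2)\},
\qquad
\Phi(s):=\log\int e^{\theta s-nA(\theta)-\theta^2/(2\tau^2)}\,d\theta .
\]
This depends on $y$ only through $s$, so observations with equal $\bar y$ have equal posteriors. Since $A''\ge\kappa$, $A$ grows at least quadratically, so $\Phi$ is finite on all of $\R$ and smooth by dominated convergence. Standard exponential-family calculus then gives
\[
\KL(\Pi_s\Vert\Pi_{s'})=\Phi(s')-\Phi(s)-\Phi'(s)(s'-s),
\qquad
\Phi''(u)=\operatorname{Var}_{\Pi_u}(\theta).
\]
Taylor's theorem therefore gives $\KL(\Pi_s\Vert\Pi_{s'})\le\tfrac12\sup_u\operatorname{Var}_{\Pi_u}(\theta)\,(s-s')^2$.

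Each posterior has log-density with second derivative $\le-(\tau^{-2}+n\kappa)$. I would invoke the Brascamp--Lieb inequality, or equivalently the variance bound for strongly log-concave laws \citep{saumard2014logconcavity}, to get $\operatorname{Var}_{\Pi_u}(\theta)\le(\tau^{-2}+n\kappa)^{-1}$ for every $u$. Substituting $s=n\bar y$ yields
\[
\KL(\Pi_y\Vert\Pi_{y'})\le\frac{n^2}{2(\tau^{-2}+n\kappa)}(\bar y-\bar y')^2 ,
\]
which is exactly $L_n$.

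For $V_n$, I would use the law of total variance:
\[
\operatorname{Var}(\bar Y)=\operatorname{Var}(A'(\theta))+\mathbb E A''(\theta)/n .
\]
Because $A'$ is $K$-Lipschitz, $\operatorname{Var}(A'(\theta))\le K^2\operatorname{Var}(\theta)=K^2\tau^2$. The second term is at most $K/n$.

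For (ii), note that $A'$ is a bijection of $\R$ because $A''\ge\kappa>0$, so $\widehat\theta_n$ is well defined. Its inverse is $1/\kappa$-Lipschitz, which gives
\[
|\theta-\widehat\theta_n|\le|A'(\theta)-\bar Y|/\kappa .
\]
Conditionally on $\theta$, $\mathbb E\big[(A'(\theta)-\bar Y)^2\mid\theta\big]=A''(\theta)/n\le K/n$. Hence the Bayes mean-square error is at most $K/(\kappa^2 n)$.

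The consequence follows from combining the two sides. For the lower bound, $h(\pi)$ is finite for the Gaussian prior, and (ii) gives $\limsup_n nD_n\le K/\kappa^2$. Proposition~\ref{prop:rootn} then yields $R^\star_n(\varepsilon)\ge\tfrac12\log n-O_\varepsilon(1)$. For the upper bound, (i) gives $L_n\le n/(2\kappa)=O(n)$ and $V_n=O(1)$, so Theorem~\ref{thm:chart_achiev} yields $R^\star_n(\varepsilon)\le\tfrac12\log n+O_\varepsilon(1)$.

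For $d$ independent coordinates, the experiment is a product. For the upper bound, I would quantize each coordinate's chart separately at tolerance $\varepsilon/d$. The posterior factorizes and KL is additive over product measures, so distortions add; rates are bounded by the sum of entropies. For the lower bound, I would apply Proposition~\ref{prop:rootn} in dimension $d$, using the coordinatewise estimator, whose mean-square errors add to at most $dK/(\kappa^2 n)$.

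The main obstacle is the variance bound in (i). It needs a genuinely strongly log-concave posterior uniformly in $s$, which holds here only because $A''\ge\kappa$ everywhere. I would also need a brief justification for differentiating $\Phi$ under the integral. Everything else is routine.
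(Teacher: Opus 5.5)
Your proposal is correct and follows the paper's proof essentially step for step. The shared steps are the posterior log-partition function (you index it by $s=n\bar y$, the paper by $t=\bar y$), the Bregman/Taylor form of the KL bounded via Brascamp--Lieb, the law of total variance with $K$-Lipschitz $A'$, the $\kappa^{-1}$-Lipschitz inverse for (ii), and then Proposition~\ref{prop:rootn} together with Theorem~\ref{thm:chart_achiev}. The differences are only in routine bookkeeping. The paper explicitly records that $D_n>0$ and that $I(\theta;Y^n)\le nK\tau^2/2<\infty$ (the standing assumption). For $d$ coordinates it applies Theorem~\ref{thm:chart_achiev} once to the product chart $(\bar Y_1,\ldots,\bar Y_d)$ with the same $L_n$, rather than quantizing each coordinate at tolerance $\varepsilon/d$.
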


\begin{proof}
We verify the chart assumption through the log-partition function of the
posterior, then bound the chart variance and the estimator's error, check
that the Bayes information is finite, and finally combine the lower and
upper bounds.

Write $k_n:=\tau^{-2}+n\kappa$. The likelihood of $n$ observations depends
on the data only through $t=\bar Y$, and the posterior $P_t$ given
$T_n=t$ has density proportional to
$\exp\{-\psi_t(x)\}$ with
\[
 \psi_t(x):=\frac{x^2}{2\tau^2}-ntx+nA(x),\qquad
 \psi_t''(x)=\tau^{-2}+nA''(x)\ \ge\ k_n .
\]
Thus $P_t$ is strongly log-concave with parameter $k_n$. The scalar
Brascamp--Lieb inequality states that a density proportional to
$e^{-\psi}$ with $\psi''\ge k>0$ satisfies
$\operatorname{Var}(g(X))\le\mathbb E[g'(X)^2/\psi''(X)]$ for smooth $g$
\cite[Prop.~10.1]{saumard2014logconcavity}; with $g(x)=x$ it gives
$\operatorname{Var}_{P_t}(\theta)\le\mathbb E_{P_t}[1/\psi_t''(\theta)]\le k_n^{-1}$.

Define the posterior log-partition function
\[
 F_n(t):=\log\int_{\mathbb R}\exp\{-\psi_t(x)\}\,dx .
\]
Since $A(x)\ge A(0)+A'(0)x+\kappa x^2/2$, the integrand is dominated by a
Gaussian-type function, so $F_n$ is finite for every $t$ and can be
differentiated under the integral sign; the derivatives multiply the
integrand by polynomials in $x$, which the same quadratic bound keeps
integrable. Differentiating twice,
\[
 F_n'(t)=n\,\mathbb E_{P_t}[\theta],\qquad
 F_n''(t)=n^2\operatorname{Var}_{P_t}(\theta)\ \le\ \frac{n^2}{k_n}.
\]
The posterior densities are $p_t(x)=\exp\{-\psi_t(x)-F_n(t)\}$, so
$\log\{p_t(x)/p_u(x)\}=n(t-u)x-F_n(t)+F_n(u)$, and integrating against $P_t$,
\[
 \KL(P_t\Vert P_u)=n(t-u)\,\mathbb E_{P_t}[\theta]-F_n(t)+F_n(u)
 =F_n(u)-F_n(t)-F_n'(t)(u-t).
\]
This is the second-order Taylor remainder of $F_n$, so by the mean value
form of the remainder and the bound on $F_n''$,
\[
 \KL(P_t\Vert P_u)=\tfrac12F_n''(\xi)(u-t)^2\ \le\ \frac{n^2}{2k_n}(u-t)^2
\]
for some $\xi$ between $t$ and $u$. This verifies Assumption~\ref{ass:chart}
with $T_n=\bar Y$ and $L_n=n^2/(2k_n)$.

For the chart variance, the exponential-family identities give
$\mathbb E[T_n\mid\theta]=A'(\theta)$ and
$\operatorname{Var}(T_n\mid\theta)=A''(\theta)/n\le K/n$. By the law of
total variance,
\[
 V_n=\operatorname{Var}(T_n)=\mathbb E\operatorname{Var}(T_n\mid\theta)+\operatorname{Var}(A'(\theta))
 \ \le\ \frac Kn+\operatorname{Var}(A'(\theta)).
\]
Since $|A''|\le K$, the function $A'$ is $K$-Lipschitz, and for an
independent copy $\theta'$ of $\theta$,
\[
 \operatorname{Var}(A'(\theta))=\tfrac12\mathbb E\big[(A'(\theta)-A'(\theta'))^2\big]
 \ \le\ \tfrac12K^2\,\mathbb E\big[(\theta-\theta')^2\big]=K^2\tau^2 .
\]
Hence $V_n\le K^2\tau^2+K/n$.

For the estimator, $A''\ge\kappa>0$ makes $A'$ strictly increasing with
$A'(x)\to\pm\infty$ as $x\to\pm\infty$, so $A':\mathbb R\to\mathbb R$ is a
bijection, and its inverse has derivative $1/A''\le1/\kappa$, hence is
$\kappa^{-1}$-Lipschitz. With $\widehat\theta_n=(A')^{-1}(T_n)$ and
$\theta=(A')^{-1}(A'(\theta))$,
\[
 \mathbb E\big[(\widehat\theta_n-\theta)^2\,\big|\,\theta\big]
 \ \le\ \kappa^{-2}\,\mathbb E\big[(T_n-A'(\theta))^2\,\big|\,\theta\big]
 =\frac{A''(\theta)}{\kappa^2n}\ \le\ \frac{K}{\kappa^2n}.
\]
Averaging over $\theta$ gives the Bayes mean-square error bound. This
error is positive, because the posterior has positive variance, so
Proposition~\ref{prop:rootn} applies with $\limsup_nnD_n\le K/\kappa^2$.

The experiment has finite Bayes information. For any fixed law $Q$ on the
data space, $\mathbb E\,\KL(P_{Y^n\mid\theta}\Vert Q)=I(\theta;Y^n)+\KL(P_{Y^n}\Vert Q)$,
so $I(\theta;Y^n)\le\mathbb E\,\KL(P_{Y^n\mid\theta}\Vert Q)$. Take $Q$ to be
the sampling law at $x=0$. For one observation,
$\KL(p(\cdot\mid x)\Vert p(\cdot\mid0))=\mathbb E_x[xY-A(x)+A(0)]=xA'(x)-A(x)+A(0)$,
and Taylor's theorem with $A''\le K$ gives $xA'(x)-A(x)+A(0)\le Kx^2/2$.
The $n$ observations are conditionally independent, so
\[
 I(\theta;Y^n)\ \le\ n\,\mathbb E\{\theta A'(\theta)-A(\theta)+A(0)\}
 \ \le\ \frac{nK}{2}\mathbb E[\theta^2]=\frac{nK\tau^2}{2}<\infty .
\]
Also $h(\pi)=\frac12\log(2\pi e\tau^2)$ is finite.

We combine the bounds. Proposition~\ref{prop:rootn} with $d=1$ gives, for
every summary with $\Delta_\KL(S)\le\varepsilon$,
$I(Y^n;S)\ge\frac12\log n+h(\pi)-\frac12\log(2\pi eK/\kappa^2)-\varepsilon-o(1)$,
so $R^\star_n(\varepsilon)\ge\frac12\log n-O_\varepsilon(1)$. Theorem~\ref{thm:chart_achiev}
with $L_n=n^2/(2k_n)\le n/(2\kappa)=O(n)$ and $V_n=O(1)$ gives
$R^\star_n(\varepsilon)\le\frac12\log n+O_\varepsilon(1)$. Together,
$R^\star_n(\varepsilon)=\frac12\log n+O_\varepsilon(1)$.

For $d$ independent coordinates $(\theta_j,Y^n_j)$, the posterior is the
product of the coordinate posteriors, and the divergence between two
product measures is the sum of the coordinate divergences. Hence
Assumption~\ref{ass:chart} holds for $T_n=(\bar Y_1,\ldots,\bar Y_d)$ with
the same $L_n$, since
$\KL(\Pi_y\Vert\Pi_{y'})=\sum_j\KL(P_{t_j}\Vert P_{t_j'})\le L_n\sum_j(t_j-t_j')^2=L_n\Vert T_n(y)-T_n(y')\Vert^2$.
The chart variances, the squared estimation errors, the prior entropies,
and the Bayes informations all add over coordinates, so the same two
bounds give $R^\star_n(\varepsilon)=\frac d2\log n+O_{d,\varepsilon}(1)$.
\end{proof}

The condition is not vacuous outside Gaussian data: taking $h_0$ to be an equal mixture of $N(\pm b,\sigma^2)$ gives $A(x)=\frac12\sigma^2x^2+\log\cosh(bx)$ and hence $\kappa=\sigma^2$, $K=\sigma^2+b^2$, with a genuinely non-Gaussian two-component observation law. Nor is it automatic: Bernoulli curvature degenerates and Poisson curvature is unbounded.

\subsection{Proof of Proposition \ref{thm:gaussian_exact}}

\begin{proof}
The proof has a converse part, which lower-bounds the distortion of every
channel of rate $R$ by a conditional entropy-power argument, and an
achievability part, which shows that adding Gaussian noise to the
posterior mean makes every inequality of the converse an equality.

We first record the conjugate computation. Given $Y^n$, the posterior is
$\theta\mid Y^n\sim N(m,vI_d)$ with
\[
 m=\rho\bar Y,\qquad \rho=\frac{n\tau^2}{\sigma^2+n\tau^2},\qquad
 v=\frac{\tau^2\sigma^2}{\sigma^2+n\tau^2},
\]
since the posterior precision is $\tau^{-2}+n\sigma^{-2}=1/v$ and the
posterior mean is $v\cdot n\bar Y/\sigma^2$. Marginally
$\bar Y\sim N(0,(\tau^2+\sigma^2/n)I_d)$, so $m\sim N(0,s^2I_d)$ with
\[
 s^2=\rho^2\Big(\tau^2+\frac{\sigma^2}{n}\Big)=\frac{n\tau^4}{\sigma^2+n\tau^2},
 \qquad
 \frac{s^2}{v}=\frac{n\tau^2}{\sigma^2}=a .
\]
Define $\zeta:=\theta-m$. Since the conditional law of $\zeta$ given $Y^n$
is $N(0,vI_d)$ whatever the value of $Y^n$, the vector $\zeta$ is
independent of $Y^n$; and because a summary's auxiliary randomness is
independent of $(\theta,Y^n)$, the same conditional law holds given
$(Y^n,S)$, so $\zeta$ is independent of $(Y^n,S)$ as well. Thus
$\theta=m+\zeta$ with $m$ a function of $Y^n$ and $\zeta\sim N(0,vI_d)$
independent of $(m,S)$.

For the converse, let $S$ be any summary with $I(Y^n;S)=R<\infty$. Fix a
value $s_0$ of $S$. Given $S=s_0$, the vectors $m$ and $\zeta$ are
independent, so the entropy power inequality
$e^{2h(X_1+X_2)/d}\ge e^{2h(X_1)/d}+e^{2h(X_2)/d}$ for independent
$X_1,X_2$ in $\mathbb R^d$, with the convention $e^{2h(X_1)/d}=0$ when
$X_1$ has no density, gives
\[
 h(\theta\mid S=s_0)\ \ge\ \frac d2\log\big\{e^{2h(m\mid S=s_0)/d}+e^{2h(\zeta)/d}\big\}
 =\frac d2\log\big\{e^{2h(m\mid S=s_0)/d}+2\pi ev\big\},
\]
using $h(\zeta)=\frac d2\log(2\pi ev)$. The function
$x\mapsto\log(e^{2x/d}+2\pi ev)$ is convex and nondecreasing, so averaging
over $s_0$ and applying Jensen's inequality,
\[
 h(\theta\mid S)=\mathbb E\,h(\theta\mid S=s_0)
 \ \ge\ \frac d2\log\big\{e^{2h(m\mid S)/d}+2\pi ev\big\},
 \qquad h(m\mid S):=\mathbb E\,h(m\mid S=s_0).
\]
Next, $h(m\mid S)=h(m)-I(m;S)$, and $m$ is a function of $Y^n$, so
$I(m;S)\le I(Y^n;S)=R$ and
\[
 h(m\mid S)\ \ge\ h(m)-R=\frac d2\log(2\pi es^2)-R,
 \qquad\text{hence}\qquad
 e^{2h(m\mid S)/d}\ \ge\ 2\pi es^2e^{-2R/d}.
\]
Substituting into the previous display,
\[
 h(\theta\mid S)\ \ge\ \frac d2\log\big\{2\pi es^2e^{-2R/d}+2\pi ev\big\}
 =\frac d2\log(2\pi ev)+\frac d2\log\Big\{1+\frac{s^2}{v}e^{-2R/d}\Big\}.
\]
The first term is $h(\theta\mid Y^n)$, the entropy of $N(m,vI_d)$. By
Lemma~\ref{lem:identity}, and since all entropies here are finite,
\[
 \Delta_\KL(S)=I(\theta;Y^n)-I(\theta;S)=h(\theta\mid S)-h(\theta\mid Y^n)
 \ \ge\ \frac d2\log\{1+ae^{-2R/d}\}.
\]
The right side is decreasing in $R$, so every channel with
$I(Y^n;S)\le R$ satisfies the same bound, and
$D^\star_n(R)\ge\frac d2\log(1+ae^{-2R/d})$.

For achievability at $R>0$, let $S:=m+N$ with $N\sim N(0,\lambda I_d)$
independent of everything else and
\[
 \lambda:=\frac{s^2}{e^{2R/d}-1}.
\]
Since $Y^n\to m\to S$ is a Markov chain and $m$ is a function of $Y^n$,
$I(Y^n;S)=I(m;S)$, and the Gaussian channel formula gives
\[
 I(m;S)=\frac d2\log\Big(1+\frac{s^2}{\lambda}\Big)=\frac d2\log e^{2R/d}=R .
\]
Given $S$, the vector $m$ is Gaussian with covariance
$\frac{s^2\lambda}{s^2+\lambda}I_d=s^2\big(1-\frac{s^2}{s^2+\lambda}\big)I_d=s^2e^{-2R/d}I_d$,
because $s^2/(s^2+\lambda)=1-e^{-2R/d}$ for this $\lambda$. Hence
$\theta=m+\zeta$ given $S$ is Gaussian with covariance
$(s^2e^{-2R/d}+v)I_d$, and
\[
 \Delta_\KL(S)=h(\theta\mid S)-h(\theta\mid Y^n)
 =\frac d2\log\frac{s^2e^{-2R/d}+v}{v}=\frac d2\log\{1+ae^{-2R/d}\}.
\]
So the converse bound is attained, and $D^\star_n(R)=\frac d2\log(1+ae^{-2R/d})$
for $R>0$; at $R=0$ the constant summary has distortion
$I(\theta;Y^n)=\frac d2\log(1+a)$, which agrees with the formula.

For the rate function, $R\mapsto\frac d2\log(1+ae^{-2R/d})$ is a continuous
strictly decreasing bijection from $[0,\infty)$ onto $(0,\frac d2\log(1+a)]$.
For $0<\varepsilon<\frac d2\log(1+a)$, the converse shows that every channel
with distortion at most $\varepsilon$ has rate at least the solution $R$ of
$\frac d2\log(1+ae^{-2R/d})=\varepsilon$, and the noisy channel at that rate
has distortion exactly $\varepsilon$. Solving,
\[
 ae^{-2R/d}=e^{2\varepsilon/d}-1
 \quad\Longleftrightarrow\quad
 R=\frac d2\log\frac{a}{e^{2\varepsilon/d}-1},
\]
which is the second formula in \eqref{eq:gaussian_exact}. Finally, the
law of the random posterior $N(m,vI_d)$ is non-atomic because the law of
$m$ is, so Proposition~\ref{prop:shape} gives $R^\star_n(0)=\infty$.
\end{proof}

Under Assumption~\ref{ass:chart} the same model admits $T_n=m$ with $L_n=1/(2v)=O(n)$, since $$\KL(N(m,vI_d)\Vert N(m',vI_d))=\Vert m-m'\Vert^2/(2v),$$ so average quantization attains the same leading order even though by Proposition~\ref{prop:no_unif_cover} no finite uniform cover exists.

\section{Details of the parametric rate--distortion calculations}
\label{app:parametric-rd-details}

\begin{figure}[t]
\centering
\includegraphics[width=\linewidth]{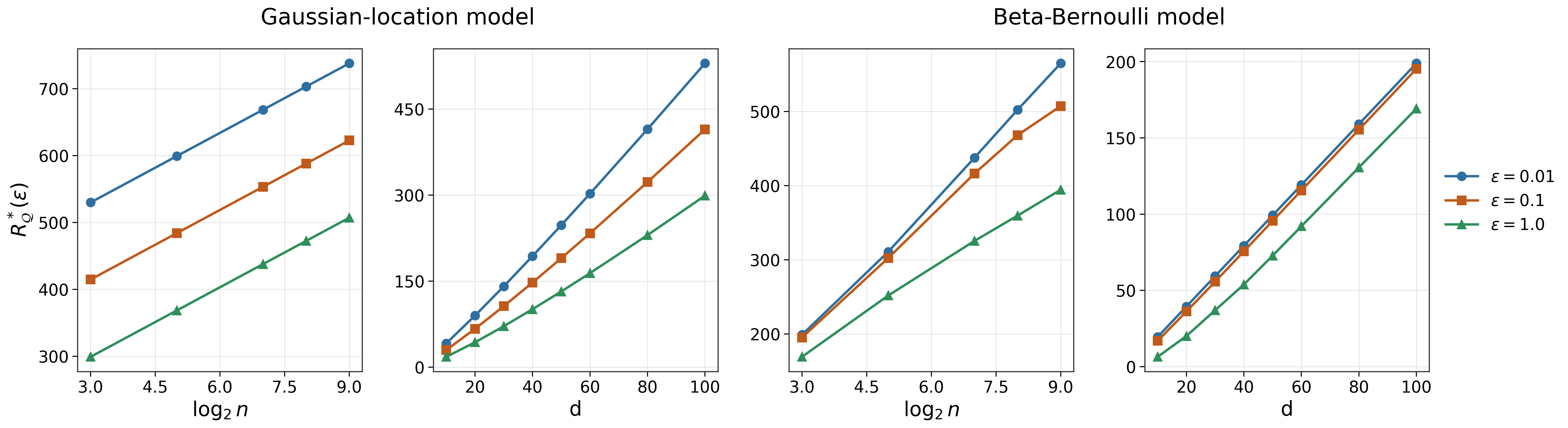}
\caption{Analytic rate calibration in parametric models. Gaussian panels evaluate the exact frontier in Proposition~\ref{thm:gaussian_exact}; Beta--Bernoulli panels report achievable rates within the specified noisy-count channel grid. Distortion budgets $\varepsilon\in\{0.01,0.1,1\}$ are total budgets. Left to right: Gaussian rate versus $n$ at $d=100$, Gaussian rate versus $d$ at $n=8$, and the corresponding Beta--Bernoulli quantities. The Beta--Bernoulli curves are not optimized over all channels.}
\label{fig:parametric-rate-distortion-bounds}
\end{figure}

Figure~\ref{fig:parametric-rate-distortion-bounds} uses the posterior distortion and information rate of Definition~\ref{def:intrinsic_rate_new}, in nats, with total distortion budgets rather than per-coordinate budgets.

\paragraph{Gaussian location model.}
For the Gaussian model in Proposition~\ref{thm:gaussian_exact}, the sufficient mean $T=n^{-1}\sum_iY_i$ has $I(\theta;Y^n)=\tfrac d2\log(1+n\tau^2/\sigma^2)$. We use $(\tau,\sigma)=(1,1)$ in the figure and evaluate the optimal channel
\[
S_\lambda = T + \eta,
\qquad
\eta \sim \mathcal N(0,\lambda I_d),
\]
with $\eta$ independent of $(\theta,Y^n)$. Its rate and distortion are
\begin{equation}
\label{eq:IYS_appendix}
R_\lambda
=
I(Y^n;S_\lambda)
=
I(T;S_\lambda)
=
\frac{d}{2}\log\!\left(1+\frac{\tau^2+\sigma^2/n}{\lambda}\right),
\end{equation}
and
\[
\Delta_\lambda
=
I(\theta;Y^n\mid S_\lambda)
=
I(\theta;T\mid S_\lambda)
=
\frac{d}{2}
\left[
\log\!\left(1+\frac{n\tau^2}{\sigma^2}\right)
-
\log\!\left(1+\frac{\tau^2}{\sigma^2/n+\lambda}\right)
\right].
\]
For $0<\varepsilon<I(\theta;Y^n)$, distortion increases and rate decreases with $\lambda$, so the optimal noise variance solves $\Delta_\lambda=\varepsilon$:
\[
\lambda_\varepsilon
=
\frac{\tau^2}{
\left(1+\frac{n\tau^2}{\sigma^2}\right)e^{-2\varepsilon/d}-1
}
-
\frac{\sigma^2}{n}.
\]
Proposition~\ref{thm:gaussian_exact} and its proof in Appendix~\ref{app:parametric} establish optimality against arbitrary channels. At fixed $d,\varepsilon,\tau,\sigma$, \eqref{eq:IYS_appendix} gives
\[
R_{\lambda_\varepsilon}
=
\frac d2
\log n
+
O(1)\qquad(n\to\infty).
\]
The figure uses the exact rate, including its fixed-total-tolerance $\tfrac d2\log d$ growth as $d\to\infty$ at fixed $n$; per-coordinate tolerance instead gives linear growth in $d$.

\paragraph{Beta--Bernoulli model.}
For the second experiment, we consider independent coordinates
\[
\theta_j \sim \mathrm{Beta}(a,b),
\qquad
Y_{ij}\mid \theta_j \sim \mathrm{Bernoulli}(\theta_j),
\qquad
j=1,\dots,d,\quad i=1,\dots,n,
\]
with $(a,b)=(5,5)$ in the figure. For each coordinate, the sufficient statistic is the count
\[
C_j := \sum_{i=1}^n Y_{ij}.
\]
We evaluate a noisy-count summary channel of the form
\[
S_j \sim q_\rho(\cdot \mid C_j),
\qquad
q_\rho(s\mid c)
\propto
\exp\!\left(-\frac{(s-c)^2}{2\rho^2}\right),
\qquad s\in\{0,\dots,n\},
\]
applied independently across coordinates. Because the model factorizes across $j$, both rate and distortion factorize:
\[
R_{\rho,d} = d\,R^{(1)}_\rho,
\qquad
\Delta_{\rho,d} = d\,\Delta^{(1)}_\rho.
\]

In one dimension, the prior-predictive law of the count is Beta--Binomial:
\[
p(c)
=
\binom{n}{c}\frac{B(a+c,b+n-c)}{B(a,b)},
\qquad c=0,\dots,n.
\]
The achieved rate is
\[
R^{(1)}_\rho
=
I(C;S)
=
\sum_{c=0}^n \sum_{s=0}^n
p(c)\,q_\rho(s\mid c)
\log \frac{q_\rho(s\mid c)}{p_\rho(s)},
\qquad
p_\rho(s):=\sum_{c=0}^n p(c)q_\rho(s\mid c).
\]
The full posterior given the sufficient statistic is
\[
\theta \mid C=c \sim \mathrm{Beta}(a+c,b+n-c),
\]
whereas the posterior given the summary value $S=s$ is the mixture
\[
p(\theta\mid s)
=
\sum_{c=0}^n p(c\mid s)\,\mathrm{Beta}(a+c,b+n-c).
\]
We compute the one-dimensional distortion as
\[
\Delta^{(1)}_\rho
=
\E{\KL\!\bigl(p(\theta\mid C)\,\|\,p(\theta\mid S)\bigr)}
=
H(\theta\mid S)-H(\theta\mid C),
\]
where the entropies are evaluated by Gauss--Legendre quadrature on $[0,1]$. In the experiments we use $300$ quadrature nodes and the grid
\(
\rho \in \cG= \{0.25, 0.35, 0.5, 0.7, 1.0, 1.4, 2.0, 2.8, 4.0, 5.6, 8.0, 11.3, 16.0, 22.6, 32.0, 45.2, 64.0\}.
\)
Since we do not invert $\rho \mapsto \Delta_\rho$ analytically, the plotted Beta--Bernoulli curves are grid-based achievable upper bounds:
\[
R^{\mathrm{BB}}_{\mathrm{grid}}(\varepsilon;n,d)
:=
\min_{\rho \in \mathcal G:\,\Delta_{\rho,d}\le \varepsilon} R_{\rho,d}.
\]

\paragraph{Numerical details.}
The Gaussian curves require only closed-form evaluation. The Beta--Bernoulli curves require only finite sums over $c,s\in\{0,\dots,n\}$ together with one-dimensional quadrature for the mixture posterior entropy. No neural networks are trained for these figures. When varying $n$, we use the values shown on the horizontal axis of Figure~\ref{fig:parametric-rate-distortion-bounds}; when varying $d$, we analogously use the displayed values of $d$.

\section{Experimental protocol and additional results}
\label{app:experiments-v6}

\subsection{Training and checkpoint selection}
The simulator experiments use a 256-dimensional encoder summary, a 128-bit code with fixed prior activation probability $0.01$, and a conditional normalizing flow with conditioner width 32 and 32 flow layers. Optimization uses Adam with learning rate $5\times10^{-5}$ throughout g-and-k. On the other simulators the initial rate is $10^{-4}$ for $\beta<0.1$ and $5\times10^{-5}$ otherwise, with the training schedule's decay with 72,000 steps, each with fresh batch of 128 samples. All Ricker configurations share the same fitted $\log(1+x)$ input normalizer.


The stochastic BNPE samples its code during training and deployment. The deterministic one samples Bernoulli codes with a temperature that decreases log-linearly from one to $0.1$ between $0.1E$ and $0.8E$, and then uses thresholded codes for stabilization. Both models use a straight-through gradient during training, and evaluation of the deterministic BNPE always uses a hard threshold. 

\subsection{Evaluation loss and information-rate estimates}
Each checkpoint is scored on five simulator sets, with  $M=2^{18}=262{,}144$ samples. NLL averages eight deployed code draws per observation for stochastic deployment and uses one pass for hard deployment. The rate calculation uses 65,536 sampled codes and the empirical mixture over the full reference set. 

For Bernoulli conditional probabilities $q_i(b)=q(b\mid y_i)$, define $\bar q_M(b)=M^{-1}\sum_iq_i(b)$. The empirical-mixture estimate is
\begin{equation}
 \widehat R_M\approx
  \mathbb{E}_{b\sim\bar q_M}[-\log\bar q_M(b)]
  -\frac{1}{M}\sum_i H(q_i).
\end{equation}
The finite empirical mixture's information is bounded by $\log M\simeq12.477$. Evaluation clips Bernoulli probabilities to $[10^{-5},1-10^{-5}]$.

A a two-hidden-layer MADE \citep{germain2015made} model of width 512 is fitted on an independently simulated dataset. For a fixed fitted marginal $r$, the population identity
\begin{equation}
 \mathbb{E}_{Y,B}[-\log r(B)]-H(B\mid Y)
   =I(Y;B)+\operatorname{KL}(p_B\Vert r)
\end{equation}
gives a variational upper bound.

\begin{figure}
 \centering
 \includegraphics[width=\linewidth]{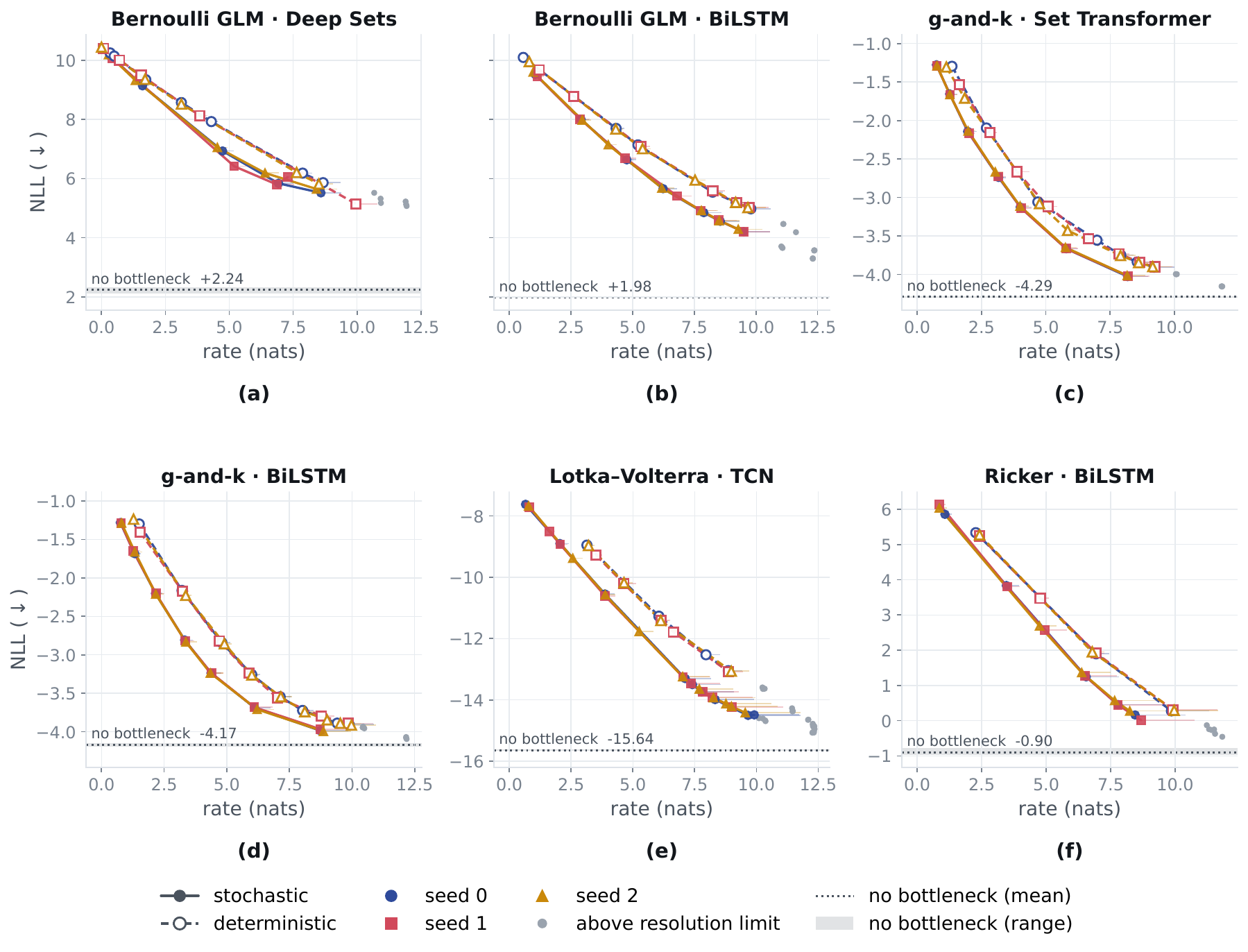}
 \caption{The six simulator--encoder combinations omitted from Figure~\ref{fig:arms-v6}.}
 \label{fig:arms-rest-v6}
\end{figure}

\subsection{Count benchmark computation}
The Beta--Bernoulli experiment enumerates count and code states and uses 512-point quadrature for the remaining integral over $\theta$. The deterministic comparison enumerates all partitions of the nine count values. The convex hull includes independent time-sharing between deterministic count encoders. Iterative information-bottleneck updates provide attainable stochastic channels. Neural training directly penalizes finite-sum $I(C;B)$, unlike the simulator experiments' sparse-prior KL surrogate. Reported neural summaries use five training seeds per arm and penalty coefficient. 

\subsection{Gaussian diagnostic with a fixed-covariance decoder}
\label{app:gaussian-diagnostic}
This controlled benchmark measures how efficiently a binary code preserves the
posterior mean when the decoder covariance is fixed, therefore, the family can not achieve the optimal decoder. 

Let
$\theta\sim N(0,I_4)$ and $Y_i\mid\theta\sim N(\theta,I_4)$ for $i=1,\ldots,4$.
Writing $T=\bar Y$, the full posterior is $N(\mu(T),vI_4)$ with
$\mu(T)=0.8T$ and $v=0.2$. We introduce a binary encoder with bit cardinality 16 and a Gaussian decoder with fixed posterior variance
$q_\phi(\theta\mid B)=N(g_\phi(B),vI_4)$. Both networks use two hidden
layers of width 256 with ReLU activations. The objective is NLL plus the
sparse-prior Bernoulli KL penalty with prior activation probability $0.01$.
Training uses 4000 Adam updates, batch size 4096, initial
learning rate $10^{-3}$ with cosine decay to $10^{-5}$, and training seed zero.

Let $g^\star(B)=\mathbb E[\mu(T)\mid B]$. The population decomposition is
\begin{align}
 \mathrm{NLL}_\phi-h(\theta\mid T)
 &=D_{\mathrm{mean}}(B)+G_\phi,\\
 D_{\mathrm{mean}}(B)
 &:=\frac{\mathbb E\|\mu(T)-g^\star(B)\|^2}{2v},\qquad
 G_\phi:=\frac{\mathbb E\|g^\star(B)-g_\phi(B)\|^2}{2v}.
\end{align}
Here $h(\theta\mid T)=2.45688$ nats. This separates the code's posterior-mean
error from the learned decoder's remaining mean error. The unrestricted
posterior loss satisfies (Lemma \ref{lem:identity}):
\begin{equation}
 \Delta_\KL(B)=D_{\mathrm{mean}}(B)
 -\mathbb E_B\KL\!\left(\Pi_B\,\Vert\,N(g^\star(B),vI_4)\right).
\end{equation}
The subtracted term includes covariance mismatch and any non-Gaussian shape
of the coarsened posterior. At zero rate, $D_{\mathrm{mean}}=8$, whereas
$\Delta_\KL=2\log5\simeq3.2189$. Thus the diagnostic uses a restricted
decoder family whose error remains meaningful even when its mean is fitted
optimally.

Since $\mu(T)\sim N(0,0.8I_4)$, Gaussian squared-error rate--distortion gives
the exact unrestricted-channel reference
$D_{\mathrm{mean}}(R)=8e^{-R/2}$, which is a lower bound for the finite binary
channels tested here. It is distinct from the posterior-KL frontier
$D^\star(R)=2\log(1+4e^{-R/2})$ in Proposition~\ref{thm:gaussian_exact}.

\begin{figure}[t]
 \centering
 \includegraphics[width=0.5\linewidth]{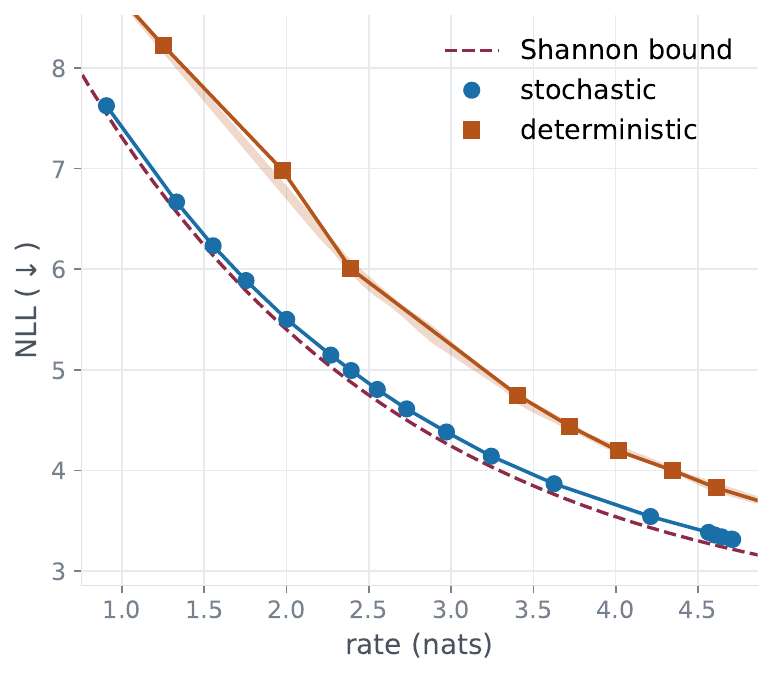}
 \caption{Gaussian posterior-mean diagnostic}
 \label{fig:gaussian-diagnostic}
\end{figure}

\subsection{Additional posterior diagnostics}
Neural ratio estimation offers a complementary route to measuring retained parameter information \citep{hermans2020ratio}. For the exact joint-to-product ratio $r(\theta,s)=p(\theta,s)/(\pi(\theta)p(s))$, its joint expected log ratio is $I(\theta;S)$. Comparing this quantity for full observations and summaries would target the information loss in Lemma~\ref{lem:identity}. With learned ratios, however, the difference also includes both estimators' approximation errors. The conjugate experiment directly evaluates posterior loss and decoder mismatch; the simulator results report NLL and code rate. 

\section{Related work and connections}
\label{app:related_works}
\label{sec:related}

\paragraph{Summary statistics for likelihood-free inference.}
Summary construction includes dimension reduction and subset selection for ABC
\citep{blum2013comparative,nunes2010optimal}, regression-based statistics
\citep{fearnhead2012constructing,jiang2017learning}, Fisher-information and
score-based compression \citep{charnock2018automatic,alsing2018generalized},
and neural summary learning
\citep{chen2020neural,chen2023learning}.
Exchangeable and convolutional encoders adapt summaries to observation
structure \citep{chan2018likelihood,aakesson2021convolutional}; amortized
posterior estimators often also train a summary jointly with its density decoder
\citep{radev2020bayesflow}.
Closest to the fidelity criterion here, \citet{hoffmann2026unifying} connect
expected posterior entropy, parameter--summary information, and expected
posterior KL loss. \citet{sui2023evaluating} evaluate cosmological summaries
through parameter--summary information. We place this posterior fidelity
alongside an explicit data--summary information budget, asking how much
compression is compatible with posterior preservation and comparing learned
encoders at matched estimated rates.

\paragraph{Posterior preservation and the information bottleneck.}
The information bottleneck (IB) trades information retained about a target
against information retained about the observation
\citep{tishby2000information,giladbachrach2003information}.
Under $\theta\to Y\to S$, the identity
$\Delta_{\mathrm{KL}}(S)=I(\theta;Y)-I(\theta;S)$ identifies our intrinsic
rate with this classical frontier, expressed in posterior-error coordinates.
Conditional-distribution KL distortion and posterior-mixture reconstruction
are classical IB constructions \citep{harremoes2007revisited}.
\citet{courtade2014multiterminal} explicitly study rate-limited posterior
compression under expected KL as a shifted logarithmic-loss coding problem.
Their finite-alphabet operational results concern coding across independent
experiments; our frontier concerns one use of a summary channel.
The sufficient-statistic reduction of \citet{armstrong2026source} allows
the observation to be replaced by its random posterior without changing the
unrestricted channel frontier. These foundations identify the object whose
representation cost governs posterior-preserving summaries.

\paragraph{Posterior quantization and converses.}
Our average posterior quantization result uses the nearest-center and centroid arguments often also used in $k$-means clustering problems; there the KL error replaced by Euclidean distance. Similar arguments are also used in the classical information theory literature that studies entropy-constrained and Bregman quantization
\citep{chou1989entropy,banerjee2005bregman}. The posterior mixture is the KL
instance of the Bregman centroid, connecting deterministic summaries to
average quantization of the random posterior.
Our converses specialize arbitrary-prior small-ball information bounds
\citep{xu2016information} and general Shannon lower bounds
\citep{riegler2018rate} to this source. They retain the predictive mass of
distinguishable posterior states, which support covering and packing numbers
alone do not capture. The resulting question is how finely the \emph{law of
the random posterior}, under a fixed Bayesian experiment, must be represented
to meet a prescribed average distortion.

\citet{haussler1997mutual} identify Bayes information with cumulative Bayesian
predictive relative-entropy risk and bound it using prior-weighted geometry
of sampling distributions. The compensation inequality transfers any such
lower bound directly to $R_n^\star(\varepsilon)\ge
I(\theta;Y_{1:n})-\varepsilon$.
In particular, suppose $Y_{1:n}\mid\theta\sim P_\theta^{\otimes n}$ satisfies
their dominated-model setup. With this paper's normalization
$\Hel^2(P,Q)=\tfrac12\int(\sqrt p-\sqrt q)^2$, their Corollary~1(1)
(p.~2466, with $\alpha=1/2$) gives
\[
R_n^\star(\varepsilon)\ \ge
\left[-\int\pi(d\theta)\log\!\int\pi(d\vartheta)
  \exp\{-n\Hel^2(P_\theta,P_\vartheta)\}
  -\varepsilon\right]_+.
\]
This lower bound needs no uniform likelihood-ratio bound. Their corresponding
Hellinger upper bounds require additional control (Corollary~1 or Theorem~3);
mutual absolute continuity alone is insufficient. Their Section~7 takes a
supremum over priors to obtain minimax bounds from sampling-model covering
and packing numbers. Their Theorem~4(3), pp.~2482--2483, likewise concerns
minimax predictive risk. These results control information acquired about a
parameter from repeated sampling. Our posterior-space bounds control
compression under a fixed predictive law, and the fixed-prior parametric
converse uses the prior-averaged Clarke--Barron expansion below.

\paragraph{Parametric information and approximate sufficiency.}
\citet{clarkeBarron1994} give the regular, prior-averaged Bayes-information
expansion underlying the $\tfrac d2\log n$ converse in
Lemma~\ref{lem:cb_benchmark}.
\citet{hayashi2018minimum} obtain the same leading coefficient for
finite-memory approximate sufficient-statistic coding under their
reconstruction and alphabet-size criteria. Here distortion is posterior KL
averaged under a fixed prior, and rate is data--summary mutual information;
matching upper bounds follow under the stated posterior-chart conditions.
The exact Gaussian benchmark specializes Gaussian IB
\citep{chechik2005gaussianib,asoodeh2020bottleneck}, making the dependence on
sample size, dimension, and posterior tolerance explicit.

\paragraph{Stochastic summaries and deterministic time-sharing.}
Stochastic assignments are central to classical IB, and finite-alphabet
entropy tradeoffs have long been studied through convex representations
\citep{tishby2000information,witsenhausen1975conditional,wyner1973theorem}.
Agglomerative IB expresses the relevance cost of merging cells through
weighted Jensen--Shannon divergence \citep{slonim2000agglomerative}.
For deterministic targets, the IB frontier is piecewise linear and can be
attained by visible time-sharing of deterministic summaries
\citep{kolchinsky2019caveats}; full-support experiments can also have linear
segments \citep{benger2023cardinality}.
Theorem~\ref{thm:separation} compares stochastic summaries with the entire
visible time-sharing envelope of deterministic summaries. For finite
observations, rare confusion between two cells yields a support-sensitive
refinement of the weighted Jensen--Shannon merge cost. The general theorem
replaces this pair-specific control by a uniform, experiment-level bound for
random relabeling. Combined with finite-quantizer approximation and uniform
integrability, it yields strict improvement at every interior rate for
standard Borel experiments satisfying the common-support condition, without
requiring finite pairwise posterior KL or attainment of the deterministic
envelope.

\paragraph{The compression cost and the role of randomness.}
The deterministic IB of \citet{strouse2017deterministic} prices output entropy,
changing the optimization problem. For ordinary additive coding-cost
Lagrangians, encoder randomness can be fixed without increasing the objective;
distribution-preserving reconstruction constraints can instead make shared
randomness useful \citep{theis2021advantages}.
Our separation theorem charges mutual information and permits private encoder
randomness. Relabeling preserves the output distribution while reducing
mutual information, so its gain depends on that choice of cost.

\paragraph{Learning and evaluating an information budget.}
The training penalty builds on neural variational IB \citep{alemi2016deep}
and task-oriented inference compression \citep{shao2022learning}.
The binary-bottleneck protocol makes the summary channel explicit and permits
comparisons across architectures and stochastic or deterministic modes.
We measure deployed rate separately from the variational training penalty;
the empirical rate estimates and decoder losses are also distinguished from
the population posterior-information frontier.

\end{document}